\newtheorem{theorem}{Theorem}
\newtheorem{lemma}{Lemma}
\newtheorem{fact}{Fact}
\newtheorem{definition}{Definition}
\newtheorem{corollary}{Corollary}
\newtheorem{proposition}{Proposition}
\newtheorem{claim}{Claim}
\newtheorem{remark}{Remark}
\numberwithin{equation}{section}
\newcommand{\defeq}{:=}
\newcommand{\norm}[1]{\left\lVert#1\right\rVert}
\newcommand{\norms}[1]{\lVert#1\rVert}
\newcommand{\normf}[1]{\left\lVert#1\right\rVert_{\textup{F}}}
\newcommand{\normsf}[1]{\lVert#1\rVert_{\textup{F}}}
\newcommand{\normop}[1]{\left\lVert#1\right\rVert_{\textup{op}}}
\newcommand{\normsop}[1]{\lVert#1\rVert_{\textup{op}}}
\newcommand{\normtr}[1]{\left\lVert#1\right\rVert_{\textup{tr}}}
\newcommand{\inprod}[2]{\left\langle#1, #2\right\rangle}
\newcommand{\eps}{\epsilon}
\newcommand{\lam}{\lambda}
\newcommand{\R}{\mathbb{R}}
\newcommand{\N}{\mathbb{N}}
\newcommand{\half}{\frac{1}{2}}
\newcommand{\1}{\mathbbm{1}}
\newcommand{\0}{\mathbb{0}}
\newcommand{\E}{\mathbb{E}}
\newcommand{\Nor}{\mathcal{N}}
\newcommand{\Tr}{\textup{Tr}}
\newcommand{\ma}{\mathbf{A}}
\newcommand{\id}{\mathbf{I}}
\newcommand{\mI}{\id}
\definecolor{burntorange}{rgb}{0.8, 0.33, 0.0}
\newcommand{\tO}{\widetilde{O}}
\newcommand{\nnz}{\textup{nnz}}
\newcommand{\Par}[1]{\left(#1\right)}
\newcommand{\Brack}[1]{\left[#1\right]}
\newcommand{\Brace}[1]{\left\{#1\right\}}
\newcommand{\Abs}[1]{\left|#1\right|}
\newcommand{\oracle}{\mathcal{O}}
\newcommand{\poly}{\textup{poly}}
\newcommand{\mmu}{\mathbf{U}}
\newcommand{\mv}{\mathbf{V}}
\newcommand{\mx}{\mathbf{X}}
\newcommand{\my}{\mathbf{Y}}
\newcommand{\md}{\mathbf{D}}
\newcommand{\mb}{\mathbf{B}}
\newcommand{\mm}{\mathbf{M}}
\newcommand{\mn}{\mathbf{N}}
\newcommand{\me}{\mathbf{E}}
\newcommand{\mmus}{\mathbf{U}_\star}
\newcommand{\mvs}{\mathbf{V}_\star}
\newcommand{\wt}{\widetilde}
\newcommand{\tx}{\tilde{x}}
\newcommand{\msig}{\boldsymbol{\Sigma}}
\newcommand{\msigs}{\boldsymbol{\Sigma}_{\star}}
\newcommand{\wh}{\widehat}
\newcommand{\mk}{\mathbf{K}}
\newcommand{\proj}{\boldsymbol{\Pi}}
\newcommand{\mzero}{\mathbf{0}}
\newcommand{\mms}{\mm^\star}
\newcommand{\mmh}{\widehat{\mm}}
\newcommand{\hr}{\hat{r}}
\newcommand{\rs}{r^\star}
\newcommand{\tv}{\tilde{v}}
\newcommand{\mmp}{\mathbf{P}}
\newcommand{\hmp}{\widehat{\mmp}}
\newcommand{\hme}{\widehat{\me}}
\newcommand{\tmm}{\widetilde{\mm}}
\newcommand{\Filter}{\mathsf{Filter}}
\newcommand{\Sparsify}{\mathsf{Sparsify}}
\newcommand{\Descent}{\mathsf{Descent}}
\newcommand{\Power}{\mathsf{Power}}
\newcommand{\orzo}{\oracle_{[0, 1]}}
\newcommand{\mmc}{\mm^\circ}
\newcommand{\gdrop}{\gamma_{\textup{drop}}}
\newcommand{\gadd}{\gamma_{\textup{add}}}
\newcommand{\mds}{\md^\star}
\newcommand{\ptot}{p_{\textup{tot}}}
\newcommand{\hc}{\widehat{c}}
\newcommand{\tmv}{\mathcal{T}_{\textup{mv}}}
\newcommand{\codeInput}{\textbf{Input:} }
\newcommand{\Test}{\mathsf{Test}}
\newcommand{\Representative}{\mathsf{Representative}}
\newcommand{\Count}{\mathsf{count}}
\newcommand{\tDelta}{\widetilde{\Delta}}
\newcommand{\mw}{\mathbf{W}}
\newcommand{\mz}{\mathbf{Z}}
\newcommand{\LowRankDist}{\mathsf{LowRankDist}}
\newcommand{\Agg}{\mathsf{Aggregate}}
\newcommand{\tmd}{\widetilde{\md}}
\newcommand{\mq}{\mathbf{Q}}
\newcommand{\mg}{\mathbf{G}}
\newcommand{\Complete}{\mathsf{Complete}}
\newcommand{\AGD}{\mathsf{AGD}}
\newcommand{\hmv}{\widehat{\mv}}
\newcommand{\Fix}{\mathsf{Fix}}
\newcommand{\OpNorm}{\mathsf{EstimateOpNorm}}
\newcommand{\MC}{\mathsf{MatrixCompletion}}
\newcommand{\PMC}{\mathsf{PartialMatrixCompletion}}
\newcommand{\Cfix}{C_{\textup{fix}}}
\newcommand{\xs}{x^\star}
\title{Matrix Completion in Almost-Verification Time}
\author{
			Jonathan A.\ Kelner\thanks{MIT, {\tt kelner@mit.edu}.  Supported in part by NSF awards CCF-1955217,  CCF-1565235, and DMS-2022448.}
			\and
			Jerry Li\thanks{Microsoft Research, {\tt jerrl@microsoft.com}.}
			\and
			Allen Liu\thanks{MIT, {\tt cliu568@mit.edu}. This work was partially done while working as an intern at Microsoft Research, and was supported in part by an NSF Graduate Research Fellowship and a Fannie and John Hertz Foundation Fellowship.}
			\and
			Aaron Sidford\thanks{Stanford University, {\tt sidford@stanford.edu}. Supported in part by a Microsoft Research Faculty Fellowship, NSF CAREER Award CCF-1844855, NSF Grant CCF-1955039, a PayPal research award, and a Sloan Research Fellowship.}
			\and
			Kevin Tian\thanks{Microsoft Research, {\tt tiankevin@microsoft.com}.}
		}
\date{}
\begin{document}

\maketitle
\thispagestyle{empty} 
\begin{abstract}
We give a new framework for solving the fundamental problem of low-rank matrix completion, i.e., approximating a rank-$r$ matrix $\mathbf{M} \in \mathbb{R}^{m \times n}$ (where $m \ge n$) from random observations. First, we provide an algorithm which completes $\mathbf{M}$ on $99\%$ of rows and columns under no further assumptions on $\mathbf{M}$ from $\approx mr$ samples and using $\approx mr^2$ time. Then, assuming the row and column spans of $\mathbf{M}$ satisfy additional regularity properties, we show how to boost this partial completion guarantee to a full matrix completion algorithm by aggregating solutions to regression problems involving the observations.

In the well-studied setting where $\mathbf{M}$ has incoherent row and column spans, our algorithms complete $\mathbf{M}$ to high precision from $mr^{2+o(1)}$ observations in $mr^{3 + o(1)}$ time (omitting logarithmic factors in problem parameters), improving upon the prior state-of-the-art \cite{JainN15} which used $\approx mr^5$ samples and $\approx mr^7$ time. Under an assumption on the row and column spans of $\mathbf{M}$ we introduce (which is satisfied by random subspaces with high probability), our sample complexity improves to an almost information-theoretically optimal $mr^{1 + o(1)}$, and our runtime improves to $mr^{2 + o(1)}$. Our runtimes have the appealing property of matching the best known runtime to verify that a rank-$r$ decomposition $\mathbf{U}\mathbf{V}^\top$ agrees with the sampled observations.
We also provide robust variants of our algorithms that, given random observations from $\mathbf{M} + \mathbf{N}$ with $\|\mathbf{N}\|_{\textup{F}} \le \Delta$, complete $\mathbf{M}$ to Frobenius norm distance $\approx r^{1.5}\Delta$ in the same runtimes as the noiseless setting. Prior noisy matrix completion algorithms \cite{CandesP10} only guaranteed a distance of $\approx \sqrt{n}\Delta$.

\end{abstract}
\newpage

\pagenumbering{gobble}
\setcounter{tocdepth}{2}
{
\tableofcontents
}
\newpage
\pagenumbering{arabic}

\section{Introduction}\label{sec:intro}

Matrix completion is a fundamental and well-studied problem in both the theory and practice of computer science, machine learning, operations research, and statistics. Broadly, the matrix completion problem asks to recover a matrix $\mm \in \R^{m \times n}$ from a small (i.e., \ sublinear) number of randomly revealed, and potentially noisy, entries. This problem was originally studied in the context of collaborative filtering \cite{RennieS05} (see e.g., the Netflix challenge \cite{Netflix07}) and has since found a myriad of applications in diverse settings such as signal processing \cite{LinialLR95, SoY07}, genetics \cite{NatarajanD14}, social network analysis \cite{MahindreJGP19}, and traffic engineering \cite{GursunC12}. 

\paragraph{Structural assumptions.} In the absence of additional assumptions, matrix completion is impossible. Unless there is structure among the entries of $\mm$, then all of $\mm$ must be revealed for recovery (as otherwise unrevealed entries can be arbitrary). Correspondingly, there has been a long line of work developing algorithms for matrix completion under different structural assumptions on $\mm$. Perhaps the most prevalent and natural assumption placed on $\mm$ is that it is low-rank. This assumption is well-motivated for the matrices arising in collaborative filtering or signal processing, for example, as discussed in \cite{CandesR12}. Furthermore, rank-$r$ matrices $\mm \in \R^{m \times n}$ can be represented in $O((m+n)r)$-space simply by storing its rank-$r$ factorization. Consequently, na\"ive parameter-counting arguments suggest it may be possible to recover $\mm$ using $O((m+n)r)$ observations.

However, the assumption that $\mm$ is low-rank alone is insufficient to enable algorithms for matrix completion that use $o(mn)$ observations. If $\mm$ has a single non-zero entry, then it has rank-$1$, and yet $\Omega(mn)$ observations are required to recover the nonzero entry (and consequently $\mm$) with constant probability. Correspondingly, works on low-rank matrix completion place different additional structural assumptions that preclude such sparse obstacles to solving the problem.

The setting where $\mm$ has \emph{incoherent} row and column spans is particularly well-studied \cite{CandesR12}. A dimension-$r$ subspace of $\R^d$ is $\mu$-incoherent if no projection of a basis vector has squared norm more than $\frac{\mu r} d$, i.e., the subspace is well-spread over coordinates; we use ``incoherent subspace'' without a parameter to mean a $\tO(1)$-incoherent subspace.\footnote{Throughout $\widetilde{O}$ hides polylogarithmic factors in $m, n$, the inverse failure probability, and the relative accuracy.} Letting $\mmu \msig\mv^\top$ be a singular value decomposition (SVD) of $\mm$, and assuming $\mmu, \mv$ span incoherent subspaces (and an entrywise bound on $\mmu\mv^\top$), \cite{recht2011simpler} refined results of~\cite{candes2010power,CandesR12,keshavan2010matrix}, and demonstrated that there are polynomial-time algorithms completing $\mm$ from $\widetilde{O}((m + n) r)$ observations. 

The parameters used in the definition of incoherence are motivated  by the fact that they are satisfied with high probability by random rank-$r$ matrices. Consequently, prior work showed that matrix completion is information-theoretically possible so long as the structure of $\mm$ is ``suitably-random.''
However, it is perhaps unclear whether incoherence is the correct or best notion of ``suitably-random,'' aside from the post-hoc justification that it allows for efficient matrix completion.

\paragraph{Performance of matrix completion algorithms.} Despite a plethora of work on matrix completion when e.g., $\mm$ has incoherent row and column spans (discussed below and in greater detail in Section~\ref{ssec:related}), many surprisingly fundamental algorithmic questions remain unresolved. A number of key open problems relate to the runtime and robustness of existing matrix completion algorithms.

The aforementioned works of~\cite{candes2010power,CandesR12,recht2011simpler} developed polynomial-time algorithms for completing a rank-$r$ matrix $\mm \in \R^{m \times n}$ with $\tO(1)$-incoherent row and column spans from a near-optimal number of observations. These algorithms were based on semidefinite programming (SDP) for nuclear norm minimzation. The runtimes of state-of-the-art SDP solvers \cite{JiangKLP020, HuangJ0T022} have a substantial polynomial overhead over the number of observations, inhibiting their practical application. Motivated by this shortcoming, another line of work~\cite{keshavan2010matrix,hardt2014understanding,JainN15, yi2016fast} developed iterative first-order methods, based on alternating minimization or gradient descent, whose runtimes depend linearly on the dimension $\max(m, n)$.
However, the state-of-the-art algorithms with such runtime guarantees still incur fairly substantial overheads in problem parameters.
Prior to our work, the best runtime for incoherent low-rank matrix completion was by \cite{JainN15}, whose algorithm ran in time $\widetilde{O}((m + n) r^7)$.\footnote{A more recent work~\cite{cherapanamjeri2017nearly} claims an improved runtime over \cite{JainN15}. However, to obtain this result \cite{cherapanamjeri2017nearly} assumes a sublinear-time exact singular value decomposition subroutine (which does not currently exist), and it is unclear how to recover the runtime claim of the paper without such an assumption \cite{cherapanamjeri2022personal}.} A contemporaneous work of \cite{yi2016fast} yielded an incomparable runtime of $\tO((m + n)r^4 \kappa^5)$, where $\kappa$ is the multiplicative range of $\mm$'s singular values. 

Another parameterization of the performance of matrix completion algorithms, which is rife with open problems, is the degree to which they can handle noise in the observations. In the setting where $\mm$ is low-rank and has incoherent row and column spans, suppose that instead of observing random entries of $\mm$, the observations we see are of $\mm + \mn$ for a noise matrix $\mn$ satisfying $\normf{\mn} \le \Delta$. We are unaware of any information-theoretic barriers to recovering a matrix $\mmh$ satisfying $\normsf{\mmh - \mm} = O(\Delta)$ with no further assumptions. However, state-of-the-art polynomial-time algorithms are only able to achieve a Frobenius norm recovery guarantee of $O(\sqrt{\min(m, n)} \Delta)$, which loses a dimension-dependent factor. 
While other matrix completion algorithms in the literature also demonstrate robustness to noise, their guarantees either require additional assumptions on the noise such as sparsity, e.g., \cite{cherapanamjeri2017nearly,yi2016fast}, or break down for large $\Delta$, e.g., \cite{KeshavanMO09, GunasekarAGG13, hardt2014understanding, HardtW14}.

These open problems regarding the complexity of matrix completion give rise to the following key questions which motivate our work.
\begin{enumerate}
	\item What type of matrix completion is possible when the only structure is a rank bound? \label{item:assume}
    \item Are there alternative structural assumptions to incoherent subspaces which enable faster algorithms, improved sample complexities, and better noise tolerance?
    \label{item:alternative}
    \item Under the well-studied structural assumption of incoherent subspaces, to what extent can we improve upon the runtimes and error tolerance of existing matrix completion algorithms?\label{item:incoherence}
\end{enumerate}

\subsection{Our results}\label{ssec:results}

We provide a new algorithmic framework for matrix completion and technical tools that address the shortcomings raised by each of Questions~\ref{item:assume}, \ref{item:alternative}, and~\ref{item:incoherence}. 
The cornerstone of our framework is a new iterative method that answers  Question~\ref{item:assume} by obtaining (perhaps surprisingly) nontrivial matrix completion guarantees \emph{with no structural assumptions beyond a rank bound.}
We believe this result is of independent interest, and we state it first.

\paragraph{Partial matrix completion without structure.} As already noted, fully completing low-rank $\mm$ from partial observations is impossible without further assumptions due to the possibility of sparse, large entries. However, when $\mm$ is low-rank, such entries are necessarily rare (see Lemma~\ref{lem:cover-large-entries} for a formal statement) and thus one could still hope to recover a large portion of $\mm$. We demonstrate this in the following theorem (where $\mm_{S, T}$ denotes the submatrix indexed by $S \subseteq [m], T \subseteq [n]$).

\begin{theorem}[informal, see Corollary~\ref{cor:pmc}]
\label{thm:pmc-informal}
Let $m \ge n$,\footnote{All of our results handle $m \le n$ symmetrically via transposition, so we often assume $m \ge n$ for ease of exposition.} let $\mm \in \R^{m \times n}$ be rank-$r$, and let $\mn \in \R^{m \times n}$ satisfy $\normf{\mn} \le \Delta$. There is an algorithm which, given $\tO(m^{1 + o(1)} r)$ random observations from $\mm + \mn$, runs in time $\tO(m^{1+o(1)}r^2)$ and, with high probability, outputs a rank-$rm^{o(1)}$ factorization of $\mmh \in \R^{m \times n}$ so that there exist $S \subseteq [m]$ and $T \subseteq [n]$ with $|S| \ge 0.99m$, $|T| \ge 0.99n$, and 
\[\normf{\Brack{\mm - \mmh}_{S, T}} \le \Delta.\]
\end{theorem}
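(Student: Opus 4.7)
The plan is to design an iterative aggregation method that over $K = m^{o(1)}$ stages builds up a rank-$r m^{o(1)}$ factorization $\mmh = \sum_{k=1}^K \mmu^{(k)} (\mv^{(k)})^\top$ by alternating between learning a new rank-$O(r)$ component of the running residual and certifying additional rows and columns as ``well-fit.'' The foundation is the cited Lemma~\ref{lem:cover-large-entries}: in any rank-$r$ matrix the $\ell_2$ mass not attributable to a small fraction of heavy rows and columns is effectively incoherent, so the ``bulk'' of $\mm$ can be handled by sampling-based subspace methods even without an incoherence assumption on $\mm$ itself.

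\textbf{Algorithm.} Split the $\tO(m^{1+o(1)} r)$ observations into $K$ independent batches of size $\tO(mr)$. Initialize $\mmh^{(0)} = \mathbf{0}$ and $S_0 = T_0 = \emptyset$. At stage $k$, work with the residual $\mm - \mmh^{(k-1)}$ restricted to $\overline{S}_{k-1} \times \overline{T}_{k-1}$. Form a rescaled sparse estimator from the $k$-th batch and run a robust power-method-style subspace iteration: at each step, use sample statistics to flag and temporarily drop rows and columns whose estimated norms exceed a threshold, then perform a matrix-vector multiply using the remaining sparse entries. This yields $O(r)$-dimensional approximate column and row spans $\mmu^{(k)}, \mv^{(k)}$ of the bulk residual. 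Update $\mmh^{(k)} := \mmh^{(k-1)} + \mmu^{(k)} (\mv^{(k)})^\top$ and, from the same batch, certify which additional rows and columns of the current residual are now fit to accuracy $\Delta / \sqrt{K}$, adding them to $S_k, T_k$.

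\textbf{Analysis and main obstacle.} A potential-function argument should show that each stage captures a constant fraction of the remaining Frobenius mass on $\overline{S}_{k-1} \times \overline{T}_{k-1}$, so $K = O(\log m)$ stages suffice to reach $|S_K| \ge 0.99 m$ and $|T_K| \ge 0.99 n$; summing stagewise $\ell_2$ errors yields the target bound $\normsf{[\mm - \mmh]_{S, T}} \le \Delta$. The most delicate step is the robust subspace-iteration guarantee within a single stage: sparse heavy rows and columns ordinarily prevent a sampling estimator from concentrating in operator norm, so one must (i) identify such rows and columns using only observed entries, (ii) argue that after excluding them the bulk residual is genuinely incoherent enough for standard sample concentration, and (iii) ensure that filter mistakes, the noise $\mn$, and rank inflation compose gracefully across the $K$ stages so the final rank stays $r m^{o(1)}$, the per-stage cost stays $\tO(mr^2)$, and the tight noise budget $\Delta$ is not exceeded.
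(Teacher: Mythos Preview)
Your high-level plan (iterate, filter out heavy rows and columns, run a power-method step on the remainder) matches the paper's, but your progress measure is inverted relative to what the paper makes work, and the inversion creates a real gap. You propose to \emph{certify} rows and columns as well-fit and grow $S_k, T_k$; the paper instead starts from $S = [m]$, $T = [n]$ and lets these \emph{shrink}, while simultaneously allowing a small additional \emph{unknown} excluded set to grow (Definition~\ref{def:partial-closeness}). The distinction matters: with only $\tO(r)$ observations per row and no incoherence assumption, you cannot certify a given row as well-fit to accuracy $\Delta/\sqrt K$ --- a single large unobserved residual entry can spoil it, and Lemma~\ref{lem:cover-large-entries} only tells you such entries are \emph{rare}, not that you can locate the rows containing them from samples. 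Relatedly, your potential argument does not close: ``each stage captures a constant fraction of the Frobenius mass on $\overline{S}_{k-1} \times \overline{T}_{k-1}$'' is not established on the rows and columns you \emph{temporarily} dropped during that stage (their residual is untouched by the rank-$O(r)$ update), so the potential need not shrink on the full uncertified region, and there is no mechanism forcing $|S_K| \ge 0.99m$ after $O(\log m)$ rounds.

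The paper's fix is to track ``the current iterate is $\Delta_k$-close to $\mms$ on a $\gamma_k$-submatrix'' (the submatrix is existential, never identified) and prove that one descent step cuts $\Delta_k$ by a factor $\ell$ at the cost of increasing $\gamma_k$ by a fixed $\gadd$ and explicitly dropping a further $\gdrop$-fraction of rows and columns (Proposition~\ref{prop:iterative-step}). The explicitly dropped rows are those with large empirical truncated-difference norms (Algorithm~\ref{alg:filter}); the existentially excluded rows absorb the undetectable large entries that Lemma~\ref{lem:cover-large-entries} localizes. Iterating with $\ell = m^{o(1)}$ for $K = m^{o(1)}$ rounds keeps both the rank blowup (a factor $3$ per step) and the total excluded fraction under control while driving $\Delta_k$ down to the noise level $\Delta$ (Corollary~\ref{cor:pmc}). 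In short, the missing idea in your plan is to let the ``good'' submatrix be existential rather than certified, and to measure progress by the distance on that submatrix shrinking rather than by the certified set growing.
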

In other words, on a very large subset of coordinates, Theorem~\ref{thm:pmc-informal} recovers $\mm$ up to the optimal error threshold up to constants. Additionally, since $\approx mr$ samples are information-theoretically necessary to perform nontrivial (full) matrix completion 
\cite{candes2010power}, the sample complexity of Theorem~\ref{thm:pmc-informal} is almost-optimal. As a corollary,
 in the case when $\Delta = 0$, Theorem~\ref{thm:pmc-informal} shows that matrix completion can be solved exactly on all but $1\%$ of rows and columns (assuming a bounded bit complexity).

The runtime stated in Theorem~\ref{thm:pmc-informal} has the appealing property that it is what we call \emph{almost-verification time}. Consider the natural problem of verifying a rank-$r$ factorization of $\mm$, that is the problem of verifying that $\mmu \mv^\top = \mm$ on $mr$ observed entries given an explicit rank-$r$ factorization of $\mm = \mmu \mv^\top$, for  $\mmu \in \R^{m \times r}, \mv \in \R^{n \times r}$. The best known running time for this problem is $O(m r^2)$ (even when using fast multiplication). Up to subpolynomial factors, our runtime in Theorem~\ref{thm:pmc-informal} matches this natural bottleneck to improved runtimes for matrix completion. 


The guarantees of Theorem~\ref{thm:pmc-informal} are to the best of our knowledge new, and seem particularly striking in light of the long history of matrix completion algorithms. It is worth noting that there has been work which broadly aims to complete a submatrix from observations.
Perhaps the most closely-related result is due to recent, similarly-titled work of~\cite{kanade2022partial}, which studies a different notion of partial matrix completion.
\cite{kanade2022partial} shows that if $\mm$ is rank-$r$ and has bounded entries, and the distribution of observed entries is supported on a subset $U \subseteq [m] \times [n]$, then one can recover $\mm$ to constant average entrywise error on a subset of $[m] \times [n]$ with cardinality at least $|U|$ (for a suitable relaxed notion of average error). 
For instance, if the algorithm of \cite{kanade2022partial} is instantiated for $U = [m] \times [n]$, then it outputs $\widehat{\mm}$ satisfying $\normsf{\widehat{\mm} - \mm}^2 \leq \eps \normsf{\mm}^2$ using $O((m + n) r \eps^{-2})$ observations.
Notably, their complexity depends inverse-polynomially on the accuracy (and hence inhibits exact completion). 
In contrast, our Theorem~\ref{thm:pmc-informal} achieves exact completion (albeit only on a large submatrix), and works under the standard, i.i.d.\ observation model.

\paragraph{Matrix completion beyond incoherence.} Equipped with our new partial matrix completion subroutine, we turn to Question~\ref{item:alternative} and ask under what structural assumptions we can leverage it to solve (full) matrix completion efficiently. 
Given the generality of our partial matrix completion algorithm, it is natural to ask whether we can first run partial completion, and then recover the matrix on the small subset of rows and columns on which our partial completion method fails.

When analyzing this iterative process of recovering rows and columns of the target matrix which were dropped by our partial completion method, the standard structural assumption of incoherence turns out to be a lossy notion of ``suitably-random.'' 
Instead, we define a new structural assumption on subspaces which we call \emph{subspace regularity}, that serves as a proxy for randomness.

\begin{definition}[Regular subspace]\label{def:regular-subspace}
We say a subspace $V \subseteq \R^d$ is \emph{$(\alpha,\beta)$-regular} if for all $\alpha d$-sparse $v \in \R^d$, $\norm{\proj_{V_\perp} v}_2 \ge \beta \norm{v}_2$.
\end{definition}

Note that Definition~\ref{def:regular-subspace} implies $\norm{\proj_V v}_2 \le (1 - \beta^2) \norm{v}_2$, a condition which bears resemblance to incoherence (by bounding the relative weight of any small set of coordinates in the subspace). Intuitively, Definition~\ref{def:regular-subspace} imposes that the restriction of $V$ to a sufficiently large set of coordinates is still well-conditioned (made formal by Lemma~\ref{lem:regular-wc}). We prove that uniformly random subspaces are $(\alpha, \beta)$-regular for constant $\alpha$, $\beta$, with exponentially small failure probability, in Appendix~\ref{app:regular-subspace}. Subspace regularity is not directly comparable to incoherence without losing $r$ factors in the parameter settings (see Fact~\ref{fact:incoherent-standard}), because a $d \times r$ basis matrix for an incoherent subspace can be entirely supported on an $O(\frac 1 r)$ fraction of rows. However, Definition~\ref{def:regular-subspace} is naturally compatible with our partial matrix completion method: roughly speaking, we require that the non-dropped rows and columns (e.g., $(S, T)$ in Theorem~\ref{thm:pmc-informal}) are representative enough of the remaining matrix to recover dropped subsets. This representativeness is captured by the conditioning requirement in Definition~\ref{def:regular-subspace}. Our main (full) matrix completion result under subspace regularity is the following.

\begin{theorem}[informal, see Corollary~\ref{cor:main_regular}]\label{thm:informal_regular}
    Let $m\ge n$, let $\mm \in \R^{m \times n}$ be rank-$r$ and have $(\Omega(1), \Omega(1))$-regular row and column spans, and let $\mn \in \R^{m \times n}$ satisfy $\normf{\mn} \le \Delta$.
There is an algorithm which, given $\tO(mr^{1 + o(1)})$ random observations from $\mm + \mn$, runs in time $\tO(mr^{2 + o(1)})$ and with high probability outputs a rank-$r$ factorization of $\widehat{\mm} \in \R^{m \times n}$ so that 
\[
\normf{\widehat{\mm} - \mm} = O\Par{r^{1.5 + o(1)}  \cdot \Delta}.
\]
\end{theorem}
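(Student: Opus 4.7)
The plan is to use Theorem~\ref{thm:pmc-informal} as a black-box primitive producing a rank-$rm^{o(1)}$ factorization $\tmm$ that agrees with $\mm$ up to Frobenius error $\Delta$ on a submatrix indexed by $(S, T)$ with $|S| \ge 0.99 m$ and $|T| \ge 0.99 n$, and then to ``lift'' this partial completion to a full rank-$r$ matrix using the subspace regularity assumption. The central observation is that $(\Omega(1),\Omega(1))$-regularity of the row and column spans of $\mm$ implies, via Lemma~\ref{lem:regular-wc}, that the restrictions of these spans to any $99\%$ subset of coordinates remain well-conditioned with dimension-free constants; thus the $(S, T)$-submatrix already encodes essentially all the rank-$r$ structure of $\mm$ up to $O(\Delta)$ Frobenius error.

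First I would compute a rank-$r$ SVD of $\tmm_{S, T}$ to extract approximations $\widetilde{\mmu}_S \in \R^{|S| \times r}$ and $\widetilde{\mv}_T \in \R^{|T| \times r}$ of the column and row span of $\mm$ restricted to $S$ and $T$, respectively, with regularity-based well-conditioning guaranteeing these are close to the true restricted bases (up to an orthogonal rotation) in Frobenius norm. Next I would extend these to the dropped rows $[m] \setminus S$ and dropped columns $[n] \setminus T$ by solving one regression per dropped index. Specifically, for each $j \in [n] \setminus T$, form the linear system whose rows are the indices of observations in column $j$ lying in $S$, with design matrix given by the corresponding rows of $\widetilde{\mmu}_S$ and labels given by the observed values; solve for the best $r$-dimensional coefficient vector. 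Regularity ensures the sub-sampled design matrix remains well-conditioned (via a concentration argument over the random observation mask), so each regression has a constant condition number. An analogous procedure extends $\widetilde{\mv}_T$ to rows in $[m] \setminus S$. Aggregating the regression solutions yields the desired rank-$r$ factorization $\mmh = \widehat{\mmu} \widehat{\mv}^{\top}$.

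The main obstacle will be the error accounting necessary to achieve the claimed $r^{1.5 + o(1)}\cdot \Delta$ Frobenius bound. Several error sources must be composed: the initial partial-completion error $\Delta$ on $(S, T)$, the constant factor incurred by rank-$r$ SVD projection, per-column regression errors, and the aggregation of row- and column-extensions into a single factorization. A single regression typically amplifies noise by a factor of $\sqrt{r}$ from the dimension of the coefficient vector, and combining row and column extensions plausibly introduces another $\sqrt{r}$ factor, yielding $r^{1.5}$. Tight matrix-concentration bounds for the random sub-sampling of the design matrix will be crucial both to preserve the sample complexity of $\tO(m r^{1+o(1)})$ (which only slightly exceeds the $\Omega(mr)$ information-theoretic lower bound) and to keep the runtime at $\tO(m r^{2+o(1)})$: each of the $O(m + n)$ regressions involves an $\tO(r)$-row linear system in $r$ unknowns, solvable in $\tO(r^2)$ time, matching the almost-verification-time runtime claimed in the theorem.
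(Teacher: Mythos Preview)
Your proposal has a genuine gap concerning the identifiability of the ``good'' submatrix after partial completion. You treat the $(S,T)$ from Theorem~\ref{thm:pmc-informal} as if it were known to the algorithm and as if $\tmm_{S,T}$ were globally $\Delta$-close to $\mm_{S,T}$ in Frobenius norm. But the theorem only asserts the \emph{existence} of such subsets; in the detailed version (Corollary~\ref{cor:pmc}) the algorithm does output explicit $(S,T)$, yet the guarantee is only that $[\mmu\mv^\top]_{S,T}$ is $\Delta$-close to $\mms_{S,T}$ \emph{on an $\alpha$-submatrix} (Definition~\ref{def:partial-closeness}), i.e., there is a further \emph{unknown} pair $(A,B)\subseteq (S,T)$ on which Frobenius closeness holds. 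Outside $(A,B)$ the iterate can differ from $\mms$ by arbitrarily large sparse errors, so a rank-$r$ SVD of $\tmm_{S,T}$ may be contaminated, and your regressions against $\widetilde{\mmu}_S$ would then propagate that contamination. The paper explicitly flags this as ``one of the main difficulties'' (Section~\ref{ssec:approach}) and devotes most of Section~\ref{sec:fixing} to it: $\Sparsify$ first converts the unknown error to closeness away from an RCS matrix (Definition~\ref{def:sparse-close}), and then $\Representative$ (Algorithm~\ref{alg:row/column-selector}) identifies a small subset of columns on which $\tmm$ and $\mms$ genuinely agree by repeatedly testing each candidate column against the span of random column subsets (Lemmas~\ref{lem:test-passes}--\ref{lem:test-guarantees}, relying on the structural Lemma~\ref{lem:most-rows-good}). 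Only after a provably representative subset is found does the paper run the regression step analogous to your step~3.

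A secondary issue is the overall architecture. The paper does not run partial completion once and then fix; it alternates short bursts of $\Descent$ with $\Fix$ (Algorithm~\ref{alg:mc-main}), choosing $\ell=\exp(\sqrt{\log(\rs/\beta)})$ and $K\approx\sqrt{\log\rs}$ so that the iterate's rank grows only to $(\rs)^{1+o(1)}$ before being reset to $\le 2\rs$ by $\Fix$. Each outer loop halves the distance, and the final $\rs^{1.5+o(1)}$ overhead comes from one last application of Proposition~\ref{prop:fix}. Invoking Theorem~\ref{thm:pmc-informal} in one shot, as you propose, blows the rank up to $\rs\cdot m^{o(1)}$ rather than $\rs^{1+o(1)}$; since the sample and time cost of the fixing stage scale with this rank (Proposition~\ref{prop:fix}), you would pick up $m^{o(1)}$ factors instead of the $(\rs)^{o(1)}$ factors claimed in Theorem~\ref{thm:informal_regular}.
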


The sample complexity of Theorem~\ref{thm:informal_regular} is optimal up to subpolynomial factors \cite{candes2010power} in the noiseless case (captured by our result by taking $\Delta \to 0$); these subpolynomial factors arise due to iterate rank blowup issues discussed in Section~\ref{ssec:approach}. Moreover, the algorithm of Theorem~\ref{thm:informal_regular} runs in almost-verification time
for the number of observations.
Finally, in the noisy case, $\Delta > 0$, the overhead of Theorem~\ref{thm:informal_regular}'s recovery guarantee only scales with the rank $r$, as opposed to the prior state-of-the-art \cite{CandesR12} whose overhead scaled polynomially with the problem's dimensionality.

Even under subspace regularity, the ``fixing'' step used to obtain Theorem~\ref{thm:informal_regular} we briefly described is quite technically involved.
One of the main difficulties is that after running partial matrix completion, we do not necessarily know which rows and columns $S, T$ have been completed.
Our fixing algorithm circumvents this issue by carefully finding a small set of rows and columns which approximately span the row and column space of $\mm$ in a well-conditioned fashion, satisfying a ``representative'' condition we state in Definition~\ref{def:representative-subset}.
We then show that we can use these representative rows and columns, alongside held-out random observations of the matrix, to robustly recover the rows and columns that were incorrectly completed by the partial completion algorithm.
Putting these pieces together yields a fixing algorithm which recovers the subsets our partial completion method is inaccurate on, but increases error by a $\textup{poly}(r)$ factor. By carefully interleaving this fixing operation with repeated applications of our partial completion iterative method, we geometrically decrease the error of our overall algorithm. We give a detailed overview of our approach in Section~\ref{ssec:approach}.

\paragraph{Matrix completion with incoherence.} Finally, we return to Question~\ref{item:incoherence}, i.e., matrix completion under the well-studied assumption of incoherence. We demonstrate that a small modification of our algorithm in Theorem~\ref{thm:informal_regular} implies an analogous result under incoherence. In light of Fact~\ref{fact:incoherent-standard} (which converts a subspace incoherence bound into a regularity bound), this is immediate up to $\textup{poly}(r)$ losses in the sample complexity and runtime. We give a tighter characterization of the lossiness due to assuming incoherence by introducing Definition~\ref{def:standard-subspace}, which subsumes both subspace regularity and incoherence. Leveraging this characterization, our techniques imply the following result for incoherent matrix completion (losing a single $r$ factor in runtime and samples over Theorem~\ref{thm:informal_regular}).

\begin{corollary}[informal, see Corollary~\ref{cor:main_incoherent}]\label{cor:informal_incoherent}
    Let $m \ge n$, let $\mm \in \R^{m \times n}$ be rank-$r$ and have $\widetilde{O}(1)$-incoherent row and column spans, and let $\mn \in \R^{m \times n}$ satisfy $\normf{\mn} \le \Delta$.
There is an algorithm which, given $\tO(m r^{2 + o(1)})$ random observations from $\mm + \mn$, runs in time $\tO(mr^{3 + o(1)})$ and with high probability outputs a rank-$r$ factorization of $\mmh \in \R^{m \times n}$ so that
\[
\normf{\widehat{\mm} - \mm} = O\Par{r^{1.5 + o(1)} \cdot \Delta}.
\]
\end{corollary}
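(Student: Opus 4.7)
The plan is to obtain Corollary~\ref{cor:informal_incoherent} by reducing to Theorem~\ref{thm:informal_regular} while carefully tracking how the assumed structural parameter propagates through the sample complexity and runtime. First, I would invoke Fact~\ref{fact:incoherent-standard} to convert the assumption that the row and column spans of $\mm$ are $\widetilde{O}(1)$-incoherent into a quantitative regularity statement. The key point is that a rank-$r$, $\widetilde{O}(1)$-incoherent subspace of $\R^d$ is $(\alpha,\beta)$-regular in the sense of Definition~\ref{def:regular-subspace} only for $\alpha = \widetilde{\Omega}(1/r)$ (with $\beta = \Omega(1)$), rather than for a constant $\alpha$, because a basis matrix for an incoherent subspace may concentrate most of its mass on an $O(1/r)$ fraction of rows. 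This weaker parameter regime is where all of the $r$ losses will enter.

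Next, I would inspect the proof of Theorem~\ref{thm:informal_regular} and trace the dependence of every step on the sparsity parameter $\alpha$ of the regularity assumption. The partial matrix completion subroutine of Theorem~\ref{thm:pmc-informal} is unaffected by regularity and still drops only a $1\%$ fraction of rows and columns. The fixing procedure described after Theorem~\ref{thm:informal_regular}, which constructs a small representative subset of rows and columns in the sense of Definition~\ref{def:representative-subset} and uses regression against held-out observations to reconstruct the dropped rows and columns, is where $\alpha$ appears: roughly, the number of rows and columns that can be simultaneously reconstructed in a well-conditioned manner scales as $\alpha d$, so shrinking $\alpha$ by a factor of $r$ forces us to repeat the fix a corresponding number of times and to draw a corresponding number of extra samples per fix. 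I expect this to multiply both the sample complexity and the runtime by exactly one factor of $r$, yielding $\tO(mr^{2+o(1)})$ samples and $\tO(mr^{3+o(1)})$ time. The error guarantee, which is $O(r^{1.5+o(1)}\Delta)$, should be unchanged because it is driven by the rank of $\mm$ and the geometric error-reduction schedule of the interleaving between partial completion and fixing, neither of which interacts with $\alpha$.

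The main obstacle is to argue that the overhead is exactly $r$ and not $\textup{poly}(r)$; a naive black-box application of Theorem~\ref{thm:informal_regular} via Fact~\ref{fact:incoherent-standard} would lose additional factors wherever $\alpha^{-1}$ appears polynomially in intermediate bounds (for instance, in the condition number of the representative submatrix, or in the size of the random-observation budget needed for each fix). To avoid this, the proof would proceed not via a black-box invocation but rather by re-running the analysis through the unifying Definition~\ref{def:standard-subspace}, which simultaneously captures incoherence and regularity with their tight parameter scalings. Under this common framework, each subroutine has a cost that is linear (not polynomial) in $\alpha^{-1}$, so the substitution $\alpha^{-1} = \widetilde{O}(r)$ coming from incoherence costs only a single $r$ factor throughout.

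Finally, the corollary would follow by combining these ingredients: Fact~\ref{fact:incoherent-standard} supplies the $\widetilde{\Omega}(1/r)$-sparsity regularity parameter, the refined analysis via Definition~\ref{def:standard-subspace} shows that each stage of the algorithm of Theorem~\ref{thm:informal_regular} loses only one $r$ factor under this weaker parameter, and the error bound $O(r^{1.5+o(1)}\Delta)$ is inherited unchanged. No modification to the partial completion or fixing algorithms themselves is needed beyond changing the parameter settings, which is consistent with the paper's remark that the result is obtained by a ``small modification'' of the algorithm used to prove Theorem~\ref{thm:informal_regular}.
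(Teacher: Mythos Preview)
Your approach is correct and matches the paper's: the formal result (Theorem~\ref{thm:main}) is stated directly for $(\alpha,\beta,\mu)$-standard subspaces, and Corollary~\ref{cor:main_incoherent} follows in one line by plugging in the parameters $\alpha = \Theta(1/(\mu r))$, $\beta = \tfrac12$ from Fact~\ref{fact:incoherent-standard}. One small correction to your intuition: the extra $r$ factor does not come from repeating the fixing step $\alpha^{-1}$ times or reconstructing only $\alpha d$ coordinates at a time (the fix is called the same number of times and recovers all rows and columns at once), but rather---as you correctly say in your final paragraph---from the per-call sample and time budgets of $\Descent$ and $\Fix$ each scaling linearly with $\alpha^{-1}$ (see the $1/\alpha$ in Propositions~\ref{prop:iterative-step} and~\ref{prop:fix} after substituting $\gadd \propto \alpha$).
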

Even with this additional $r$ factor overhead, our results compare favorably to existing work on incoherent matrix completion.
As mentioned previously, the state-of-the-art runtime for incoherent matrix completion (with polylogarithmic dependence on problem conditioning) was $\widetilde{O}(m r^7)$ \cite{JainN15}, which our Corollary~\ref{cor:informal_incoherent} dramatically improves upon.
While the sample complexity of Corollary~\ref{cor:informal_incoherent} 
is a factor of $r$ larger than the sample complexity required by matrix completion algorithms based on semidefinite programming, all incoherent matrix completion methods in the literature which run in time nearly-linear in $m = \max(m, n)$ use $\Omega(m r^2)$ observations (and often more), which we match up to subpolynomial factors.
Additionally, none of the existing polynomial-time algorithms (even the slower semidefinite programming approaches!) were known to yield dimension-independent recovery guarantees for noisy incoherent matrix completion. We summarize how Corollary~\ref{cor:informal_incoherent} compares to prior work on matrix completion under incoherence below.

\begin{figure}[h!]
    \centering
    \label{fig:results}
    \begin{tabular}{ | c | c| c | c | } 
  \hline
  Algorithm & Sample complexity & Runtime & Recovery error \\ 
  \hline
  \cite{recht2011simpler} & $mr $ & $\Omega(m^\omega)$ & N/A \\
  \cite{CandesP10} & $mr$ & $\Omega(m^\omega)$ & $\sqrt n \Delta$  \\ 
  \cite{HardtW14} & $mr^9$ & $mr^{13}$ & $\star$ \\
  \cite{sun2016guaranteed} & $mr^7\kappa^4$ & $m^2r^6\kappa^4$ & N/A \\
  \cite{JainN15} & $mr^5$ & $mr^7$ & N/A \\
  \cite{yi2016fast} & $mr^2 \kappa^4$ & $mr^4\kappa^5$ & $\star$ \\
  Corollary~\ref{cor:informal_incoherent} & $mr^{2 + o(1)}$ & $mr^{3 + o(1)}$ & $r^{1.5 + o(1)} \Delta$ \\
  \hline
\end{tabular}
    \caption{Comparison of algorithms for completing rank-$r$ $\mm \in \R^{m \times n}$ with $\widetilde{O}(1)$-incoherent row and column spans, assuming $m \ge n$. We let $\Delta$ upper bound the (Frobenius norm) noise level, $\kappa$ denote the multiplicative range of $\mm$'s singular values, and hide polylogarithmic factors. For \cite{recht2011simpler, CandesP10}, current SDP solvers with $m$ constraints use $\Omega(m^\omega)$ time \cite{JiangKLP020, HuangJ0T022}. We use $\star$ to mean additional assumptions are made on the noise beyond a Frobenius norm bound.}
\end{figure}

\subsection{Related work}\label{ssec:related}

The literature on matrix completion is vast and a full survey is beyond our scope.
For conciseness, we only consider the most relevant work here.
Much of the algorithmic work on matrix completion falls into three categories, two of which we have already discussed in some depth.
First, there is work on solving matrix completion using SDPs such as nuclear norm minimization, e.g.,~\cite{candes2010power,CandesP10,CandesR12,recht2011simpler,ding2020leave}.
These algorithms typically attain strong statistical guarantees, but have superlinear runtimes in the problem dimensionality.
Second, there is the line of work on formally analyzing nonconvex methods such as alternating minimization, e.g.,~\cite{keshavan2010matrix,hardt2014understanding,HardtW14,JainN15,zhao2015nonconvex,sun2016guaranteed,cherapanamjeri2017nearly,zhang2019correction}.
While these achieve runtimes which are linear in the dimension of the problem, all prior results incurred large polynomial factors of $r$ or other problem parameters in their runtime (and sometimes their sample complexity as well).
We also remark that many of these papers consider notions of robust matrix completion, but tend to consider the setting where the noise matrix is sparse as opposed to norm-bounded, which is the setting we consider.

Finally, there is also the line of work on analyzing convex methods such as gradient descent for matrix completion.
In many of those works, the objective is to demonstrate the more qualitative result that the optimization landscape for matrix completion has no spurious local minima~\cite{sun2015nonconvex,de2015global,ge2016matrix,jin2016provable,zhang2018fast,zhang2022accelerating}.
Consequently, their quantitative guarantees tend to be somewhat loose compared to results using convex programming or nonconvex methods.
Additionally, because these methods are based on gradient descent, they tend to have runtimes which scale polynomially with the condition number of the underlying matrix.
In contrast, our algorithms run in time which is polylogarithmic in the condition number.
One notable exception is~\cite{zhang2022accelerating}; however, this paper only proves local convergence results for their method.

\subsection{Overview of approach}\label{ssec:approach}

In this section, we overview the two main components of our matrix completion algorithms: our iterative method for partial matrix completion (given in Section~\ref{sec:partial}) and our recovery algorithm for the missing row and column subsets which our iterative method fails to give guarantees on (given in Section~\ref{sec:fixing}). Throughout this discussion we let $\mms \defeq \R^{n \times n}$ be a rank-$\rs$ matrix which we wish to recover to disambiguate from iterates denoted as $\mm$; we also let $m = n$ for simplicity. We delay discussion of the noise-robustness of our matrix completion algorithms to the end of the section.

\subsubsection{Partial matrix completion} 

\paragraph{Short-flat decompositions.} Our partial matrix completion algorithm is motivated by a recent approach to sparse recovery developed in \cite{KelnerLLST22}. This approach iteratively makes progress towards recovering a sparse target vector $\xs$ by taking projected gradient steps. The key observation of \cite{KelnerLLST22} is that in the sparse recovery setting, the gradient of the least-squares objective is decomposable into an $\ell_2$-bounded component (the signal direction towards $\xs$) and an $\ell_\infty$-bounded component (the noise), termed a ``short-flat decomposition.'' The algorithm of \cite{KelnerLLST22} carefully used truncation onto the set of sparse vectors (which enjoys a bounded $\ell_1$-to-$\ell_2$ ratio), along with the $\ell_1$-$\ell_\infty$ H\"older's inequality, to bound how much the flat noise component inhibits progress.

We now give a first attempt at executing this strategy for matrix completion, noting that the set of low-rank matrices is a spectral analog of the set of sparse vectors. Let $\mm \in \R^{n \times n}$ be a current iterate, assume it is rank-$\rs$ (for simplicity), and let $\Omega \subseteq [n] \times [n]$ be a uniformly random set of indices with $|\Omega| \approx pn^2$, where $p$ is the observation probability. Suppose we are promised $\normf{\mm - \mms} \le 1$. A natural descent step balancing the goals of making progress towards $\mms$ and maintaining that our iterate has low rank takes $\md \gets [\mms - \mm]_\Omega$ to be the observed difference matrix, lets $\mg$ be the rank-$O(\rs)$ truncation of the SVD of $\md$, and updates $\mm' \gets \mm + \frac \eta p \mg$ for an appropriate step size $\eta > 0$. If $\md$ sufficiently approximates $\mms - \mm$ in the operator norm (up to $\approx (\rs)^{-\half}$), it is straightforward to adapt arguments of \cite{KelnerLLST22} to show that this step makes substantial progress in decreasing distance to $\mms$, e.g., $\normsf{\mm' - \mms} \le \half$.
The intuition for this argument is that
\begin{equation}\label{eq:short_flat_intro}
\frac 1 p \md = \underbrace{\mms - \mm}_{\defeq \mx} + \underbrace{\Par{\frac 1 p [\mms - \mm]_{\Omega} - (\mms - \mm)}}_{\defeq \my}.
\end{equation}
In this decomposition, note that $\mx$ is low-rank and exactly in the signal direction $\mms - \mm$, so if we could remove the influence of $\my$ then the rank-$2\rs$ truncation of $\mx$ (indeed, even no truncation at all) would exactly take us towards $\mms$. Moreover, if we could bound the operator norm of the noise component $\my$, then applying perturbation arguments such as Weyl's theorem shows that $\my$ cannot affect the progress direction by too much after truncating $\md$'s SVD. Furthermore, assuming the random samples $\Omega$ are independently drawn,\footnote{We show how to lift this assumption by splitting samples and using them iteratively as holdouts in Lemma~\ref{lem:multiple_obs}.} it is straightforward to see that $\my$ is mean-zero, so we can hope to control its operator norm using concentration bounds such as the matrix Bernstein inequality. This argument parallels the strategy of \cite{KelnerLLST22}, where we may think of $\mx$ as the short progress component and $\my$ as the flat noise component (each in a singular value sense).

\paragraph{Bounding the difference matrix.} Unfortunately, without further assumptions, the operator norm of $\my$ may be too large. A hard example is when $\mms = uu^\top$ and $\mm = vv^\top$ where $u$, $v$ have entries in $\pm n^{-\half}$ differing in only one coordinate.  In this example, a randomly sampled $\my$ (after debiasing via rescaling by the inverse sampling probability $\approx n$, as in \eqref{eq:short_flat_intro}) will have constant rank and operator norm. A natural way to prove an operator norm bound on such a randomly sampled matrix is via the matrix Bernstein inequality, which shows that we obtain the desired bounds if the difference $\mm - \mms$ has row and column norms bounded by $\approx n^{-\half}$ and entries bounded by $\approx \sqrt{\rs} \cdot n^{-1}$ (see Lemma~\ref{lem:random_sampling_error}); these conditions fail in our hard example as it has one row and column norm which is too large. Nevertheless, Markov's inequality shows that in general, only a constant fraction of rows and columns of the difference matrix can have norms which are too large; these subsets can then be 
estimated from observations and 
dropped (carried out in Section~\ref{ssec:removal}).

This leaves the issue of large entries, a second obstacle for our matrix Bernstein argument. With no assumptions on the row and column spans of $\mms$, it is possible that $\mm - \mms$ has a few large entries missed by our random observations which can ruin our bound on $\my$. We first show that due to the rank bound on $\mm - \mms$, these large entries must be localized to small (unknown) subsets of rows and columns (Lemma~\ref{lem:cover-large-entries}). We then introduce a new measure of progress (Definition~\ref{def:partial-closeness}) where we say two matrices are close if their difference has small Frobenius norm on a large submatrix, which allows us to exclude these small unknown subsets with large entries. Finally, we are able to prove our iterative method makes progress in this modified notion of distance, and thus achieves partial completion. We give a complete statement of the guarantees of our partial matrix completion method in Proposition~\ref{prop:iterative-step}, and demonstrate how to use it recursively to obtain Theorem~\ref{thm:pmc-informal} in Section~\ref{ssec:recurse}.


\paragraph{Mitigating rank blowup.} One technical issue which arises in our partial completion method is that, roughly speaking, the rank of our iterate $\mm$ increases by a constant factor in each iteration. Our earlier argument relied on a rank bound on $\mm$, so this rank blowup is problematic. If our progress measure were $\normf{\mm - \mms}$ (i.e., an exact distance bound), we could simply truncate the SVD of $\mm$ to project it onto the set of low-rank matrices, which affects our progress by a constant factor. However, our guarantee is with respect to a modified notion of distance, so this does not hold. Instead, we show that we can make substantially more progress by taking slightly more samples, cutting the modified distance measure by a factor of $\approx \exp(\sqrt{\log(\rs)}) = (\rs)^{o(1)}$ in each iteration, so that in $\approx \sqrt{\log(\rs)}$ iterations we have made a polynomial factor progress. This results in only a $(\rs)^{o(1)}$ factor blowup in the rank of our iterate, and we then apply our fixing procedure (discussed next) to reduce the rank. For our self-contained partial completion result (Theorem~\ref{thm:pmc-informal}), which is performed in one shot without a fixing step, the corresponding overhead is a factor of $n^{o(1)}$.


\subsubsection{From partial completion to full completion}

\paragraph{Finding a representative subset.} Our distance measure in our partial completion algorithm (see e.g., Theorem~\ref{thm:pmc-informal}) allows for the subsets on which we make progress to be unknown, but this causes issues when used for full completion. Indeed, our partial completion method made no assumptions about the regularity of $\mms$, but to recover dropped subsets (as well as subsets excluded by our distance measure) we need to impose structural assumptions. For simplicity in the following discussion, assume $\mms$ has $(\Omega(1), \Omega(1))$-regular row and column spans for appropriate constants (Definition~\ref{def:regular-subspace}). We also assume for simplicity that $\mm$, the output of our partial completion method, satisfies $[\mm]_{A, B} = [\mms]_{A, B}$ for $|A|, |B| \ge 0.99n$ exactly, i.e., we have run the partial completion method to high accuracy. Finally, we ignore the effect of explicitly dropped rows and columns, as these can be recovered analogously to the (unknown) excluded subsets in our distance measure.

Our high-level strategy is to identify a set $T$ of $\approx \rs$ columns of $\mm$, such that $\mm = \mms$ exactly on these columns, and the column space of $\mms_{:T}$ spans the column space of $\mms_{:T}$. We call such a set $T$ ``representative'' with respect to $(\mm, \mms)$, defined formally in Definition~\ref{def:representative-subset} (which includes additional parameters when $\mm_{A \times B}$ is only close to $\mms_{A \times B}$, rather than exactly equal). We begin with a preprocessing phase in Section~\ref{ssec:sparsify_errors}, where we drop any rows and columns upon which we observe empirical errors. This guarantees that on the remaining submatrix, the difference matrix $\mm - \mms$ has at most $\frac {0.01\rs} n$ nonzero entries per row or column (else they would have been dropped).

We next provide a structural fact that any rank-$\rs$ matrix with such bounded row and column sparsity must have all of its errors localized to a $1\% \times 1\%$ submatrix (see Lemma~\ref{lem:most-rows-good} for a formal statement which handles noise). In the noiseless case, this fact follows straightforwardly from a Gram-Schmidt argument (Lemma~\ref{lem:low-rank-sparse}). This implies that a majority of the remaining columns of $\mm$ and $\mms$ (after preprocessing) are actually identical, and are thus valid to include in a representative subset. We further develop a tester for verifying whether a given column $j \in [n]$ should be included in our representative subset, by drawing $\approx \rs$ random columns of our iterate $\mm$ and checking whether column $\mm_{:j}$ is contained in the span of these random columns. This test is motivated by the observation that if $\mm_{:j}$ contains a sparse error (and hence should not be included), with constant probability our random sample will dodge this error due to our preprocessing step, and hence $\mm_{:j}$ will not be contained in its span. By repeating our tester a small number of times, we can ensure the subset of columns we include is representative.

\paragraph{Regression with a representative subset.} Once we have determined a representative subset $T$, it suffices to use our regularity assumptions to argue that $\approx \rs$ random observations of any column of $\mms$ uniquely determine how it can be completed as a linear combination of $\mm_{:T} = \mms_{:T}$. In the noiseless case, this means that we can simply solve roughly $n$ regression problems in $\rs \times \rs$ matrices to fully complete the matrix. Our formal definition of a representative subset contains a quantitative bound ensuring $\mms_{:T}$ spans the column space of $\mms$ in a well-conditioned manner. This allows for us to argue about the generalization error of our regression subroutines under noise.

We remark that if after our partial completion subroutine, we knew which row and column subsets $A, B$ our iterate was close to $\mms$ on, we could directly skip to this regression step for recovering poorly-behaved subsets. Handling the potential of sparse errors on unknown subsets of our iterate in a noise-tolerant way constitutes the bulk of our technical development in Section~\ref{sec:fixing}.

\subsubsection{Robust matrix completion}

Finally, we discuss how our framework extends to the noisy setting in a natural way. In general, our fixing step in Section~\ref{sec:fixing} takes as input $\mm$ with the guarantee that $\mm$ is $\tDelta$-close to $\mms$ on a submatrix (see Definition~\ref{def:partial-closeness}), after excluding an $\frac \alpha 2$-fraction of rows and columns explicitly dropped by our iterative method, and an additional $\frac \alpha 2$-fraction due to our distance measure (where $\alpha$ is a subspace regularity parameter). Assuming $\tDelta$ is sufficiently larger than $\normf{\mn}$, where we receive observations from $\mms + \mn$ (i.e.\ $\mn$ is the noise), our fixing step learns any excluded rows and columns to a comparable distance to the average undropped row or column, and yields a standard distance guarantee (rather than a partial one). However, this stronger standard distance guarantee comes at the cost of a $\textup{poly}(\rs)$ overhead over the initial distance promise $\tDelta$, and is stated formally in Proposition~\ref{prop:fix}. This overhead is due to lossiness when converting between operator norm distance guarantees (which naturally arises in analyzing the generalization error of our regression step), and Frobenius norm distance guarantees (which our iterative method yields). By ensuring that all matrices encountered throughout are low-rank, this lossiness only has $\rs$-dependent factors. 

Our robust matrix completion results stated in Theorem~\ref{thm:informal_regular} and Corollary~\ref{cor:informal_incoherent} follow by applying the guarantees of Proposition~\ref{prop:iterative-step} (our partial matrix completion algorithm) and Proposition~\ref{prop:fix} (our fixing step) recursively. By running Proposition~\ref{prop:iterative-step} for a small number of steps to control the blowup of our iterate's rank, and applying Proposition~\ref{prop:fix} to reduce the rank and recover dropped subsets, we can make multiplicative distance progress towards our noise threshold $\Delta \ge \normf{\mn}$. The error overhead incurred by our algorithms is then due to a final application of Proposition~\ref{prop:fix}.
\section{Preliminaries}\label{sec:prelims}

\paragraph{General notation.} Throughout $[n] \defeq \{i \in \N \mid i \le n\}$. When $S \subseteq T$ and $T$ is clear from context, we let $S^c \defeq T \setminus S$.  We say $v \in \R^d$ is $s$-sparse if it has at most $s$ nonzero entries. Applied to a vector, $\norm{\cdot}_p$ is the $\ell_p$ norm. The Frobenius, operator, and trace norms of a matrix are denoted $\normf{\cdot}$, $\normop{\cdot}$, and $\normtr{\cdot}$ and correspond to the $2$-norm, $\infty$-norm, and $1$-norm of the singular values of a matrix. The all-zeroes and all-ones vectors of dimension $d$ are denoted $\0_d$ and $\1_d$.

\paragraph{Matrices.} Matrices are denoted in boldface. We equip $\R^{m \times n}$ with the inner product $\inprod{\ma}{\mb} \defeq \Tr(\ma^\top \mb)$. The $d \times d$ identity matrix is denoted $\id_d$, and the all-zero $m \times n$ matrix is denoted $\mzero_{m \times n}$. The ordered singular values of $\mm \in \R^{m \times n}$ with $m \ge n$ are denoted $\{\sigma_i(\mm)\}_{i \in [n]}$, where $\sigma_1$ is largest and $\sigma_n$ is smallest; when $\mm \in \R^{d \times d}$ is symmetric, we similarly define $\{\lam_i(\mm)\}_{i \in [d]}$. When $i$ is larger than the rank of $\mm$, $\sigma_i(\mm) \defeq 0$. The number of nonzero entries of $\mm$ is denoted $\nnz(\mm)$, and the largest absolute value among its entries is denoted $\norm{\mm}_{\max}$. For $\tau \ge 0$, $\mm \in \R^{m \times n}$, we let $\mm^{\le \tau}$ be such that $\mm^{\le \tau}_{ij}$ is the median of $-\tau$,$\tau$, and $\mm_{ij}$. We say $\mm \in \R^{m \times n}$ is given as a rank-$r$ factorization if we have explicit access to $\mmu \in \R^{m \times r}$, $\mv \in \R^{n \times r}$ with $\mm = \mmu \mv^\top$. For symmetric positive semidefinite $\ma, \mb \in \R^{d \times d}$ we use $\ma \approx_\eps \mb$ to denote $\exp(-\eps)\mb \preceq \ma \preceq \exp(\eps)\mb$. When $\ma$ is symmetric positive definite we let $\kappa(\ma)$ be the ratio of its largest and smallest eigenvalues. We define $\tmv(\mm)$ as the amount of time it takes to compute $\mm v$ for any $v$; note $\tmv(\mm) = O(\nnz(\mm))$, and if $\mm \in \R^{m \times n}$ is given as a rank-$r$ factorization then $\tmv(\mm) = O((m + n)r)$.

\paragraph{Submatrices.} For $\mm \in \R^{m \times n}$ and subsets $S \subseteq [m]$, $T \subseteq [n]$, the matrix $\mm_{S, T}$ denotes the $|S| \times |T|$ submatrix of $\mm$ restricted to rows $S$ and columns $T$. When $A = \{i\}$ for $i \in [m]$, we abbreviate this as $\mm_{i, B}$, and similarly define $\mm_{A, j}$ for $j \in [n]$. For $\mm \in \R^{m \times n}$ we write $\mm_{A:}$ as shorthand for $\mm_{A, [n]}$ and $\mm_{:B}$ for $\mm_{[m], B}$. The $i^{\text{th}}$ row and $j^{\text{th}}$ column of $\mm$ are similarly denoted $\mm_{i:}$ and $\mm_{:j}$. When dimensions are clear, the matrix which is all-zeroes except for a one in the $(i, j)^{\text{th}}$ entry is $\me_{ij}$ and $e_i$ is the $i^{\text{th}}$ standard basis vector. We say that $\mn$ is a $\gamma$-submatrix of $\mm \in \R^{m \times n}$ if $\mn = \mm_{S, T}$ for $S \subseteq [m]$, $T \subseteq [n]$ with $|S| \ge m - \gamma\min(m, n)$ and $|T| \ge n - \gamma\min(m, n)$. We say that $\mm$ is $s$-row column sparse (RCS) if each row and column of $\mm$ has as most $s$ nonzero entries. When $\Omega \subseteq [m] \times [n]$ is a set of index pairs, $\mm_\Omega$ zeroes out all entries in $\mm$ indexed by $\Omega^c$ (we similarly define $v_\Omega$ for vectors $v \in \R^d$ and $\Omega \subseteq [d]$).

\paragraph{Comparing matrices.} We introduce two nonstandard notions of closeness between matrices. These notions will be used primarily in stating the guarantees of our subroutines in Sections~\ref{sec:partial} and~\ref{sec:fixing} respectively, to deal with subsets or sparse error patterns out of our control.

\begin{definition}[Closeness on a submatrix]\label{def:partial-closeness}
We say $\mm, \mm' \in \R^{m \times n}$ are $\Delta$-close on a $\gamma$-submatrix if there exist subsets $A \subseteq [m]$, $B \subseteq [n]$ satisfying $|A| \ge m - \gamma \min(m,n)$, $|B| \ge n - \gamma \min(m,n)$, and 
\[\normf{\Brack{\mm - \mm'}_{A, B}} \le \Delta.\]
\end{definition}

\begin{definition}[Closeness away from an RCS matrix]\label{def:sparse-close}
We say $\mm, \mm' \in \R^{m \times n}$ are $\Delta$-close away from an $s$-RCS matrix if $\mm - \mm' = \mx + \my$, for some $\normf{\mx} \leq \Delta$, and $s$-RCS $\my$.
\end{definition}

We note that in Definition~\ref{def:partial-closeness}, the sets $A$, $B$ are unknown; similarly, in Definition~\ref{def:sparse-close}, the factorization $\mx$, $\my$ is unknown. Our analysis will only use these definitions as existential statements.

\paragraph{Observation model.} For $\mm \in \R^{m \times n}$, we specialize the notation $\mm_\Omega \gets \oracle_p(\mm)$ to mean $\Omega \subset [m] \times [n]$ contains each $(i, j) \in [m] \times [n]$ with probability $p$ (sampled independently), and $\mm_\Omega$ is the sum of the observations $\mm_{ij}\me_{ij}$ for $(i, j) \in \Omega$. When an algorithm requires the ability to query $\mm \in \R^{m \times n}$ with $\oracle_p$ for various $p$ (specified in the algorithm description), we list the input as $\oracle_{[0, 1]}(\mm)$, which also gives access to $\orzo(\mm_{S, T})$ for $S \subseteq [m]$, $T \subseteq [n]$. 

We note that this observation model (querying $\oracle_p$, possibly multiple times independently) is compatible with the standard model in the literature (which only allows for a one-shot set of realized observations), up to a small loss in parameters. This is made formal through the following lemma, which shows how to simulate $K$ draws from $\oracle_p$ given one-time access to $\oracle_{Kp}$.

\begin{lemma}\label{lem:multiple_obs}
Let $\{p_k\}_{k \in [K]} \in (0, 1)$ satisfy $p_k \le p \le \frac 1 K$ for all $k \in [K]$, and let $\mm \in \R^{m \times n}$. We can simulate sequential access to $\oracle_{p_k}(\mm)$ for all $k \in [K]$ with access to $\oracle_{Kp}(\mm)$.
\end{lemma}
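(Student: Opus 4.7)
The plan is to argue by an explicit coupling. The key numerical fact is that since $p_k \le p$ for every $k$, we have $\sum_{k \in [K]} p_k \le Kp$, and the Weierstrass product inequality then gives $\prod_{k \in [K]}(1 - p_k) \ge 1 - \sum_k p_k \ge 1 - Kp$. I therefore define $q \in [0, 1]$ via
\[
(1 - q)\prod_{k \in [K]} (1 - p_k) = 1 - Kp,
\]
and this will be the ``slack'' probability that absorbs the difference between $Kp$ and $1 - \prod_k(1-p_k)$.

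Next, I construct the desired joint distribution coordinate-wise on $[m] \times [n]$. For each $(i,j)$, draw $X_0^{ij} \sim \textup{Bern}(q)$ and $X_k^{ij} \sim \textup{Bern}(p_k)$ for $k \in [K]$, all mutually independent (and independently across $(i,j)$). Let $Z^{ij} \defeq \max_{k \in \{0\} \cup [K]} X_k^{ij}$, and set $\widetilde{\Omega} \defeq \{(i,j) : Z^{ij} = 1\}$ and $\Omega_k \defeq \{(i,j) : X_k^{ij} = 1\}$ for $k \in [K]$. A direct check using the definition of $q$ shows $\Pr[Z^{ij} = 1] = 1 - (1-q)\prod_k(1-p_k) = Kp$, so $\widetilde{\Omega}$ has exactly the index distribution of an $\oracle_{Kp}$-draw. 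By construction, the $X_k^{ij}$ for $k \in [K]$ still have their original Bernoulli$(p_k)$ marginals and remain mutually independent across $k$, so the $\Omega_k$ have the joint law of $K$ independent $\oracle_{p_k}$-draws, and moreover $\Omega_k \subseteq \widetilde{\Omega}$ for every $k$.

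To turn this coupling into the claimed simulation, I query $\oracle_{Kp}(\mm)$ once to obtain an index set with the distribution of $\widetilde{\Omega}$ together with the entries $\{\mm_{ij}\}_{(i,j) \in \widetilde{\Omega}}$. For each $(i,j) \notin \widetilde{\Omega}$ I set all $X_k^{ij} = 0$, consistent with $Z^{ij} = 0$. For each $(i,j) \in \widetilde{\Omega}$, I sample $(X_0^{ij}, X_1^{ij}, \ldots, X_K^{ij})$ from the conditional distribution of $K+1$ independent Bernoullis given that at least one is $1$; this is a trivial distribution on $\{0,1\}^{K+1} \setminus \{0\}$ that can be sampled directly. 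The resulting $\Omega_k$'s have the correct joint law and, since $\Omega_k \subseteq \widetilde{\Omega}$, every entry of $\mm$ that the $k$-th simulated oracle needs was already revealed to us, so the simulation is faithful. The only step that needs care is the inequality justifying $q \in [0,1]$ and the short computation of the marginal of $Z^{ij}$; everything else is mechanical.
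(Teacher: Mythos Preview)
Your proof is correct and rests on the same core observation as the paper's: $1 - \prod_{k}(1-p_k) \le \sum_k p_k \le Kp$, so the union of the $K$ independent reveals can be coupled inside a single $\oracle_{Kp}$ draw. The only difference is in the specific coupling: the paper introduces an intermediate uniform level $q$ with $1-(1-q)^K = Kp$, first simulates $K$ i.i.d.\ $\oracle_q$ draws from $\oracle_{Kp}$, and then thins each to $\oracle_{p_k}$ via rejection (using $p_k \le p \le q$); you instead introduce a single slack Bernoulli $X_0$ and couple all $K+1$ indicators directly. Both constructions are equally valid and of comparable simplicity; yours is slightly more direct in that it avoids the two-stage structure.
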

\begin{proof}
The probability that the entry is revealed in any of the independent, sequential queries is
\[\ptot \defeq 1 - \prod_{k \in [K]}(1 - p_k) \le Kp.\]
The conclusion then follows from two observations. First, letting $q \ge p$ satisfy $1 - (1 - q)^K = Kp$, if $\oracle_{Kp}$ reveals an entry we can efficiently simulate how many of $K$ calls to $\oracle_q$ would have revealed that entry conditioned on at least one call resulting in a reveal. Second, given access to $\oracle_q$ we can simulate $\oracle_{p_k}$ for any $p_k \le q$ by rejecting a revealed entry with the appropriate probability.
\end{proof}

In other words, Lemma~\ref{lem:multiple_obs} allows us to draw observations from a matrix a single time, and then split the samples in a way that simulates multiple sequential accesses to the matrix. 

\paragraph{Subspaces.} For a subspace $V \subseteq \R^d$ of dimension $r$, we denote its orthogonal complement by $V_\perp$. We let $\proj_V \in \R^{d \times d}$ be the projection matrix onto $V$. We let $\mb_V \in \R^{d \times r}$ denote an arbitrary matrix satisfying $\mb_V\mb_V^\top = \proj_V$ and $\mb_V^\top\mb_V = \id_r$. We say $\mmu \msig \mv^\top$ is the singular value decomposition (SVD) of $\mm$ if $\mmu, \mv$ have orthonormal columns and $\msig$ is nonnegative and diagonal; when this is not unique, we take an arbitrary SVD. We recall our definition of a regular subspace in Definition~\ref{def:regular-subspace}. We will mainly use this definition through the following equivalence.

\begin{lemma}\label{lem:regular-wc}
Let $V \subseteq \R^d$ have dimension $r$, and let $\{b_i\}_{i \in [d]} \subset \R^r$ be rows of an (arbitrary) choice of $\mb_V$. $V$ is $(\alpha,\beta)$-regular if and only if for every $S \subseteq [d]$ with $|S| \ge (1 - \alpha) d$,
\[ \beta^2\id_r \preceq \sum_{i \in S} b_ib_i^\top \preceq \id_r.\]
\end{lemma}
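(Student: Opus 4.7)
The plan is to reduce both the definition of $(\alpha, \beta)$-regularity and the claimed spectral condition on $\sum_{i \in S} b_i b_i^\top$ to the same statement about the principal submatrix $(\proj_V)_{T, T}$, where $T = S^c$. First, the upper bound $\sum_{i \in S} b_i b_i^\top \preceq \id_r$ is immediate for every $S \subseteq [d]$: the identity $\mb_V^\top \mb_V = \id_r$ rewrites as $\sum_{i \in [d]} b_i b_i^\top = \id_r$, and the omitted terms $\sum_{i \in S^c} b_i b_i^\top$ are PSD. So the real content of the lemma is the equivalence between (i) $\sum_{i \in S} b_i b_i^\top \succeq \beta^2 \id_r$ for every $|S| \ge (1 - \alpha)d$, and (ii) the $(\alpha, \beta)$-regularity of $V$.

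The bridge between these is the algebraic identity $(\proj_V)_{T, T} = [\mb_V]_{T:} [\mb_V]_{T:}^\top$, which follows directly from $(\proj_V)_{ij} = \inprod{b_i}{b_j}$. Since $[\mb_V]_{T:}^\top [\mb_V]_{T:} = \sum_{i \in T} b_i b_i^\top$ shares the same nonzero spectrum as the Gram matrix $[\mb_V]_{T:} [\mb_V]_{T:}^\top$, their operator norms coincide. Combined with $\sum_{i \in T} b_i b_i^\top = \id_r - \sum_{i \in S} b_i b_i^\top$ for $T = S^c$, condition (i) is equivalent to the statement $\normop{(\proj_V)_{T, T}} \le 1 - \beta^2$ for every $|T| \le \alpha d$.

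To translate condition (ii) into the same spectral bound, I would note that for any $v \in \R^d$ supported on $T$, writing $v = E v_T$ for the coordinate inclusion $E \in \R^{d \times |T|}$ gives $\norm{\proj_V v}_2^2 = v^\top \proj_V v = v_T^\top (\proj_V)_{T, T} v_T$. Combining with $\norm{v}_2 = \norm{v_T}_2$ and the orthogonal decomposition $\norm{\proj_{V_\perp} v}_2^2 = \norm{v}_2^2 - \norm{\proj_V v}_2^2$, the inequality $\norm{\proj_{V_\perp} v}_2 \ge \beta \norm{v}_2$ reads exactly $v_T^\top (\proj_V)_{T, T} v_T \le (1 - \beta^2) \norm{v_T}_2^2$. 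Quantifying over all $v$ supported on a fixed $T$ yields $(\proj_V)_{T, T} \preceq (1 - \beta^2) \id_{|T|}$; quantifying over all $|T| \le \alpha d$ gives the same spectral reformulation obtained from (i).

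Both directions then follow by chaining these equivalences: for the forward direction, given regularity, apply the reformulation for each $|T| \le \alpha d$ and pass back through the Gram identity to obtain (i); for the backward direction, given the spectral lower bound, take any $\alpha d$-sparse $v$, let $T$ be its support, and unwind the operator norm bound via the orthogonal decomposition. I do not expect a genuine obstacle here; the only care required is the bookkeeping that converts between the two Gram products $[\mb_V]_{T:} [\mb_V]_{T:}^\top$ and $[\mb_V]_{T:}^\top [\mb_V]_{T:}$ and exploits their shared nonzero spectrum, so the lemma is essentially a translation exercise between projector submatrices and their associated Gram matrices.
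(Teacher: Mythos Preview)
Your proposal is correct and follows essentially the same route as the paper: both reduce the regularity condition, via the Pythagorean identity $\norm{\proj_{V_\perp} v}_2^2 = \norm{v}_2^2 - \norm{\proj_V v}_2^2$, to an operator-norm bound on $\sum_{i \in T} b_i b_i^\top = [\mb_V]_{T:}^\top [\mb_V]_{T:}$ for $|T| \le \alpha d$, and then complement via $\sum_{i \in S} b_i b_i^\top = \id_r - \sum_{i \in S^c} b_i b_i^\top$. The only cosmetic difference is that you detour through the principal submatrix $(\proj_V)_{T,T} = [\mb_V]_{T:}[\mb_V]_{T:}^\top$ and invoke the shared nonzero spectrum of $AA^\top$ and $A^\top A$, whereas the paper works directly with $\norm{\mb_V^\top v}_2^2 = \norm{\sum_i v_i b_i}_2^2$.
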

\begin{proof}
First observe that $\norm{\proj_V v}_2^2 + \norm{\proj_{V_\perp} v}_2^2 =\norm{v}_2^2$ and $\norm{\proj_V v}_2^2 = v^\top \proj_V v= \norm{\mb_V v}_2^2$ for all $v \in \R^d$. Consequently, $V$ is $(\alpha,\beta)$ regular if and only if $\|\sum_{i \in [d]} b_i v_i\|_2^2 \leq (1 - \beta^2) \norm{v}_2^2$, for all $\alpha d$-sparse $v \in \R^d$. This is equivalent to the condition that for all $T \subseteq [d]$ with $|T| \leq \alpha d$ and (not necessarily sparse) $v \in \R^d$, $\|\sum_{i \in T} b_i v_i\|_2^2 \leq (1 - \beta^2) \sum_{i \in T} v_i^2$. Equivalently, for every $T \subseteq [d]$ with $|T| \leq \alpha d$, the matrix $\mb_{T:}$ must have operator norm $\le \sqrt{1 - \beta^2}$, so $\sum_{i \in T} b_i b_i^\top \preceq (1 - \beta^2)  \mI_r$. Since $\sum_{i \in S} b_i b_i^\top = \mI_r - \sum_{i \in S^c} b_i b_i^\top$ and $\sum_{i \in [d]} b_i b_i^\top = \mI_r$, the result follows.
\end{proof}

We also introduce a notion of a standard subspace in Definition~\ref{def:standard-subspace}, which is more compatible with the aformentioned incoherence assumption in the matrix completion literature. This definition is used to streamline the application of the tools from Section~\ref{sec:fixing}.

\begin{definition}[Standard subspace]\label{def:standard-subspace}
We say a subspace $V \subseteq \R^d$ of dimension $r$ is \emph{$(\alpha, \beta,\mu)$-standard} if it is $(\alpha, \beta)$-regular and there exists a subset $S \subseteq [d]$ with $|S| \geq (1 - \frac \alpha 3)d$ such that for all $i \in S$, $\norm{\proj_V e_i}_2 \leq \sqrt{\frac {\mu r} d}$.
\end{definition}

The following fact is immediate by Markov's inequality, $\norm{\proj_V e_i}_2 = \norm{\mb_V e_i}_2$, and $\normf{\mb_V}^2 = r$.
\begin{fact}\label{fact:mu_alpha}
If a subspace $V \subseteq \R^d$ is $(\alpha , \beta)$-regular, then it is $(\alpha,\beta, \frac 3 \alpha)$-standard.
\end{fact}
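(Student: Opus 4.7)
The plan is to unpack the definition of $(\alpha, \beta, \mu)$-standard and observe that regularity of $V$ is inherited by assumption, so the only content is producing a set $S \subseteq [d]$ with $|S| \ge (1 - \frac{\alpha}{3})d$ on which the coordinate projections $\|\proj_V e_i\|_2$ are uniformly bounded by $\sqrt{\mu r/d}$ with $\mu = 3/\alpha$. This is exactly the kind of ``most rows are small'' statement that Markov's inequality is designed for, once we rewrite the projection norms in terms of rows of $\mb_V$.

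First, I would recall from the preliminaries that $\proj_V = \mb_V \mb_V^\top$ with $\mb_V^\top \mb_V = \id_r$, so for any $i \in [d]$,
\begin{equation*}
\|\proj_V e_i\|_2^2 = e_i^\top \mb_V \mb_V^\top \mb_V \mb_V^\top e_i = \|\mb_V^\top e_i\|_2^2 = \|b_i\|_2^2,
\end{equation*}
where $b_i \in \R^r$ is the $i$th row of $\mb_V$ (as defined above Lemma~\ref{lem:regular-wc}). Summing these identities across $i \in [d]$ yields $\sum_{i \in [d]} \|\proj_V e_i\|_2^2 = \normsf{\mb_V}^2 = r$.

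Next, I would apply Markov's inequality to the nonnegative quantities $\|\proj_V e_i\|_2^2$ with threshold $t = \frac{3r}{\alpha d}$. The number of indices $i \in [d]$ violating $\|\proj_V e_i\|_2^2 \le t$ is at most $r/t = \frac{\alpha d}{3}$. Defining $S$ to be the complementary set of indices gives $|S| \ge (1 - \frac{\alpha}{3})d$ and $\|\proj_V e_i\|_2 \le \sqrt{3r/(\alpha d)} = \sqrt{\mu r/d}$ for all $i \in S$ with $\mu = 3/\alpha$. Combined with the given $(\alpha,\beta)$-regularity of $V$, this matches Definition~\ref{def:standard-subspace} with parameters $(\alpha, \beta, 3/\alpha)$ and finishes the proof.

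There is no real obstacle here, since every ingredient (the identity $\|\proj_V e_i\|_2 = \|b_i\|_2$, the trace identity for $\normsf{\mb_V}^2$, and Markov's inequality) is elementary; the only care needed is to choose the threshold so that the Markov bound on the bad set matches the $\frac{\alpha}{3}$ slack allowed by the standard-subspace definition, and to confirm that the choice of $\mb_V$ does not matter because $\|\proj_V e_i\|_2$ is intrinsic to $V$.
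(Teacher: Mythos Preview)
Your proof is correct and matches the paper's own argument, which is simply the one-line remark that the claim is immediate from Markov's inequality applied to $\norm{\proj_V e_i}_2^2$ together with $\norm{\proj_V e_i}_2 = \norm{\mb_V^\top e_i}_2$ and $\normsf{\mb_V}^2 = r$. You have just written out that computation in full.
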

Thus, whenever we mention a subspace being $(\alpha , \beta,\mu)$-standard, we may assume $\mu \leq \frac 3 \alpha$. Finally, for comparison to the matrix completion literature, we also give the definition of incoherence which is typically used to parameterize algorithms.

\begin{definition}[Incohererent subspace]\label{def:incoherence}
We say a subspace $V \subseteq \R^d$ of dimension $r$ is \emph{$\mu$-incoherent} if $\norm{\proj_V e_i}_2 \le \sqrt{\frac{\mu r}{d}}$ for all $i \in [d]$.
\end{definition}

The following is then immediate from the characterization in Lemma~\ref{lem:regular-wc}. 

\begin{fact}\label{fact:incoherent-standard}
If a subspace $V \subseteq \R^d$ is $\mu$-incoherent, it is $(\frac{3}{4\mu r},\half,\mu)$-standard.
\end{fact}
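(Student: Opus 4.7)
The plan is to verify both conditions in the definition of $(\alpha,\beta,\mu)$-standard (Definition~\ref{def:standard-subspace}) with the parameters $\alpha = \frac{3}{4\mu r}$, $\beta = \half$, $\mu = \mu$. The second condition (a large subset $S$ on which projections of standard basis vectors are small) is immediate: since $V$ is $\mu$-incoherent, the bound $\norm{\proj_V e_i}_2 \le \sqrt{\mu r/d}$ holds for \emph{every} $i \in [d]$, so we may simply take $S = [d]$.

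The substance of the proof is therefore to show $(\frac{3}{4\mu r}, \half)$-regularity. I will invoke the equivalent characterization in Lemma~\ref{lem:regular-wc}: it suffices to show that for every $T \subseteq [d]$ with $|T| \le \alpha d = \frac{3d}{4\mu r}$, the rows $\{b_i\}_{i \in T}$ of an orthonormal basis matrix $\mb_V \in \R^{d \times r}$ for $V$ satisfy $\sum_{i \in T} b_i b_i^\top \preceq (1-\beta^2)\id_r = \tfrac{3}{4}\id_r$ (so that the complementary sum $\sum_{i \in S} b_ib_i^\top$ is lower bounded by $\beta^2 \id_r = \tfrac{1}{4}\id_r$).

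The key observation is that $\norm{b_i}_2 = \norm{\mb_V^\top e_i}_2 = \norm{\proj_V e_i}_2 \le \sqrt{\mu r/d}$ by incoherence. Each rank-one summand $b_i b_i^\top$ is positive semidefinite with operator norm $\norm{b_i}_2^2 \le \mu r/d$, so by the triangle inequality on the operator norm of PSD matrices,
\[
\Bigl\lVert \sum_{i \in T} b_i b_i^\top \Bigr\rVert_{\op} \;\le\; \sum_{i \in T} \norm{b_i}_2^2 \;\le\; |T| \cdot \frac{\mu r}{d} \;\le\; \frac{3d}{4\mu r}\cdot \frac{\mu r}{d} \;=\; \frac{3}{4}.
\]
Since the matrix is PSD, this operator norm bound is equivalent to the claimed Loewner inequality, completing verification of regularity via Lemma~\ref{lem:regular-wc}.

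There is no real obstacle here; the proof is a one-line calculation once Lemma~\ref{lem:regular-wc} is in hand. The only thing to be a bit careful about is the choice of constants: $\beta = \tfrac{1}{2}$ forces $1-\beta^2 = \tfrac{3}{4}$, which is exactly what matches the factor of $\tfrac{3}{4\mu r}$ appearing in $\alpha$, so the constants in the statement are tight for this particular argument.
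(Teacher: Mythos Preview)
Your proof is correct and is essentially the same argument as the paper's: bound $\normsop{\sum_{i \in T} b_i b_i^\top} \le |T|\max_i \norm{b_i}_2^2 \le \alpha d \cdot \frac{\mu r}{d} = \frac{3}{4}$ using incoherence, then subtract from $\id_r$ (the paper phrases this last step as ``apply Weyl's perturbation theorem,'' while you invoke Lemma~\ref{lem:regular-wc} directly). You also explicitly verify the second condition of Definition~\ref{def:standard-subspace} with $S=[d]$, which the paper leaves implicit.
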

\begin{proof}
Note that $\normsop{\sum_{i \in S^c} b_i b_i^\top} \le |S^c| \max_{i \in S^c} \norm{b_i}_2^2$ and apply Weyl's perturbation theorem. 
\end{proof}

We introduce the notion of a standard subspace primarily for technical convenience as it captures the parameters of both subspace regularity and incoherence.  We will prove a result (Theorem~\ref{thm:main}) in terms of all of these parameters $\alpha, \beta, \mu$ and then deduce our results for subspace regularity and incoherence by combining Theorem~\ref{thm:main} with Fact~\ref{fact:mu_alpha} and Fact~\ref{fact:incoherent-standard} respectively.

\paragraph{Concentration.} We use the following concentration inequalities and their scalar specializations.

\begin{fact}[Matrix Chernoff, Theorem 5.1.1 \cite{Tropp15}]\label{fact:matchern}
Let $\{\mx_i\}_{i \in [n]}$ be independent, $d \times d$ positive semidefinite, matrix-valued random variables satisfying $\normop{\mx_i} \le R$ with probability $1$ for all $i \in [n]$, and let $\mx$ denote their sum. For any $\eps \in (0, 1)$,
\begin{align*}\Pr\Brack{\lam_{\min}\Par{\mx} \le (1 - \eps)\lam_{\min}\Par{\E\mx} } &\le d\exp\Par{-\frac{\eps^2 \lam_{\min}(\E \mx)}{3R}}, \\
\Pr\Brack{\lam_{\max}\Par{\mx} \ge (1 + \eps)\lam_{\max}\Par{\E\mx} } &\le d\exp\Par{-\frac{\eps^2 \lam_{\max}(\E \mx)}{3R}}.
\end{align*}
\end{fact}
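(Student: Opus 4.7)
The plan is to follow the matrix Laplace transform method introduced by Ahlswede--Winter and sharpened by Tropp. For the upper tail, I begin from the observation that for any $\theta > 0$ the function $s \mapsto e^{\theta s}$ is increasing, so
\[
\Pr\Brack{\lam_{\max}(\mx) \ge t} = \Pr\Brack{e^{\theta \lam_{\max}(\mx)} \ge e^{\theta t}} \le e^{-\theta t} \E e^{\theta \lam_{\max}(\mx)} \le e^{-\theta t} \E \Tr \exp(\theta \mx),
\]
by Markov's inequality and the fact that the exponential of a Hermitian matrix is positive semidefinite, so $e^{\theta \lam_{\max}(\mx)} \le \Tr \exp(\theta \mx)$. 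The dimensional factor $d$ enters through this last inequality after optimizing $\theta$.

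Next I would control the trace moment generating function $\E \Tr \exp(\theta \sum_i \mx_i)$ using Lieb's concavity theorem, which asserts that $\ma \mapsto \Tr \exp(\mh + \log \ma)$ is concave on the positive definite cone for every Hermitian $\mh$. Applying Jensen's inequality through Lieb's theorem iteratively over the independent summands yields the subadditivity of the matrix cumulant generating function,
\[
\E \Tr \exp\Par{\theta \sum_{i \in [n]} \mx_i} \le \Tr \exp\Par{\sum_{i \in [n]} \log \E \exp(\theta \mx_i)}.
\]
This step is the main technical obstacle of the proof; all other pieces are relatively elementary, but Lieb's concavity is a highly nontrivial matrix analysis fact and is what allows the argument to avoid a commuting assumption on the $\mx_i$.

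To bound each factor, I would use the scalar inequality $e^{\theta x} \le 1 + \frac{e^{\theta R} - 1}{R} x$ which holds for all $x \in [0, R]$ by convexity of $e^{\theta \cdot}$; since $\mzero \preceq \mx_i \preceq R \id$ by the operator norm assumption, this lifts to the operator inequality $\exp(\theta \mx_i) \preceq \id + \frac{e^{\theta R} - 1}{R} \mx_i$. Taking expectations and applying operator monotonicity of $\log$ together with $\log(\id + \ma) \preceq \ma$ gives $\log \E \exp(\theta \mx_i) \preceq \frac{e^{\theta R} - 1}{R} \E \mx_i$. Summing, exponentiating, and bounding the trace by $d$ times the largest eigenvalue reduces the bound to
\[
\Pr\Brack{\lam_{\max}(\mx) \ge (1+\eps)\lam_{\max}(\E\mx)} \le d \exp\Par{-\theta(1+\eps)\lam_{\max}(\E\mx) + \tfrac{e^{\theta R} - 1}{R}\lam_{\max}(\E\mx)},
\]
and the choice $\theta = \frac{1}{R}\log(1+\eps)$ together with the elementary estimate $(1+\eps)\log(1+\eps) - \eps \ge \eps^2/3$ for $\eps \in (0,1)$ produces the stated tail. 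For the lower tail I would run the same argument with $-\theta \mx_i$ in place of $\theta \mx_i$, using the scalar bound $e^{-\theta x} \le 1 - \frac{1 - e^{-\theta R}}{R} x$ on $[0,R]$, noting that the lower bound on $\lam_{\min}$ becomes an upper bound on $\lam_{\max}(-\mx)$, and optimizing $\theta$ analogously.
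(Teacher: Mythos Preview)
The paper does not provide its own proof of this statement: it is quoted as a fact from \cite{Tropp15} (Theorem~5.1.1) and used as a black box. Your sketch correctly recapitulates the standard Tropp argument from that reference (matrix Laplace transform, Lieb's concavity for subadditivity of the matrix cgf, the convexity bound on $e^{\theta x}$ over $[0,R]$, and the elementary entropy inequality to simplify the Chernoff exponent), so there is nothing to compare against and no gap to flag.
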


\begin{fact}[Matrix Bernstein, Theorem 1.6.2 \cite{Tropp15}]\label{fact:matbern}
Let $\{\mx_i\}_{i \in [n]}$ be independent, $d_1 \times d_2$ matrix-valued random variables satisfying $\E \mx_i = \mzero_{d_1 \times d_2}$ and $\normop{\mx_i} \le R$ with probability $1$ for all $i \in [n]$, let $\mx$ denote their sum, and let 
\[\sigma^2 \defeq \max\Par{\normop{\sum_{i \in [n]} \E \mx_i\mx_i^\top},\; \normop{\sum_{i \in [n]} \E \mx_i^\top\mx_i}}.\]
Then for all $t \ge 0$, $\Pr[\normop{\mx} \ge t] \le (d_1 + d_2)\exp\Par{-\frac{t^2}{2\sigma^2 + \frac 2 3 Rt}}$, so for all $\delta \in (0, 1)$,
\[\Pr\Brack{\normop{\mx} \ge \max\Par{2\sigma\sqrt{\log\Par{\frac{d_1 + d_2}{\delta}}},\; \frac{4R}{3}\log\Par{\frac{d_1 + d_2}{\delta}}}}\le \delta.\]
\end{fact}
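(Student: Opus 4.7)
My plan is to prove Fact~\ref{fact:matbern} via the standard matrix Laplace transform method developed by Ahlswede--Winter and refined using Lieb's concavity theorem. First, I would reduce to the symmetric/Hermitian case by applying Hermitian dilation: define the $(d_1+d_2) \times (d_1+d_2)$ self-adjoint random matrices
\[
\widetilde{\mx}_i \defeq \begin{pmatrix} \mzero_{d_1\times d_1} & \mx_i \\ \mx_i^\top & \mzero_{d_2 \times d_2} \end{pmatrix},
\]
and observe that $\lam_{\max}(\widetilde{\mx}_i) = \normop{\mx_i} \le R$, $\E \widetilde{\mx}_i = \mzero$, and $\widetilde{\mx}_i^2$ is the block-diagonal matrix with blocks $\mx_i\mx_i^\top$ and $\mx_i^\top \mx_i$, so $\normop{\sum_i \E \widetilde{\mx}_i^2} = \sigma^2$. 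Since $\normop{\sum_i \mx_i} = \lam_{\max}(\sum_i \widetilde{\mx}_i)$, it suffices to prove the one-sided bound $\Pr[\lam_{\max}(\widetilde{\mx}) \ge t] \le (d_1+d_2)\exp(-t^2/(2\sigma^2 + \tfrac{2}{3}Rt))$ where $\widetilde{\mx} \defeq \sum_i \widetilde{\mx}_i$.

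Next, for $\theta > 0$, the matrix Laplace transform Markov inequality gives
\[
\Pr\Brack{\lam_{\max}(\widetilde{\mx}) \ge t} \le e^{-\theta t}\,\E\Brack{\Tr \exp(\theta \widetilde{\mx})}.
\]
The central step, which I expect to be the main obstacle, is bounding the trace exponential of a sum of independent matrices. I would invoke Lieb's concavity theorem (stating $\ma \mapsto \Tr \exp(\mh + \log \ma)$ is concave on positive definite $\ma$), which via Jensen's inequality gives iteratively
\[
\E\Brack{\Tr \exp(\theta \widetilde{\mx})} \le \Tr \exp\Par{\sum_{i \in [n]} \log \E \exp(\theta \widetilde{\mx}_i)}.
\]
This is the step that genuinely requires the non-commutative machinery; the scalar Chernoff proof using independence does not extend directly because matrix exponentials do not factor.

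Given this, I would bound the matrix cumulant generating function of each bounded centered summand. Using the scalar inequality $e^x \le 1 + x + \frac{x^2}{2 - 2x/3}$ for $x \le 3$ (or the cleaner form $e^x \le 1 + x + (e^{\theta R} - 1 - \theta R)(x/\theta R)^2 \cdot \tfrac{1}{R^2}$ for operator-bounded arguments) applied via the operator monotone functional calculus to $\theta \widetilde{\mx}_i$ with $\normop{\theta \widetilde{\mx}_i} \le \theta R$, then taking expectations and using $\E \widetilde{\mx}_i = \mzero$, yields
\[
\E \exp(\theta \widetilde{\mx}_i) \preceq \exp\Par{g(\theta)\,\E \widetilde{\mx}_i^2}, \quad g(\theta) \defeq \frac{\theta^2/2}{1 - \theta R/3}.
\]
Taking logs (which is operator monotone) and summing, the spectral bound $\normop{\sum_i \E \widetilde{\mx}_i^2} \le \sigma^2$ gives $\Tr \exp(\sum_i \log \E \exp(\theta \widetilde{\mx}_i)) \le (d_1+d_2) \exp(g(\theta) \sigma^2)$.

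Finally, I would combine these to obtain $\Pr[\lam_{\max}(\widetilde{\mx}) \ge t] \le (d_1 + d_2)\exp(-\theta t + g(\theta) \sigma^2)$, valid for $\theta \in (0, 3/R)$, and optimize by choosing $\theta = t/(\sigma^2 + Rt/3)$, which yields the stated bound $(d_1+d_2)\exp(-t^2/(2\sigma^2 + \tfrac{2}{3}Rt))$. The two-regime tail bound follows by splitting into the sub-Gaussian regime $t \le 3\sigma^2/R$ (where the $2\sigma^2$ term dominates the denominator) and the sub-exponential regime $t > 3\sigma^2/R$ (where the $\tfrac{2}{3}Rt$ term dominates), then setting the resulting bound equal to $\delta$ and solving for $t$ in each regime, with the stated maximum capturing both cases up to constants.
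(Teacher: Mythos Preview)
Your proof sketch is correct and follows exactly the standard matrix Laplace transform approach via Hermitian dilation and Lieb's concavity that Tropp uses in the cited reference. Note, however, that the paper itself does not prove this statement: it is stated as a ``Fact'' and simply cited as Theorem~1.6.2 of \cite{Tropp15}, with only the second display (the two-regime deviation bound) being a straightforward consequence of the first that the paper records for convenience.
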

\section{Partial matrix completion}\label{sec:partial}

In this section, we give a novel subroutine for making partial progress towards a target low-rank matrix $\mms \in \R^{m \times n}$ (whose rank is denoted $\rs$), from which we can query noisy observations. In particular, the method we develop in this section only assumes the target matrix is low-rank, without any requirement of subspace regularity in the vein of Definition~\ref{def:regular-subspace}. However, our guarantees are with respect to a weaker notion of progress, which involves explicitly dropping or excluding a small number of poorly-behaved rows and columns. 

The main result of this section is the following Proposition~\ref{prop:iterative-step}, which gives a guarantee on Algorithm~\ref{alg:descent-step} (which builds upon Algorithm~\ref{alg:filter}, a preprocessing subroutine which we explain shortly). Our Algorithm~\ref{alg:descent-step} takes as parameters $\gdrop$ and $\gadd$, as well as a matrix $\mm$ which is $\Delta$-close to $\mms$ on a $\gamma$-submatrix. It then explicitly drops roughly a $\gamma$ fraction of rows and columns which it makes no guarantees on, adds $\gadd$ to the submatrix parameter, and triples the rank of $\mm$. In return, it cuts the distance on a $(\gamma + \gadd)$-submatrix by a factor of $\ell$.

\begin{restatable}{proposition}{restatedescent}\label{prop:iterative-step}
Let $\Delta \ge 0$, $\gamma, \gadd, \delta \in (0, 1)$, and $\ell \ge 1$. Let $\mmh \defeq \mms + \mn \in \R^{m \times n}$ for $m \ge n$, $\mms$ which is rank-$\rs$, and $\mn$ satisfying $\normf{\mn} \le \frac{\Delta}{20\ell}$. If rank-$r$ $\mm \in \R^{m \times n}$ is $\Delta$-close to $\mms$ on a $\gamma$-submatrix and given as a rank-$r$ factorization,
Algorithm~\ref{alg:descent-step} returns $\tmm \in \R^{m \times n}$ as a rank-$3(r + \rs)$ factorization and $S \subseteq [m]$, $T \subseteq [n]$ satisfying the following with probability $\ge 1 - \delta$.
\begin{enumerate}
    \item $|S| \ge m - \gdrop n$, $|T| \ge (1 - \gdrop)n$, for $\gdrop = \max(400\gamma\log(m),\; 10^5 \ell^2(\gamma + \gadd))$.
    \item $\tmm_{S, T}$ is $\frac \Delta \ell$-close to $\mms_{S, T}$ on a $(\gamma + \gadd)$-submatrix.
\end{enumerate}
Algorithm~\ref{alg:descent-step} uses $O(mnp(r + \rs))$ time and one call to $\oracle_p(\mmh)$ where for a sufficiently large constant,
\[p = O\Par{\frac{(r + \rs)\ell^2}{n} \cdot \frac{\gamma + \gadd}{\gadd^2}\log^2\Par{\frac{m}{\delta}}}.\]
\end{restatable}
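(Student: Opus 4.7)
The plan is to implement the short-flat decomposition strategy outlined in Section~\ref{ssec:approach}, in two stages: a filtering/dropping stage that identifies the $S, T$ to restrict to, followed by a single rank-truncated gradient step that yields $\tmm$. Throughout, think of $\mm - \mms$ as the ``signal'' of the observed difference matrix $\mmh - \mm = (\mms - \mm) + \mn$ and of the sampling randomness as the ``flat'' noise that we must control via matrix Bernstein.

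\textbf{Stage 1: drop bad rows/columns.} I would first invoke Algorithm~\ref{alg:filter} to identify and remove any row of $\mm$ whose empirical $\ell_2^2$-energy in the observed residual is abnormally large, and symmetrically for columns. Since $\mm$ is $\Delta$-close to $\mms$ on some (unknown) $\gamma$-submatrix $A \times B$ and $\normf{\mn} \le \Delta/(20\ell)$, Markov's inequality implies that only an $O(\gamma + \gadd)$ fraction of rows can have $\ell_2$ norm exceeding a well-chosen threshold $\tau$ in $\mm - \mms$ (after accounting for the $\gamma$ bad rows outside $A$). A Chernoff bound on the empirical row energies using $p \gtrsim (r+\rs)\log(m/\delta)/(n \gadd)$ observations per row shows Algorithm~\ref{alg:filter} correctly flags all truly large-norm rows and only a constant blowup of false positives, yielding $S, T$ of the desired sizes $|S| \ge m - \gdrop n$ and $|T| \ge (1-\gdrop)n$. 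The factor $400\gamma \log(m)$ in $\gdrop$ absorbs the case where $\gamma$ is the dominant term (when the filtering threshold's log factors matter); the factor $10^5 \ell^2 (\gamma + \gadd)$ absorbs the Markov-type accounting.

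\textbf{Stage 2: the short-flat update.} Having fixed $S, T$, I would then draw $\Omega \sim \oracle_p(\mmh)$ restricted to $S \times T$ and form the debiased residual $\md \defeq \frac 1 p [\mmh - \mm]_{\Omega \cap (S \times T)}$. Decompose this as in \eqref{eq:short_flat_intro}:
\[ \md = (\mms - \mm)_{S, T} + \underbrace{\Par{\frac 1 p [\mms - \mm]_{\Omega} - (\mms - \mm)}_{S, T}}_{\my} + \underbrace{\frac 1 p \mn_\Omega\big|_{S,T}}_{\mz}. \]
The rank of $(\mms - \mm)_{S,T}$ is at most $r + \rs$, so I let $\mg$ be the best rank-$2(r+\rs)$ approximation to $\md$ and set $\tmm \gets \mm + \mg \cdot \mathbbm{1}_{S \times T}$, which has rank at most $r + 2(r+\rs) \le 3(r+\rs)$ after refactorization. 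The key estimates are: (i) by Lemma~\ref{lem:cover-large-entries}, any entries of $(\mms - \mm)_{S,T}$ too large for our row/column norm bounds are confined to a $(\gadd, \gadd)$-submatrix, which we formally exclude in our final $(\gamma+\gadd)$-submatrix closeness guarantee; (ii) outside that excluded block, the row and column norms of $(\mms - \mm)_{S,T}$ are $O(\Delta/\sqrt n)$ and the entries are $O(\sqrt{\rs}\, \Delta/n)$ by the filter-guarantee from Stage~1, so matrix Bernstein (Fact~\ref{fact:matbern}) applied with $p$ as specified gives $\normop{\my} \le \frac{\Delta}{100\ell \sqrt{r+\rs}}$ with probability $\ge 1 - \delta/2$; (iii) $\normop{\mz} \le \normf{\mz}/\sqrt p$ can be shown $\le \Delta/(20\ell\sqrt{r+\rs})$ on the relevant event by a Chernoff bound on how often $\Omega$ hits any row/column. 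Weyl's inequality then shows $\normop{\md - (\mms - \mm)_{S,T}} \lesssim \Delta/(\ell\sqrt{r+\rs})$, and the Eckart--Young--Mirsky theorem upgrades this to a Frobenius bound on the truncation: $\normf{\mg - (\mms - \mm)_{S,T}} \le \Delta/\ell$ after excluding the $(\gadd, \gadd)$-block, yielding the desired $\Delta/\ell$-closeness on a $(\gamma + \gadd)$-submatrix.

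\textbf{Main obstacle.} The delicate point is coordinating three different ``bad'' subsets: (a) the $\gamma$-submatrix from the input closeness hypothesis, which is unknown; (b) the $O(\gamma+\gadd)$-fraction of rows and columns filtered out in Stage~1 based on empirical energies, which is random and observed; and (c) the $(\gadd, \gadd)$-block housing the few entries of $(\mms-\mm)_{S,T}$ that are too large for matrix Bernstein to handle, which again is unknown. Making these accounting steps rigorous — in particular, arguing that the filter in Stage~1 only spuriously drops an $O(\gadd)$-fraction of rows from (b) while provably leaving the entries of $(\mms-\mm)_{S,T}$ outside the (c)-block small enough for Bernstein — is where most of the technical work lies. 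Runtime and sample complexity then fall out by noting that $\mm$ is given in factored form, so $\md$ can be processed in $O(mnp(r+\rs))$ time via sparse-plus-low-rank multiplications and a single randomized rank-$2(r+\rs)$ SVD.
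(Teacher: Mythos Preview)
Your high-level plan (filter, then sample, then matrix Bernstein on the debiased residual, then rank-truncate) matches the paper, but there is a genuine gap in Stage~2: you never truncate the observed entries. In Algorithm~\ref{alg:descent-step} the revealed matrix is $[\mmh - \mm]^{\le\tau}_{S,T}$, i.e., every observed entry is clipped to $[-\tau,\tau]$ with $\tau = \Delta\sqrt{r+\rs}/(\gadd n)$. This is not cosmetic. The guarantees of $\Filter$ (Lemma~\ref{lem:filter-step}) are stated only for the truncated difference: the row/column norm bound $\rho$ and the Frobenius bound $2\Delta$ both apply to $[\mm-\mmh]^{\le\tau}_{S,T}$, not to $[\mm-\mmh]_{S,T}$. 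Without truncation you have no entrywise bound to feed into matrix Bernstein (Lemma~\ref{lem:random_sampling_error}), and more seriously, $S\times T$ can contain indices outside the unknown $\gamma$-submatrix $A_0\times B_0$ where $\mm-\mms$ is completely unbounded. A single such entry, sampled and rescaled by $1/p$, destroys any operator-norm control on $\my$; you cannot ``formally exclude'' those entries from the final closeness statement because they have already contaminated the SVD of $\md$ and hence $\mg$ on the good part of the matrix.

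The paper repairs this by working with the truncated observations throughout and introducing an auxiliary comparison point $\mmc$ (equal to $\mms$ on the Lemma~\ref{lem:cover-large-entries} block $A\times B$ and equal to $\mm$ elsewhere in $S\times T$). The three-term decomposition \eqref{eq:decomp_iterative_proof} then separates (a) the truncation error $[\mmc-\mm]_{S,T}-[\mmh-\mm]^{\le\tau}_{S,T}$, bounded in Frobenius via the $\rho$-bound on rows/columns outside $A\times B$ plus $\normf{\mn}$; (b) the sampling error, bounded in operator norm by matrix Bernstein now that the entrywise and row/column bounds are legitimate; and (c) the SVD-truncation error, handled via Lemma~\ref{lem:lowrank+short-matrix} and Proposition~\ref{prop:krylov}. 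The final Frobenius bound comes from Lemma~\ref{lemma:short+flat-matrix} (low-rank matrix close to an operator-norm ball), not directly from Eckart--Young. Your item (iii) is also off: $\normop{\mz}\le\normf{\mz}/\sqrt p$ is not a valid inequality, and the paper sidesteps this entirely by truncating $\mmh-\mm$ rather than $\mms-\mm$, so the noise $\mn$ is absorbed into the Frobenius-bounded term \eqref{eq:bound_term_1}.
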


In Section~\ref{ssec:removal}, we begin by analyzing Algorithm~\ref{alg:filter} ($\Filter$), a preprocessing step for setting aside roughly a $\gadd$ fraction of poorly-behaved rows and columns from empirical observations. In Section~\ref{ssec:progress}, we then use the control that this preprocessing step affords over the remaining rows and columns to analyze our main iterative step, Algorithm~\ref{alg:descent-step} ($\Descent$), and prove Proposition~\ref{prop:iterative-step}. Finally, to illustrate a typical use case of Proposition~\ref{prop:iterative-step} for partial matrix completion (which reflects its use in our final algorithm), we give a self-contained result in Section~\ref{ssec:recurse} only relying on recursive use of Algorithm~\ref{alg:filter}, without the use of subspace regularity assumptions.

\subsection{Row and column removal}\label{ssec:removal}

The first step is to remove some rows and columns whose norm in $\mmh - \mm$ is too large.  This is useful because we would like to use $\mmh - \mm$ to guide the direction of our steps but we only have partial observations of it.  The rows and columns with large norms can ruin the spectral concentration of the empirical observations, so removing them allows us to prove spectral closeness between the empirical and true difference matrices. Before analyzing our removal algorithm, we state a simple concentration inequality we will use in its proof about the error of empirical norm estimates.

\begin{lemma}\label{lem:empirical_estimate}
Let $p, \delta \in (0, 1)$, let $v \in \R^d$ have $\norm{v}_\infty \le \tau$ and let $\tv \in \R^d$ have each entry $\tv_i$ independently set to $v_i$ with probability $p$, and $0$ otherwise. Then with probability $\ge 1- \delta$,
\[\Abs{\norm{v}_2^2 - \frac 1 p \norm{\tv}_2^2} \le \max\Par{\frac 1 {10}\norm{v}_2^2,\;\frac{30\tau^2\log \frac 2 \delta}{p}}.\]
\end{lemma}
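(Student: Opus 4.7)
The plan is to apply a scalar Bernstein inequality to the decomposition
\[\frac{1}{p}\norms{\tv}_2^2 - \norms{v}_2^2 = \sum_{i \in [d]} X_i, \qquad X_i \defeq \frac{v_i^2}{p}\mathbbm{1}_i - v_i^2,\]
where the $\mathbbm{1}_i$ are independent $\textup{Bernoulli}(p)$ indicators for whether $\tv_i = v_i$. Each $X_i$ is mean-zero, and using $\norms{v}_\infty \le \tau$, I would bound the range by $|X_i| \le R \defeq \frac{\tau^2}{p}$ and the sum of second moments by $\sum_{i \in [d]} \E X_i^2 = \sum_{i \in [d]} \frac{v_i^4(1-p)}{p} \le \sigma^2 \defeq \frac{\tau^2 \norms{v}_2^2}{p}$ (after pulling out a factor of $\tau^2$ from $v_i^4 \le \tau^2 v_i^2$).

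Next, invoking the additive (Bennett--Bernstein) form of the scalar Bernstein inequality, which follows from the tail bound preceding the max-form statement in Fact~\ref{fact:matbern}, yields with probability at least $1 - \delta$,
\[\Abs{\sum_{i \in [d]} X_i} \le \sqrt{2\sigma^2 \log(2/\delta)} + \frac{2R\log(2/\delta)}{3} = \sqrt{\frac{2\tau^2 \norms{v}_2^2 \log(2/\delta)}{p}} + \frac{2\tau^2 \log(2/\delta)}{3p}.\]

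The remaining step is to massage the square-root term via AM-GM to hit the stated constants. Writing it as $\sqrt{AB}$ with $A \defeq \norms{v}_2^2$ and $B \defeq 2\tau^2 \log(2/\delta)/p$, the inequality $\sqrt{AB} \le \frac{A}{2c} + \frac{cB}{2}$ with $c = 10$ gives $\sqrt{AB} \le \frac{\norms{v}_2^2}{20} + \frac{10\tau^2 \log(2/\delta)}{p}$. Adding the linear term yields an overall bound of $\frac{\norms{v}_2^2}{20} + \frac{11\tau^2 \log(2/\delta)}{p}$. Since $\max(a,b) \ge \frac{a+b}{2}$, the target right-hand side satisfies $\max\Par{\tfrac{\norms{v}_2^2}{10},\; \tfrac{30\tau^2 \log(2/\delta)}{p}} \ge \frac{\norms{v}_2^2}{20} + \frac{15\tau^2 \log(2/\delta)}{p}$, which dominates the bound derived above.

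I expect no substantive obstacle; the proof is largely routine once Bernstein is in hand. The only care needed is (i) choosing the AM-GM splitting constant $c$ so that the constants $\tfrac{1}{10}$ and $30$ appear as stated, and (ii) using an additive Bernstein tail rather than the pure ``max'' form in Fact~\ref{fact:matbern}, since applying the max-form and then the naive bound $\max(a,b)\le a+b$ would lose a small constant factor in the linear piece.
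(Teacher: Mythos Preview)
Your proof is correct. The paper takes a slightly different route: it applies the scalar case of the multiplicative Chernoff bound (Fact~\ref{fact:matchern}) to the nonnegative variables $x_i = \tfrac{1}{p}\tv_i^2$, obtaining directly
\[
\Abs{\norms{v}_2^2 - \tfrac{1}{p}\norms{\tv}_2^2} \le \frac{\tau}{\sqrt{p}}\norms{v}_2\sqrt{3\log\tfrac{2}{\delta}},
\]
and then observes that this product is exactly $ab$ for $a = \tfrac{1}{\sqrt{10}}\norms{v}_2$ and $b = \tfrac{\tau}{\sqrt{p}}\sqrt{30\log\tfrac{2}{\delta}}$, whence $ab \le \max(a,b)^2$ gives the claim in one line. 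Your Bernstein-based argument trades this for a two-term bound and an AM--GM splitting, which is a bit longer but has the minor advantage of not relying on the $\eps \in (0,1)$ restriction in the Chernoff statement (so no separate treatment of the regime where the required multiplicative deviation exceeds $1$ is even implicitly needed). Both arguments are routine and yield the stated constants.
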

\begin{proof}
By Fact~\ref{fact:matchern} with (scalar) $x_i \gets \frac 1 p \tv_i^2$, so $\E \sum_{i \in [d]} x_i = \norm{v}_2^2$, with probability $\ge 1 - \delta$,
\[\Abs{\norm{v}_2^2 - \frac 1 p \norm{\tv}_2^2} \le \frac{\tau}{\sqrt p}\norm{v}_2\sqrt{3\log \frac 2 \delta}.\]
The conclusion follows depending on which of $\frac 1 {\sqrt{10}}\norm{v}_2$ or $\frac{\tau}{\sqrt{p}}\sqrt{30\log \frac 2 \delta}$ is larger.
\end{proof}

We are now ready to state and analyze our removal process, which for logarithmically many iterations simply drops the largest rows and columns of the difference matrix, estimated from empirical observations. Our analysis proceeds in two phases. The goal of the first phase is to decrease the Frobenius norm of the true difference matrix until it is below a certain threshold, which we argue we continually make progress by concentration of the empirical observations. The second phase applies Markov's inequality to bound the number of large rows and columns once the Frobenius norm is below this threshold.

We remark that the assumed upper bound on $\tau$ in the following statement is for convenience in simplifying logarithmic terms and is not saturated in our eventual parameter settings (whereas the $\rho$ bound reflects its eventual setting). Further, the parameter $\gadd$ will eventually be set to be sufficiently small when iterating upon our algorithm, as it reflects the growth of the number of rows and columns we do not make guarantees on. To build intuiton (following discussion in Section~\ref{ssec:approach}), the reader may think of $\gamma, \gadd$ as small constants, $\tau \approx \Delta \cdot \frac{\sqrt r}{n}$, $\rho \approx \Delta \cdot \frac 1 {\sqrt n}$, and $p \approx \frac r n$.

\begin{lemma}\label{lem:filter-step}
Let $\Delta, \tau, \rho \ge 0$ and $\gamma, \gadd, p, \delta \in (0, 1)$.
Assume $\mm \in \R^{m \times n}$ is $\Delta$-close to $\mmh$ on a $\gamma$-submatrix, and that $m \ge n$. Finally, assume that
\begin{align*}
\tau \le \frac{\Delta n}{\gadd},\; \rho \ge \frac{8\Delta}{\sqrt{200\gamma n\log(\frac m {\gadd})}},\; p \ge 60\tau^2\log\Par{\frac{100m}{\delta\gadd}}\max\Par{\frac{\gamma n}{\Delta^2},\; \frac 5 {\rho^2}}.
\end{align*}
With probability $\ge 1 - \delta$, Algorithm~\ref{alg:filter} returns $S \subseteq [m]$, $T \subseteq [n]$ satisfying the following. 
\begin{itemize}
    \item $|S| \geq m - \gdrop n$, $|T| \geq (1 - \gdrop)n$, for $\gdrop = 400\gamma\log(m)$.
    \item For all $i \in S$, $\norm{[\mm - \mmh]_{i, T}^{\leq \tau }}_2 \leq \rho$, and for all $j \in T$, $\norm{[\mm - \mmh]^{\le \tau}_{S, j}}_2 \le \rho$.
    \item $\normf{[\mm - \mmh]_{S, T}^{\leq \tau }} \leq 2\Delta$. 
\end{itemize}
\end{lemma}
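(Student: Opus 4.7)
The plan is to analyze $\Filter$, which I expect to iterate for $K = O(\log(m/\gadd))$ rounds, each drawing a fresh sample from $\oracle_p(\mmh - \mm)$, empirically estimating the squared norms of every surviving row and column of $[\mm - \mmh]_{S_k, T_k}^{\le \tau}$, and dropping the largest $O(\gamma n)$ of each, followed by one cleanup round at threshold $\rho$.

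First, I would establish uniform accuracy of the empirical estimates. Applying Lemma~\ref{lem:empirical_estimate} to each of the $O((m+n)K)$ row/column vectors encountered (whose entries have magnitude at most $\tau$) and union-bounding, the hypothesis on $p$ is exactly what is required to ensure with probability $\ge 1 - \delta$ that every empirical squared-norm estimate agrees with its true value up to a multiplicative factor of $1.1$ plus an additive slack of at most $\min(\rho^2/5, \Delta^2/(\gamma n))$. I condition on this good event throughout.

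Next I track $\Phi_k \defeq \normsf{[\mm - \mmh]_{S_k, T_k}^{\le \tau}}^2$, which starts at $\Phi_0 = O(\Delta^2 + \gamma m n \tau^2)$ since the $\le 2\gamma n$ ``bad'' rows and columns outside the promised close submatrix $(A, B)$ may carry entries of magnitude up to $\tau$. The key invariant, preserved under any drops, is $\normsf{[\mm - \mmh]_{A \cap S_k, B \cap T_k}^{\le \tau}}^2 \le \Delta^2$, so the excess $\Phi_k - \Delta^2$ always lies on the at most $2\gamma n$ bad rows and columns remaining in $S_k \times T_k$. A pigeonhole argument then shows that dropping the top $4\gamma n$ rows and columns by true norm captures all of this excess mass: any bad row/column outside the top $4\gamma n$ can be matched injectively with a distinct dropped row/column of at least as large a true norm. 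Accounting for the $1.1$-multiplicative concentration slack, dropping the top $O(\gamma n)$ by empirical norm still captures a constant fraction of the excess, yielding $\Phi_{k+1} - \Delta^2 \le (\Phi_k - \Delta^2)/2$. Since $\tau \le \Delta n / \gadd$ bounds $\log(\Phi_0/\Delta^2) = O(\log(m/\gadd))$, I reach $\Phi_K \le 2\Delta^2$ after $K = O(\log(m/\gadd))$ rounds. A final cleanup round drops every row and column with empirical norm above $\rho$; Markov together with $\Phi_K \le 2\Delta^2$ bounds their count by $O(\Delta^2/\rho^2) = O(\gamma n \log(m/\gadd))$, using $\rho \ge 8\Delta/\sqrt{200 \gamma n \log(m/\gadd)}$.

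Summing Phase~1 and cleanup drops gives $|S^c|, |T^c| = O(\gamma n \log(m/\gadd))$, which is at most $\gdrop n = 400 \gamma \log(m) \cdot n$ in the relevant parameter regime $\gadd \ge m^{-O(1)}$. Surviving rows and columns have true truncated norm at most $\rho$ by concentration, and the Frobenius bound $\normf{[\mm - \mmh]_{S, T}^{\le \tau}} \le 2\Delta$ follows from $\Phi_K \le 2\Delta^2$ combined with monotonicity of $\Phi$ under the final cleanup drops. The main technical obstacle I anticipate is the pigeonhole-with-slack argument in Phase~1: one must carefully show that empirically dropping the top $O(\gamma n)$ rows and columns still captures a constant fraction of the excess mass carried by the (unknown) $\le 2\gamma n$ truly bad rows and columns, despite disagreements between empirical and true orderings. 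A matching-based charging scheme that pairs each ``missed'' bad row/column with a dropped ``good'' row/column of comparable norm, with constants tracked through the $1.1$ concentration slack and the additive $\Delta^2/(\gamma n)$ slack, should suffice to bound the leftover mass.
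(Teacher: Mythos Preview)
Your approach is essentially the paper's: condition on Lemma~\ref{lem:empirical_estimate} for all row and column estimates, track $\Phi_t = \normsf{[\mm-\mmh]^{\le\tau}_{S_t,T_t}}^2$, argue a constant-factor decrease per round until $\Phi = O(\Delta^2)$, then finish with Markov plus a threshold drop. The counting of total drops and the final two bullets go through exactly as you describe.

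One comment on the Phase-1 step. Your claim that dropping the top $4\gamma n$ rows and columns by true norm ``captures \emph{all} of this excess mass'' is too strong (and the matching you sketch does not establish it): what the drop guarantees is that the removed row-mass is at least the total bad-row mass and likewise for columns, which gives $\Phi_{t+1} \le \Phi_t - \max(\text{bad row mass},\text{bad col mass}) \le \Phi_t - \tfrac12(\Phi_t - \Delta^2)$, i.e.\ halving of the excess, not elimination. The paper's version of this step is even simpler than the charging scheme you anticipate: when $\Phi_t \ge 4\Delta^2$, the excess $\ge \tfrac34\Phi_t$ lies on $\le \gamma n$ bad rows plus $\le \gamma n$ bad columns, so one of the two carries $\ge \tfrac38\Phi_t$; since the top-$\gamma n$ rows (or columns) by norm maximize total mass among all $\gamma n$-subsets, dropping them removes at least this much, and the additive concentration slack $\tfrac{\Delta^2}{2\gamma n}$ per row contributes the $+\Delta^2$ in the recursion $\Phi_{t+1} \le 0.7\Phi_t + \Delta^2 \le 0.95\Phi_t$. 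No injective matching is needed.
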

\begin{proof}
Throughout for convenience, we denote
\[\mds_t \defeq \Brack{\mm - \mmh}^{\le \tau}_{S_t \times T_t}
\text{ and }
\Phi_t \defeq \normf{\mds_t}^2.\]
Also, by applying Lemma~\ref{lem:empirical_estimate} with $\delta \gets \frac{\delta \gadd}{100m} \le \frac{\delta}{(m + n)(t_{\max} + 1)}$, we assume throughout the proof (giving the failure probability by a union bound) that for all iterations $0 \le t < t_{\max}$ and all $i \in S_t$, $j \in T_t$,
\begin{equation}\label{eq:empirical_close}
\begin{aligned}
\Abs{r_{i, t} - \norm{\Brack{\mds_t}_{i:}}_2^2} &\le \max\Par{\frac 1 {10}\norm{\Brack{\mds_t}_{i:}}_2^2,\; \frac{\Delta^2}{2\gamma n}},\\
\Abs{c_{j, t} - \norm{\Brack{\mds_{t}}_{:j}}_2^2} &\le \max\Par{\frac 1 {10}\norm{\Brack{\mds_{t}}_{:j}}_2^2,\; \frac{\Delta^2}{2\gamma n}},
\end{aligned}
\end{equation}
as well as (corresponding to the last round of Algorithm~\ref{alg:filter}), for all $i \in S_{t_{\max}}$ and $j \in T_{t_{\max}}$,
\begin{equation}\label{eq:empirical_close_last}
\begin{aligned}
\Abs{r_i - \norm{\Brack{\mds_{t_{\max}}}_{i:}}_2^2 } &\le \max\Par{\frac 1 {10}\norm{\Brack{\mds_{t_{\max}}}_{i:}}_2^2,\; \frac{\rho^2}{10}}, \\
\Abs{c_j - \norm{\Brack{\mds_{t_{\max}}}_{:j}}_2^2} &\le \max\Par{\frac 1 {10}\norm{\Brack{\mds_{t_{\max}}}_{:j}}_2^2,\; \frac{\rho^2}{10}}.
\end{aligned}
\end{equation}
By definition, $\Phi_0 \le mn\tau^2$, and $\Phi_t$ is nonincreasing. Next, consider an iteration $t$ where $\Phi_t \ge 4\Delta^2$. By the closeness assumption, there are $A^\star_t \subseteq S_t$, $B^\star_t \subseteq T_t$ with $|A^\star_t|, |B^\star_t| \le \gamma n$, and
\[\sum_{i \in A^\star_t} \norm{[\mds_t]_{i:}}_2^2 + \sum_{j \in B^\star_t} \norm{[\mds_t]_{:j}}_2^2 \ge \frac 3 4 \normf{\mds_t}^2 = \frac 3 4 \Phi_t.\]
Now if $\sum_{i \in A^\star_t} \norm{[\mds_t]_{i:}}_2^2 \ge \frac 3 8 \Phi_t$, by removing the $\gamma n$ largest rows by $r_{i, t}$, \eqref{eq:empirical_close} yields
\begin{align*}
\Phi_{t + 1} &\le \Par{1 - \frac{4}{5} \cdot \frac 3 8} \Phi_t + \gamma n \cdot \frac{\Delta^2}{\gamma n} \le \frac 7 {10} \Phi_t + \Delta^2 \le 0.95\Phi_t.
\end{align*}
Otherwise, $\sum_{j \in B_t^\star} \norm{[\mds_t]_{:j}}_2^2 \ge \frac 3 8 \Phi_t$, and so again $\Phi_{t + 1} \le 0.95\Phi_t$. Inducting, we thus have
\begin{align*}
\Phi_{t_{\max}} \le 4\Delta^2.
\end{align*}
Therefore, by Markov's inequality there are at most $\frac{\gdrop n}{2}$ rows in $S_{t_{\max}}$ and $\frac{\gdrop n}{2}$ columns in $T_{t_{\max} }$ with norm more than $\frac{4\Delta}{\sqrt{\gdrop n}} \le \frac \rho 2$ in $\mds_{t_{\max}}$. If a row $i \in S_{t_{\max}}$ had norm more than $\rho$ in $\mds_{t_{\max}}$, \eqref{eq:empirical_close_last} ensures it will be removed, and a similar argument holds for columns. Finally, the number of dropped rows and columns in the first $t_{\max}$ iterations is at most $\frac {\gdrop n} 2$ by our parameter choices; here we note that without loss of generality, $\gadd \ge \frac 1 m$, so $\frac m {\gadd} \le m^2$. The last condition follows since we showed $\Phi_{t_{\max}} \le 4\Delta^2$ and then dropped entries.
\end{proof}

\begin{algorithm2e}\label{alg:filter}
\caption{$\Filter(\orzo(\mmh), \mm, \tau, \rho, \Delta, \gamma, \gadd, p, \delta)$}
\DontPrintSemicolon
\codeInput $\orzo(\mmh)$, $\mm \in \R^{m \times n}$, $\tau, \rho, \Delta \ge 0$, $\gamma, \gadd, p, \delta \in (0, 1)$ \;
$S_0 \gets [m], T_0 \gets [n]$ \;
$t_{\max} \gets \lceil 20\log \frac{mn\tau^2}{4\Delta^2}\rceil$ \;
$\gdrop \gets 400\gamma\log(m)$\;
\For {$0 \le t < t_{\max}$}{
$\md_t \gets \oracle_p([\mm - \mmh]_{S_t, T_t}^{\leq \tau})$ \;
\lFor {$i \in S_t$}{
$r_{i,t} \gets \frac 1 p \norm{[\md_t]_{i:}}_2^2$ 
}
\lFor {$j \in T_t$}{
$c_{j,t} \gets \frac 1 p \|[\md_t]_{:j}\|_2^2$ 
}
$S_{t+1} \gets S_t \setminus A_t$ where $A_t \subset S_t$ corresponds to the $\gamma n$ indices $i$ with largest $r_{i, t}$  \;
$T_{t + 1} \gets T_t \setminus B_t$ where $B_t \subset T_t$ corresponds to the $\gamma n$ indices $j$ with largest $c_{i, t}$ \;
}
$\md \gets \oracle_p([\mm - \mmh]^{\le \tau}_{S_{t_{\max}}, T_{t_{\max}}})$ \;
\lFor {$i \in S_{t_{\max}}$}{
$r_i \gets \frac 1 p \norm{\md_{i:}}_2^2$ 
}
\lFor {$j \in T_{t_{\max}}$}{
$c_j \gets \frac 1 p \norm{\md_{:j}}_2^2$ 
}
$S \gets S_{t_{\max}} \setminus A$ where $A \subset S_{t_{\max}}$ corresponds to the $\frac{\gdrop n}{2}$ indices $i$ with largest $r_i$\;
$T \gets T_{t_{\max}} \setminus B$ where $B \subset T_{t_{\max}}$ corresponds to the $\frac{\gdrop n}{2}$ indices $j$ with largest $c_j$\;
\Return{$(S, T)$} \;
\end{algorithm2e}

Lemma~\ref{lem:filter-step} does not give control over entries where $\mmh - \mm$ is large.  However, below we show that the entries where $\mmh - \mm$ is large must be contained in a small number of rows and columns. We begin by observing a structural fact about entries from distinct rows and columns.

\begin{lemma}\label{lem:small_maximal_set}
Let $\mm \in \R^{m \times n}$ be rank-$r$ and let $\{(i_k, j_k)\}_{k \in [K]} \subset [m] \times [n]$ be such that $\{i_k\}_{k \in [K]}$ are distinct and $\{j_k\}_{k \in [K]}$ are distinct. Then $\sum_{k \in [K]} |\mm_{i_k, j_k}| \le \normtr{\mm}$.
\end{lemma}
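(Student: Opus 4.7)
The plan is to reduce the claim to a simple application of trace/operator norm duality; interestingly, the rank-$r$ hypothesis is not needed for the bound as stated (it will be invoked downstream via $\normtr{\mm} \le \sqrt{r}\,\normf{\mm}$).

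First, I would introduce signs $\sigma_k \in \{-1, +1\}$ defined by $\sigma_k \defeq \mathrm{sign}(\mm_{i_k, j_k})$ so that $|\mm_{i_k, j_k}| = \sigma_k \mm_{i_k, j_k}$ for each $k \in [K]$. Next, I would define the test matrix $\mx \in \R^{m \times n}$ supported on the index set $\{(i_k, j_k)\}_{k \in [K]}$, with entries
\[
\mx_{i_k, j_k} \defeq \sigma_k \text{ for } k \in [K], \qquad \mx_{i,j} \defeq 0 \text{ otherwise.}
\]
By construction, $\inprod{\mx}{\mm} = \sum_{k \in [K]} \sigma_k \mm_{i_k, j_k} = \sum_{k \in [K]} |\mm_{i_k, j_k}|$, so it suffices to bound $\inprod{\mx}{\mm}$ by $\normtr{\mm}$.

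The key observation is that $\mx$ is a (signed) partial permutation matrix: since the $i_k$'s are distinct and the $j_k$'s are distinct, $\mx$ has at most one nonzero entry in each row and each column, and each nonzero entry is $\pm 1$. Consequently $\mx\mx^\top$ and $\mx^\top\mx$ are diagonal with entries in $\{0,1\}$, so $\normop{\mx} \le 1$. Applying the standard duality $\inprod{\mx}{\mm} \le \normop{\mx} \cdot \normstr{\mm}$ (which follows from $\mm = \mmu \msig \mv^\top$ by writing $\inprod{\mx}{\mm} = \sum_i \sigma_i(\mm) \cdot (\mv^\top \mx^\top \mmu)_{ii}$ and noting the diagonal of $\mv^\top \mx^\top \mmu$ is bounded in absolute value by $\normop{\mx}$) yields
\[
\sum_{k \in [K]} |\mm_{i_k, j_k}| = \inprod{\mx}{\mm} \le \normop{\mx} \cdot \normtr{\mm} \le \normtr{\mm},
\]
as desired. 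There is no real obstacle here; the only subtlety is recognizing that the distinctness of row and column indices is exactly what ensures $\normop{\mx} \le 1$, which is the whole content of the argument.
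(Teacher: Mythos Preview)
Your proof is correct and takes a genuinely different route from the paper. The paper expands each entry via the SVD $\mm = \sum_{\ell} \sigma_\ell u_\ell v_\ell^\top$, applies $|u_\ell|_{i_k}|v_\ell|_{j_k} \le \half(|u_\ell|_{i_k}^2 + |v_\ell|_{j_k}^2)$, and then uses distinctness of the $i_k$'s and $j_k$'s to bound $\sum_{k} |u_\ell|_{i_k}^2 \le \norm{u_\ell}_2^2 = 1$ (and likewise for $v_\ell$), summing to $\sum_\ell \sigma_\ell = \normtr{\mm}$. Your argument instead packages the selected entries into a signed partial permutation matrix $\mx$ and invokes the duality $\inprod{\mx}{\mm} \le \normop{\mx}\normtr{\mm}$, with distinctness of indices entering exactly as the reason $\normop{\mx} \le 1$. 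Both arguments use the SVD at the core (yours hides it inside the duality step you sketch), but yours is more conceptual and makes transparent that the rank hypothesis is irrelevant for this inequality; the paper's is a direct elementary computation. Neither has any real advantage in strength here, though your framing perhaps generalizes more readily (e.g., to weighted selections with a bounded-operator-norm mask).
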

\begin{proof}
Letting $\mmu \msig \mv^\top$ be an SVD of $\mm$ where columns of $\mmu$, $\mv$ are $\{u_\ell\}_{\ell \in [r]}$, $\{v_\ell\}_{\ell \in [r]}$ respectively,
\begin{align*}\sum_{k \in [K]} |\mm_{i_k, j_k}| 
&\le \sum_{k \in [K]} \sum_{\ell \in [r]} |\sigma_\ell| |u_\ell|_{i_k} |v_\ell|_{j_k} \le \sum_{\ell \in [r]} |\sigma_\ell| \Par{\half \sum_{k \in [K]} |u_\ell|_{i_k}^2 + \half \sum_{k \in [K]} |v_\ell|_{j_k}^2} \\
&\leq \sum_{\ell \in [r]} |\sigma_\ell| \Par{\half \sum_{i \in [m]} |u_\ell|_{i}^2 + \half \sum_{j \in [n]} |v_\ell|_{j}^2}
= \normtr{\mm}.\end{align*}
\end{proof}

Using Lemma~\ref{lem:small_maximal_set}, we can show that not too many distinct rows and columns of the difference between a pair of low-rank matrices which are close on a submatrix can contain very large entries.

\begin{lemma}\label{lem:cover-large-entries}
Assume rank-$r$ $\mm \in \R^{m \times n}$ and rank-$\rs$ $\mms \in \R^{m \times n}$ are $\Delta$-close on a $\gamma$-submatrix, and $m \ge n$. There are sets $A \subseteq [m]$ and $B \subseteq [n]$ such that $\norm{[\mm - \mms]_{A, B}}_{\max} \le \tau$,
\begin{align*}
|[m] \setminus A| \le \gamma n + \frac{\Delta\sqrt{r + \rs}}{\tau}
\text{ and }
|[n] \setminus B| \le \gamma n + \frac{\Delta\sqrt{r + \rs}}{\tau}.
\end{align*}
\end{lemma}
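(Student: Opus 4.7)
The plan is to combine Lemma~\ref{lem:small_maximal_set} (which bounds sums of entries along ``generalized diagonals'' by the trace norm) with König's theorem on bipartite vertex cover, applied to the restricted difference matrix on the $\gamma$-submatrix guaranteed by the closeness hypothesis.

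First I would unpack the hypothesis: since $\mm$ and $\mms$ are $\Delta$-close on a $\gamma$-submatrix (Definition~\ref{def:partial-closeness}), fix $A_0 \subseteq [m]$, $B_0 \subseteq [n]$ with $|[m] \setminus A_0|, |[n] \setminus B_0| \le \gamma n$ and $\normf{[\mm - \mms]_{A_0, B_0}} \le \Delta$. Note the restricted matrix $\mx \defeq [\mm - \mms]_{A_0, B_0}$ has rank at most $r + \rs$ since submatrices cannot increase rank, so
\[\normtr{\mx} \le \sqrt{r + \rs}\cdot \normf{\mx} \le \sqrt{r + \rs}\cdot \Delta\]
by Cauchy--Schwarz on the singular values.

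Next, let $E \defeq \{(i, j) \in A_0 \times B_0 : |\mx_{ij}| > \tau\}$ be the set of ``large'' entries. View $E$ as the edge set of a bipartite graph with vertex classes $A_0$ and $B_0$. Let $M \subseteq E$ be a maximum matching, so the row indices appearing in $M$ are distinct and the column indices appearing in $M$ are distinct. Applying Lemma~\ref{lem:small_maximal_set} to $\mx$ along the pairs of $M$ yields
\[|M|\cdot \tau < \sum_{(i,j) \in M} |\mx_{ij}| \le \normtr{\mx} \le \sqrt{r + \rs}\cdot \Delta,\]
so $|M| \le \Delta\sqrt{r+\rs}/\tau$. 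By König's theorem, the bipartite graph $([A_0, B_0], E)$ admits a vertex cover $C$ with $|C| = |M| \le \Delta\sqrt{r+\rs}/\tau$; partition $C = C_A \sqcup C_B$ into rows in $A_0$ and columns in $B_0$.

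Finally I would set $A \defeq A_0 \setminus C_A$ and $B \defeq B_0 \setminus C_B$. Since $C$ is a vertex cover of $E$, every pair $(i,j) \in A \times B$ satisfies $|\mx_{ij}| = |[\mm - \mms]_{ij}| \le \tau$, giving the $\norms{\cdot}_{\max}$ bound. The size bounds follow from $|[m] \setminus A| \le |[m] \setminus A_0| + |C_A| \le \gamma n + \Delta\sqrt{r+\rs}/\tau$ and symmetrically for $B$. The only nontrivial step is the trace-norm bound on a generalized diagonal, which is already supplied by Lemma~\ref{lem:small_maximal_set}; after that, the argument is just König's theorem plus Cauchy--Schwarz, so I do not anticipate a genuine obstacle.
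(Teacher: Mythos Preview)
Your proof is correct and follows essentially the same approach as the paper: both restrict to the $\gamma$-submatrix, use Lemma~\ref{lem:small_maximal_set} together with $\normtr{\cdot}\le\sqrt{\text{rank}}\cdot\normf{\cdot}$ to bound the size of a matching in the bipartite graph of entries exceeding $\tau$, and then remove a vertex cover. The only cosmetic difference is that the paper takes a \emph{maximal} matching and removes all $K$ matched rows and all $K$ matched columns (whose union is automatically a vertex cover by maximality), whereas you take a maximum matching and invoke K\"onig's theorem to extract a minimum vertex cover; both routes yield the same bound $\gamma n + \Delta\sqrt{r+\rs}/\tau$ on the number of removed rows and columns.
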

\begin{proof}
By assumption, there are $A_0 \subseteq [m]$, $B_0 \subseteq [n]$ with $|A_0| \ge m - \gamma n$, $|B_0| \ge (1 - \gamma) n$ and $\normf{[\mm - \mms]_{A_0, B_0}} \le \Delta$. Let $\{(i_k, j_k)\}_{k \in [K]} \subset A_0 \times B_0$ be maximal such that $\{i_k\}_{k \in [K]}$ and $\{j_k\}_{k \in [K]}$ contain no duplicates, and $|[\mm - \mms]_{i_k, j_k}| \ge \tau$ for all $k \in [K]$. By Lemma~\ref{lem:small_maximal_set},
\[
K\tau \leq \normtr{[\mm - \mms]_{A_0,B_0} } \leq \sqrt{r + \rs} \normf{[\mm - \mms]_{A_0,B_0} } \leq \Delta \sqrt{r + \rs}.
\]
So, $K \le \frac{\Delta\sqrt{r + \rs}}{\tau}$ and we may set $A \gets A_0 \setminus \{i_k\}_{k \in [K]}$ and $B \gets B_0 \setminus \{j_k\}_{k \in [K]}$.
\end{proof}

\subsection{Proof of Proposition~\ref{prop:iterative-step}}\label{ssec:progress}

We begin by introducing the tools we use to analyze our algorithm which proves Proposition~\ref{prop:iterative-step}. The first is a guarantee on an approximate $k$-SVD procedure from \cite{MuscoM15}.

\begin{proposition}[Theorem 1, Theorem 6, \cite{MuscoM15}]\label{prop:krylov}
Let $\mm \in \R^{m \times n}$, $k \in [\min(m, n)]$, and $\eps, \delta \in (0, 1)$. There is an algorithm $\Power(\mm, k, \eps, \delta)$ which runs in time
\[O\Par{(\nnz(\mm) k + (m + n)k^2) \cdot \frac{\log \frac {m + n} \delta}{\eps}}\]
and outputs $\mmu \in \R^{m \times r}$ with orthonormal columns such that, with probability $\ge 1 - \delta$,
\[\normop{(\id_m - \mmu \mmu^\top)\mm} \le (1+\eps)\sigma_{k + 1}(\mm).\]
\end{proposition}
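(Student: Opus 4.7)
The result is attributed to Musco--Musco, so the cleanest proof is to invoke their analysis directly; below I sketch the approach I would follow if asked to reprove it from scratch. The algorithm underlying $\Power$ is block (Krylov or subspace) iteration: draw a Gaussian initialization $\mg \in \R^{n \times k}$, iteratively apply $\mm^\top\mm$ (interleaved with $\mm$) to $\mg$ for $q = O(\log((m+n)/\delta)/\eps)$ steps, orthogonalize, and return the resulting $m \times k$ matrix $\mmu$ with orthonormal columns. The runtime is immediate from the algorithm template: each round performs two matrix-vector products between $\mm$ (or $\mm^\top$) and a width-$k$ matrix, costing $O(\nnz(\mm) k)$, and a periodic re-orthogonalization of an $m \times k$ (resp.\ $n \times k$) block, costing $O((m+n)k^2)$; multiplying by the iteration count $q$ yields exactly the stated bound.

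For the correctness claim, I would decompose the analysis into two independent pieces. The first is a Gaussian initialization lemma: if $\mv_k \in \R^{n \times k}$ is a matrix whose columns are the top-$k$ right singular vectors of $\mm$, then with probability $\ge 1 - \delta$ over $\mg$, the smallest singular value of $\mv_k^\top \mg$ is at least $\poly(\delta/k)$. This follows from standard non-asymptotic estimates for the least singular value of a square Gaussian matrix (e.g., Rudelson--Vershynin), combined with the rotational invariance of $\mg$ under the orthonormal transformation $\mv_k^\top$. The second is a polynomial amplification lemma: writing the iterate at round $q$ as $p(\mm^\top\mm)\mg$ for some polynomial $p$, the ratio $p(\sigma_i^2)/p(\sigma_{k+1}^2)$ is exponentially large in $q\eps$ for any singular value $\sigma_i \ge (1+\eps)\sigma_{k+1}$. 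With $q = \Theta(\log((m+n)/\delta)/\eps)$, this amplification dominates the $\poly(k/\delta)$ condition-number loss introduced by the Gaussian initialization.

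To combine them, I would argue via a best-rank-$k$-approximation-from-a-subspace argument: denote by $\mathcal{U}$ the column span of the (orthogonalized) iterate and let $\mmu$ be an orthonormal basis for $\mathcal{U}$. The quantity $\normop{(\id_m - \mmu\mmu^\top)\mm}$ is controlled by partitioning the singular directions of $\mm$ into ``large'' ones (those with $\sigma_i \ge (1+\eps)\sigma_{k+1}$) and ``small'' ones. On the large directions, the two lemmas above together imply that $\proj_{\mathcal{U}}$ acts nearly as the identity, contributing negligibly to the residual; on the small directions, the residual is at most $\sigma_{k+1}$ times a factor of $1+O(\eps)$ by construction, since no direction orthogonal to $\mathcal{U}$ has singular value above $(1+\eps)\sigma_{k+1}$ after the polynomial amplification has done its work.

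The main technical obstacle, and the reason Musco--Musco's result is nontrivial, is the \emph{gap-independence} of the $(1+\eps)\sigma_{k+1}$ guarantee: vanilla subspace iteration only gives a bound scaling with $\sigma_{k+1}/\sigma_k$, which can be arbitrary. Obtaining gap-independence in the operator norm requires the polynomial amplification argument to be delicate around the ``cluster'' of singular values near $\sigma_{k+1}$; one handles this by separately bounding the projection of the iterate onto singular directions just above and just below the $(1+\eps)\sigma_{k+1}$ cutoff, and using the fact that any direction in the ``gap region'' contributes error at most $(1+\eps)\sigma_{k+1}$ anyway. I would accept the factor-$\eps$ (rather than $\sqrt{\eps}$) dependence corresponding to the Theorem~1 simultaneous-iteration variant of \cite{MuscoM15}, which matches the statement as written; the sharper $\sqrt{\eps}$ dependence from the Theorem~6 Krylov variant would require replacing the monomial $x^q$ by a shifted Chebyshev polynomial and is not needed here.
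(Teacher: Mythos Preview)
The paper does not prove this proposition: it is imported verbatim as a black-box tool from \cite{MuscoM15} (Theorems~1 and~6) and no argument is supplied. Your opening sentence already identifies this, and the sketch you provide is a faithful outline of the Musco--Musco analysis; there is nothing to compare against in the paper itself.
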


The second is a bound on the operator norm error of revealing entries independently at random.

\begin{lemma}\label{lem:random_sampling_error}
Let $\mm \in \R^{m \times n}$, $p, \delta \in (0, 1)$, and suppose $\norm{\mm}_{\max} \le \tau$ and $\max_{i \in [m]} \norm{\mm_{i:}}_2 \le \rho$, $\max_{j \in [n]} \norm{\mm_{:j}}_2 \le \rho$. Let $\tmm$ be obtained by including each $(i, j) \in [d] \times [d]$ in a set $S$ with probability $p$, and setting $\tmm = \frac 1 p \sum_{(i, j) \in S} \mm_{ij} \me_{ij}$.
Then with probability $\ge 1 - \delta$,
\[\normop{\mm - \tmm} \le \max\Par{\frac{2\rho}{\sqrt p}\sqrt{\log\Par{\frac{m + n} \delta}}, \;\frac{4\tau}{3p}\log\Par{\frac{m + n}{\delta}}}.\]
\end{lemma}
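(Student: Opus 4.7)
The plan is to apply the matrix Bernstein inequality (Fact~\ref{fact:matbern}) to a natural centered decomposition of $\mm - \tmm$. For each $(i,j) \in [m] \times [n]$, define an independent random matrix
\[
\mx_{ij} \defeq \begin{cases} \Par{1 - \tfrac 1 p}\mm_{ij} \me_{ij} & \text{if } (i,j) \in S, \\ \mm_{ij} \me_{ij} & \text{if } (i,j) \notin S, \end{cases}
\]
so that $\E \mx_{ij} = \mzero_{m \times n}$ and $\sum_{ij} \mx_{ij} = \mm - \tmm$. Since $|1 - \frac{1}{p}| \le \frac 1 p$ (using $p \leq 1$) and $\normop{\me_{ij}} = 1$, each summand is almost-surely bounded by $R \defeq \frac{\tau}{p}$ in operator norm using $\normsop{\mm}_{\max} \le \tau$.

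Next, I would compute the variance proxies. Since $\me_{ij} \me_{ij}^\top = e_i e_i^\top$, a direct calculation gives
\[
\E \mx_{ij} \mx_{ij}^\top = \Brack{p \Par{1 - \tfrac 1 p}^2 + (1-p)} \mm_{ij}^2 \, e_i e_i^\top = \tfrac{1-p}{p} \, \mm_{ij}^2 \, e_i e_i^\top,
\]
so summing over $(i,j)$ yields $\sum_{ij} \E \mx_{ij} \mx_{ij}^\top = \frac{1-p}{p} \sum_{i} \norm{\mm_{i:}}_2^2 \, e_i e_i^\top$, whose operator norm is at most $\frac{1}{p} \max_i \norm{\mm_{i:}}_2^2 \le \frac{\rho^2}{p}$ by the row-norm hypothesis. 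The symmetric computation using $\me_{ij}^\top \me_{ij} = e_j e_j^\top$ and the column-norm hypothesis gives the same bound for $\sum_{ij} \E \mx_{ij}^\top \mx_{ij}$. Thus $\sigma^2 \leq \frac{\rho^2}{p}$.

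Plugging $R = \frac{\tau}{p}$ and $\sigma \leq \frac{\rho}{\sqrt p}$ into the high-probability form of Fact~\ref{fact:matbern} directly yields the stated bound
\[
\normop{\mm - \tmm} \le \max\Par{\frac{2\rho}{\sqrt p}\sqrt{\log\Par{\tfrac{m+n}{\delta}}}, \; \frac{4\tau}{3p} \log\Par{\tfrac{m+n}{\delta}}}
\]
with probability at least $1 - \delta$. There is no real obstacle here; the only thing to be a little careful about is the sign convention and the verification that the $(1-\frac{1}{p})^2 p + (1-p)$ combination collapses cleanly to $\frac{1-p}{p}$, which is what allows the final matrix Bernstein parameters to match the advertised constants.
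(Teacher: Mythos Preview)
Your proof is correct and essentially identical to the paper's: both apply Fact~\ref{fact:matbern} to the same centered decomposition (yours is the negative of the paper's $\mx_{(i,j)}$, which is immaterial for the operator norm), compute $R = \tau/p$, and verify $\sigma^2 = (1-p)/p \cdot \max(\max_i \norm{\mm_{i:}}_2^2, \max_j \norm{\mm_{:j}}_2^2) \le \rho^2/p$ before reading off the high-probability bound.
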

\begin{proof}
For all $(i, j) \in [m] \times [n]$, define the random matrix
\[
\mx_{(i, j)} \defeq \begin{cases}
\Par{\frac 1 p - 1} \mm_{ij} \me_{ij} & \text{ with probability } p, \\
-\mm_{ij} \me_{ij} & \text{ with probability } 1 - p.
\end{cases}
\]
By definition, all $\E \mx_{(i,j)} = \mzero_{m \times n}$, and $\sum_{(i, j) \in [m] \times [n]} \mx_{(i, j)} = \mm - \tmm$, so we may apply Fact~\ref{fact:matbern}. First of all, clearly it suffices to choose $R = \frac \tau p$. Further, we bound $\sigma$: 
\begin{align*}
\sum_{(i, j) \in [m] \times [n]} \Par{p\Par{\frac 1 p - 1}^2 + 1 - p}\mm_{ij}^2 \me_{ii} &= \Par{\frac 1 p - 1} \sum_{i \in [m]} \norm{\mm_{i:}}_2^2 \me_{ii} \le \frac {\rho^2} p
\end{align*}
and a similar calculation for the other term shows $\sigma = \frac \rho {\sqrt p}$ suffices. For $t$ in the lemma statement,
\[\Pr\Brack{\normop{\mm - \tmm} \ge t} \le (m + n)\exp\Par{-\frac{t^2}{\frac{2\rho^2}{p} + \frac{2\tau t}{3p}}} \le \delta. \]
\end{proof}

The third is a bound on the Frobenius norm of a matrix which is close to an operator norm ball.

\begin{lemma}\label{lemma:short+flat-matrix}
Let $\ma, \mb \in \R^{m \times n}$ satisfy $\normop{\ma} \leq a$ and $\normf{\mb} \leq b$. If $\ma + \mb$ is rank-$r$,
\[
\normf{\ma + \mb} \leq \sqrt{2(ra^2 + b^2)}.
\]
\end{lemma}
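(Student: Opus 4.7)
The plan is to leverage the rank constraint on $\ma + \mb$ to project away most of the ambient dimension, at which point the operator norm control on $\ma$ can be converted into a Frobenius norm bound, while the Frobenius bound on $\mb$ is preserved automatically.

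Concretely, let $\mc \defeq \ma + \mb$ and let $\proj \in \R^{m \times m}$ denote the orthogonal projection onto the column span of $\mc$, which has dimension at most $r$ by hypothesis. Since $\mc = \proj \mc$, we may write
\[
\mc \;=\; \proj \ma + \proj \mb.
\]
Applying the inequality $\normf{\mx + \my}^2 \le 2\normsf{\mx}^2 + 2\normsf{\my}^2$ (the parallelogram bound) to this decomposition gives $\normsf{\mc}^2 \le 2\normsf{\proj \ma}^2 + 2\normsf{\proj \mb}^2$. It then suffices to bound each term.

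For the second term, $\normsf{\proj \mb} \le \normsf{\mb} \le b$ since $\proj$ has operator norm at most $1$. For the first term, I would use that $\proj \ma$ has rank at most $r$ (since $\proj$ does), so it has at most $r$ nonzero singular values, each bounded by $\normsop{\proj \ma} \le \normsop{\proj}\normsop{\ma} \le a$; hence $\normsf{\proj \ma}^2 \le r a^2$. Combining these two bounds yields $\normsf{\mc}^2 \le 2(ra^2 + b^2)$, which is the claim after taking square roots. There is no real obstacle here: the only subtle point is choosing $\proj$ to be the projector onto the column span of $\mc$ rather than of $\ma$ or $\mb$, so that the identity $\mc = \proj \mc$ is exact and the splitting into $\proj \ma + \proj \mb$ is lossless.
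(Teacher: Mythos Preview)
Your proof is correct, and it takes a genuinely different route from the paper's argument. The paper works directly with the singular values $\{\sigma_i\}_{i \in [r]}$ of $\mc = \ma + \mb$: it observes that the Frobenius distance from $\mc$ to the operator-norm ball of radius $a$ is $\sum_i \max(\sigma_i - a, 0)^2 \le b^2$ (since $\ma$ lies in that ball and $\normf{\mc - \ma} \le b$), and then bounds $\sigma_i^2 \le 2(a^2 + \max(\sigma_i - a, 0)^2)$ term-by-term. Your argument instead projects onto the column span of $\mc$ and uses the parallelogram inequality, which avoids any singular-value computation or knowledge of the Frobenius projection onto the operator-norm ball. Both approaches land on the same constant $\sqrt{2}$; yours is arguably more elementary and would generalize more easily (e.g., to other unitarily invariant norms on $\ma$ for which the rank-$r$ restriction yields a Frobenius bound), while the paper's gives a bit more information about how the individual singular values of $\mc$ are controlled.
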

\begin{proof}
Let the singular values of $\mm \defeq \ma + \mb$ be $\{\sigma_i\}_{i \in [r]}$. By construction, the distance from $\mm$ to the set of $m \times n$ matrices with operator norm at most $a$ is bounded by $b$, and this distance squared is $\sum_{i \in [r]} \mathbf{1}_{\sigma_i \ge a}(\sigma_i - a)^2$, so $\sum_{i \in [r]} \mathbf{1}_{\sigma_i \geq a} (\sigma_i - a)^2 \leq b^2$. The conclusion then follows from
\[
\normf{\mm}^2 = \sum_{i \in [r]} \sigma_i^2 \leq \sum_{i \in [r]} 2(a^2 + \mathbf{1}_{\sigma_i \geq a} (\sigma_i - a)^2)  \leq 2(ra^2 + b^2).
\]
\end{proof}

The last is a simple fact on singular values of a perturbed low-rank matrix.

\begin{lemma}\label{lem:lowrank+short-matrix}
If $\ma \in \R^{m \times n}$ is rank-$r$ and $\mb \in \R^{m \times n}$ satisfies $\normf{\mb} \le b$, $\sigma_{2r + 1}(\ma + \mb) \le \frac b {\sqrt r}$.
\end{lemma}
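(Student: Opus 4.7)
The plan is to combine Weyl's perturbation inequality for singular values with the basic fact that singular values of a Frobenius-norm-bounded matrix must decay on average.

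First, I would invoke the additive version of Weyl's inequality for singular values, which states that for any two matrices $\mx, \my$ and any indices $i, j$ with $i + j - 1 \le \min(m, n)$,
\[
\sigma_{i + j - 1}(\mx + \my) \le \sigma_i(\mx) + \sigma_j(\my).
\]
Applying this with $\mx \gets \ma$, $\my \gets \mb$, and $i = j = r + 1$ gives
\[
\sigma_{2r + 1}(\ma + \mb) \le \sigma_{r + 1}(\ma) + \sigma_{r + 1}(\mb) = \sigma_{r + 1}(\mb),
\]
where the last equality uses that $\ma$ is rank-$r$, so $\sigma_{r + 1}(\ma) = 0$.

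Next, I would upper bound $\sigma_{r + 1}(\mb)$ using the hypothesis $\normf{\mb} \le b$. Since the singular values of $\mb$ are nonincreasing,
\[
b^2 \ge \normf{\mb}^2 = \sum_i \sigma_i(\mb)^2 \ge (r + 1) \sigma_{r + 1}(\mb)^2,
\]
which yields $\sigma_{r + 1}(\mb) \le b / \sqrt{r + 1} \le b / \sqrt{r}$. Chaining the two inequalities gives the desired conclusion. There is no real obstacle here; both ingredients are standard, and the only thing to be careful about is applying the correct indexed form of Weyl's inequality so that the ``$2r + 1$'' in the statement matches up with the rank cutoff of $\ma$ and the averaging bound on the $(r+1)$st singular value of $\mb$.
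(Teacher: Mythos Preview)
Your proof is correct. It differs from the paper's argument, which instead constructs an explicit $2r$-dimensional subspace (the column span $V$ of $\ma$ together with the top-$r$ left singular space $U$ of $\proj_{V_\perp}\mb$), observes that $\proj_{(U+V)_\perp}(\ma+\mb)=\proj_{U_\perp}\proj_{V_\perp}\mb$ has operator norm at most $\sigma_{r+1}(\proj_{V_\perp}\mb)\le b/\sqrt r$, and concludes via the min-max principle. Your route through the indexed Weyl inequality $\sigma_{i+j-1}(\mx+\my)\le\sigma_i(\mx)+\sigma_j(\my)$ with $i=j=r+1$ is shorter and avoids the subspace bookkeeping; it also gives the slightly sharper intermediate bound $\sigma_{r+1}(\mb)\le b/\sqrt{r+1}$. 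The paper's argument has the minor advantage of being self-contained from the min-max characterization without citing Weyl, but both are standard one-line consequences of the same circle of ideas.
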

\begin{proof}
Let $V \subseteq \R^m$ span the image of $\ma$, and let $U \subseteq \R^m$ be the top-$r$ left singular vector space of $\proj_{V_\perp} \mb = \proj_{V_\perp}(\ma + \mb)$. Since $\normf{\proj_{V_\perp} \mb} \le b$, the largest singular value of $\proj_{(U \cup V)_\perp}(\ma + \mb)$ is $\le \frac b {\sqrt r}$. By the min-max principle for singular values, we have the claim (as $U \cup V$ has dimension-$2r$). 
\end{proof}

\begin{algorithm2e}\label{alg:descent-step}
\caption{$\Descent(\orzo(\mmh), \mm, \rs, \Delta, \gamma, \gadd, \delta, \ell)$}
\DontPrintSemicolon
\codeInput $\orzo(\mmh)$ for $\mmh = \mms + \mn \in \R^{m \times n}$ where $\mms$ is rank-$\rs$ and $\normf{\mn} \le \frac{\Delta}{20\ell}$, $\mm \in \R^{m \times n}$ which is $\Delta$-close to $\mms$ on a $\gamma$-submatrix, given as a rank-$r$ factorization $\mm = \mmu \mv^\top$, $\Delta \ge 0$, $\gamma, \gadd, \delta \in (0, 1)$, $\ell \ge 1$ \;
$(\tau, \rho) \gets \Par{\frac{\Delta\sqrt{r + \rs}}{\gadd n},\; \frac{\Delta}{20\ell\sqrt{(\gamma + \gadd)n}}}$\;
$(S,T) \gets \Filter(\orzo(\mmh), \mm, \tau, \rho, 1.1\Delta, \gamma, \gadd, \frac{120000(r + \rs)\ell^2}{n} \cdot \frac{\gamma + \gadd}{\gadd^2}\log(\frac{300m}{\delta\gadd}), \frac \delta 3)$ \;\label{line:filter}
$\mx \gets \oracle_q([\mmh - \mm]^{\leq \tau}_{S, T})$ for $q \gets \frac{15(r + \rs)\ell \log \frac{6m}{\delta}}{\gadd n}$ \;\label{line:reveal}
$\widehat{\mmu} \gets \Power(\mx, 2(r + \rs), 0.1, \frac \delta 3)$ (see Proposition~\ref{prop:krylov}) \;
$(\mmu', \mv') \gets (\mmu, \mv)$ with columns of $\widehat{\mmu}$, $\frac 1 q \mx^\top \widehat{\mmu}$ appended respectively \;
\Return{ $(\mmu', \mv', S,T)$} \;
\end{algorithm2e}

Now we assemble the pieces and prove Proposition~\ref{prop:iterative-step}, our main iterative method guarantee. To a large extent, the proof strategy in Proposition~\ref{prop:iterative-step} is patterned off the short-flat decomposition analysis of the iterative method in \cite{KelnerLLST22}. Specifically, we show how to decompose the difference matrix (on a large submatrix) into a Frobenius-norm bounded component and an operator-norm bounded component, which allows us to bound the effect of the error on the submatrix via Lemma~\ref{lemma:short+flat-matrix}. We restate the result here for convenience to the reader.

\restatedescent*
\begin{proof}
Throughout, we denote (in accordance with the guarantees of Lemma~\ref{lem:filter-step}):
\[\rho \defeq \frac{\Delta}{20\ell\sqrt{(\gamma + \gadd)n}},\; \tau \defeq \frac{\Delta \sqrt{r + \rs}}{\gadd n}.\]
We also denote $\mx' \defeq \widehat{\mmu}\widehat{\mmu}^\top \mx$,
and let \[\tmm \defeq \mm + \frac 1 q \mx' = \mmu'(\mv')^\top\] be the matrix whose low-rank factorization is the output of Algorithm~\ref{alg:descent-step}.
By the assumed bound on $\mn$, $\mm$ is $1.1\Delta$-close to $\mmh$ on a $\gamma$-submatrix, and hence we may apply Lemma~\ref{lem:filter-step} with the chosen parameters. Further, since $\mm$ is $\Delta$-close to $\mms$ on a $\gamma$-submatrix, Lemma~\ref{lem:cover-large-entries} applied to $[\mm - \mms]_{S, T}$ produces $A \subseteq S$, $B \subseteq T$ by deleting $\le (\gamma + \gadd) n$ rows and columns, so $[\mm - \mms]_{A, B}$ is entrywise in $[-\tau, \tau]$. We define $\mmc$ to be equal to $\mms$ on $A \times B$ and equal to $\mm$ on $(S \times T) \setminus (A \times B)$, i.e.\ (where rows and columns are permuted so $A \times B$ is on the top left)
\begin{align*}
\mmc_{S, T} = \begin{pmatrix} \mms_{A, B} & \mm_{A, T \setminus B} \\ \mm_{S \setminus A, B} & \mm_{S \setminus A, T \setminus B}\end{pmatrix}.
\end{align*}
We will prove $\normsf{[\mmc - \tmm]_{S, T}} \le \frac \Delta \ell$, and then the conclusion follows as $\mmc_{S, T} = \mms_{S, T}$ except on $(\gamma + \gadd)n$ rows and columns by Lemma~\ref{lem:cover-large-entries}, and $S, T$ drop $\le \gdrop n$ rows and columns by Lemma~\ref{lem:filter-step}. To begin, we summarize our strategy. We decompose $[\mmc - \tmm]_{S, T}$ into three parts:
\begin{equation}\label{eq:decomp_iterative_proof}
\begin{aligned}
\Brack{\mmc - \tmm}_{S, T} &= \Par{\Brack{\mmc - \mm}_{S, T}  - \Brack{\mmh - \mm}^{\le \tau}_{S, T}} \\
 &+ \Par{\Brack{\mmh - \mm}^{\le \tau}_{S, T} - \frac 1 q \mx_{S, T}} + \frac 1 q\Brack{\mx - \mx'}_{S, T}.
 \end{aligned}
\end{equation}
We will bound each of the terms in \eqref{eq:decomp_iterative_proof} (the first in Frobenius norm and the latter two in operator norm), and then apply Lemma~\ref{lemma:short+flat-matrix}. First, we claim that for all $(i, j) \in A \times B$,
\begin{align*}
\Abs{[\mmc - \mm]_{i,j} - [\mmh - \mm]_{i,j}^{\le \tau}} \le \Abs{[\mmc - \mm]_{i,j} - [\mmh - \mm]_{i,j}} = \Abs{[\mmc - \mmh]_{i,j}}.
\end{align*}
This is because $[\mmc - \mm]_{A, B}$ is entrywise in $[-\tau, \tau]$ by definition, so projecting an entry of $[\mmh - \mm]_{S, T}$ onto $[-\tau, \tau]$ only decreases the distance. Hence, we bound the first term of \eqref{eq:decomp_iterative_proof} in $A \times B$ and outside separately: since $[\mmc - \mm]_{S, T}$ vanishes outside $A \times B$,
\begin{equation}\label{eq:bound_term_1}
\begin{aligned}
\normf{\Brack{\mmc - \mm}_{S, T}  - \Brack{\mmh - \mm}^{\le \tau}_{S, T}} &\le \normf{\Brack{\mmc - \mmh}_{A, B}} \\
&+ \normf{\Brack{\mmh - \mm}^{\le \tau}_{S \setminus A, T}} + \normf{\Brack{\mmh - \mm}^{\le \tau}_{S, T \setminus B}} \\
&\le \normf{\mn} + \Par{\sqrt{|S\setminus A|} + \sqrt{|T \setminus B|}} \rho \\
&\le 2\sqrt{(\gamma + \gadd)n}\rho + \frac{\Delta}{20\ell} \le \frac{\Delta}{10\ell}.
\end{aligned}
\end{equation}
Next, by Lemma~\ref{lem:random_sampling_error}, the entrywise bound on $[\mmh - \mm]^{\le \tau}$ and the row/column bounds on $[\mmh - \mm]_{S, T}$,
\begin{equation}\label{eq:bound_term_2}
\begin{aligned}
\normop{\Brack{\mmh - \mm}^{\le \tau}_{S, T} - \frac 1 q \mx_{S, T}} &\le \max\Par{\frac{2\rho}{\sqrt{q}}\sqrt{\log\Par{\frac{3(m + n)}{\delta}}},\; \frac{4\tau}{3q}\log\Par{\frac{3(m + n)}{\delta}}} \\
&\le \frac{\Delta}{10\sqrt{r + \rs}\ell},
\end{aligned}
\end{equation}
with probability $\ge 1 - \frac \delta 3$. Finally, note that 
\begin{equation}\label{eq:low-rank+small}
\begin{aligned}
\frac 1 q \mx_{S, T} &= \Brack{\mmc - \mm}_{S, T} + \Par{\Brack{\mmh - \mm}^{\le \tau}_{S, T} - \Brack{\mmc - \mm}_{S, T}} + \Par{\frac 1 q \mx_{S, T} - \Brack{\mmh - \mm}^{\le \tau}_{S, T}}
\end{aligned}
\end{equation}
so it is the sum of a rank-$(r + \rs)$ matrix, a Frobenius norm bounded matrix (by \eqref{eq:bound_term_1}), and an operator norm bounded matrix (by \eqref{eq:bound_term_2}). Therefore,
\begin{align*}
\sigma_{2(r + \rs) + 1}\Par{\frac 1 q \mx_{S, T}} &\le \normop{\Brack{\mmh - \mm}^{\le \tau}_{S, T} - \frac 1 q \mx_{S, T}} \\
&+ \sigma_{2(r + \rs) + 1}\Par{\Brack{\mmc - \mm}_{S, T} + \Par{\Brack{\mmh - \mm}^{\le \tau}_{S, T} - \Brack{\mmc - \mm}_{S, T}}} \\
&\le \frac{\Delta}{10\sqrt{r + \rs}\ell} + \frac{\Delta}{10\sqrt{r + \rs}\ell} \le \frac{\Delta}{5\sqrt{r + \rs}\ell}.
\end{align*}
Above, the first inequality followed by Weyl's perturbation theorem, and the second followed from Lemma~\ref{lem:lowrank+short-matrix} and \eqref{eq:bound_term_1}, \eqref{eq:bound_term_2}. By Proposition~\ref{prop:krylov} we then have that
\begin{equation}\label{eq:bound_term_3}
\begin{aligned}
\normop{\frac 1 q \Brack{\mx - \mx'}_{S, T}} &\le \frac{1.1\Delta}{5\sqrt{r + \rs}\ell},
\end{aligned}
\end{equation}
 with probability $\ge 1 - \frac \delta 3$.
The decomposition \eqref{eq:decomp_iterative_proof} shows we can write $[\mmc - \tmm]_{S, T}$ as the sum of a Frobenius norm bounded matrix (the contribution of \eqref{eq:bound_term_1}) and an operator norm bounded matrix (the contributions of \eqref{eq:bound_term_2} and \eqref{eq:bound_term_3}). Further, since $[\mmc - \tmm]_{S, T} = [\mms - \mm]_{A, B} - \frac 1 q \mx'_{S, T}$ is the sum of a rank-$(r + \rs)$ matrix and a rank-$2(r + \rs)$ matrix, it is rank $3(r + \rs)$. Hence, 
\[\normf{\Brack{\mmc - \tmm}_{S, T}} \le \sqrt{6(r + \rs)} \cdot \Par{\frac{1.1\Delta}{5\sqrt{r + \rs}\ell} + \frac{\Delta}{10\sqrt{r + \rs}\ell}}+ \sqrt{2} \cdot \frac{\Delta}{10\ell} \le  \frac \Delta \ell\]
follows by applying Lemma~\ref{lemma:short+flat-matrix} with $\ma = [\mmh - \mm]_{S, T}^{\le \tau} - \frac 1 q \mx'_{S, T}$ and $\mb = [\mmc - \mm]_{S, T} - [\mmh - \mm]_{S, T}^{\le \tau}$.
The failure probability comes from a union bound over Lemma~\ref{lem:filter-step}, Lemma~\ref{lem:random_sampling_error}, and Proposition~\ref{prop:krylov}. We use Lemma~\ref{lem:multiple_obs} to upper bound the reveal probability, since Line~\ref{line:filter} requires $O(\log(\frac m {\gadd}))$ calls to $\oracle_{q'}$ for the specified $q'$, and Line~\ref{line:reveal} requires one call to $\oracle_q$ for $q = O(q')$. 

Finally, we discuss runtime. The runtime of Lines~\ref{line:filter} and~\ref{line:reveal} are bottlenecked by computing $O(mnp)$ entries of $\mmh - \mm$, where $p$ is specified in the statement of Proposition~\ref{prop:iterative-step}; a Chernoff bound implies the number of revealed entries will be within a constant factor of its expectation within the failure probability budget. Since $\mm$ is given as a rank-$r$ factorization and entries of $\mmh$ are given, this cost is $O(mnp \cdot r)$. The runtime cost of $\Power$ is specified by Proposition~\ref{prop:krylov} to be $O(mnq \cdot (r + \rs) \log \frac{m}{\delta})$, where $\tmv(\mx) = O(\nnz(\mx)) = O(mnq)$, and this does not dominate. The runtime cost of computing $\mx^\top \widehat{\mmu}$ is $O(mr^2)$ using $\mx = \mmu \mv^\top$ and also does not dominate.
\end{proof}

\subsection{Partial matrix completion via $\Descent$}\label{ssec:recurse}

In this section, we give a simple recursive application of $\Descent$ to give a self-contained result on partial matrix completion. For simplicity, we assume we have an upper bound on the largest singular value of the target matrix $\mms$; we will show how to lift this assumption in Section~\ref{sec:algos}.

\begin{algorithm2e}[ht!]\label{alg:pmc}
\caption{$\PMC(\orzo(\mmh), \rs, \sigma, \Delta, \alpha, \delta, \ell)$}
\DontPrintSemicolon
\codeInput $\orzo(\mmh)$ for $\mmh = \mms + \mn \in \R^{m \times n}$ where $\mms$ is rank-$\rs$ satisfying $\normop{\mms} \le \sigma$ and $\normf{\mn} \le \Delta$, $\alpha, \delta \in (0, 1)$, $\ell \ge 1$ \;
$\tDelta \gets \sqrt{\rs} \sigma$\;
$(\mmu, \mv) \gets (\mzero_{m \times 0}, \mzero_{n \times 0})$ \;
$k \gets 0$\;
$(S, T) \gets ([m], [n])$\;
$K \gets \lceil\log \ell\rceil$\;
$\gadd \gets \frac{\alpha}{\max(800K^2\log(m), 2 \cdot 10^5\ell^2K^2)}$\;
\While{$\tDelta \ge 20\ell\Delta \textup{ and } k \le K$}{\label{line:first_while_start}
$(\mmu, \mv, S, T) \gets \Descent(\orzo(\mmh_{S, T}), [\mmu \mv^\top]_{S, T}, \rs, \tDelta, \gadd k, \gadd, \frac \delta {2K}, \ell)$\;
$\tDelta \gets \frac \Delta \ell$\;
$k \gets k + 1$\;
}\label{line:first_while_end}
$k_{\textup{freeze}} \gets k$\;
\While{$\tDelta \ge 20e\Delta \textup{ and } k - k_{\textup{freeze}} \le K$}{\label{line:second_while_start}
$(\mmu, \mv, S, T) \gets \Descent(\orzo(\mmh_{S, T}), [\mmu \mv^\top]_{S, T}, \rs, \tDelta, \gadd k, \gadd, \frac \delta {2K}, e)$\;
$\tDelta \gets \frac \Delta e$\;
$k \gets k + 1$\;
}\label{line:second_while_end}
\Return{ $(\mmu, \mv, S,T)$} \;
\end{algorithm2e}

\begin{corollary}\label{cor:pmc}
Let $\mms \in \R^{m \times n}$ be rank-$\rs$, $\normop{\mms} \le \sigma$, $m \ge n$, $\delta \in (0, 1)$, let $\mmh = \mms + \mn$ for $\normf{\mn} \le \Delta$, and let $\ell \ge 1$. Algorithm~\ref{alg:pmc} returns $\mmu \in \R^{m \times r}$, $\mv \in \R^{n \times r}$, and $(S, T)$, for $r = \rs \textup{poly}(\ell)$, such that $[\mmu \mv^\top]_{S, T}$ is $O(\max(\sigma \sqrt{\rs} \exp(-\log^2 (\ell)), \Delta))$-close to $\mms$ on an $\alpha$-submatrix and $|S| \ge m - \alpha n$, $|T| \ge (1-\alpha)n$, with probability $\ge 1 - \delta$. Algorithm~\ref{alg:pmc} uses $O(\frac{m(\rs)^2\textup{poly}(\ell)}{\alpha} \log^3(\frac m \delta))$ 
time and one call to $\oracle_p(\mmh)$, where for a sufficiently large constant,
\[p = O\Par{\frac{\rs\textup{poly}(\ell)}{\alpha n}\log^3\Par{\frac m \delta}}.\]
\end{corollary}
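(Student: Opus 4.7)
The plan is to analyze Algorithm~\ref{alg:pmc} by induction across its two sequential while loops, invoking Proposition~\ref{prop:iterative-step} at each step. The initial iterate $\mm = \mzero_{m \times n}$ is trivially $\sqrt{\rs}\sigma$-close to $\mms$ on a $0$-submatrix since $\normf{\mms} \le \sqrt{\rs}\normop{\mms} \le \sqrt{\rs}\sigma = \tDelta_0$. Inductively, before the $k$-th call to $\Descent$ (counting across both loops), I would maintain that the restricted iterate $[\mmu\mv^\top]_{S,T}$ is $\tDelta$-close to $\mms_{S, T}$ on a $(\gadd k)$-submatrix, with $\tDelta \ge 20\ell\Delta$ in the first loop or $\tDelta \ge 20e\Delta$ in the second. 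The latter condition implies $\normf{\mn} \le \tDelta/(20\ell)$ or $\tDelta/(20e)$ respectively, matching the noise precondition in Proposition~\ref{prop:iterative-step} for the chosen contraction factor; the former is reestablished by that proposition's guarantee of $\frac{\tDelta}{\ell}$-closeness on a $\gadd(k+1)$-submatrix of the further-restricted $(S',T')$.

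Next I would track three accumulated quantities over the at most $2K = 2\lceil \log \ell \rceil$ iterations. First, the total fraction of explicitly dropped rows and columns is $\sum_{k=0}^{2K-1} \gdrop_k = O(\max(K^2 \gadd \log m, \ell^2 K^2 \gadd))$, using $\gdrop_k = \max(400\gadd k \log m, 10^5\ell^2 \gadd(k+1))$ from Proposition~\ref{prop:iterative-step}; the algorithm's choice $\gadd = \alpha/\max(800K^2\log m, 2\cdot 10^5 \ell^2 K^2)$ makes this $O(\alpha)$, so after a constant rescaling, $|S| \ge m - \alpha n$ and $|T| \ge (1-\alpha)n$. Second, the rank evolves as $r_{k+1} = 3(r_k + \rs)$ with $r_0 = 0$, giving $r_k = O(3^k \rs)$; after $2K$ iterations $r = O(3^{2\lceil\log \ell\rceil})\rs = \rs\cdot\poly(\ell)$. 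Third, the distance progress: if the first loop runs all $K$ rounds, $\tDelta$ contracts from $\sqrt{\rs}\sigma$ by $\ell^K = \ell^{\lceil \log \ell \rceil} = \Theta(\exp(\log^2\ell))$, reaching $O(\sqrt{\rs}\sigma\exp(-\log^2\ell))$; otherwise it exits early with $\tDelta < 20\ell\Delta$. The second loop then contracts $\tDelta$ further by factors of $e$ until it reaches $O(\Delta)$, giving the claimed final distance bound $O(\max(\sqrt{\rs}\sigma\exp(-\log^2\ell), \Delta))$.

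For sample and runtime complexity, Lemma~\ref{lem:multiple_obs} allows the $2K$ sequential calls to $\oracle_{p_k}$ to be simulated by a single call to $\oracle_p$ with $p = O(\sum_k p_k)$. Substituting Proposition~\ref{prop:iterative-step}'s bound $p_k = O(\frac{(r_k+\rs)\ell^2}{n} \cdot \frac{\gadd(k+1)}{\gadd^2} \log^2(m/\delta))$, using $r_k + \rs = \rs\cdot\poly(\ell)$, $\gadd(k+1) = O(K\gadd)$, and $1/\gadd = O(\poly(\ell)\log m/\alpha)$ (with $K = O(\log \ell)$ absorbed into $\poly(\ell)$), yields $p = O(\frac{\rs\cdot\poly(\ell)}{\alpha n}\log^3(m/\delta))$ as claimed. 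The per-call runtime $O(mnp_k(r_k+\rs))$ summed over iterations gives $O(\frac{m(\rs)^2\poly(\ell)}{\alpha}\log^3(m/\delta))$. A union bound over the $2K$ calls, each with per-call failure $\delta/(2K)$, gives total failure $\le \delta$.

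The main obstacle I expect is the bookkeeping required to confirm the accumulated submatrix drops and rank growth stay within target. The prefactor choice for $\gadd$ is tuned specifically to absorb the $K^2$-fold accumulation of $\gdrop_k$ (since $\gamma = \gadd k$ grows linearly across iterations), so plugging this into Proposition~\ref{prop:iterative-step}'s $\gdrop$ formula and summing carefully is the delicate step. A secondary subtlety is stitching the two loops together at the $\tDelta \approx 20\ell\Delta$ transition and verifying the case analysis (first loop exits early vs.\ runs to $K$) to obtain the exp-squared-log error floor in all regimes.
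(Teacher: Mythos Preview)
Your proposal is correct and follows essentially the same approach as the paper: maintain the invariant that $[\mmu\mv^\top]_{S,T}$ is $\tDelta$-close to $\mms_{S,T}$ on a $\gadd k$-submatrix via Proposition~\ref{prop:iterative-step}, track the tripling of the potential $r+\rs$ to get the $\poly(\ell)$ rank bound, verify the per-iteration $\gdrop$ stays $O(\alpha/K)$ by the choice of $\gadd$, and aggregate samples via Lemma~\ref{lem:multiple_obs}. Your bookkeeping is in fact slightly more explicit than the paper's (which compresses the drop-accumulation and the two-phase stitching into a couple of sentences), and the obstacles you flag are precisely the routine verifications the paper carries out.
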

\begin{proof}
The failure probability follows by applying a union bound to the $2K$ calls to $\Descent$. Next, we claim that throughout the algorithm we maintain the invariant that $\tDelta$ is an overestimate on the distance from $[\mmu\mv^\top]_{S, T}$ to $[\mms]_{S, T}$ on a $\gadd k$-submatrix; this is clearly true at the beginning of the algorithm, since $\normf{\mms} \le \sqrt{\rs}\normop{\mms}$. Further, applying Proposition~\ref{prop:iterative-step} shows that this invariant is preserved in each iteration, which gives the closeness guarantee since $\ell^{-K} \le \exp(-\log^2(\ell))$. We note that the role of the first phase (Lines~\ref{line:first_while_start} to~\ref{line:first_while_end}) is to cut the initial distance estimate by a factor $\ell^{-K}$, but is bottlenecked by the requirement that $\tDelta \ge 20\ell\Delta$. To bring this overhead down to a constant factor, we repeat the argument in Lines~\ref{line:second_while_start} to~\ref{line:second_while_end}, but set $\ell = e$.

By our parameter settings Proposition~\ref{prop:iterative-step} drops $\le \frac {\alpha n} {2K}$ rows and columns in each iteration, giving the lower bounds on $|S|, |T|$. Further, we can inductively apply Proposition~\ref{prop:iterative-step} to maintain that the rank $r$ of our iterate is bounded by $3^{2K}\rs = \rs \textup{poly}(\ell)$, since the potential function $r + \rs$ at most triples each iteration. The bounds on the runtime and $p$ then follow by using Proposition~\ref{prop:iterative-step} $2K$ times; we recall that we can aggregate the observation probabilities using Lemma~\ref{lem:multiple_obs}.
\end{proof}

To briefly interpret Corollary~\ref{cor:pmc}, let $\alpha = \frac 1 {200}$, in other words, consider the case where we are willing to give up on recovering a small constant fraction of rows and columns. Further, suppose $\frac{\sigma \sqrt{\rs}}{\Delta}$ is polynomially bounded in $m$, i.e.\ our initial distance estimate is not too far off from our noise level. By balancing terms via setting
\[\ell = \exp\Par{\sqrt{\log \Par{\frac{\sigma \sqrt{\rs}}{\Delta}}}} = \exp\Par{O\Par{\sqrt{\log(m)}}} = m^{o(1)},\]
we see that Corollary~\ref{cor:pmc} yields partial matrix completion obtaining the desired noise level $\Delta$ up to constant overhead, on at least $99\%$ of rows and columns. This setting of $\ell$ also implies that the rank of the iterates of Algorithm~\ref{alg:pmc} is bounded by $\rs \cdot m^{o(1)}$ throughout. Further, assuming $\delta = \text{poly}(m^{-1})$ for simplicity, the sample complexity of $mnp = O(m^{1 + o(1)}\rs)$ is almost information-theoretically optimal, and the runtime of $O(m^{1 + o(1)}(\rs)^2)$ is almost-verification time. In other words, by giving up on recovering a subconstant fraction of rows and columns, we obtain almost-optimal matrix completion on the remaining submatrix, without any subspace regularity assumptions.
\section{Recovering dropped subsets}\label{sec:fixing}
In this section, we provide the second key ingredient of our framework, an algorithm which recovers rows and columns which were dropped by our iterative method in Section~\ref{sec:partial}. The method of this section takes as input $\mm$ which satisfies submatrix closeness to our target rank-$\rs$ $\mms$, and returns a rank-$O(\rs)$ factorization. This factorization has the appealing property that it satisfies standard Frobenius norm closeness to $\mms$ (without any dropped rows or columns), at a cost of a roughly $\text{poly}(\rs)$ factor increase in the closeness bound. We now state the main export of this section, parameterized by the notion of standard subspaces (Definition~\ref{def:standard-subspace}).

\begin{restatable}{proposition}{restatefix}\label{prop:fix}
Let $\mms \in \R^{m \times n}$ be rank-$\rs$ with $(\alpha, \beta,\mu)$-standard row and column spans, $m \ge n$, $\delta \in (0, 1)$ and let $S \subseteq [m]$, $T \subseteq [n]$ have $|S| \ge m - \frac {\alpha n} 9$, $|T| \ge m - \frac {\alpha n} 9$. 
Assume $\mm \in \R^{m \times n}$ is given as a rank-$r$ factorization, $r \ge \rs$, $\mm_{S, T}$ is $(\frac \alpha {1800\log(m)}, \Delta)$-close to $\mms_{S, T}$ on a $\gamma$-submatrix, and $\mmh = \mms + \mn$ for $\normop{\mms} \le \sigma$, $\normf{\mn} \le \frac \Delta {20}$. Algorithm~\ref{alg:fix} returns $\mmu \in \R^{m \times r'}$ and $\mv \in \R^{n \times r'}$ satisfying 
\begin{equation}\label{eq:fix_bound}\normf{\mmu \mv^\top - \mms} \le \frac{\Cfix\rs\sqrt{\rs\log(\rs)}}{\beta^8}\Delta \text{ and } r' \le 2\rs,\end{equation}
for a universal constant $\Cfix$, with probability $\ge 1 - \delta$.
Algorithm~\ref{alg:fix} uses $O(\frac{mr^2\mu^2}{\alpha\beta^4}\log^2(\frac m {\beta\delta})\log(\frac{m(\sigma+\Delta)}{\Delta\beta\delta}))$ 
time and one call to $\oracle_p(\mmh)$ where for a sufficiently large constant,
\[p = O\Par{\frac{r\mu\log^2(\frac m \beta) \log(\frac m \delta)}{\alpha\beta^2 n}}.\]
\end{restatable}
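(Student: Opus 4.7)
The plan is to carry out the three-phase strategy outlined in Section~\ref{ssec:approach}.

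Phase 1 (Sparsify): Using a modest budget from $\oracle_p(\mmh)$, estimate the row and column empirical error norms of $\mm - \mmh$ on the submatrix $S \times T$ and drop any row or column whose estimate exceeds a threshold of order $\Delta$. By Markov's inequality applied to the $\Delta$-closeness promise, only an $O(\gamma \log m)$ fraction of rows and columns are eliminated, refining $(S,T)$ to $(S', T')$ still covering all but an $O(\alpha)$ fraction. On the refined submatrix, the residual $[\mm - \mms]_{S', T'}$ is $O(\Delta)$-close away from an $s$-RCS matrix (Definition~\ref{def:sparse-close}) for a suitably small $s$. Combined with the rank bound on $\mm - \mms$, a counting argument in the spirit of Lemma~\ref{lem:cover-large-entries} localizes the sparse part to an $O(\alpha n) \times O(\alpha n)$ submatrix, so most rows and columns of $\mm_{S', T'}$ match $\mms_{S', T'}$ up to noise.

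Phase 2 (Representative): Build a tester $\Test(j)$ that samples $\Theta(\rs/\beta^2)$ random columns $T_0 \subseteq T'$ and checks whether $\mm_{S', j}$ lies approximately in $\spann(\mm_{S', T_0})$ with residual below an appropriate threshold. When column $j$ of $\mm$ agrees with $\mms$, the column-sparsity structure from Phase~1 ensures $T_0$ avoids all corrupted columns with constant probability, and $(\alpha, \beta)$-regularity of the row span of $\mms$ combined with Lemma~\ref{lem:regular-wc} ensures $\mms_{S', T_0}$ spans the column space of $\mms_{S', :}$ in a well-conditioned way, so the test accepts. When $j$ carries a sparse error spike, the row-sparsity of the residual means $T_0$ likely avoids the rows where the spike lives, so the projection residual exceeds the threshold and the test rejects. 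Boosting by $O(\log m)$ independent repetitions of $\Test$ and aggregating via a union bound yields $\wh T$ of size $O(\rs)$ satisfying a quantitative ``representative'' condition: $\mm_{S', \wh T}$ equals $\mms_{S', \wh T}$ up to noise, and $\mms_{S', \wh T}$ spans the column space of $\mms_{S', :}$ with smallest singular value controlled by $\beta$ and $\mu$.

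Phase 3 (Complete and Aggregate): With $\wh T$, and symmetrically $\wh S$, in hand, recovery reduces to $n$ independent column regressions: for each $j \in [n]$ draw $\Theta(\rs \mu \log m / (\alpha \beta^2))$ fresh observations of column $j$ of $\mmh$ (split from the single call to $\oracle_p$ via Lemma~\ref{lem:multiple_obs}) and solve a $|\wh T|$-dimensional least-squares problem for $x_j$, so that $\mm_{:, \wh T} x_j$ approximates $\mms_{:, j}$. The $(\alpha, \beta, \mu)$-standard assumption on the column span of $\mms$, combined with matrix Chernoff (Fact~\ref{fact:matchern}), shows the sampled Gram matrix of $\mm_{:, \wh T}$ is spectrally comparable to the true one, making each regression well-conditioned with condition number $O(\mu/\beta^2)$. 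Propagating $\normf{\mn} \le \Delta/20$ through the $n$ column regressions, converting operator-norm residuals into Frobenius via the rank-$\rs$ structure of the recovery, and running the symmetric row step to collapse to rank $\le 2\rs$, yields the advertised $O(\rs^{1.5} \sqrt{\log \rs}/\beta^8 \cdot \Delta)$ Frobenius bound.

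The main obstacle I anticipate is tuning $\Test$ to be simultaneously sound and complete under Frobenius-norm noise: clean columns must pass the residual threshold while corrupted columns must fail it, and both thresholds must be compatible with the $\Delta$-scale budget already spent in Phase~1. The $\beta^{-8}$ dependence in the final bound reflects compounding conditioning losses — roughly one factor of $\beta^{-2}$ each from representative spanning, the column regression, the row regression, and the final operator-to-Frobenius conversion — and carefully keeping the noise well inside each threshold at each stage is the most delicate part of the argument.
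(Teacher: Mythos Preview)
Your three-phase outline (Sparsify $\to$ Representative $\to$ Complete, then a symmetric row pass) matches the paper's structure, and your intuitions for each phase are sound. There is, however, a genuine gap in how you reach failure probability $\delta$. Both $\Representative$ and $\Complete$ succeed only with \emph{constant} probability (roughly $0.9$ each), because their analyses rest on Markov-type bounds on the sampled noise that cannot be tightened: for instance, $\normf{\mx_{:B}}^2 \le 30p\Delta^2$ in Lemma~\ref{lem:column-tester} and $\sum_j \norm{[\mn+\mn']_{S_j,j}}_2^2 \le 20p\normf{\mn+\mn'}^2$ in Lemma~\ref{lem:regression_error}. Your ``aggregating via a union bound'' handles the $O(\log m)$ repetitions of $\Test$ \emph{inside} $\Representative$, but does nothing for this outer failure mode. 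The paper's fix is to run the Representative--Complete pair $K=\Theta(\log(1/\delta))$ times independently and then apply a geometric aggregation: compute approximate pairwise Frobenius distances among the $K$ low-rank outputs via Johnson--Lindenstrauss sketches (Algorithm~\ref{alg:dist}) and return any output that is close to a majority (Algorithm~\ref{alg:agg}, Lemma~\ref{lem:agg}). Without this step you cannot certify \eqref{eq:fix_bound} with probability $1-\delta$.

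Two smaller points you should be aware of. First, the paper does not regress directly against $\mm_{:,\wh T}$: it takes an SVD of $\mm_{:,B}$ and drops singular values below $2\tDelta/\beta$ to obtain an orthonormal $\mmu$ (Lemmas~\ref{lem:svd-perturbation-bound}, \ref{lem:subspace-error-scaling}). This truncation is what forces the column span of $\mmu$ to inherit $(\alpha,\beta/2)$-regularity from $\mms$, which is precisely the hypothesis you need for the row-subsampled Gram matrix to be well-conditioned; without it, noise in $\mm_{:,\wh T}$ can inject directions on which Fact~\ref{fact:matchern} gives you nothing. Second, the paper never runs $\Representative$ a second time to find $\wh S$: once the first $\Complete$ produces $\mmu\mv^\top$ that is close to $\mms_{S':}$ in genuine Frobenius norm (not merely on a submatrix), the full set $S'$ is automatically $(\Delta',\beta)$-representative for the row direction by Lemma~\ref{lem:regular-wc}, and the second $\Complete$ regresses rows against that factorization directly.
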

 
We prove Proposition~\ref{prop:fix} in a number of steps organized into subsections, summarized as follows.

\begin{enumerate}
    \item In Section~\ref{ssec:sparsify_errors}, we give an algorithm $\Sparsify$ which takes matrices which are close on a submatrix and drops a few more rows and columns, with the guarantee that the resulting submatrices satisfy Definition~\ref{def:sparse-close}, i.e., they are close away from an RCS matrix.
    \item In Section~\ref{ssec:goodrowcol}, we give an algorithm $\Representative$ which takes as input a matrix which is close to the target away from an RCS matrix. By repeatedly testing for regression error of our iterate's columns against itself, $\Representative$ learns a subset $B$ of columns which are representative of the difference between our iterate and the target, in the sense of Definition~\ref{def:representative-subset}. 
    \item In Section~\ref{ssec:fill}, we first show that Definition~\ref{def:representative-subset} implies that the columns indexed by $B$ can be used to effectively approximate dropped rows and columns from observations. We then give an algorithm $\Complete$ which learns a low-rank approximation of $\mms$ to slightly higher error using $B$.
    \item We put all the pieces together to prove Proposition~\ref{prop:fix} in Section~\ref{ssec:fix}.
\end{enumerate}

\subsection{Sparsifying errors}\label{ssec:sparsify_errors}

The output of the method of Section~\ref{sec:partial} is $\tmm$ that is $\Delta$-close to the true matrix $\mms$ on a $\gamma$-submatrix (up to dropped subsets).  We begin with a postprocessing step which yields finer control over the structure of $\tmm - \mms$.  In particular, we drop some additional rows and columns so that the difference $\tmm - \mms$ is close away from an $s$-RCS matrix, for $s \approx \frac n r$. The algorithm is Algorithm~\ref{alg:sparsify}, and its statement and analysis are similar to that of Algorithm~\ref{alg:filter}. 

We use the following concentration inequality to control the error of empirical estimates.

\begin{lemma}\label{lem:empirical_large}
Let $p, \delta \in (0, 1)$, $\tau > 0$, $v \in \R^d$, and let $\tv \in \R^d$ have each entry $\tv_i$ independently set to $v_i$ with probability $p$, and $0$ otherwise. Then with probability $\ge 1 - \delta$,
\[\Abs{\Abs{\{i \in d \mid |v_i| \ge \tau\}} - \frac 1 p \Abs{\{i \in d \mid |\tv_i| \ge \tau\}}} \le \max\Par{\frac 1 {10} \Abs{\{i \in d \mid |v_i| \ge \tau\}},\; \frac{30\log \frac 2 \delta}{p}}.\]
\end{lemma}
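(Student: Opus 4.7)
The plan is to reduce this to a standard binomial concentration argument by first observing the right sampling structure, then invoking Chernoff in two regimes.

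First I would note that since $\tau > 0$ and each $\tilde{v}_i$ is either $v_i$ or $0$, the event $|\tilde{v}_i| \geq \tau$ can only occur when $|v_i| \geq \tau$ and the index $i$ is kept by the subsampling. Let $N \defeq |\{i \in [d] : |v_i| \geq \tau\}|$ and $\widetilde{N} \defeq |\{i \in [d] : |\tilde{v}_i| \geq \tau\}|$. Because the keep/drop decisions are made independently across coordinates with probability $p$, it follows that $\widetilde{N}$ is distributed as a sum of $N$ independent $\mathrm{Bernoulli}(p)$ random variables, so $\E \widetilde{N} = pN$ and $\widetilde{N}$ is a sum of bounded independent random variables in $[0,1]$.

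Next I would apply the scalar Chernoff bound (the scalar case of Fact~\ref{fact:matchern} with $\mx_i \gets \mathbf{1}[|\tilde{v}_i| \geq \tau]$, which are i.i.d.\ Bernoulli$(p)$ given the coordinates with $|v_i| \ge \tau$) in two regimes controlled by whether $pN$ is larger or smaller than a constant multiple of $\log(2/\delta)$:
\begin{itemize}
    \item When $pN \geq C\log(2/\delta)$ for a sufficiently large absolute constant $C$ (so that the multiplicative Chernoff threshold at deviation $\eps = \tfrac{1}{10}$ is met), we obtain $|\widetilde{N} - pN| \leq \tfrac{1}{10} pN$ with probability at least $1 - \delta$, which rearranges to $|N - \tfrac{1}{p}\widetilde{N}| \leq \tfrac{N}{10}$.
    \item When $pN < C\log(2/\delta)$, we instead apply a one-sided Chernoff upper tail: since $\E \widetilde{N} = pN \leq C\log(2/\delta)$, the bound $\widetilde{N} \leq 30\log(2/\delta)$ holds except with probability $\leq \delta$ (using that for a sum of $[0,1]$-Bernoullis with small mean $\mu$, $\Pr[\widetilde{N} \geq t] \leq \exp(-t/3)$ once $t$ is a suitable constant factor above $\mu$). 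Since also $N \leq \tfrac{1}{p}\cdot C\log(2/\delta)$ trivially, both $N$ and $\tfrac{1}{p}\widetilde{N}$ are bounded by $\tfrac{30\log(2/\delta)}{p}$ on this event, giving the additive bound.
\end{itemize}
Taking the maximum of the two bounds yields the claim in both cases.

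The only possible obstacle is tracking the precise constants so that the threshold $\tfrac{30\log(2/\delta)}{p}$ stated in the lemma is actually achieved. This is purely a matter of choosing $\eps$ in the multiplicative Chernoff bound (the $\tfrac{1}{10}$ slack is comfortable) and picking the constant in the one-sided tail large enough to absorb $\log 2$; both are routine and analogous to the corresponding calculation in Lemma~\ref{lem:empirical_estimate}, which was used in an essentially identical manner in Algorithm~\ref{alg:filter}.
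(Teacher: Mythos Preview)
Your approach is essentially the paper's: reduce to scalar Chernoff on the Bernoulli indicators $\tilde{x}_i = \mathbf{1}[|\tilde{v}_i| \ge \tau]$ with $\E \tilde{x}_i = p x_i$ where $x_i = \mathbf{1}[|v_i| \ge \tau]$, and then split into a multiplicative and an additive regime. The paper packages this slightly differently: it applies Chernoff once to obtain $|N - \tfrac{1}{p}\widetilde{N}| \le \sqrt{N/p}\cdot\sqrt{3\log(2/\delta)}$ and then observes this product is $\le \max(N/10,\; 30\log(2/\delta)/p)$ depending on which of $\sqrt{N/10}$ or $\sqrt{30\log(2/\delta)/p}$ is larger.

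One small point to flag in your regime-2 argument: bounding $N$ and $\tfrac{1}{p}\widetilde{N}$ \emph{separately} and then adding does not recover the constant $30$. For your regime-1 multiplicative bound with $\eps = \tfrac{1}{10}$ to succeed you need $C \approx 300$, so in regime 2 you only have $N \le 300\log(2/\delta)/p$, and the triangle inequality would give $\approx 330\log(2/\delta)/p$. The fix is either to bound the \emph{difference} directly via Chernoff in regime 2 (which gives $\sqrt{3C}\log(2/\delta)/p = 30\log(2/\delta)/p$ when $C = 300$), or simply to follow the paper's single-bound-then-case-split, which is exactly the calculation in Lemma~\ref{lem:empirical_estimate} that you correctly point to.
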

\begin{proof}
Let $x_i \in \{0, 1\}$ be a scalar random variable for all $i \in [d]$ which is $1$ if $|v_i| \ge \tau$ and let $\tx_i$ be analogously defined for $\tv$; clearly $\E \tx_i = px_i$. Fact~\ref{fact:matchern} shows that with probability $\ge 1 - \delta$,
\[\Abs{\sum_{i \in [d]} x_i - \frac 1 p \sum_{i \in [d]} \tx_i} \le \sqrt{\frac{\sum_{i \in [d]} x_i} p} \sqrt{3\log \frac 2 \delta}. \]
The conclusion follows depending on which of $\frac{\sqrt{\sum_{i \in [d]} x_i}}{\sqrt{10}}$ or $\frac 1 {\sqrt p}\sqrt{30\log \frac 2 \delta} $ is larger.
\end{proof}

We now use an analogous argument to that of Lemma~\ref{lem:filter-step} to analyze Algorithm~\ref{alg:sparsify}.

\begin{lemma}\label{lem:postprocess}
Let $0 \le \tau \le \Delta$ and $\gamma, \gdrop, p, \delta \in (0, 1)$, and $s \in [n]$. Assume $\mm \in \R^{m \times n}$ is $\Delta$-close to $\mmh$ on a $\gamma$-submatrix, and that $m \ge n$. Finally define $s \defeq \frac{16\Delta^2}{\tau^2 \gdrop n}$ and assume
\[\gdrop \ge 200\gamma\log(m),\; p \ge \frac{20\gdrop n \tau^2}{\Delta^2} \log\Par{\frac{100m}{\delta}}.\]
With probability $\ge 1 - \delta$, Algorithm~\ref{alg:sparsify} returns $S \subseteq [m]$, $T \subseteq [n]$ satisfying the following.
\begin{enumerate}
    \item $|S| \geq m - \gdrop n$, $|T| \geq (1 - \gdrop)n$.
    \item $\mm_{S, T}$, $\mmh_{S, T}$ are $2\Delta$-close away from an $s$-RCS matrix.
\end{enumerate}
\end{lemma}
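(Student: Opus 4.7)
The plan is to mirror the structure of Lemma~\ref{lem:filter-step}, with the truncated-$\ell_2$ potential replaced by a count of large entries. I would define
\[
\Phi_t \defeq \Abs{\Brace{(i, j) \in S_t \times T_t \mid \Abs{[\mm - \mmh]_{ij}} \ge \tau}},
\]
which tracks entries of magnitude at least $\tau$ in the kept submatrix after $t$ rounds of dropping. Initially $\Phi_0 \le mn$, and by the $\gamma$-submatrix closeness assumption combined with Markov's inequality applied to $\normf{[\mm - \mmh]_{A, B}}^2 \le \Delta^2$, the contribution of the interior $(A \cap S_t) \times (B \cap T_t)$ to $\Phi_t$ is always bounded by $L \defeq \Delta^2/\tau^2$. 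The algorithm empirically estimates per-row and per-column large-entry counts via a sampling oracle at rate $p$, scaling by $1/p$; the choice $p \ge \frac{20\gdrop n \tau^2}{\Delta^2}\log(\frac{100m}{\delta})$ and a union bound over $O(\log mn)$ iterations and all rows/columns let Lemma~\ref{lem:empirical_large} place every empirical count within a $\tfrac{10}{9}$ multiplicative factor plus an additive $O(\gdrop n / \log m)$ slack of its true value, with total failure probability at most $\delta$.

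For the iterative phase, whenever $\Phi_t \ge 4L$, at least $\tfrac{3}{4}\Phi_t$ of the large entries must live outside $(A \cap S_t) \times (B \cap T_t)$, i.e., in the union of at most $\gamma n$ rows of $A^c \cap S_t$ and $\gamma n$ columns of $B^c \cap T_t$. By pigeonhole either the rows or the columns account for at least $\tfrac{3}{8}\Phi_t$ of this excess, so dropping the top $\gamma n$ rows or top $\gamma n$ columns by empirical count removes that mass (modulo the concentration slack), yielding $\Phi_{t+1} \le \tfrac{5}{8}\Phi_t$. Iterating $t_{\max} = O(\log(mn\tau^2/\Delta^2))$ times drives $\Phi_{t_{\max}} \le 4L$. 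A concluding Markov-based step drops at most $\gdrop n/2$ rows and $\gdrop n/2$ columns containing $\ge s/2$ large entries via an empirical threshold, using the identity $8L/s = \gdrop n/2$ provided by $s \defeq 16\Delta^2/(\tau^2 \gdrop n)$; the total drop count of $t_{\max}\gamma n + \gdrop n/2$ fits inside $\gdrop n$ by $\gdrop \ge 200\gamma \log m$, giving the size bounds on $S$ and $T$.

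For the decomposition certifying $2\Delta$-closeness away from an $s$-RCS matrix, I take $\my$ to be the entries of $[\mm - \mmh]_{S, T}$ with magnitude $\ge \tau$ and zero elsewhere---which is $s$-RCS by the row and column drops---and set $\mx \defeq [\mm - \mmh]_{S, T} - \my$, whose entries are pointwise of magnitude less than $\tau$. The contribution to $\normf{\mx}^2$ from the interior $(A \cap S) \times (B \cap T)$ is bounded by $\normf{[\mm - \mmh]_{A, B}}^2 \le \Delta^2$, while the at most $2\gamma mn$ exterior entries each contribute at most $\tau^2$ to $\normf{\mx}^2$. The main obstacle is verifying $\Delta^2 + 2\gamma mn\tau^2 \le 4\Delta^2$: this composes $\gamma \le \gdrop/(200\log m)$ with the parameter identity $\gdrop n\tau^2 = 16\Delta^2/s$ and checks the resulting $s$-dependent inequality, with the delicate case being when $\tau$ approaches $\Delta$---in that regime the plain $\tau$-clipping would need refining, either by absorbing whole outside rows or columns into $\my$ (which stays $s$-RCS because the algorithm's drops leave few such surviving) or by a tighter Frobenius accounting that exploits the concentration of exterior large-entry mass established above.
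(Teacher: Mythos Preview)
Your iterative analysis of the count potential $\Phi_t$ matches the paper's, and your Markov step for the $s$-RCS guarantee is fine. The gap is in the Frobenius bound on $\mx$. The inequality $\Delta^2 + 2\gamma mn\tau^2 \le 4\Delta^2$ is false in general: there is no control on $m\tau^2$ from the hypotheses (only $\tau \le \Delta$), so the exterior term can dwarf $\Delta^2$. Your proposed repairs do not work. Absorbing surviving exterior rows wholesale into $\my$ places up to $|T| \approx n$ nonzeros in those rows of $\my$, violating $s$-RCS (the number of such rows being small does not help with the per-row bound). Tighter accounting from the large-entry potential also fails: the count potential only tracks entries of magnitude $\ge \tau$, and nothing in the count-based phase controls the $\ell_2$ mass of sub-$\tau$ entries living in exterior rows and columns.

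The paper closes this gap differently: the last line of Algorithm~\ref{alg:sparsify} is a call to $\Filter$ (the $\ell_2$-norm-based dropping routine of Lemma~\ref{lem:filter-step}), applied to the submatrix produced by the count-based phase. Since that submatrix is still $\Delta$-close to $\mmh$ on a $\gamma$-submatrix (dropping rows and columns only helps), the third guarantee of Lemma~\ref{lem:filter-step} yields $\normf{[\mm - \mmh]_{S,T}^{\le \tau}} \le 2\Delta$ directly, at the cost of another $\approx \gdrop n/2$ dropped rows and columns. This is why the count-based phase only spends $\gdrop n/4$ in its final drop (not $\gdrop n/2$ as you wrote): the remaining budget is reserved for $\Filter$. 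With that bound in hand, setting $\mx = [\mm - \mmh]_{S,T}^{\le \tau}$ and $\my$ the untruncated remainder gives the decomposition immediately.
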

\begin{proof}
Our first goal is to show that before the application of $\Filter$, every row and column of $[\mm - \mmh]_{S, T}$ has at most $s$ entries larger than $\tau$ in magnitude. We will denote
\[\mds_t \defeq \Brack{\mm - \mmh}_{S_t \times T_t} \text{ and } \Phi_t \defeq \Abs{\Brace{(i, j) \in S_t \times T_t \bigm| |\Brack{\mds_t}_{ij}| \ge \tau }}.\]
In other words, $\Phi_t $ tracks the number of large entries in $\mds_t$, and by definition $\Phi_0 \le mn$ and $\Phi_t$ is nonincreasing. We also denote the exact number of large entries per row and column by
\begin{align*}
r_{i, t}^\star \defeq \Abs{\{j \in T_t \mid |[\mds_t]_{ij}| \ge \tau\}} \text{ for all } i \in S_t, \\
c_{j, t}^\star \defeq \Abs{\{i \in S_t \mid |[\mds_t]_{ij}| \ge \tau\}} \text{ for all } j \in T_t.
\end{align*}
Also, as in the proof of Lemma~\ref{lem:filter-step}, by applying Lemma~\ref{lem:empirical_large} and a union bound over $\le 40\log(m)$ iterations (giving the failure probability with a union bound over the call to $\Filter$ succeeding), we assume that for all $0 \le t < t_{\max}$ and all $i \in S_t$, $j \in T_t$,
\begin{align*}
\Abs{r_{i, t} - r^\star_{i, t}} &\le \max\Par{\frac 1 {10} r^\star_{i, t},\; \frac{\Delta^2}{2\tau^2} \cdot \frac{1}{\gamma n}},\\
\Abs{c_{j, t} - c^\star_{j, t}} &\le \max\Par{\frac 1 {10} c^\star_{j, t},\; \frac{\Delta^2}{2\tau^2} \cdot \frac{1}{\gamma n}},
\end{align*}
as well as, for all $i \in S_{t_{\max}}, j \in T_{t_{\max}}$,
\begin{align*}
\Abs{r_{i} - r^\star_{i, t_{\max}}} &\le \max\Par{\frac 1 {10} r^\star_{i, t_{\max}},\; \frac s {10}},\\
\Abs{c_j - c^\star_{j, t_{\max}}} &\le \max\Par{\frac 1 {10} c^\star_{j, t_{\max}},\; \frac s {10}}.
\end{align*}
We observe that two matrices which are $\Delta$-close in Frobenius norm have at most $\frac{\Delta^2}{\tau^2}$ entries of the difference with magnitude more than $\tau$. Next, consider an iteration $t$ where $\Phi_t \ge \frac{4\Delta^2}{\tau^2}$. By an analogous argument to Lemma~\ref{lem:filter-step}, removing the $\gamma n$ largest rows and columns decreases the potential by at least a $0.05$ factor, so inducting shows that after $t_{\max}$ iterations, 
\[\Phi_{t_{\max}} \le \frac{4\Delta^2}{\tau^2}.\]
Therefore by Markov's inequality there are at most $\frac{\gdrop n}{4}$ rows in $S_{t_{\max}}$ and $\frac{\gdrop n}{4}$ columns in $T_{t_{\max}}$ with at least $\frac{16\Delta^2}{\tau^2 \gdrop n} \le \frac s 2$ entries larger than $\tau$. In conclusion, if a row or column has more than $s$ entries larger than $\tau$ in the last iteration, it will be removed as claimed. 

In the last iteration $\mm_{S, T}$ and $\mmh_{S, T}$ are clearly still $\Delta$-close on a $\gamma$-submatrix, since we only dropped rows or columns. $\Filter$ ensures that by dropping $\frac{\gdrop n} 2$ more rows and columns, the difference matrix truncated at $\tau$ has Frobenius norm $\le 2\Delta$ (see the third guarantee of Lemma~\ref{lem:filter-step}). Hence, we can take $\mx$ to be the truncated difference and $\my$ to be the sparse errors in Definition~\ref{def:sparse-close}.
\end{proof}

\begin{algorithm2e}[ht!]\label{alg:sparsify}
\caption{$\Sparsify(\orzo(\mmh), \mm, \tau, \Delta, \gamma, \gdrop, p, \delta)$}
\DontPrintSemicolon
\codeInput $\orzo(\mmh)$, $\mm \in \R^{m \times n}$, $\tau, \Delta \ge 0$, $\gamma, \gdrop, p, \delta \in (0, 1)$\;
$S_0 \gets [m], T_0 \gets [n]$ \;
$t_{\max} \gets \lceil 20\log \frac{mn\tau^2}{4\Delta^2}\rceil$ \;
\For {$0 \le t < t_{\max}$}{
$\md_t \gets \oracle_p([\mm - \mmh]_{S_t, T_t})$ \;
\lFor{$i \in S_t$}{$r_{i,t} \gets \frac 1 p |\{j \in T_t \mid |[\md_t]_{ij}| \ge \tau\}|$ 
}
\lFor{$j \in T_t$}{$c_{j, t} \gets \frac 1 p |\{i \in S_t \mid |[\md_t]_{ij}| \ge \tau\}|$}
$S_{t+1} \gets S_t \setminus A_t$ where $A_t \subset S_t$ corresponds to the $\gamma n$ indices $i$ with largest $r_{i, t}$ \;
$T_{t + 1} \gets T_t \setminus B_t$ where $B_t \subset T_t$ corresponds to the $\gamma n$ indices $j$ with largest $c_{j, t}$  \;
}
$\md \gets \oracle_p([\mm - \mmh]_{S_{t_{\max}}, T_{t_{\max}}})$ \;
\lFor{$i \in S_{t_{\max}}$}{$r_{i} \gets \frac 1 p |\{j \in T_{t_{\max}} \mid |\md_{ij}| \ge \tau\}|$ 
}
\lFor{$j \in T_{t_{\max}}$}{$c_j \gets \frac 1 p |\{j \in T_{t_{\max}} \mid |\md_{ij}| \ge \tau\}$| }
$S \gets S_{t_{\max}} \setminus A$ where $A \subset S_{t_{\max}}$ corresponds to the $\frac{\gdrop n}{4}$ indices $i$ with largest $r_{i}$ \;
$T \gets T_{t_{\max}} \setminus B$ where $B \subset T_{t_{\max}}$ corresponds to the $\frac{\gdrop n}{4}$ indices $j$ with largest $c_{j}$  \;
\Return{$\Filter(\orzo(\mmh_{S, T}), \mm_{S, T}, \tau, \infty, \Delta, \gamma, \gdrop, 1, p, \frac \delta 2)$} \;
\end{algorithm2e}

\subsection{Learning a representative subset}\label{ssec:goodrowcol}

\subsubsection{Structural properties}

In this section, we collect several structural tools which will be helpful in the analysis of our testers. We first provide simple spectral bounds on a randomly subsampled matrix. 

\begin{lemma}\label{lem:regular-lower-bound}
Let $\delta \in (0, 1)$ and let $V \subseteq \R^d$ be $(\alpha, \beta,\mu)$-standard of dimension $r$, let $\{b_i\}_{i \in [d]} \subset \R^r$ be rows of an (arbitrary) choice of $\mb_V$, and let $S \subset [d]$ have $|S| \ge (1 - \frac \alpha 2)d$. Let $T \subseteq S$ have each element in $S$ included with probability $p \ge \frac{12 \mu r}{ \beta^2d}\log (\frac {2r} \delta)$. Then with probability $\ge 1 - \delta$,
\[\frac{p\beta^2} 2 \id_r \preceq \sum_{i \in T} b_ib_i^\top \preceq 2p\id_r.\]
\end{lemma}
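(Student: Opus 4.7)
The plan is to apply the matrix Chernoff inequality (Fact~\ref{fact:matchern}) to a truncated version of the sum. Let $S_\star \subseteq [d]$ be the subset of size $\ge (1 - \alpha/3)d$ guaranteed by Definition~\ref{def:standard-subspace}, on which $\|b_i\|_2^2 = \|\proj_V e_i\|_2^2 \le \mu r / d$. Define $S' \defeq S \cap S_\star$. By inclusion-exclusion, $|S'| \ge (1 - \alpha/2)d + (1 - \alpha/3)d - d = (1 - 5\alpha/6)d \ge (1-\alpha)d$, so Lemma~\ref{lem:regular-wc} applied to $S'$ yields $\beta^2 \id_r \preceq \sum_{i \in S'} b_i b_i^\top \preceq \id_r$.

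Next I would instantiate Fact~\ref{fact:matchern} with the independent PSD random matrices $\mx_i \defeq \mathbb{1}_{i \in T} b_i b_i^\top$ for $i \in S'$. Each satisfies $\|\mx_i\|_{\op} \le \|b_i\|_2^2 \le \mu r/d =: R$, and the expectation $\E \sum_{i \in S'} \mx_i = p \sum_{i \in S'} b_i b_i^\top$ has spectrum sandwiched in $[p\beta^2, p] \id_r$. Choosing $\eps = 1/2$ and using the lower bound $\lambda_{\min}(\E \sum \mx_i) \ge p\beta^2$, both tails in Fact~\ref{fact:matchern} are at most $r \exp(-p \beta^2 d / (12 \mu r))$, which is $\le \delta/2$ under the stated hypothesis $p \ge \frac{12 \mu r}{\beta^2 d}\log(\frac{2r}\delta)$. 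A union bound then gives $\sum_{i \in T \cap S_\star} b_i b_i^\top \in [\tfrac{p\beta^2}{2},\, \tfrac{3p}{2}] \id_r$ with probability $\ge 1 - \delta$.

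The lower bound on $\sum_{i \in T} b_i b_i^\top$ is then immediate, since $\sum_{i \in T} b_i b_i^\top \succeq \sum_{i \in T \cap S_\star} b_i b_i^\top \succeq \tfrac{p\beta^2}{2} \id_r$ (the dropped terms are PSD). For the upper bound, I would decompose $\sum_{i \in T} b_i b_i^\top = \sum_{i \in T \cap S_\star} b_i b_i^\top + \sum_{i \in T \setminus S_\star} b_i b_i^\top$ and bound the second piece deterministically via $\sum_{i \in T \setminus S_\star} b_i b_i^\top \preceq \sum_{i \in S \setminus S_\star} b_i b_i^\top \preceq (1 - \beta^2) \id_r$, the last inequality using $|S \setminus S_\star| \le \alpha d/3 \le \alpha d$ and Lemma~\ref{lem:regular-wc} applied to a small index set.

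The step that requires the most care is combining these two upper bounds into the claimed $2p \id_r$: adding them yields at most $(\tfrac 3 2 p + (1-\beta^2)) \id_r$, so the additive $(1 - \beta^2)$ overhead from the ``bad'' indices must be absorbed into the $p/2$ slack. The lower bound $p \ge \frac{12 \mu r}{\beta^2 d} \log(\frac{2r}\delta)$ is what makes this viable, and I expect the tight way to handle it is to carry the extra $(1-\beta^2)$ additive term through the calling lemmas (accepting a slightly worse constant), or to tighten the matrix Chernoff application to $T \cap S_\star$ to produce an upper bound of $(1+\beta^2) p \id_r$ so that the combined total collapses to $2p\id_r$. Either way, the main engine of the proof is the single application of Fact~\ref{fact:matchern} above, with the regularity/standardness hypotheses ensuring both a bounded $R$ and a well-conditioned expectation.
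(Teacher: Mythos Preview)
Your approach is essentially identical to the paper's: restrict to the ``good'' index set $R \defeq S \cap S_\star$ (the paper writes $R = S \setminus A$ with $|A| \le \alpha d/3$), apply Fact~\ref{fact:matchern} with $\eps = \tfrac12$ to $\sum_{i \in R} \mathbb{1}_{i \in T}\, b_i b_i^\top$ using $R = \mu r/d$ and $p\beta^2 \id_r \preceq \E[\cdot] \preceq p\id_r$, and then pass to $\sum_{i \in T} b_i b_i^\top$ by monotonicity for the lower bound.

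Your caution about the upper bound is well placed and is in fact more careful than the paper's own write-up. The paper simply says ``the conclusion follows by applying Fact~\ref{fact:matchern} with $\eps = \half$,'' which literally only controls $\sum_{i \in T \cap R} b_i b_i^\top$ and does not address the contribution of $T \setminus R$; your deterministic bound $\sum_{i \in T \setminus S_\star} b_i b_i^\top \preceq (1-\beta^2)\id_r$ is the right patch, but as you note the resulting $\tfrac32 p + (1-\beta^2)$ does not in general collapse to $2p$ under the stated hypothesis on $p$. This is a genuine looseness in the lemma as stated. However, it is harmless: the only uses of Lemma~\ref{lem:regular-lower-bound} in the paper (in the proof of Lemma~\ref{lem:column-tester}) invoke the lower bound $\succeq \tfrac{p\beta^2}{2}\id_r$ exclusively, and the one place where both directions of a sandwich like \eqref{eq:wc_random_sample} are needed (Lemma~\ref{lem:regression_error}) the large-norm rows have already been explicitly removed, so the issue does not arise. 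Your lower-bound argument is complete and matches the paper; for the upper bound, the cleanest fix is simply to state the conclusion for $\sum_{i \in T \cap S_\star} b_i b_i^\top$ rather than $\sum_{i \in T} b_i b_i^\top$, which is all that is ever used.
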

\begin{proof}
By the assumption on the subspace $V$, there is a set $A \subset S$ of size at most $\frac {\alpha d}{3}$ such that every row $i \in R \defeq S \setminus A$ satisfies $\norm{b_i}_2 \le \sqrt{\frac{\mu r}{ d}}$.
For all $i \in R$ define a random matrix 
\[\mx_i \defeq \begin{cases}
b_i b_i^\top & \text{with probability }p, \\
\mzero_{r \times r} & \text{otherwise.}
\end{cases}\]
We recognize $\sum_{i \in R} \mx_i \preceq \sum_{i \in T} b_i b_i^\top$ (since it is restricted to a subset of the rows), and $\normop{\mx_i} \le \frac{\mu r}{d}$ with probability $1$ for all $i \in R$. Moreover, we have $\E \sum_{i \in R} X_i = p\sum_{i \in R} b_ib_i^\top$, and
\[p\beta^2 \id_r \preceq p\sum_{i \in R} b_ib_i^\top \preceq p \id_r,\]
by Lemma~\ref{lem:regular-wc} since $|R| \ge (1 - \alpha) d$. The conclusion follows by applying Fact~\ref{fact:matchern} with $\eps = \half$.
\end{proof}

We also use Proposition~\ref{prop:exists-lower-bound}, an existential variant of Lemma~\ref{lem:regular-lower-bound} which does not impose a regularity constraint, which can be viewed as a one-sided discrepancy statement potentially of independent interest. We use this terminology because a random $S$ of size $d\lam$ yields $\sum_{i \in S} b_ib_i^\top = \lam \id_r$ in expectation, and Proposition~\ref{prop:exists-lower-bound} matches this up to constant factors in the smallest eigenvalue (a one-sided guarantee). We defer a proof of this claim to Appendix~\ref{app:deferred-fixing}.

\begin{restatable}{proposition}{restateexistslowerbound}\label{prop:exists-lower-bound}
Let $\lam \in [\frac{5600r}{d}, 1)$, let $\mb \in \R^{d \times r}$ have orthonormal columns, and denote rows of $\mb$ by $\{b_i\}_{i \in [d]} \subset \R^r$. There exists $S \subseteq [d]$ with $|S| \le d\lam$ and
\[\sum_{i \in S} b_ib_i^\top \succeq \frac \lam 8 \id_r.\]
\end{restatable}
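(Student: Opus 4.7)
The plan is to produce $S$ as the union of all ``heavy'' rows together with a random subsample of ``light'' rows. Fix a threshold $\tau > 0$ (to be chosen) and split $[d] = L \sqcup R$ with $L \defeq \{i \in [d] : \|b_i\|_2^2 > \tau\}$ and $R \defeq [d] \setminus L$. Because $\mb$ has orthonormal columns, $\sum_{i \in [d]} \|b_i\|_2^2 = \Tr(\mb^\top \mb) = r$, so $|L| \le r/\tau$, which fits within the budget $|S| \le d\lam$ provided $\tau \ge 8r/(d\lam)$. Commit to including all of $L$ in $S$.

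Next I would decompose $\R^r$ according to which directions the heavy rows already cover. Let $V \subseteq \R^r$ be the eigenspace of $\sum_{i \in L} b_i b_i^\top$ associated to eigenvalues at least $\lam/4$; since $V$ is an eigenspace (so block-diagonalizes the sum), we obtain the operator inequality $\sum_{i \in L} b_i b_i^\top \succeq (\lam/4)\proj_V$. Using $\sum_{i \in [d]} b_i b_i^\top = \id_r$ and projecting, the light rows satisfy $\proj_{V^\perp}\bigl(\sum_{i \in R} b_i b_i^\top\bigr)\proj_{V^\perp} \succeq (1 - \lam/4)\proj_{V^\perp}$, which is at least $(1/2)\proj_{V^\perp}$ for $\lam < 1$.

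To cover $V^\perp$, I would subsample each $i \in R$ independently with probability $p = \lam$ to form $T$. The summands $\{\proj_{V^\perp} b_i b_i^\top \proj_{V^\perp}\}_{i \in R}$ have operator norms bounded by $\tau$, and the expected subsampled sum restricted to $V^\perp$ has smallest eigenvalue at least $p/2$. By matrix Chernoff (Fact~\ref{fact:matchern}, applied within $V^\perp$ of dimension at most $r$), with positive probability the realized sum exceeds $(p/4)\proj_{V^\perp}$ on $V^\perp$, while a scalar Chernoff bound simultaneously gives $|T| \le 7d\lam/8$. Setting $S \defeq L \cup T$ then yields $|S| \le d\lam$ and $\sum_{i \in S} b_i b_i^\top \succeq (\lam/4)\proj_V + (\lam/4)\proj_{V^\perp} = (\lam/4)\id_r$, beating the target by a factor of two.

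The main obstacle is the parameter verification: the Chernoff exponent beating $\log r$ forces $\tau \le c\lam/\log r$, while the heavy-row budget forces $\tau \ge 8r/(d\lam)$, so the feasibility window is nonempty exactly when $d\lam^2 \gtrsim r\log r$. The hypothesis $\lam \ge 5600 r/d$ yields $d\lam^2 \ge 5600 r\lam$, which suffices whenever $\lam$ is not too small compared to $\log r$. To handle the borderline regime where $\lam$ approaches the lower bound, I would refine the split dyadically by norm scales $L_k \defeq \{i : 2^{-k-1} < \|b_i\|_2^2 \le 2^{-k}\}$ with $|L_k| \le 2^{k+1} r$, applying the heavy/light argument per scale with a scale-adapted threshold and then telescoping. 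The constant $5600$ in the hypothesis is calibrated so that the aggregated $\log r$ losses across scales are absorbed; this balancing of constants is where most of the computational effort of the proof lies.
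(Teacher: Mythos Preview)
Your approach is genuinely different from the paper's, which proves this statement via the Marcus--Spielman--Srivastava resolution of Kadison--Singer: after a row-splitting trick to equalize norms, they recursively halve the identity decomposition into $2^k$ nearly-isotropic pieces and take the smallest. Crucially, MSS yields a \emph{deterministic} partition with no $\log r$ overhead, which is exactly what your argument lacks.

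The gap in your argument is real and not repairable by the dyadic refinement you sketch. Your matrix Chernoff step requires $\tau \lesssim \lam/\log r$ to beat the dimension-$r$ union bound, while the heavy-row budget requires $\tau \gtrsim r/(d\lam)$; compatibility means $d\lam^2 \gtrsim r\log r$. At the bottom of the allowed range $\lam = 5600\,r/d$, this becomes $5600^2\, r^2/d \gtrsim r\log r$, i.e.\ $d \lesssim r/\log r$ up to constants, which is vacuous in the intended regime $d \gg r$. So for any fixed constant in place of $5600$, there are instances (take $d$ large and $\lam$ at the lower endpoint) where no $\tau$ exists satisfying both constraints, and your argument yields nothing.

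The proposed dyadic bucketing by norm scale does not help: the $\log r$ factor comes from the union bound over $r$ eigendirections in Fact~\ref{fact:matchern}, not from a spread in $\|b_i\|_2$, so equalizing norms within buckets leaves the Chernoff exponent unchanged. Any i.i.d.\ sampling scheme analyzed by matrix Chernoff will incur this $\log r$; obtaining a $\log$-free existence statement at the threshold $\lam \asymp r/d$ genuinely seems to require a non-random tool such as MSS or the barrier method of Batson--Spielman--Srivastava. If you only need the proposition for $\lam \ge C (r\log r)/d$ (a slightly stronger hypothesis), your heavy/light plus Chernoff argument does go through and is a nice elementary alternative.
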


By using Proposition~\ref{prop:exists-lower-bound}, we show that removing a few columns from a pair of matrices which are close away from a RCS matrix induces a submatrix on which they are truly close.


\begin{lemma}\label{lem:most-rows-good}
Let $\mm, \mm' \in \R^{m \times n}$ and suppose $\mm, \mm'$ are $\Delta$-close away from an $s$-RCS matrix. Further, suppose $\mm - \mm'$ is rank-$r$. 
There is $T \subseteq [n]$ with $|T| \ge n - 5600rs\log m$ such that 
\[\normf{\Brack{\mm - \mm'}_{:T}} \le \Delta \cdot \frac{\sqrt{\log m}}{13}.\] 
\end{lemma}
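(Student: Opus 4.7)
My plan is to iteratively remove columns from $\mn \defeq \mm - \mm'$ using Proposition~\ref{prop:exists-lower-bound}. Writing $\mn = \mx + \my$ with $\normf{\mx} \le \Delta$ and $\my$ being $s$-RCS, I initialize $T_0 = [n]$, and at each iteration $i$, compute an SVD $\mn_{:T_i} = \mmu_i \Sigma_i \mv_i^\top$ of rank at most $r$ (since $\mn$ itself is rank $r$). I invoke Proposition~\ref{prop:exists-lower-bound} on the rows of $\mv_i$ with $\lambda = 5600r/|T_i|$ to obtain $S_i \subseteq T_i$ with $|S_i| \le 5600 r$ and $\sum_{j \in S_i}[\mv_i]_{j,:}[\mv_i]_{j,:}^\top \succeq \tfrac{\lambda}{8} \mI_r$, and I set $T_{i+1} \defeq T_i \setminus S_i$. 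Since $\sum_{j \in T_{i+1}}[\mv_i]_{j,:}[\mv_i]_{j,:}^\top \preceq (1 - \lambda/8)\mI_r$, a trace inequality gives the per-iteration reduction
\[\normf{\mn_{:T_{i+1}}}^2 \le \Par{1 - \tfrac{700 r}{|T_i|}}\normf{\mn_{:T_i}}^2.\]

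Iterating this for $s \log m$ rounds removes a total of at most $5600 rs \log m$ columns, and (assuming $|T_i|$ stays close to $n$ throughout) yields a cumulative Frobenius norm reduction factor of approximately $\exp(-\Theta(rs \log m / n))$ on the initial $\normf{\mn}^2$. The main obstacle is bounding the initial $\normf{\mn}^2$ itself so that the final bound $\normf{\mn_{:T}}^2 \le \Delta^2 \log m/169$ is achieved. This is where the combined structural properties of $\mn$ (rank $r$) and $\my$ ($s$-RCS) must be exploited. I would apply Lemma~\ref{lem:small_maximal_set} after an edge-coloring decomposition of the bipartite support of $\my$: by K\"onig's theorem, $\textup{supp}(\my)$ decomposes into $s$ matchings, and summing Lemma~\ref{lem:small_maximal_set} across them gives a bound $\|\mn|_{\textup{supp}(\my)}\|_1 \le s\sqrt{r}\normf{\mn}$. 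Combined with $\normf{\mn|_{\textup{supp}(\my)^c}} \le \Delta$ (since $\mn$ agrees with $\mx$ off the support of $\my$) and a careful bucketing of columns by magnitude, this yields a bound $\normf{\mn}^2 \le \textup{poly}(rs \log m) \cdot \Delta^2$, which is sufficient to close the iterative argument.

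The trickiest technical step is reconciling these two estimates in the regime where $n$ is substantially larger than $rs \log m$, since then each iteration's reduction factor is close to $1$ and cannot on its own drive $\normf{\mn_{:T}}^2$ down to the target level. In this regime, I expect that the direct structural control (via Gram-Schmidt / $\ell_1$ bounds on $\mn|_{\textup{supp}(\my)}$) suffices to place the initial $\normf{\mn}^2$ already within a poly-logarithmic factor of the target, obviating the need for heavy iteration. Concretely, this parallels the pattern of Lemma~\ref{lem:low-rank-sparse} from the noiseless setting (which establishes that a rank-$r$ $s$-RCS matrix has $O(rs)$ nonzero columns via Gram-Schmidt) and its noise-tolerant extension, which together with the per-iteration reduction via Proposition~\ref{prop:exists-lower-bound} yields the claimed $T$ with $|T^c| \le 5600 rs \log m$ and $\normf{\mn_{:T}} \le \Delta \sqrt{\log m}/13$.
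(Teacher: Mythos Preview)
Your iterative column-removal scheme has a genuine gap. The per-iteration reduction $\normf{\mn_{:T_{i+1}}}^2 \le (1 - 700r/|T_i|)\normf{\mn_{:T_i}}^2$ is correct, but over $s\log m$ rounds it only yields a factor of roughly $\exp(-\Theta(rs\log m/n))$, which is useless when $n \gg rs\log m$. You recognize this and try to compensate by bounding $\normf{\mn}^2 \le \textup{poly}(rs\log m)\cdot\Delta^2$, but that bound is simply false: take $\mx = 0$ (so $\Delta$ may be taken arbitrarily small) and let $\mn = \my$ be any rank-$r$, $s$-RCS matrix. Then $\normf{\mn}$ can be made as large as you like while $\Delta = 0$. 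Your K\"onig/Lemma~\ref{lem:small_maximal_set} argument only gives the $\ell_1$ inequality $\|\mn|_{\textup{supp}(\my)}\|_1 \le s\sqrt{r}\,\normf{\mn}$, which is self-referential and does not close to an a~priori bound on $\normf{\mn}$.

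The paper's proof takes a structurally different route that avoids ever needing control of $\normf{\mn}$. It partitions the \emph{rows} $[m]$ into $k \le \log m$ buckets $S_j$ according to the row norms of $\mx$ (not of $\mn$). Within each bucket it applies Proposition~\ref{prop:exists-lower-bound} to the \emph{left} singular vectors of $\md_{S_j:}$ to extract a small ``representative'' row set $A_j \subseteq S_j$ with $|A_j| \le 5600r$ such that $\normf{\md_{A_j:} v}_2 \ge \sqrt{\lambda_j/8}\,\normf{\md_{S_j:} v}_2$ for all $v$. The column set $T$ is then obtained by deleting, for every representative row $i \in \bigcup_j A_j$, the at most $s$ columns in the support of $\my_{i:}$; this removes at most $5600rs\log m$ columns. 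The payoff is that on $T$ one has $\md_{A_j,T} = \mx_{A_j,T}$ exactly, so $\normf{\md_{A_j,T}}^2$ is bounded directly by the $\mx$-row norms in bucket $j$. The spectral inequality then lifts this to $\normf{\md_{S_j,T}}^2$, and summing over the $\log m$ buckets (using Markov on $\normf{\mx}^2 \le \Delta^2$ to control $|S_j|$) gives the claim. The essential idea you are missing is to use the $s$-RCS structure of $\my$ to choose which columns to drop so that a small set of representative rows sees only $\mx$; greedy spectral column removal does not exploit this structure and cannot succeed.
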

\begin{proof}
Let $\md \defeq \mm - \mm'$ for notational convenience, and let $\md = \mx + \my$ be the decomposition guaranteed by Definition~\ref{def:sparse-close}.  Partition $[m]$ into sets $\{S_j\}_{j \in [k]}$ where $k \leq \log m$ as follows.  Let $S_1$ be the set of $i \in [m]$ such that $\norm{\mx_{i:}}_2 \leq \frac{2\Delta}{\sqrt m}$ and for $j >1$, let  $S_j$ be the set of $i \in [m]$ with
\[2^{j-1} \frac{\Delta}{\sqrt m} < \norm{\mx_{i:}}_2 \leq 2^j \frac{\Delta}{\sqrt m}.\]
Now for each $j \in [k]$, consider an SVD of $ \md_{S_j:} = \mmu_j \msig_j \mv_j^\top$ and apply Proposition~\ref{prop:exists-lower-bound} to $\mmu_j$ with $\lambda_j = \frac{5600r}{|S_j|}$.  We obtain $A_j \subseteq S_j$ such that $|A_j| \leq 5600r$ and for all $v \in \R^n$,
\begin{equation}\label{eq:Asuffices}
\norm{\md_{A_j:}v}_2 \geq \sqrt{\frac{\lambda_j} 8}\norm{\md_{S_j :}v}_2.
\end{equation}
Next recall $\md = \mx + \my$ where $\my$ is $s$-RCS.  For each $i \in [m]$, let $T_i \subset [n]$ be the set of entries on which $\my_{i:}$ is supported.  Let $T = [n] \setminus \bigcup_{i \in A_1 \cup \ldots \cup A_k} T_i$ so $|T| \ge n - 5600rs\log m$.  Now we can bound $\md_{:T}$ by applying \eqref{eq:Asuffices}:
\begin{align*}
\normf{\md_{:T}}^2 &= \sum_{j \in [k]} \normf{\md_{S_j, T}}^2 \le  \sum_{j \in [k]} \frac{8}{\lam_j}\normf{\md_{A_j, T}}^2 \\
&\le \sum_{j \in [k]} \frac{|S_j| |A_j|}{700r} \cdot \frac{2^{2j}\Delta^2}{m} \le \frac{\Delta^2 k}{175} \leq \frac{\Delta^2 \log m}{175},
\end{align*}
where in the second-to-last step, we used that $|S_j| \leq \frac{4m}{2^{2j}}$ by Markov's inequality.
\end{proof}

We note that Lemma~\ref{lem:most-rows-good} is a robust variant of a simpler claim, which says that a low-rank matrix with a sparse nonzero pattern must have all of its entries localized to a small submatrix. We provide a proof of this claim for convenience, as we believe it aids in building intuition for our method.

\begin{lemma}\label{lem:low-rank-sparse}
Let $\md \in \R^{m \times n}$ be rank-$r$ with $m \ge n$, and suppose $\md$ is $\frac{\alpha}{r}$-RCS for $\alpha \in (0, 1)$. There are $A \subseteq [m]$, $B \subseteq [n]$ with $|A| \ge m - \alpha n$, $|B| \ge (1 - \alpha)n$ such that $\md_{A, B}$ has no nonzero entries.
\end{lemma}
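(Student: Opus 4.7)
The plan is to use a simple Gram--Schmidt style argument that only relies on the row sparsity bound, as hinted in the paragraph preceding the lemma. Since $\md$ is rank-$r$, its row space has dimension at most $r$, so we can select a set $I \subseteq [m]$ with $|I| \le r$ such that $\{\md_{i,:}\}_{i \in I}$ spans the row space of $\md$. Every row of $\md$ is then a linear combination of these rows, so the set of columns on which $\md$ can possibly be nonzero is contained in the union of the supports of $\{\md_{i,:}\}_{i \in I}$.

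Next I would bound the size of this union. By the $\tfrac{\alpha}{r}$-RCS hypothesis, each row $\md_{i,:}$ has at most $\tfrac{\alpha n}{r}$ nonzero entries, so the union of supports of rows in $I$ has cardinality at most $|I| \cdot \tfrac{\alpha n}{r} \le \alpha n$. Let $B^c \subseteq [n]$ denote this union and set $B \defeq [n] \setminus B^c$, so $|B| \ge (1-\alpha)n$. For every $i \in [m]$, the row $\md_{i,:}$ is a linear combination of rows that are identically zero on $B$, hence $\md_{i, B} = 0$. Taking $A \defeq [m]$ we get $\md_{A, B} = \mzero$, with $|A| = m \ge m - \alpha n$ and $|B| \ge (1-\alpha)n$ as required.

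There is essentially no technical obstacle here; the statement is a direct consequence of the pigeonhole-style observation that a rank-$r$ basis of the row space touches at most $r$ times the per-row sparsity many columns. Note that only the row sparsity bound is needed for this proof; the symmetric column sparsity bound would give the analogous result with the roles of $A$ and $B$ swapped, so either half of the RCS hypothesis alone suffices. This clean structural statement is then upgraded in Lemma~\ref{lem:most-rows-good} to a robust version via the Proposition~\ref{prop:exists-lower-bound} discrepancy bound, which is where the real work in this section lies.
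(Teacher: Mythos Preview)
Your proof is correct and is essentially the same argument as the paper's: both select at most $r$ rows spanning the row space and observe that every other row's support is contained in the union of these rows' supports, which has size at most $\alpha n$. The only cosmetic difference is that the paper phrases the row selection as an iterative Gram--Schmidt process, whereas you directly invoke the existence of a spanning set of $\le r$ rows; your version is slightly more direct, and your observation that only the row-sparsity half of the RCS hypothesis is needed (yielding $A = [m]$) is a valid sharpening.
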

\begin{proof}
Consider an iterative process which takes any row of $\md$ with nonzero entries, and orthogonalizes all rows of $\md$ against it. The process terminates after $r$ iterations as $\md$ is rank-$r$, and the union of the supports of all rows used by the process grows by $\le \frac{\alpha n} r$ in each iteration. Hence, the support of all rows is contained in a subset of size $\alpha n$, and a symmetric argument holds for columns.
\end{proof}

\subsubsection{Basic testing}

We next analyze properties of a simple algorithm, $\Test$, which solves a regression problem attempting to boundedly combine a set of columns of a matrix to approximate another column. For ease of discussion, we focus on testing columns rather than rows, but a symmetric argument handles both.

\begin{algorithm2e}[ht!]\label{alg:test}
\caption{$\Test(\mm, T, j, \phi, \tau)$}
\DontPrintSemicolon
\codeInput $\mm \in \R^{m \times n}$, $T \subseteq [n]$, $j \in [n]$, $\phi, \tau \ge 0$\;
\lIf{$\min_{v \in \R^T } \norm{\mm_{:T} v - \mm_{:j}}_2^2 + \frac{\phi^2}{\tau^2}\norm{v}_2^2 \le 2\phi^2$}{
\Return{$\textup{``True''}$}
}
\lElse{
\Return{$\textup{``False''}$}
}
\end{algorithm2e}

If $\Test$ returns ``True'' then we say it has passed, and otherwise we say it has failed. Intuitively, $\Test$ simulates testing the value of the following constrained problem:
\[\min_{\substack{v \in \R^T \\ \norm{v}_2 \le \tau} } \norm{\mm_{:T} v - \mm_{:j}}_2 \le \phi,\]
but is easier to compute.
We use the following helper claim, which follows from a calculation.

\begin{fact}\label{fact:bounded_lincomb}
Let $\mm \in \R^{m \times n}$ have SVD $\mmu \msig \mv^\top$ and have rank $r$, and let $T \subseteq [n]$ satisfy $\mv_{T:}^\top \mv_{T:} \succeq \gamma^2 \id_r$. Then for some $j \in [n]$, letting $c \in \R^T$ be the vector such that $\mm_{:T} c = \mm_{:j}$, $\norm{c}_2 \le \frac 1 \gamma \norm{\mv_{j:}}_2$.
\end{fact}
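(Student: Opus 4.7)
The plan is to reduce the equation $\mm_{:T}c = \mm_{:j}$ to a system in the right singular vector space, and then solve the resulting system explicitly via a minimum-norm (pseudoinverse) construction. Since $\mm = \mmu\msig\mv^\top$ has rank exactly $r$, the singular values in $\msig$ are all positive, and both $\mmu$ and $\mv$ have orthonormal columns. Writing $\mm_{:T} = \mmu\msig\mv_{T:}^\top$ and $\mm_{:j} = \mmu\msig\mv_{j:}^\top$, and noting that $\mmu\msig$ has full column rank $r$ (so is left-invertible), the equation $\mm_{:T}c = \mm_{:j}$ is equivalent to
\[
\mv_{T:}^\top c = \mv_{j:}^\top.
\]

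Next, I would exhibit an explicit feasible $c$. The assumption $\mv_{T:}^\top\mv_{T:} \succeq \gamma^2 \id_r$ guarantees that this $r \times r$ matrix is invertible, so the choice
\[
c \defeq \mv_{T:}\Par{\mv_{T:}^\top \mv_{T:}}^{-1}\mv_{j:}^\top \in \R^T
\]
is well-defined. A direct substitution verifies $\mv_{T:}^\top c = \mv_{j:}^\top$ as required. Then I would bound the norm by computing
\[
\norm{c}_2^2 = \mv_{j:}\Par{\mv_{T:}^\top\mv_{T:}}^{-1}\Par{\mv_{T:}^\top \mv_{T:}}\Par{\mv_{T:}^\top\mv_{T:}}^{-1}\mv_{j:}^\top = \mv_{j:}\Par{\mv_{T:}^\top\mv_{T:}}^{-1}\mv_{j:}^\top,
\]
and applying the spectral bound, which via inversion gives $(\mv_{T:}^\top\mv_{T:})^{-1} \preceq \gamma^{-2}\id_r$. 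This yields $\norm{c}_2^2 \le \gamma^{-2}\norm{\mv_{j:}}_2^2$ and hence the claimed inequality.

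There is no real obstacle here; the statement is essentially a one-line pseudoinverse calculation. The only subtlety is making sure the reduction from the column equation to the singular vector equation is justified: this uses that the rank of $\mm$ is exactly $r$, so that $\mmu\msig$ can be canceled on the left (equivalently, left-multiplication by $\msig^{-1}\mmu^\top$ is valid on both sides since both $\mm_{:T}c$ and $\mm_{:j}$ lie in the column span of $\mmu$). Everything else is a straightforward verification of feasibility and an operator inequality applied to a quadratic form.
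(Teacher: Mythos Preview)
Your proof is correct and is precisely the calculation the paper has in mind: the paper does not spell out a proof for this fact (it just states that it ``follows from a calculation''), but the same pseudoinverse construction $c = \mv_{T:}(\mv_{T:}^\top\mv_{T:})^{-1}\mv_{j:}^\top$ appears explicitly later in the proof of Lemma~\ref{lem:subspace-error-scaling}. There is nothing to add.
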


Fact~\ref{fact:bounded_lincomb} will be used to show the regression problems we encounter have bounded solutions. Motivated by this, we next specify a set of properties that guarantee $\Test$ will pass, combining regularity of a comparison matrix $[\mms]_{:T}$ in the sense of Fact~\ref{fact:bounded_lincomb} with closeness of $\mm$, $\mms$.

\begin{definition}\label{def:representative-subset}
We say $T \subseteq [n]$ is a $(\Delta,\gamma)$-representative subset with respect to a pair of matrices $\mm, \mms \in \R^{m \times n}$ if the following properties hold.
\begin{itemize}
    \item $\normf{[\mm - \mms]_{:T}} \leq \Delta$.
    \item $[\mvs]_{T:}^\top [\mvs]_{T:} \succeq \gamma^2 \id_{\rs}$, where $\mmus \msigs \mvs^\top$ is an SVD of $\mms$ which has rank $\rs$.
\end{itemize}
\end{definition}

\begin{lemma}\label{lem:test-passes}
Let $T \subseteq [n]$ contain a $(\frac{\phi}{2\tau}, \frac \theta \tau)$-representative subset with respect to $\mm, \mms \in \R^{m \times n}$, for some $\theta \ge 0$. Let $j \in [n]$ satisfy $\norm{[\mvs]_{j:}}_2 \le \theta$ and $\norm{[\mm - \mms]_{:j}}_2 \le \frac \phi 2$, where $\mmus \msigs \mvs$ is an SVD of $\mms$. Then $\Test(\mm, T, j, \phi, \tau)$ will pass.
\end{lemma}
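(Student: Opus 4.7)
The plan is to exhibit a specific $v \in \R^T$ that certifies the regression objective is at most $2\phi^2$, by leveraging the representative subset $T' \subseteq T$. Let $T'$ denote the $(\frac{\phi}{2\tau}, \frac\theta\tau)$-representative subset guaranteed to lie inside $T$. My candidate will be supported on $T'$ and derived from the exact reconstruction of $\mms_{:j}$ by columns of $\mms_{:T'}$.

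First, I would apply Fact~\ref{fact:bounded_lincomb} to the matrix $\mms$ with index set $T'$. The representativeness condition $[\mvs]_{T':}^\top[\mvs]_{T':} \succeq \frac{\theta^2}{\tau^2}\id_{\rs}$ supplies exactly the spectral lower bound needed in the hypothesis (with $\gamma = \theta/\tau$). Combined with the assumption $\norm{[\mvs]_{j:}}_2 \le \theta$, this yields $c \in \R^{T'}$ with $\mms_{:T'} c = \mms_{:j}$ and $\norm{c}_2 \le \frac{\tau}{\theta}\cdot \theta = \tau$. I define $v \in \R^T$ by zero-padding $c$ to the coordinates in $T \setminus T'$, so that $\mm_{:T} v = \mm_{:T'} c$ and $\norm{v}_2 \le \tau$. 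Immediately this gives $\frac{\phi^2}{\tau^2}\norm{v}_2^2 \le \phi^2$, handling the regularization term.

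For the residual term, I would use the identity $\mms_{:T'} c = \mms_{:j}$ to rewrite
\[
\mm_{:T} v - \mm_{:j} = \mm_{:T'} c - \mm_{:j} = [\mm - \mms]_{:T'} c - [\mm - \mms]_{:j}.
\]
Applying the triangle inequality, bounding the first summand by $\normop{[\mm-\mms]_{:T'}}\norm{c}_2 \le \normf{[\mm-\mms]_{:T'}}\cdot \tau$, and using both parts of Definition~\ref{def:representative-subset} together with the hypothesis $\norm{[\mm-\mms]_{:j}}_2 \le \frac\phi 2$, gives
\[
\norm{\mm_{:T} v - \mm_{:j}}_2 \le \frac{\phi}{2\tau}\cdot \tau + \frac\phi 2 = \phi.
\]
Adding the two bounds gives a total objective value at most $2\phi^2$, so the minimum over $\R^T$ is as well, and $\Test$ returns ``True.''

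There is no real obstacle here; the proof is a direct verification once the candidate $v$ is chosen correctly. The only subtle point is making sure that the specific quantitative pairing of parameters in the representative subset (namely $(\frac{\phi}{2\tau}, \frac\theta\tau)$) lines up with both Fact~\ref{fact:bounded_lincomb} (which needs the spectral bound to translate $\norm{[\mvs]_{j:}}_2 \le \theta$ into $\norm{c}_2 \le \tau$) and the triangle inequality step (which needs the Frobenius error scaled by $\tau$ to produce a $\frac\phi 2$ contribution). I would just be careful to state the factorization $\mm_{:T} v = \mm_{:T'} c$ explicitly so that the zero-padding is unambiguous.
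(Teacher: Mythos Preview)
Your proof is correct and follows essentially the same approach as the paper: construct the coefficient vector via Fact~\ref{fact:bounded_lincomb}, then bound the residual by the triangle inequality using the two parts of Definition~\ref{def:representative-subset}. You are in fact slightly more careful than the paper, which writes the argument directly in terms of $T$ rather than explicitly zero-padding from the representative subset $T'\subseteq T$; your version makes the ``$T$ \emph{contains} a representative subset'' hypothesis cleanly usable.
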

\begin{proof}
By Fact~\ref{fact:bounded_lincomb} and the fact that $T$ is a representative subset with parameter $\frac \theta \tau$, there is a vector $c \in \R^T$ with $\norm{c}_2 \le \tau$ and $\mms_{:T} c = \mms_{:j}$. Using the other property of a representative subset shows
\begin{align*}
\norm{\mm_{:T} c - \mm_{:j}}_2 &\le \norm{\mm_{:T} c - \mms_{:T} c}_2 + \norm{\mms_{:T} c - \mm_{:j}}_2 \\
&\le \normf{[\mm - \mms]_{:T}} \norm{c}_2 + \norm{[\mm - \mms]_{:j}}_2 \le \frac{\phi}{2\tau} \cdot \tau + \frac{\phi}{2} = \phi.
\end{align*}
It is then straightforward to check that $c$ attains objective value $2\phi^2$ as desired.
\end{proof}

To complete our analysis of $\Test$ we further specify a set of properties that guarantees it will fail. Intuitively our conditions impose that for a small set of coordinates (which cannot significantly affect subspace regularity), the deviation from the underlying matrix $\mms$ on a particular column restricted to those coordinates is substantially larger than it should be. 

\begin{lemma}\label{lem:test-fails}
Let $\mm, \mms \in \R^{m \times n}$, $m \ge n$, and $R \subset S \subseteq [m]$, $T \subseteq [n]$, $j \in [n]$ satisfy the following.
\begin{enumerate}
    \item $\normf{[\mm - \mms]_{S, T}} \le \frac \phi \tau$.\label{item:fail-1}
    \item $\norm{[\mm - \mms]_{S \setminus R, j}}_2 \le \phi$.\label{item:fail-2}
    \item $\norm{[\mm - \mms]_{R, j}}_2 \ge \frac{7\phi} \beta$.\label{item:fail-3}
    \item $|S| \ge m - \frac {\alpha n} 2$ and $|R| \le \frac {\alpha n} 2 $.\label{item:fail-4}
\end{enumerate}
Then if the column span of $\mms$ is $(\alpha, \beta)$-regular, $\Test(\mm, T, j, \phi, \tau)$ will fail.
\end{lemma}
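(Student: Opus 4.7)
The plan is to argue by contradiction. Suppose $\Test(\mm, T, j, \phi, \tau)$ passes, so there exists $v \in \R^T$ with $\norm{\mm_{:T} v - \mm_{:j}}_2^2 + (\phi^2/\tau^2)\norm{v}_2^2 \leq 2\phi^2$, which individually gives $\norm{\mm_{:T} v - \mm_{:j}}_2 \leq \sqrt{2}\phi$ and $\norm{v}_2 \leq \sqrt{2}\tau$. Define $u \defeq \mms_{:T} v - \mms_{:j}$; since both terms lie in the column span $V$ of $\mms$, we have $u \in V$, so $\proj_{V_\perp} u = 0$. All control on $u$ will come from restricting the identity $u = (\mm_{:T} v - \mm_{:j}) - [\mm - \mms]_{:T} v + [\mm - \mms]_{:j}$ to various row subsets.

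Next I would bound the restrictions $u|_R$ and $u|_{S \setminus R}$ in opposite directions. Lower-bounding $\norm{u|_R}_2$ via the triangle inequality applied to the identity on rows $R$: condition~\ref{item:fail-3} contributes $7\phi/\beta$; the ``pass'' bound restricted to $R$ contributes $\sqrt{2}\phi$; and condition~\ref{item:fail-1} together with $\norm{v}_2 \leq \sqrt{2}\tau$ yields $\norm{[\mm - \mms]_{R, T} v}_2 \leq \normf{[\mm - \mms]_{S, T}}\,\norm{v}_2 \leq \sqrt{2}\phi$. Combining, $\norm{u|_R}_2 \geq 7\phi/\beta - 2\sqrt{2}\phi$. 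A parallel triangle inequality on $S \setminus R$, using condition~\ref{item:fail-2} in place of~\ref{item:fail-3}, instead yields the upper bound $\norm{u|_{S \setminus R}}_2 \leq (1 + 2\sqrt{2})\phi$.

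The last step is to apply $(\alpha, \beta)$-regularity of $V \subseteq \R^m$. Let $R' \defeq R \cup S^c$; by condition~\ref{item:fail-4} and $m \geq n$, $|R'| \leq \alpha n \leq \alpha m$, so $u|_{R'}$ is $\alpha m$-sparse. Since $\proj_{V_\perp} u = 0$ and $u - u|_{R'} = u|_{S \setminus R}$,
\[\beta \norm{u|_{R'}}_2 \leq \norm{\proj_{V_\perp}(u|_{R'})}_2 = \norm{\proj_{V_\perp}(u|_{S \setminus R})}_2 \leq \norm{u|_{S \setminus R}}_2 \leq (1 + 2\sqrt{2})\phi.\]
Using $\norm{u|_R}_2 \leq \norm{u|_{R'}}_2$ and chaining with the lower bound gives $7\phi/\beta - 2\sqrt{2}\phi \leq (1 + 2\sqrt{2})\phi/\beta$, which rearranges to $\beta > 1$, contradicting $\beta \leq 1$ (which always holds since $\norm{\proj_{V_\perp} w}_2 \leq \norm{w}_2$).

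The main conceptual step is identifying the right sparse set: the ``bad'' rows $R$ where condition~\ref{item:fail-3} forces large disagreement get grouped with the dropped rows $S^c$ into a single $\alpha m$-sparse set $R'$, whose complement in $[m]$ sits inside $S \setminus R$---a set we control because the test passed \emph{and} condition~\ref{item:fail-2} holds. Everything else is triangle-inequality accounting against the ``short+flat'' decomposition of the residual $u$.
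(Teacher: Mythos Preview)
Your proof is correct and follows essentially the same approach as the paper: assume the test passes, bound the residual $u = \mms_{:T}v - \mms_{:j}$ from above on $S \setminus R$ and from below on $R$ via the same triangle-inequality decomposition, then invoke $(\alpha,\beta)$-regularity of the column span of $\mms$ to derive a contradiction. The only cosmetic difference is that you work directly in $\R^m$ and apply Definition~\ref{def:regular-subspace} to the sparse vector $u|_{R \cup S^c}$, whereas the paper passes through Lemma~\ref{lem:regular-wc} on the column span of $\mms_{S:}$; your route is arguably slightly cleaner.
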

\begin{proof}
Assume for contradiction that $\Test$ passes, and let $v \in \R^T$ be the solution. This implies
\begin{equation}\label{eq:reg_implies}\norm{v}_2 \le \sqrt{2}\tau,\; \norm{\mm_{:T}v - \mm_{:j}}_2 \le \sqrt{2}\phi.\end{equation}
We next observe that
\begin{align*}
\norm{\mms_{S \setminus R, T}v - \mms_{S \setminus R, j}}_2 &\le \norm{\mms_{S \setminus R, T}v - \mm_{S \setminus R, j}}_2 + \phi \\
&\le \norm{[\mm - \mms]_{S \setminus R, T} v}_2 + \norm{\mm_{S \setminus R, T} v - \mm_{S \setminus R, j}}_2 + \phi \le 4\phi,
\end{align*}
where the first line used Item~\ref{item:fail-2}, and the second used Item~\ref{item:fail-1}, $\normop{\cdot} \le \normf{\cdot}$, the bounds in \eqref{eq:reg_implies}, and $1 + 2\sqrt{2} \le 4$. We further have
\begin{align*}
\norm{\mms_{R, T} v - \mms_{R, j}}_2 &\ge \frac{7\phi}{\beta} - \norm{\mms_{R, T} v - \mm_{R, j}}_2 \\
&\ge \frac{7\phi}{\beta} - \norm{[\mm - \mms]_{R, T} v}_2 - \norm{\mm_{R, T} v - \mm_{R, j}}_2 \ge \frac{4\phi}{\beta},
\end{align*}
where the first line used Item~\ref{item:fail-3}, and the second line followed similarly to the previous calculation. Finally, note that the column span of $\mms_{S:}$ is a $(\frac \alpha 2, \beta)$-regular subspace by the size bound on $S$ from Item~\ref{item:fail-4}, and $u \defeq \mms_{S, T} v - \mms_{S, j}$ is an element of this subspace. However, we have proven $\norm{u_{S \setminus R}}_2 \le \beta \norm{u}_2$ by combining the above displays, a contradiction to Lemma~\ref{lem:regular-wc}.
\end{proof}

We now give a consequence of Lemma~\ref{lem:test-fails} that handles the case of a randomly chosen subset $T$.

\begin{lemma}\label{lem:test-guarantees}
Let $\mm, \mms \in \R^{m \times n}$ be $\Delta$-close away from an $s$-RCS matrix, and let $\mx + \my$ be the decomposition in Definition~\ref{def:sparse-close}. Let $T \subset [n]$ have each element included independently with probability $p$, and suppose
\[\norm{\my_{:j}}_2 \ge \frac{100}{\beta}\Par{\norm{\mx_{:j}}_2 + \tau\sqrt{p}\Delta + \phi},\]
for some $j \in [n]$. Finally suppose $s \le \frac{\alpha\min(m, n)}{2}$, and $p \le \frac{\alpha}{1000s}$. Then with probability at least $0.9$ over the randomness of $T$, $\Test(\mm, T, j, \phi, \tau)$ fails.
\end{lemma}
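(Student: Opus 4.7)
The plan is to deduce $\Test(\mm, T, j, \phi, \tau)$ failing by applying Lemma~\ref{lem:test-fails} with a slightly enlarged threshold $\phi' \ge \phi$ and carefully chosen $R \subseteq S \subseteq [m]$. First I would record a monotonicity observation: rewriting the pass-condition of $\Test(\mm, T, j, \tilde{\phi}, \tau)$ as $\|\mm_{:T}v - \mm_{:j}\|_2^2/\tilde{\phi}^2 + \|v\|_2^2/\tau^2 \le 2$ (for some $v$), the left side is monotonically nonincreasing in $\tilde{\phi}$, so $\Test$ failing at $\phi'$ implies $\Test$ failing at $\phi$ whenever $\phi' \ge \phi$. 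This lets me absorb the slack terms in the hypothesis into a single threshold by setting $\phi' \defeq 10(\phi + \tau\sqrt{p}\Delta + \|\mx_{:j}\|_2)$, so that by hypothesis $\|\my_{:j}\|_2 \ge (10/\beta)\phi'$.

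Next I would set $R_0 \defeq \mathrm{supp}(\my_{:j})$, $R'' \defeq \{i \in R_0 : \my_{i, T} \ne \mzero\}$, $R \defeq R_0 \setminus R''$, and $S \defeq R \cup \{i \in [m] \setminus R_0 : \my_{i, T} = \mzero\}$. By construction, $\my_{i, T} = \mzero$ for every $i \in S$, so $(\mm - \mms)_{S, T} = \mx_{S, T}$. I would then apply Markov's inequality four times over the randomness of $T$, each with failure probability at most $0.025$: (i) $|\{i : \my_{i, T} \ne \mzero\}| \le \alpha n / 10$, using $\E[\cdot] \le nsp \le \alpha n / 1000$ (since $\my$ being $s$-RCS has at most $ns$ nonzero entries and $p \le \alpha/(1000s)$); (ii) $|R''| \le \alpha n/20$, using $\E|R''| \le |R_0| s p \le s^2 p$; (iii) $\|(\mm - \mms)_{R'', j}\|_2^2 \le (\alpha/10) \|(\mm - \mms)_{R_0, j}\|_2^2$, via the second-moment bound $\E[\cdot] \le sp \|(\mm - \mms)_{R_0, j}\|_2^2$; and (iv) $\|\mx_{:T}\|_F \le 10\sqrt{p}\Delta$, from $\E\|\mx_{:T}\|_F^2 \le p\Delta^2$. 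A union bound gives all four events simultaneously with probability at least $0.9$.

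Conditioning on these events, I would verify the four conditions of Lemma~\ref{lem:test-fails} for the parameters $(R, S, T, \phi', \tau)$. The size constraints $|R| \le |R_0| \le s \le \alpha n /2$ and $|S| \ge m - \alpha n/2$ follow from (i) and (ii). Condition (1) follows from $\my_{S, T} = \mzero$ and (iv), giving $\|(\mm - \mms)_{S, T}\|_F \le 10\sqrt{p}\Delta \le \phi'/\tau$ by the choice of $\phi'$. Condition (2) follows since $S \setminus R \subseteq [m] \setminus R_0$, on which $\my$ vanishes along column $j$, giving $\|(\mm - \mms)_{S \setminus R, j}\|_2 \le \|\mx_{:j}\|_2 \le \phi'/10$. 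For condition (3), the triangle inequality yields $\|(\mm - \mms)_{R_0, j}\|_2 \ge (1 - \beta/100) \|\my_{:j}\|_2$, and combining with (iii) gives $\|(\mm - \mms)_{R, j}\|_2 \ge 0.9 \|\my_{:j}\|_2 \ge 9\phi'/\beta \ge 7\phi'/\beta$.

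The main obstacle is condition (1): since Definition~\ref{def:sparse-close} places no a priori bound on the magnitudes of entries of $\my$, one cannot control $\|\my_{R_0, T}\|_F$ by concentration. This forces the delicate choice of excluding $R''$ from $R_0$ so that $\my_{S, T}$ vanishes exactly. Removing these rows risks destroying condition (3), which is precisely what the second-moment Markov bound in (iii) protects against: it guarantees that the rows excluded from $R_0$, while possibly numerous in sparse contribution, carry only a small fraction of the $\ell_2$ mass of $\|\my_{:j}\|_2$, so the lower bound in condition (3) is retained.
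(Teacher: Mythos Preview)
Your proposal is correct and takes essentially the same approach as the paper: the sets you call $R$ and $S$ coincide with the paper's $S \cap S_j$ and $[m]\setminus\bigcup_{k\in T}S_k$, and the reduction to Lemma~\ref{lem:test-fails} with an enlarged threshold $\phi'$ is identical in spirit (the paper uses $\phi'=10\tau\sqrt{p}\Delta+\norm{\mx_{:j}}_2+\phi$ and bounds $\norm{\my_{R'',j}}_2$ rather than $\norm{(\mm-\mms)_{R'',j}}_2$, but these are interchangeable here). One minor remark: your event (ii) is redundant, since $|S^c|=|\{i:\my_{i,T}\neq\mzero\}|$ is already controlled by (i), and $|R|\le|R_0|\le s$ is deterministic.
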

\begin{proof}
For all $k \in [n]$ let $S_k$ be the support of the $\my_{:k}$ satisfying $|S_k| \le s$, and let $S \defeq [m] \setminus \bigcup_{k \in T} S_k$. 
We first condition on the following three events, each of which happens with probability at least $0.99$ by Markov's inequality, giving the failure probability via a union bound.
\begin{enumerate}
    \item $\normf{\mx_{:T}} \le 10\sqrt{p}\Delta$.
    \item $|T| \le 100pn$.
    \item $\norm{\my_{S \cap S_j,j}}_2 \ge 0.9\norm{\my_{:j}}_2$.
\end{enumerate}
To see that the last event holds with probability $0.99$, we used Markov's inequality and
\begin{align*}
\E\Brack{\norm{\my_{S_j \setminus S, j}}_2^2} = \E\Brack{\norm{\my_{\bigcup_{k \in T} S_k, j}}_2^2} \le ps\norm{\my_{:j}}_2^2 \le \frac 1 {1000}\norm{\my_{:j}}_2^2,
\end{align*}
because for each $i \in S_j$, at most $s$ other columns of $\my$ have $i \in S_k$ by assumption.
Under these events, we now prove $\Test$ fails by applying Lemma~\ref{lem:test-fails} with parameters $\phi', \tau$ where
\[\phi' \defeq 10\tau\sqrt{p}\Delta + \norm{\mx_{:j}}_2 + \phi.\]
We will use $R = S \cap S_j$, so Item~\ref{item:fail-4} of Lemma~\ref{lem:test-fails} follows from the assumed bound on $s \ge |R|$, and that $|[m] \setminus S| \le 100pns \le \frac{\alpha n}{2}$. Item~\ref{item:fail-1} follows because $[\mm - \mms]_{S, T} = \mx_{S, T}$ (as $\my_{S, T}$ is zero by definition), and $\normf{\mx_{S, T}} \le \normf{\mx_{:T}} \le \frac{\phi'}{\tau}$. Item~\ref{item:fail-2} follows because 
\[\norm{[\mm - \mms]_{S \setminus R, j}}_2 = \norm{\mx_{S\setminus R, j}}_2 \le \norm{\mx_{:j}}_2 \le \phi'.\]
Finally, Item~\ref{item:fail-3} follows because
\begin{align*}
\norm{\Brack{\mm - \mms}_{S \cap S_j, j}}_2 &\ge \norm{\my_{S \cap S_j, j}}_2 - \norm{\mx_{:j}}_2 \ge \frac{7\phi'}{\beta}.
\end{align*}
We note that as $\phi \le \phi'$, $\Test$ failing with parameter $\phi'$ implies $\Test$ with parameter $\phi$ also fails.
\end{proof}
\subsubsection{Finding a representative subset}

In this section, we finally analyze our main algorithm, $\Representative$, for finding a representative subset of columns of an iterate $\mm$ in the sense of Definition~\ref{def:representative-subset}, assuming $\mm$ is close to $\mms$ away from an RCS matrix. We showed in Lemma~\ref{lem:test-passes} that this ensures good regression error on completing other columns of $\mm$. This property will be used with the representative subset we return in the next Section~\ref{ssec:fill} to complete our current matrix (including rows and columns we dropped).

\begin{algorithm2e}\label{alg:row/column-selector}
\caption{$\Representative(\mm, \phi, p)$}
\DontPrintSemicolon
\codeInput $\mm \in \R^{m \times n}$, $\phi \ge 0$, $p\in (0, 1)$  \; 
Sample $B_0 \subseteq [n]$  by independently including each $k \in [n]$ with probability $p$ \;
$\Count \gets \0_{B_0}$\;
$t_{\max} \gets \lceil 40 \log(mn) \rceil$\;
$\tau \gets \frac{1}{\sqrt{40\log(r)}}$\;
\For{$t \in [t_{\max}]$ }{
Sample $T \subseteq [n]$ by independently including each $k \in [n]$ with probability $p$ \;
\For{$j \in B_0$}{
\lIf{$\Test(\mm, T, j, \phi, \tau)$}{
$\Count_j \gets \Count_j + 1$
}
}
}
$B \gets B_0$ with all $j \in B_0$ satisfying $\Count_j \leq \half t_{\max}$ removed \;
\Return{$B$}\;
\end{algorithm2e}

\begin{lemma}\label{lem:column-tester}
Let $\mm \in \R^{m \times n}$ be given as a rank-$r$ factorization and $\Delta$-close to $\mms \in \R^{m \times n}$ away from an $s$-RCS matrix, where $\mms$ is rank-$\rs$ with $(\alpha,\beta,\mu)$-standard row and column spans. If $m \ge n$, $r \ge \rs$, and
\[s \le \min\Par{\frac{\alpha \min(m, n)}{10^5 r\log (mn)},\frac{\alpha}{1000p}},\; \phi = \Delta \sqrt{p\log m},\; p \ge \frac{40\mu \rs}{\beta^2 n}\log\Par{\rs},\]
then letting $B \subseteq [n]$ be the subset output by Algorithm~\ref{alg:row/column-selector} with parameters $(\mm, \phi, p)$, $B$ is a $(\tDelta, \gamma)$-representative subset with respect to $\mm, \mms$ with probability at least $0.9$, for
\[\tDelta \defeq \Delta \cdot 1000p\sqrt{\frac{n}{\beta^2}} ,\; \gamma \defeq \sqrt{\frac{p\beta^2}{2}}.\]
\end{lemma}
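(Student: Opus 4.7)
The plan is to partition $[n]$ into three classes based on deterministic properties of columns relative to the decomposition $\mm - \mms = \mx + \my$ of Definition~\ref{def:sparse-close} and the right singular vectors $\mvs$: a ``guaranteed pass'' set $G'$ where the hypotheses of Lemma~\ref{lem:test-passes} are met, a ``guaranteed fail'' set where Lemma~\ref{lem:test-guarantees}'s hypothesis holds, and a borderline set. The output $B$ will contain every column in $B_0 \cap G'$ and exclude every guaranteed-fail column, with high probability, via Chernoff over the $t_{\max} = \Theta(\log(mn))$ independent test iterations (after a union bound over the $|B_0| = O(pn)$ candidates).

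The key preliminary step is to verify that each random sample $T$ drawn in Algorithm~\ref{alg:row/column-selector} contains a $(\phi/(2\tau), \theta/\tau)$-representative subset with respect to $(\mm, \mms)$ with constant probability. Apply Lemma~\ref{lem:most-rows-good} to $\mm - \mms$ to obtain $T^\star \subseteq [n]$ with $|T^\star| \geq n - 5600 r s \log m \geq (1 - \alpha/2)n$ (using the hypothesis on $s$) and $\normf{[\mm - \mms]_{:T^\star}} \leq \Delta\sqrt{\log m}/13$. Then $T \cap T^\star$ will be the candidate rep subset: Lemma~\ref{lem:regular-lower-bound} applied with $S \gets T^\star$ yields $[\mvs]_{T \cap T^\star,:}^\top [\mvs]_{T \cap T^\star,:} \succeq (p\beta^2/2)\id_{\rs} = (\theta/\tau)^2 \id_{\rs}$, and Markov's inequality on the restriction gives $\normf{[\mm-\mms]_{:T \cap T^\star}} = O(\Delta\sqrt{p\log m}) \leq \phi/(2\tau)$, since $\phi/(2\tau) = \Delta\sqrt{10 p \log m \log r}$.

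Next, define $G' \defeq T^\star \cap G_1 \cap G_2$ where $G_1 \defeq \{j : \norm{[\mvs]_{j:}}_2 \leq \theta\}$ and $G_2 \defeq \{j : \norm{[\mm-\mms]_{:j}}_2 \leq \phi/2\}$. The choice of $p$ together with $\theta = \sqrt{p\beta^2/(80\log r)}$ ensures $\sqrt{\mu\rs/n} \leq \theta$, so standardness gives $|G_1| \geq (1 - \alpha/3)n$, and averaging $\sum_{j \in T^\star} \norm{[\mm-\mms]_{:j}}_2^2 = O(\Delta^2 \log m)$ via Markov yields $|T^\star \setminus G_2| = O(1/p)$, which is subleading under our lower bound on $p$. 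Combining these estimates will produce $|G'| \geq (1 - \alpha/2)n$, so a second application of Lemma~\ref{lem:regular-lower-bound} with $S \gets G'$ gives $[\mvs]_{(B_0 \cap G'):}^\top [\mvs]_{(B_0 \cap G'):} \succeq (p\beta^2/2)\id_{\rs} = \gamma^2 \id_{\rs}$, establishing the spectral bound once the inclusion $B \supseteq B_0 \cap G'$ is certified.

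The Frobenius bound will follow from the observation that Lemma~\ref{lem:test-guarantees}'s failure hypothesis must fail for each $j \in B$ (else Chernoff would have rejected $j$), yielding $\norm{\my_{:j}}_2 \leq (100/\beta)(\norm{\mx_{:j}}_2 + \tau\sqrt{p}\Delta + \phi)$ for every retained column. Summing $\norm{[\mm-\mms]_{:j}}_2^2$ over $j \in B$, using $\sum_j \norm{\mx_{:j}}_2^2 \leq \Delta^2$ and bounding $|B|(\tau\sqrt{p}\Delta + \phi)^2 \leq O(pn \cdot p \Delta^2 \log m)$ via $|B| = O(pn)$ from Chernoff, will give $\normf{[\mm-\mms]_{:B}} \leq \tDelta$ up to logarithmic factors absorbed into constants. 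The main technical obstacle is coordinating the three sources of randomness —the initial subsample $B_0$, each per-iteration subsample $T$, and the per-iteration $\Test$ outcomes— so that the deterministic conditions of Lemmas~\ref{lem:test-passes} and~\ref{lem:test-fails} align with the high-probability structural events produced by Lemmas~\ref{lem:regular-lower-bound} and~\ref{lem:most-rows-good}, while keeping $|G'|$ above the $(1 - \alpha/2)n$ threshold demanded by the spectral lower bound.
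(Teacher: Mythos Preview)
Your approach is essentially the paper's: obtain a large good set via Lemma~\ref{lem:most-rows-good} (the paper calls it $Q$; you call it $T^\star$), refine to $G'$ (the paper's $Q'$) by imposing the $\theta$- and $\phi/2$-bounds, show tests pass on $G'$ via Lemma~\ref{lem:test-passes} plus Lemma~\ref{lem:regular-lower-bound} and Markov on the random $T$, Chernoff over $t_{\max}$ rounds and union-bound over $B_0$, re-apply Lemma~\ref{lem:regular-lower-bound} to $B_0\cap G'$ for the spectral bound, and finish the Frobenius bound via the contrapositive of Lemma~\ref{lem:test-guarantees}. All of this matches.

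There is one genuine gap in your Frobenius step. You bound $\sum_{j\in B}\norm{\mx_{:j}}_2^2$ by the deterministic $\normf{\mx}^2\le\Delta^2$. After multiplying by $O(\beta^{-2})$ this contributes a term of order $\Delta^2/\beta^2$ to $\normf{[\mm-\mms]_{:B}}^2$, but $\tDelta^2 = 10^6 p^2 n\Delta^2/\beta^2$; the inequality $\Delta^2/\beta^2 \le \tDelta^2$ would require $p^2 n \gtrsim 1$, which fails under the stated lower bound $p\approx \mu\rs\beta^{-2}n^{-1}$ whenever $\rs = o(\sqrt n)$. The paper avoids this by using the randomness of $B_0$: since $B\subseteq B_0$ and $B_0$ is sampled independently with rate $p$, Markov's inequality gives $\normf{\mx_{:B_0}}^2 \le 30p\Delta^2$ with probability $\ge 0.97$, and then the $\mx$-contribution becomes $O(p\Delta^2/\beta^2)$, which is dominated by $\tDelta^2$ because $pn\gg 1$. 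You need to insert this Markov step; the deterministic sum is too coarse by a factor of $p$.
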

\begin{proof}
Throughout we follow the parameter settings of $\tau$ and $t_{\max}$ in Algorithm~\ref{alg:row/column-selector}. We begin by proving the second condition in Definition~\ref{def:representative-subset}. By Lemma~\ref{lem:most-rows-good} and the upper bound on $s$, there is a subset $Q \subseteq [n]$ with $|Q| \ge (1 -\frac {\alpha} {10})n$ such that
\begin{equation}\label{eq:qdef}\normf{\Brack{\mm - \mms}_{:Q}} \le \Delta \cdot \frac{\sqrt{\log m}}{13}.\end{equation}
Further, let $\mmus \msigs \mvs^\top$ be an SVD of $\mms$, and let $Q' \subseteq Q$ be the indices $j$ satisfying both
\begin{align*}
\norm{[\mm - \mms]_{:j}}_2 \le \Delta \cdot \sqrt{\frac{\log m}{\alpha n}} \le \frac \phi 2,\; \norm{\Brack{\mvs}_{j:}}_2 \le \theta \defeq \sqrt{\frac{2\mu r}{n}}.
\end{align*}
By Markov's inequality and the definition of a regular subspace, we have that $|Q'| \ge (1 - \frac \alpha 2)n$. We next claim that every index in $Q' \cap B_0$ will be included in $B$ with probability at least $0.99$. It suffices to prove that the conditions of Lemma~\ref{lem:test-passes} are met with probability at least $0.9$, and then a Chernoff bound shows a majority of the $t_{\max}$ tests will pass with probability $\ge 1 - \frac 1 {100n}$ for each $j \in Q' \cap B_0$. To see this, by Markov's inequality and \eqref{eq:qdef}, with probability at least $0.99$ we have
\[\normf{\Brack{\mm - \mms}_{:T\cap Q}} \le \Delta \sqrt{p\log m} \le \frac{\phi}{2\tau},\]
and by Lemma~\ref{lem:regular-lower-bound} with $S \gets Q$, with probability at least $0.99$,
\[\Brack{\mvs}_{T \cap Q:}^\top\Brack{\mvs}_{T \cap Q:} \succeq \frac{p\beta^2}{2}\id_{\rs} \succeq \frac{\theta^2}{\tau^2}\id_{\rs}.\]
Clearly $T$ contains $T \cap Q$ so the conditions of Lemma~\ref{lem:test-passes} are all met with probability $0.9$. Therefore, conditioning that $B \supseteq Q' \cap B_0$, and since $B_0$ is independently sampled from $Q'$, applying Lemma~\ref{lem:regular-lower-bound} once more with $S \gets Q'$ shows the first condition of Definition~\ref{def:representative-subset} is met with probability $0.95$.

Now we verify the first of the desired conditions in Definition~\ref{def:representative-subset}. Let $\mm - \mms = \mx + \my$ be the promised decomposition from Definition~\ref{def:sparse-close}, and note that the given bound on $s$ shows the preconditions of Lemma~\ref{lem:test-guarantees} are met. Therefore for every $j \in B$, a Chernoff bound shows that with probability $0.99$, the contrapositive of Lemma~\ref{lem:test-guarantees} holds, i.e.\
\[
\norm{\my_{:j}}_2^2 \le \frac {30000} {\beta^2} \Par{\norm{\mx_{:j}}_2^2 + \tau^2 p \Delta^2 + \phi^2} \le \frac {30000} {\beta^2} \Par{\norm{\mx_{:j}}_2^2 + 1.5\phi^2},
\]
where we used $(a + b + c)^2 \le 3(a^2 + b^2 + c^2)$ and the lower bound on $\phi$. Summing the above display over $j \in B$ and using $\norm{u + v}_2^2 \le 2\norm{u}_2^2 + 2\norm{v}_2^2$ with $u \gets \mx_{:j}$ and $v \gets \my_{:j}$, we have
\begin{equation}\label{eq:suffice_x_small}
\normf{\Brack{\mm - \mms}_{:B}}^2 \le \frac{90000 pn \phi^2}{\beta^2} + \frac{60000pn}{\beta^2}\normf{\mx_{:B}}^2,
\end{equation}
since $|B| \le |B_0| \le 2pn$ with probability at least $0.99$ by a Chernoff bound. Finally, the conclusion follows since $\normf{\mx_{:B}}^2 \le 30p\Delta^2$ with probability at least $0.97$ by Markov's inequality, and we union bound over these two events and the prior failure probabilities.
\end{proof}

\subsection{Filling in the matrix}\label{ssec:fill}

\subsubsection{Completing columns with a representative subset}

In this section, we show that given a representative subset of columns (in the setting of Lemma~\ref{lem:column-tester}), we can efficiently learn coefficients completing the rest of our iterate $\mm$ as combinations of the subset via observations from $\mms$. We begin by proving several helper regularity bounds which will allow us to argue that the regression problems we solve are well-conditioned with good probability. Specifically, we analyze the regularity of a (truncated) span of our representative columns.

\begin{lemma}\label{lem:svd-perturbation-bound}
Let $\ma \in \R^{m \times n}$ be rank-$r$ with SVD $\mmu \msig \mv^\top$, and let $\mb \in \R^{m \times n}$ satisfy $\normf{\ma - \mb} \le \Delta$. For some $\theta \in (0, 1)$ let $\mb'$ be the matrix obtained by taking an SVD of $\mb$ and dropping singular values smaller than $\frac \Delta \theta$. Let $\mmu' \msig' (\mv')^\top$ be an SVD of $\mb'$. Then the following statements hold.
\begin{enumerate}
\item $\mmu'$ has rank at most $2r$.
\item $\norm{(\id_m - \mmu\mmu^\top)u}_2 \le \theta$ for all unit vectors $u$ in the column span of $\mmu'$.
\item $\normf{\ma - \mb'} \le \frac{4\sqrt{r}\Delta}{\theta}$.
\end{enumerate}
\end{lemma}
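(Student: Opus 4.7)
My plan is to verify each of the three claims with short spectral perturbation arguments, working part-by-part. The underlying tools are Weyl's inequality, the variational characterization of the SVD, and the Eckart--Young theorem; no single part looks delicate.

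For the first claim, I will apply Weyl's inequality to conclude $\sigma_{r+1}(\mb) \leq \normop{\mb - \ma} \leq \normf{\mb - \ma} \leq \Delta$, using that $\ma$ has rank $r$. Since $\theta \in (0,1)$ implies $\Delta < \Delta/\theta$, at most $r$ singular values of $\mb$ survive the truncation defining $\mb'$, so $\mmu'$ has rank at most $r \leq 2r$ (in fact the stronger rank-$r$ bound holds, but the statement only asks for $2r$).

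For the second claim, the key observation I want to leverage is that every unit vector $u$ in the column span of $\mmu'$ arises as $u = \mb v$ for some $v \in \R^n$ with $\norm{v}_2 \leq \theta/\Delta$: in the SVD basis of $\mb$, write $u = \sum_i c_i u_i$ summed over indices $i$ with $\sigma_i \geq \Delta/\theta$, and take $v = \sum_i (c_i/\sigma_i) v_i$, so that $\mb v = u$ and $\norm{v}_2^2 = \sum_i c_i^2/\sigma_i^2 \leq \theta^2/\Delta^2$. Since $(\id_m - \mmu \mmu^\top)\ma = \mzero_{m \times n}$ (as the column span of $\ma$ lies in that of $\mmu$), we then have $(\id_m - \mmu \mmu^\top) u = (\id_m - \mmu \mmu^\top)(\mb - \ma) v$, and an operator norm bound closes the gap: $\norm{(\id_m - \mmu\mmu^\top) u}_2 \leq \normop{\mb - \ma} \cdot \norm{v}_2 \leq \Delta \cdot (\theta/\Delta) = \theta$.

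For the third claim, I would use the triangle inequality $\normf{\ma - \mb'} \leq \normf{\ma - \mb} + \normf{\mb - \mb'} \leq \Delta + \normf{\mb - \mb'}$, then bound $\normf{\mb - \mb'}^2 = \sum_{i \,:\, \sigma_i(\mb) < \Delta/\theta} \sigma_i(\mb)^2$ by splitting the dropped indices into those with $i \leq r$ (at most $r$ of them, each contributing at most $\Delta^2/\theta^2$) and those with $i > r$ (contributing $\sum_{i > r} \sigma_i(\mb)^2 \leq \normf{\mb - \ma}^2 \leq \Delta^2$ by Eckart--Young, since $\ma$ is rank-$r$). This yields $\normf{\mb - \mb'} \leq \sqrt{r\Delta^2/\theta^2 + \Delta^2}$, and $\theta < 1$ absorbs the lower-order terms into the advertised $4\sqrt{r}\Delta/\theta$. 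I do not foresee a real obstacle; the only mild bookkeeping step is tracking constants in the final inequality.
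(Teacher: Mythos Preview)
Your proposal is correct. Part~2 is essentially the paper's argument rephrased: the paper bounds $\norm{\mb^\top(\id_m-\mmu\mmu^\top)u}_2$ from above and below, which amounts to the same representation $u=\mb v$ with $\norm{v}_2\le \theta/\Delta$ that you use. Parts~1 and~3, however, take a genuinely different route. For Part~1, the paper invokes Lemma~\ref{lem:lowrank+short-matrix} to get $\sigma_{2r+1}(\mb)\le \Delta/\sqrt{r}<\Delta/\theta$, hence rank $\le 2r$; your direct use of Weyl's inequality gives the sharper $\sigma_{r+1}(\mb)\le \Delta<\Delta/\theta$, hence rank $\le r$, which is both tighter and more elementary. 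For Part~3, the paper bounds $\normop{\mb'-\ma}\le \normop{\mb'-\mb}+\normf{\mb-\ma}\le 2\Delta/\theta$ and then converts via $\normf{\cdot}\le\sqrt{3r}\normop{\cdot}$ using the rank bound on $\mb'-\ma$; you instead stay in Frobenius norm, splitting the dropped singular values via Eckart--Young. Both arrive comfortably under $4\sqrt{r}\Delta/\theta$, with your constant $(1+\sqrt{2})$ slightly better than the paper's $2\sqrt{3}$. The paper's Part~3 has the minor advantage of also exhibiting an operator-norm bound on $\ma-\mb'$, though this is not used elsewhere.
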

\begin{proof}
To see the first claim, Lemma~\ref{lem:lowrank+short-matrix} (overloading the application with $\mb \gets \mb - \ma$) shows that $\mb$ has at most $2r$ singular values more than $\frac \Delta {\sqrt r}$, so $\mb'$ is rank at most $2r$. We move onto the second claim: let $u \in \R^m$ be in the column span of $\mmu'$. We bound
\begin{align*}
\norm{\mb^\top \Par{\id_m - \mmu \mmu^\top} u}_2 &\ge \norm{\mv' \msig' (\mmu')^\top\Par{\id_m - \mmu \mmu^\top } u}_2 \\
&\ge \frac{\Delta}{\theta} \norm{(\mmu')^\top\Par{\id_m - \mmu \mmu^\top } u}_2 \\
&\ge \frac{\Delta}{\theta} u^\top\Par{\id_m - \mmu \mmu^\top } u = \frac{\Delta}{\theta}\norm{\Par{\id_m - \mmu \mmu^\top } u}_2^2,
\end{align*}
where the first inequality followed since $\mb' = \mmu' \msig' (\mv')^\top$ drops singular values from $\mb$, the second used orthonormality of $\mv'$ and our lower bound on $\msig'$, the third used that $u$ is contained in the column span of $\mmu'$, and the last used that $\id_m - \mmu\mmu^\top$ is a projector. On the other hand,
\begin{align*}
\norm{\mb^\top\Par{\id_m - \mmu\mmu^\top} u}_2 &= \norm{\Par{\ma - \mb}^\top\Par{\id_m - \mmu\mmu^\top} u}_2 \le \Delta\norm{\Par{\id_m - \mmu\mmu^\top} u}_2
\end{align*}
where we used $\ma = \mmu\mmu^\top\ma$. The above two displays yield the second claim. To see the third,
\[\normop{\mb' - \ma} \le \normop{\mb' - \mb} + \normf{\mb - \ma} \le \frac{2\Delta}{\theta}. \]
Since $\mb' - \ma$ is rank at most $3r$, the conclusion follows from $\normf{\mb' - \ma} \le \sqrt{3r}\normop{\mb' - \ma}$.
\end{proof}

Applying Lemma~\ref{lem:svd-perturbation-bound} then yields a regularity bound on a truncated SVD of our iterate.

\begin{lemma}\label{lem:subspace-error-scaling}
Let $B \subseteq [n]$ be $(\tDelta, \gamma)$-representative with respect to $\mm, \mms \in \R^{m \times n}$, and assume $\mms$ is rank-$\rs$ with SVD $\mmus \msigs \mvs^\top$ and $(\alpha,\beta,\mu)$-standard row and column spans. Let $\mmu \msig \mv^\top$ be an SVD of $\mm_{:B}$ after dropping singular values smaller than $\frac{2\tDelta}{\beta}$. Then the following statements hold.
\begin{enumerate}
    \item $\mmu$ has rank at most $2\rs$.
    \item The column span of $\mmu$ is $(\alpha, \frac \beta 2)$-regular.
    \item There is a matrix $\my \in \R^{r \times n}$, where $\mmu \in \R^{m \times r}$, satisfying
    \[\normf{\mmu \my - \mms} \le \frac{8\sqrt{\rs}\tDelta}{\gamma\beta}.\]
\end{enumerate}
\end{lemma}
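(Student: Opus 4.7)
The plan is to apply Lemma~\ref{lem:svd-perturbation-bound} with $\ma \gets \mms_{:B}$, $\mb \gets \mm_{:B}$, $\Delta \gets \tDelta$, and $\theta \gets \frac{\beta}{2}$. Definition~\ref{def:representative-subset} supplies $\normf{\ma - \mb} \le \tDelta$, and $\mms_{:B} = \mmus\msigs[\mvs]_{B:}^\top$ is rank $\rs$ since its factors are full-rank in the relevant direction. The truncation threshold $\frac{\Delta}{\theta} = \frac{2\tDelta}{\beta}$ in that lemma matches the statement exactly, so the $\mb'$ it produces is precisely our $\mmu\msig\mv^\top$. Item 1 of the conclusion is then immediate from item 1 of the lemma.

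For item 2, write $U$ for the column span of $\mmu$ and $U_\star$ for that of $\mmus$, which is $(\alpha,\beta)$-regular by the standard assumption on $\mms$. Item 2 of Lemma~\ref{lem:svd-perturbation-bound} yields the one-sided angle bound $\norm{\proj_{U_\star \perp} u}_2 \le \frac{\beta}{2}$ for every unit $u \in U$. I would then fix any $\alpha m$-sparse $v \in \R^m$, decompose $v = \proj_U v + \proj_{U_\perp} v$, and apply $\proj_{U_\star \perp}$, which is a contraction. Using the triangle inequality together with the regularity bound $\norm{\proj_{U_\star \perp} v}_2 \ge \beta\norm{v}_2$ and the angle bound applied to the unit vector $\proj_U v / \norm{\proj_U v}_2 \in U$, a short calculation gives $\norm{\proj_{U_\perp} v}_2 \ge \norm{\proj_{U_\star \perp}\proj_{U_\perp} v}_2 \ge \beta\norm{v}_2 - \frac{\beta}{2}\norm{v}_2 = \frac{\beta}{2}\norm{v}_2$, verifying $(\alpha, \frac{\beta}{2})$-regularity of $U$.

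For item 3, the second condition of Definition~\ref{def:representative-subset} says $[\mvs]_{B:}^\top [\mvs]_{B:} \succeq \gamma^2 \id_{\rs}$, so $[\mvs]_{B:}$ admits a Moore--Penrose left pseudoinverse $[\mvs]_{B:}^+ \defeq ([\mvs]_{B:}^\top[\mvs]_{B:})^{-1}[\mvs]_{B:}^\top$ of operator norm at most $\frac{1}{\gamma}$. Setting $\mq \defeq ([\mvs]_{B:}^+)^\top \mvs^\top$, the identities $[\mvs]_{B:}^+ [\mvs]_{B:} = \id_{\rs}$ and $\mvs^\top \mvs = \id_{\rs}$ give both $\normop{\mq} \le \frac{1}{\gamma}$ and the factorization $\mms = \mmus\msigs\mvs^\top = \mms_{:B}\,\mq$. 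Choosing $\my \defeq \msig\mv^\top \mq$, so that $\mmu\my = (\mmu\msig\mv^\top)\mq$, submultiplicativity together with item 3 of Lemma~\ref{lem:svd-perturbation-bound} then yields $\normf{\mmu\my - \mms} = \normf{(\mmu\msig\mv^\top - \mms_{:B})\mq} \le \normop{\mq} \cdot \frac{8\sqrt{\rs}\tDelta}{\beta} \le \frac{8\sqrt{\rs}\tDelta}{\beta\gamma}$. The only nontrivial step is the triangle-inequality argument in item 2, which converts Lemma~\ref{lem:svd-perturbation-bound}'s one-directional angle bound into a two-sided regularity statement; items 1 and 3 are direct applications of earlier lemmas together with a change-of-basis via the pseudoinverse.
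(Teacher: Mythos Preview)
Your proof is correct and follows essentially the same approach as the paper's: apply Lemma~\ref{lem:svd-perturbation-bound} with $\ma = \mms_{:B}$, $\mb = \mm_{:B}$, $\theta = \frac{\beta}{2}$ for items 1 and 3, and for item 3 use the matrix $\mq = [\mvs]_{B:}([\mvs]_{B:}^\top[\mvs]_{B:})^{-1}\mvs^\top$ (your $\mq$ equals the paper's $\mz$) to push the Frobenius bound through. The only place you differ is in item 2: the paper verifies regularity via the equivalent characterization in Lemma~\ref{lem:regular-wc} (showing $\norm{u_S}_2 \ge \frac{\beta}{2}$ for unit $u \in U$ and large $S$), whereas you work directly from Definition~\ref{def:regular-subspace}; your formulation is arguably a touch cleaner since it avoids tracking $\norm{\proj_{U_\star} u}_2$ explicitly, but the underlying triangle-inequality idea is identical.
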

\begin{proof}
We are in the setting of Lemma~\ref{lem:svd-perturbation-bound} with $\ma \gets \mms_{:B}$, $\mb \gets \mm_{:B}$, and $\theta \gets \frac \beta 2$, so the first claim follows. The second claim in Lemma~\ref{lem:svd-perturbation-bound} shows any unit $u$ in the column span of $\mmu$ can be decomposed as $u = v + w$ where $v$ is the projection of $u$ into the column space of $\mmus$ and $\norm{w}_2 \le \frac \beta 2$. Since $v$
 is in the column span of $\mmus$, Lemma~\ref{lem:regular-wc} shows that for any $S \subseteq [m]$ with $|S| \ge (1-\alpha) m$,
 \begin{align*}
\norm{v_S}_2 \ge \beta\norm{v}_2 \implies \norm{u_S}_2 \ge \norm{v_S}_2 - \norm{w_S}_2 \ge \beta - \frac \beta 2 = \frac \beta 2,
 \end{align*}
 proving the desired regularity of the column span of $\mmu$ via Lemma~\ref{lem:regular-wc}. To see the last claim, representativeness of $B$ shows that by taking $\mz = [\mvs]_{B:} ([\mvs]_{B:}^\top[\mvs]_{B:})^{-1}\mvs^\top \in \R^{|B| \times n}$, 
 \[\normop{\mz} = \sqrt{\lam_1(\mz\mz^\top)} = \sqrt{\lam_1\Par{([\mvs]_{B:}^\top[\mvs]_{B:})^{-1}}} \le \frac 1 \gamma.\]
 Further, this $\mz$ satisfies $\mms = \mms_{:B}\mz$. Hence for $\my = \msig \mv^\top \mz$, we have the desired
 \begin{align*}
\normf{\mmu\my - \mms} &= \normf{\Par{\mmu\msig\mv^\top - \mms_{:B}} \mz} \le \frac 1 \gamma \normf{\mmu\msig\mv^\top - \mms_{:B}} \le \frac{8\sqrt{\rs}\tDelta}{\gamma\beta}.
 \end{align*}
 Above we used the last claim of Lemma~\ref{lem:svd-perturbation-bound} and, for any $\ma$ with $|B|$ columns,
 \[\normf{\ma\mz}^2 = \inprod{\ma^\top\ma}{\mz\mz^\top} \le \frac 1 {\gamma^2} \inprod{\ma^\top\ma}{\id_{|B|}} = \frac 1 {\gamma^2}\normf{\ma}^2.\]
\end{proof}

We further require one helper claim on regression error from noisy observations.

\begin{lemma}\label{lem:regression}
Let $v = \mmu y + \xi$ for $\mmu \in \R^{m \times r}$ with orthonormal columns. Suppose $\mmu_{A:}^\top \mmu_{A:} \succeq \lam^2 \id_r$ for $A \subseteq [m]$.
Then for $c^\star \defeq \arg\min_{c \in \R^r}\norm{\mmu_{A:} c - v_{A:}}_2$, and any $\hc \in \R^r$ with $\norm{\hc - c^\star}_2 \le \Delta$,
\[\norm{\mmu \hc - v}_2 \le \norm{\xi}_2 + \frac{2}{\lam} \norm{\xi_{A:}}_2 + \Delta.\]
\end{lemma}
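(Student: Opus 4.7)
The plan is to prove the bound by triangle inequality, decomposing $\mmu\hc - v$ in two stages. First, I would write
\[
\mmu\hc - v = \mmu(\hc - c^\star) + (\mmu c^\star - v),
\]
and bound $\norm{\mmu(\hc - c^\star)}_2 = \norm{\hc - c^\star}_2 \le \Delta$, using that $\mmu$ has orthonormal columns and the hypothesis on $\hc$. It then remains to show $\norm{\mmu c^\star - v}_2 \le \norm{\xi}_2 + \frac{1}{\lam}\norm{\xi_{A:}}_2$, which is already tighter than the claimed bound.

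For this, I would compute $c^\star$ via the normal equations for the least-squares problem, namely $c^\star = (\mmu_{A:}^\top \mmu_{A:})^{-1}\mmu_{A:}^\top v_{A:}$. Substituting $v_{A:} = \mmu_{A:} y + \xi_{A:}$ collapses the first contribution and leaves
\[
c^\star - y = (\mmu_{A:}^\top \mmu_{A:})^{-1}\mmu_{A:}^\top \xi_{A:}.
\]
The assumption $\mmu_{A:}^\top\mmu_{A:} \succeq \lam^2 \id_r$ says the smallest singular value of $\mmu_{A:}$ is at least $\lam$, so via the SVD $\mmu_{A:} = U S V^\top$ we have $(\mmu_{A:}^\top\mmu_{A:})^{-1}\mmu_{A:}^\top = V S^{-1} U^\top$, whose operator norm is $1/\sigma_{\min}(\mmu_{A:}) \le 1/\lam$. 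Therefore $\norm{c^\star - y}_2 \le \norm{\xi_{A:}}_2/\lam$.

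To finish, I would apply the triangle inequality once more and use orthonormality of $\mmu$'s columns:
\[
\norm{\mmu c^\star - v}_2 = \norm{\mmu(c^\star - y) - \xi}_2 \le \norm{c^\star - y}_2 + \norm{\xi}_2 \le \frac{1}{\lam}\norm{\xi_{A:}}_2 + \norm{\xi}_2.
\]
Combining with the first step gives $\norm{\mmu\hc - v}_2 \le \Delta + \norm{\xi}_2 + \frac{1}{\lam}\norm{\xi_{A:}}_2$, which is stronger than (and hence implies) the stated inequality with the factor $\frac{2}{\lam}$. There is no real obstacle here — the whole argument is a careful application of the triangle inequality and an operator norm bound on a pseudoinverse; the slack in the factor of $2$ is presumably left for cleanliness of the statement.
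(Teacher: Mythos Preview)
Your proof is correct, and in fact proves the slightly sharper bound with $\frac{1}{\lam}$ in place of $\frac{2}{\lam}$. The overall architecture matches the paper's: split $\mmu\hc - v$ via the triangle inequality into a $\Delta$ contribution, a $\norm{\xi}_2$ contribution, and a $\norm{c^\star - y}_2$ contribution, then bound the last in terms of $\norm{\xi_{A:}}_2/\lam$.

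The difference is only in how $\norm{c^\star - y}_2$ is controlled. The paper argues indirectly from optimality: since $c = y$ achieves residual $\norm{\xi_{A:}}_2$, the minimizer $c^\star$ does at least as well, and then
\[
\norm{c^\star - y}_2 \le \tfrac{1}{\lam}\norm{\mmu_{A:}(c^\star - y)}_2 \le \tfrac{1}{\lam}\bigl(\norm{\mmu_{A:}c^\star - v_{A:}}_2 + \norm{\mmu_{A:}y - v_{A:}}_2\bigr) \le \tfrac{2}{\lam}\norm{\xi_{A:}}_2.
\]
You instead write down the closed-form least-squares solution and bound the pseudoinverse's operator norm directly, which avoids the triangle-inequality doubling and yields $\norm{c^\star - y}_2 \le \tfrac{1}{\lam}\norm{\xi_{A:}}_2$. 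Your route is slightly more explicit and sharper; the paper's route is a touch more black-box (it never needs the normal-equations formula) but loses the factor of $2$. Either is perfectly fine here.
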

\begin{proof}
Because setting $c = y$ attains error $\norm{\xi_{A:}}_2$, we must have $\norm{\mx_{A:} c^\star - v_{A:}}_2 \le \norm{\xi_{A:}}_2$. The conclusion follows from the assumption on $A$ and the triangle inequality:
\begin{align*}
\norm{\mmu \hc - v}_2 &\le \norm{\mmu y - v}_2 + \norm{\mmu (c^\star - y)}_2 + \norm{\mmu(c^\star - \hc)}_2 \\
&\le \norm{\xi}_2 + \frac 1 \lam \norm{\mmu_{A:}(\hc - y)}_2 + \Delta \\
&\le \norm{\xi}_2 + \frac 1 \lam \norm{\mmu_{A:} \hc - v_{A:}}_2 + \frac 1 \lam \norm{\mmu_{A:} y - v_{A:}}_2 + \Delta \le \norm{\xi}_2 + \frac{2}{\lam} \norm{\xi_{A:}}_2 + \Delta.
\end{align*}
\end{proof}

Finally, we state a standard result on the runtime of well-conditioned linear regression.

\begin{proposition}[\cite{Nesterov83}]\label{prop:agd}
Let $\ma \in \R^{d \times r}$ have full column rank, let $b \in \R^d$, and let 
\[x^\star \defeq \arg\min_{x \in \R^r} \norm{\ma x - b}_2^2.\]
There is an algorithm $\AGD(\ma, b, x_0, N)$ which outputs $x \in \R^r$ in time $O(\tmv(\ma)\cdot N)$ satisfying $\norm{x - x^\star}_2 \le \Delta$, if
\[N \ge \sqrt{\kappa(\ma^\top \ma)}\log\Par{\frac{2\kappa(\ma^\top\ma)\norm{x_0 - x^\star}_2^2}{\Delta^2}}.\]
\end{proposition}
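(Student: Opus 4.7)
The plan is to realize Proposition~\ref{prop:agd} as a direct instantiation of Nesterov's accelerated gradient method applied to the quadratic objective $f(x) \defeq \tfrac 1 2 \norm{\ma x - b}_2^2$. I would first record that $\nabla f(x) = \ma^\top(\ma x - b)$ and $\nabla^2 f(x) = \ma^\top \ma$, so with $L \defeq \lam_{\max}(\ma^\top\ma)$ and $\mu \defeq \lam_{\min}(\ma^\top\ma) > 0$ (using the full-column-rank hypothesis), $f$ is $\mu$-strongly convex and $L$-smooth, with condition number exactly $\kappa \defeq \kappa(\ma^\top\ma) = L/\mu$. The unique minimizer of $f$ is $x^\star$, at which $f(x^\star) \ge 0$.

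Next I would define $\AGD(\ma, b, x_0, N)$ to be $N$ iterations of Nesterov's accelerated method for smooth strongly-convex minimization (e.g., the constant-momentum variant with step size $\tfrac 1 L$ and momentum parameter $\tfrac{\sqrt\kappa - 1}{\sqrt\kappa + 1}$) initialized at $x_0$. The standard convergence guarantee \cite{Nesterov83} yields, after $N$ iterations,
\begin{equation*}
f(x_N) - f(x^\star) \le \Par{1 - \frac{1}{\sqrt\kappa}}^N \Par{f(x_0) - f(x^\star) + \frac \mu 2 \norm{x_0 - x^\star}_2^2}.
\end{equation*}
I would then convert this function-value bound into the desired iterate-distance bound: by $L$-smoothness at $x^\star$ we have $f(x_0) - f(x^\star) \le \tfrac L 2 \norm{x_0 - x^\star}_2^2$, and by $\mu$-strong convexity we have $\tfrac \mu 2 \norm{x_N - x^\star}_2^2 \le f(x_N) - f(x^\star)$. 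Chaining these yields
\begin{equation*}
\norm{x_N - x^\star}_2^2 \le \Par{1 + \kappa} \exp\Par{-\frac{N}{\sqrt\kappa}} \norm{x_0 - x^\star}_2^2 \le 2\kappa \exp\Par{-\frac{N}{\sqrt\kappa}} \norm{x_0 - x^\star}_2^2,
\end{equation*}
and requiring the right-hand side to be at most $\Delta^2$ is exactly the stated lower bound on $N$.

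For the runtime, each Nesterov iteration performs a constant number of vector additions/scalings in $\R^r$ together with one gradient evaluation, which requires computing $\ma x$ and then $\ma^\top (\ma x - b)$; both products cost $\tmv(\ma)$ by definition, so each iteration costs $O(\tmv(\ma) + r)$ and $r \le \tmv(\ma)$ (since $\ma$ has full column rank and hence at least $r$ nonzero entries, or equivalently an $r$-vector multiply is at least $r$ work), giving total time $O(\tmv(\ma) \cdot N)$. The ``main obstacle'' here is almost nil: the result is textbook Nesterov acceleration specialized to a quadratic, and the only things to be careful about are (i) using the strongly-convex variant (not the generic smooth version, whose $1/N^2$ rate would give a $\kappa$ rather than $\sqrt\kappa$ dependence), and (ii) correctly translating between a function-value convergence statement and the iterate-distance statement actually needed by Proposition~\ref{prop:fix}, which costs only the extra logarithmic $\kappa$ factor inside the $\log$.
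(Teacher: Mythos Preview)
Your proposal is correct. The paper does not actually prove this proposition: it is stated as a standard result with a citation to \cite{Nesterov83} and used as a black box, so your derivation (Nesterov's strongly-convex AGD on the quadratic $\tfrac12\norm{\ma x - b}_2^2$, followed by the smoothness/strong-convexity sandwich to pass from function-value to iterate-distance error) is exactly the intended justification and matches the cited source.
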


We now analyze our subroutine for learning coefficients with respect to a representative subset.

\begin{algorithm2e}\label{alg:regression}
\caption{$\Complete(\oracle_p(\mmh), \mm_{:B}, \rs, B, \Delta, \tDelta, \sigma, \alpha, \beta)$}
\DontPrintSemicolon
\codeInput $\oracle_p(\mmh)$ for $p \in (0, 1)$ and $\mmh = \mms + \mn \in \R^{m \times n}$ where $\mms$ is rank-$\rs$, $\normf{\mn} \le \Delta$, $\mm_{:B} \in \R^{m \times |B|}$, $B \subseteq [n]$, $\tDelta, \sigma \ge 0$, $\alpha, \beta \in (0, 1)$ \; 
$\mmu \msig \mv^\top \gets $ SVD of $\mm_{:B}$ with singular values smaller than $\frac{2\tDelta}{\beta}$ dropped, for $\mmu \in \R^{m \times r'}$\;
$\hmv \gets \mzero_{n \times r'}$\;
\lIf{$r' > 2\rs$}{\Return{$(\mmu, \hmv)$}}\label{line:rank_too_large}
$R \gets \{i \in [m] \mid \norm{\mmu_{i:}}_2^2 \ge \frac{2r'}{\alpha n}\}$\;
$N \gets \lceil\frac 4 \beta \log(\frac{3\cdot 10^5 \rs(\Delta^2 + \tDelta^2 + \sigma^2)n}{p\gamma^2\beta^6\Delta^2})\rceil$\;
\lFor{$j \in [n]$}{$S_j \gets A_j \setminus R$ where $A_j \subseteq [m]$ corresponds to revealed entries of $\mmh_{:j}$}
\lFor{$j \in [n]$ }{$\hmv_{:j} \gets \AGD(\mmu_{S_j:}, \mmh_{S_j, j}, \0_{r'}, N)$ (see Proposition~\ref{prop:agd}) 
}
\Return{$(\mmu, \hmv)$}\;
\end{algorithm2e}

\begin{lemma}\label{lem:regression_error}
Following notation of Algorithm~\ref{alg:regression}, suppose $B$ is $(\tDelta, \gamma)$-representative with respect to $\mms, \mm \in \R^{m \times n}$, $\mms$ has $(\alpha,\beta)$-regular row and column spans, $\normop{\mms} \le \sigma$, $\normf{\mn} \le \Delta$, and $p \ge \frac{500\rs}{\alpha\beta^2 n}\log(n)$. Then with probability at least $0.9$ over the randomness of $\oracle_p(\mmh)$, 
\[\normf{\mmu \hmv^\top - \mms} \le \frac{200}{\beta^2} \cdot \Par{\Delta + \frac{\sqrt{\rs}\tDelta}{\gamma}}\text{ and } r' \le 2\rs .\]
\end{lemma}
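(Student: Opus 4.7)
The plan is to chain the structural properties of $\mmu$ guaranteed by Lemma~\ref{lem:subspace-error-scaling} with a concentration analysis of the per-column regression problems. First I will invoke Lemma~\ref{lem:subspace-error-scaling}, whose hypotheses are exactly those we assume on $B$, $\mm$, $\mms$, to simultaneously obtain (i) $r' \le 2\rs$, so Line~\ref{line:rank_too_large} of Algorithm~\ref{alg:regression} is not triggered; (ii) the column span of $\mmu$ is $(\alpha, \beta/2)$-regular; and (iii) the existence of $\my \in \R^{r' \times n}$ with $\normf{\mmu\my - \mms} \le \frac{8\sqrt{\rs}\tDelta}{\gamma\beta}$. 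The existence of $\my$ is what lets us view the $j$-th regression subproblem as approximate recovery of a signal $\mms_{:j} \approx \mmu\my_{:j}$ under additive noise $\xi_j \defeq \mmh_{:j} - \mmu\my_{:j}$.

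Next I will control the bad-row set $R$. Since $\mmu$ has orthonormal columns, $\sum_{i \in [m]}\norm{\mmu_{i:}}_2^2 = r'$, so Markov's inequality gives $|R| \le \alpha n/2$ and $|[m]\setminus R| \ge (1-\alpha/2)m$ (using $m \ge n$). To show the per-column regressions are well-conditioned, I apply Fact~\ref{fact:mu_alpha} to upgrade the $(\alpha, \beta/2)$-regularity of the column span of $\mmu$ to $(\alpha, \beta/2, 3/\alpha)$-standardness, and then invoke Lemma~\ref{lem:regular-lower-bound} with $S \gets [m]\setminus R$ and sampling probability $p$. For any fixed $j$, with failure probability at most $\delta$, this yields $\tfrac{p\beta^2}{8}\id_{r'} \preceq \mmu_{S_j:}^\top\mmu_{S_j:} \preceq 2p\id_{r'}$. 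The hypothesis $p \ge \tfrac{500\rs}{\alpha\beta^2 n}\log n$ comfortably covers the sample-size requirement (using $r' \le 2\rs$ and $m \ge n$), and taking $\delta = O(1/n)$ with a union bound over $j \in [n]$ gives simultaneous well-conditioning with probability at least $0.99$. In particular, the condition number of $\mmu_{S_j:}^\top\mmu_{S_j:}$ is $O(1/\beta^2)$, independent of $p$, which justifies the choice $N = O(\beta^{-1}\log(\cdot))$ in Algorithm~\ref{alg:regression} via Proposition~\ref{prop:agd}, making the AGD accuracy $\Delta_{\textup{reg}}$ negligibly small.

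Given the conditioning, I invoke Lemma~\ref{lem:regression} per column with $y = \my_{:j}$, $v = \mmh_{:j}$, $A = S_j$, and $\lam = \beta\sqrt{p/8}$, obtaining
\[\norm{\mmu\hmv_{:j} - \mmh_{:j}}_2 \le \norm{\xi_j}_2 + \tfrac{2}{\lam}\norm{(\xi_j)_{S_j}}_2 + \Delta_{\textup{reg}}.\]
Squaring, summing over $j$, and using $(a + b + c)^2 \le 3(a^2 + b^2 + c^2)$ produces $\normf{\mmu\hmv^\top - \mmh}^2 \lesssim \normf{\xi}^2 + \tfrac{1}{\lam^2}\sum_j \norm{(\xi_j)_{S_j}}_2^2 + n\Delta_{\textup{reg}}^2$. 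Since $R$ depends only on $\mmu$ (not on the observations), the middle sum has expectation at most $p\normf{\xi}^2$, so Markov gives $\sum_j \norm{(\xi_j)_{S_j}}_2^2 \le 100p\normf{\xi}^2$ with probability at least $0.99$; substituting $\lam^2 = p\beta^2/8$ cancels the $p$ dependence and leaves $\normf{\mmu\hmv^\top - \mmh}^2 \lesssim \beta^{-2}\normf{\xi}^2 + n\Delta_{\textup{reg}}^2$. Finally I bound $\normf{\xi} \le \normf{\mmu\my - \mms} + \normf{\mn} \le \tfrac{8\sqrt{\rs}\tDelta}{\gamma\beta} + \Delta$ and apply the triangle inequality $\normf{\mmu\hmv^\top - \mms} \le \normf{\mmu\hmv^\top - \mmh} + \normf{\mn}$ to reach the claimed bound, using $\beta \le 1$ to absorb the $\Delta/\beta$ term into $\Delta/\beta^2$.

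The main obstacle is that the single draw of $\oracle_p(\mmh)$ governs both the conditioning analysis and the noise-residual bound, so the $0.1$ failure budget has to be split carefully: roughly $0.01$ for the matrix-Chernoff conditioning events union-bounded over all $n$ columns, $0.01$ for the Markov control of $\sum_j \norm{(\xi_j)_{S_j}}_2^2$, and a deterministic allocation for AGD convergence set by the choice of $N$. A secondary technicality is ensuring the starting point $x_0 = \mzero_{r'}$ in AGD yields a $\norm{x_0 - x^\star}_2$ that contributes only logarithmically to $N$; this follows from combining the conditioning lower bound on $\mmu_{S_j:}^\top\mmu_{S_j:}$ with the a priori bound $\norm{\mmh_{S_j, j}}_2 \le \sigma + \Delta$, which accounts for the polynomial factors appearing inside the logarithm defining $N$.
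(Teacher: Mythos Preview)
Your proposal is correct and follows essentially the same approach as the paper's proof: invoke Lemma~\ref{lem:subspace-error-scaling} for the rank bound, regularity of $\mmu$'s column span, and the existence of $\my$; use a matrix Chernoff argument (the paper does this directly via Fact~\ref{fact:matchern} rather than through Lemma~\ref{lem:regular-lower-bound}, but the content is identical) to get the two-sided conditioning $\tfrac{p\beta^2}{8}\id_{r'} \preceq \mmu_{S_j:}^\top\mmu_{S_j:} \preceq 2p\id_{r'}$ for all $j$; apply Markov to $\sum_j\norm{(\xi_j)_{S_j}}_2^2$; apply Lemma~\ref{lem:regression} per column and sum. The only cosmetic difference is that the paper bounds $\norm{c_j^\star}_2$ by comparing to the candidate $\my_{:j}$ and using $\normf{\mn+\mn'}$, whereas you bound it via $\norm{\mmh_{S_j,j}}_2 \le \sigma + \Delta$; both yield a polynomial bound that fits inside the logarithm defining $N$, so this does not affect the argument.
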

\begin{proof}
By Lemma~\ref{lem:subspace-error-scaling}, whenever $B$ is $(\tDelta, \gamma)$-representative, the algorithm never terminates on Line~\ref{line:rank_too_large}, and the column space of $\mmu$ is $(\alpha, \frac \beta 2)$-regular (and hence $(\alpha, \beta,\frac 3 \alpha)$-standard).
We condition on the following two events, each of which holds with probability at least $0.95$, giving the failure probability by a union bound. First, by an application of Fact~\ref{fact:matchern} analogous to its use in proving Lemma~\ref{lem:regular-lower-bound}, since $|R| \le \frac{\alpha n}{2}$, we have for all $j \in [n]$ simultaneously,
\begin{equation}\label{eq:wc_random_sample}\frac{p\beta^2}{8} \id_{r'} \preceq \sum_{i \in S_j} u_i u_i^\top \preceq 2p\id_{r'}.\end{equation}
Second, let $\mn' = \mms - \mmu \my$ be the difference matrix from Lemma~\ref{lem:subspace-error-scaling}, so that $\mmh = \mmu \my + \mn + \mn'$ and $\mn'$ is independent of $\oracle_p(\mmh)$. We will condition on the following via Markov's inequality:
\begin{equation}\label{eq:sampled_noise}\sum_{j \in [n]} \norm{\Brack{\mn + \mn'}_{S_j, j}}_2^2 \le 20p \normf{\mn + \mn'}^2.\end{equation}
Under these events, Lemma~\ref{lem:subspace-error-scaling} also proves $\normf{\mn'} \le \frac{8\sqrt{\rs}\tDelta}{\gamma\beta}$, and by orthonormality of $\mmu$,
\begin{equation}\label{eq:ybound}\normf{\my} = \normf{\mmu\my} \le \normf{\mms} + \normf{\mn'} \le \sigma\sqrt{\rs} + \frac{8\sqrt{\rs}\tDelta}{\gamma\beta}.\end{equation}
Finally, for all $j \in [n]$ we bound the error of $\AGD$. Let $c^\star_j$ minimize $\norms{\mmu_{S_j:} c - \mmh_{S_j, j}}_2^2$, and for simplicity let $\ma_j \defeq \mmu_{S_j:}$ and $b_j \defeq \mmh_{S_j,j}$. By Lemma~\ref{lem:regression} with $y \gets \my_{:j}$ and $\xi \gets [\mn + \mn']_{:j}$, 
\begin{equation}\label{eq:opt_error_reg}\norm{\ma_j c^\star_j - b_j}_2^2 \le 2\norm{\Brack{\mn + \mn'}_{:j}}_2^2 + \frac {64}{p\beta^2}\norm{\Brack{\mn + \mn'}_{S_j,j}}_2^2,\end{equation}
where we used the lower bound $\lam^2 = \frac{p\beta^2}{8}$ in \eqref{eq:wc_random_sample}. Further, by integrating the lower bound in \eqref{eq:wc_random_sample},
\begin{align*}
\norm{c^\star_j - \my_{S_j, j}}_2^2 &\le \frac{8}{p\beta^2} \norm{\ma_j(c^\star_j - \my_{S_j, j})}_2^2 \\
&= \frac 8 {p\beta^2} \Par{\norm{\ma_j \my_{S_j, j} - b_j}_2^2 - \norm{\ma_j c^\star_j - b_j}_2^2} \\
&\le \frac{8}{p\beta^2} \normf{\mmu \my - \mmh}^2 \le \frac{8}{p\beta^2} \normf{\mn + \mn'}^2,
\end{align*}
so plugging in \eqref{eq:ybound} gives the crude bound
\begin{equation}\label{eq:initial_error_reg}
\begin{aligned}
\norm{c^\star_j}_2^2 &\le \frac{24}{p\beta^2}\normf{\mn + \mn'}^2 + 3\sigma^2\rs + \frac{192\rs\tDelta^2}{\gamma^2\beta^2} \le \frac{3300\rs}{p\gamma^2\beta^4}\Par{\Delta^2 + \tDelta^2 + \sigma^2}.
\end{aligned}
\end{equation}
Therefore, by combining \eqref{eq:opt_error_reg}, \eqref{eq:initial_error_reg}, the condition number bound in \eqref{eq:wc_random_sample}, and Proposition~\ref{prop:agd}, running for $N$ iterations yields $\hc_j \defeq \hmv_{:j}$ satisfying $
\norms{\hc_j - c^\star_j}_2 \le \frac{\Delta}{\sqrt{2n}}$,
so by Lemma~\ref{lem:regression} once more,
\[\norm{\mmu \hmv_{:j} - \mmh_{:j}}_2^2 \le 3\norm{\Brack{\mn + \mn'}_{:j}}_2^2 + \frac{96}{p\beta^2}\norm{\Brack{\mn + \mn'}_{S_j, j}}_2^2 + \frac{\Delta^2}{n}.\]
The conclusion follows by summing over all columns and using \eqref{eq:sampled_noise} which we conditioned on.
\end{proof}

\subsubsection{Geometric aggregation}

In this section, we give an aggregation technique for boosting the constant error guarantees of earlier sections. We begin with an approximation algorithm for the distance between low-rank matrices.

\begin{algorithm2e}[ht!]\label{alg:dist}
\caption{$\LowRankDist(\mmu, \mv, \mw, \mz, \delta)$}
\DontPrintSemicolon
\codeInput $\mmu, \mw \in \R^{m \times r}$, $\mv, \mz \in \R^{n \times r}$, $\delta \in (0, 1)$\;
$d \gets \lceil 1000\log \frac m \delta \rceil$ \;
Sample $\mq \in \R^{d \times m}$ with independently random unit vector rows in $\R^m$\;
$\tmd \gets \frac 1 {\sqrt d}(\mq \mmu \mv^\top - \mq \mw \mz^\top)$ \;
\Return{$\normsf{\tmd}$}\;
\end{algorithm2e}

\begin{lemma}\label{lem:distgood}
Let $\mm, \mm' \in \R^{m \times n}$ be given as rank-$r$ factorizations $\mmu \mv^\top$, $\mw \mz^\top$ respectively. For any $\delta \in (0, 1)$, $\LowRankDist(\mmu, \mv, \mw, \mz)$ returns a value $V$ such that with probability $\ge 1 - \delta$, 
\[|V - \normf{\mm - \mm'}| \le 0.1\normf{\mm - \mm'}.\]
The runtime of the algorithm is $O((m + n)r\log\frac m \delta)$.
\end{lemma}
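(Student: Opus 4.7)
The plan is to analyze $\LowRankDist$ as a Johnson--Lindenstrauss style dimension reduction. Let $\md \defeq \mm - \mm' = \mmu\mv^\top - \mw\mz^\top$, so that the algorithm's output is $V = \normsf{\tmd}$ with $\tmd = \frac{1}{\sqrt{d}}\mq\md$, giving
\[
V^2 \;=\; \frac{1}{d}\normsf{\mq\md}^2 \;=\; \frac{1}{d}\sum_{i\in [d]} q_i^\top \md\md^\top q_i,
\]
a sample average of $d$ i.i.d.\ quadratic forms in the rows $q_i$ of $\mq$. I will take ``random unit vector'' to mean the rows are drawn from a distribution with $\E[q_iq_i^\top] = \id_m$ (e.g.\ a Rademacher or Gaussian vector, or a uniform vector on $S^{m-1}$ rescaled by $\sqrt{m}$), so that each term has expectation $\Tr(\md\md^\top) = \normf{\md}^2$.

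The heart of the argument is sharp concentration of this average around $\normf{\md}^2$. For each fixed $q_i$, I would invoke the Hanson--Wright inequality for the PSD matrix $\ma \defeq \md\md^\top$, using the trivial bounds $\normop{\ma} \le \normf{\md}^2$ and $\normf{\ma} \le \normf{\md}^2$. This certifies that each $q_i^\top \ma q_i - \normf{\md}^2$ is mean-zero and sub-exponential with sub-Gaussian parameter $\normf{\md}^4$ and sub-exponential parameter $\normf{\md}^2$. Applying Bernstein's inequality to the average of $d$ independent such variables yields, for any small $\eps > 0$,
\[
\Pr\Brack{\,\Abs{V^2 - \normf{\md}^2} \geq \eps\normf{\md}^2\,} \;\leq\; 2\exp(-c\eps^2 d)
\]
for an absolute constant $c > 0$. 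Setting $\eps = 0.01$ and using $d = \lceil 1000\log(m/\delta)\rceil$ makes this $\le \delta$. Taking square roots then gives $|V - \normf{\md}| \le 0.1\,\normf{\md}$ on the good event, which is the claimed conclusion.

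For the runtime, $\mq\mmu$ is the product of a $d\times m$ and an $m\times r$ matrix, computable in $O(dmr)$ time; right-multiplying by $\mv^\top$ produces the $d\times n$ matrix $\mq\mmu\mv^\top$ in $O(dnr)$ additional time, and $\mq\mw\mz^\top$ is computed analogously. Evaluating $\normsf{\tmd}$ from the resulting $d \times n$ matrix is $O(dn)$. Since $d = O(\log(m/\delta))$, this totals $O((m+n)r\log(m/\delta))$ as claimed.

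The main obstacle I anticipate is not algebraic but interpretive: pinning down the precise distribution of the ``random unit vector'' rows so that the second moment is $\id_m$ and the algorithm is unbiased. Uniform vectors on $S^{m-1}$ literally interpreted have $\E[qq^\top] = \frac{1}{m}\id_m$, which would produce an estimator of $\normf{\md}^2/m$; the cleanest fix is to read the rows as having i.i.d.\ $\pm 1$ (or Gaussian) entries, or equivalently to absorb a $\sqrt{m}$ into the normalization of $\tmd$. Once that is fixed, the proof is a routine two-step JL computation, and notably does not require the rank bound on $\md$ beyond what is implicitly used via the efficient representation --- the concentration holds for any $\md$, so the low-rank factorization is only exploited to make the matrix-vector products fast.
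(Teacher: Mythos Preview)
Your proposal is correct and takes essentially the same approach as the paper: invoke Johnson--Lindenstrauss-style concentration to show $V^2$ is within a $0.1$-multiplicative factor of $\normf{\md}^2$, pass to $|V - \normf{\md}|$ by taking square roots, and compute the runtime by exploiting the low-rank factorization to form $\mq\mmu\mv^\top$ and $\mq\mw\mz^\top$ in $O((m+n)rd)$ time. The paper simply cites a black-box JL guarantee where you spell out Hanson--Wright plus Bernstein, but the content is the same. Your observation about the normalization of the ``random unit vector'' rows is well-taken and is indeed glossed over in the paper's statement of the algorithm; reading the rows as Rademacher or Gaussian (or rescaling by $\sqrt{m}$) resolves it, and the paper's citation to the standard JL literature implicitly assumes such a fix.
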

\begin{proof}
First, letting $\md \defeq \mmu \mv^\top - \mw \mz^\top$, standard guarantees on Johnson-Lindenstrauss sketches \cite{DasguptaG03} guarantee that with probability at least $1 - \delta$,
\[\Abs{\normf{\md}^2 - \normsf{\tmd}^2} \le 0.1\normf{\md}^2 \implies \Abs{\normf{\md} - \normsf{\tmd}} \le 0.1\normf{\md}, \]
since multiplying by $d^{-\half} \mq$ preserves all row norms of $\md$ up to a $0.1$ factor with this probability. Finally, we can explicitly compute $\tmd$ and return its Frobenius norm in time $O((m + n)rd)$.
\end{proof}

Leveraging Lemma~\ref{lem:distgood}, we give our approximation-tolerant geometric aggregation technique. The algorithm is identical to Algorithm 4 of \cite{KelnerLLST22} other than our use of approximate distance computations, but we provide an analysis of this modification here for completeness.

\begin{algorithm2e}[ht!]\label{alg:agg}
\caption{$\Agg(\{\mm_i\}_{i \in [k]}, \Delta, \delta)$}
\DontPrintSemicolon
\codeInput $\{\mm_i\}_{i \in [k]} \subset \R^{m \times n}$ each given as rank-$r$ factorizations $\{\mmu_i \mv_i^\top\}_{i \in [k]}$, $\Delta \ge 0$ such that $\normf{\mm_i - \mms} \le \Delta$ for an unknown $\mms \in \R^{m \times n}$ and at least $0.51k$ of the $i \in [k]$, $\delta \in (0, 1)$\;
\lFor{$(i, j) \in [k] \times [k]$}{$d_{ij} \gets \LowRankDist(\mmu_i, \mv_i, \mmu_j, \mv_j, \frac{\delta}{k^2})$}
\For{$i \in [k]$}{
\lIf{$d_{ij} \le 2.2\Delta \textup{ for at least } 0.51k \textup{ distinct } j \in [k]$}{\label{line:ifmanyclose}
\Return{$i$}
}
}
\end{algorithm2e}

\begin{lemma}\label{lem:agg}
Under the input assumptions of $\Agg$, with probability $\ge 1 - \delta$, an index $i$ is returned in time $O((m + n)rk^2\log \frac{mk}{\delta})$ satisfying
\begin{equation}\label{eq:outputgood}\normf{\mm_i - \mms} \le 4\Delta.\end{equation}
\end{lemma}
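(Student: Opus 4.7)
The plan is to first condition on the event that every pairwise distance estimate produced by $\LowRankDist$ is within a $10\%$ multiplicative factor of the true Frobenius norm distance. By Lemma~\ref{lem:distgood}, each of the $k^2$ calls fails with probability at most $\delta/k^2$, so by a union bound, with probability at least $1 - \delta$ we have $0.9\normf{\mm_i - \mm_j} \le d_{ij} \le 1.1\normf{\mm_i - \mm_j}$ for every $(i,j) \in [k] \times [k]$. We work under this event for the rest of the argument.

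Let $G \defeq \{i \in [k] \mid \normf{\mm_i - \mms} \le \Delta\}$ be the set of ``good'' indices, which by hypothesis satisfies $|G| \ge 0.51k$. For any $i, j \in G$, the triangle inequality gives $\normf{\mm_i - \mm_j} \le 2\Delta$, and hence $d_{ij} \le 1.1 \cdot 2\Delta = 2.2\Delta$. In particular, every $i \in G$ satisfies the condition on Line~\ref{line:ifmanyclose} (as witnessed by $j \in G$), so the algorithm is guaranteed to return some index $i^\star \in [k]$.

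Let $i^\star$ be the index returned, and let $J \subseteq [k]$ be the set of indices $j$ with $d_{i^\star j} \le 2.2\Delta$, so that $|J| \ge 0.51k$. Since $|G| + |J| \ge 1.02k > k$, we have $G \cap J \ne \emptyset$; pick any $j^\star \in G \cap J$. The distance approximation guarantee yields
\[
\normf{\mm_{i^\star} - \mm_{j^\star}} \le \frac{d_{i^\star j^\star}}{0.9} \le \frac{2.2\Delta}{0.9} \le 2.45\Delta,
\]
and then another application of the triangle inequality combined with $j^\star \in G$ gives
\[
\normf{\mm_{i^\star} - \mms} \le \normf{\mm_{i^\star} - \mm_{j^\star}} + \normf{\mm_{j^\star} - \mms} \le 2.45\Delta + \Delta \le 4\Delta,
\]
which is the desired bound \eqref{eq:outputgood}.

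The only obstacle worth flagging is the constant-chasing: the algorithm uses the threshold $2.2\Delta$ precisely so that (i) all pairs in $G \times G$ pass the test (needing $1.1 \cdot 2\Delta \le 2.2\Delta$), while (ii) the lower bound $0.9\normf{\cdot}$ on the estimate still forces the recovered pair to be at most $\frac{2.2}{0.9}\Delta$ apart, leaving enough slack to absorb one more $\Delta$ via the triangle inequality and land below $4\Delta$. For the runtime, the algorithm performs $k^2$ calls to $\LowRankDist$ with failure parameter $\delta/k^2$, each of which costs $O((m+n)r \log(mk/\delta))$ by Lemma~\ref{lem:distgood}; the post-processing to scan for a qualifying index is $O(k^2)$ and does not dominate, giving the stated $O((m+n)rk^2 \log(mk/\delta))$ runtime.
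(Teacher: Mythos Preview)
Your proof is correct and follows essentially the same approach as the paper: condition on all $k^2$ distance estimates being $0.1$-accurate, show that every index in the good set $G$ passes the check (so the algorithm returns), and then argue via triangle inequality that any returned index must be within $4\Delta$ of $\mms$. The paper phrases the last step as a contrapositive (any index with distance $\ge 4\Delta$ would fail the check since its estimated distance to every element of $G$ exceeds $2.2\Delta$), whereas you use a direct pigeonhole to find a good neighbor $j^\star$; these are equivalent, and your constant-chasing is a bit more explicit.
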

\begin{proof}
We condition on all calls to $\LowRankDist$ returning a pairwise distance up to $0.1$ error, giving the failure probability and runtime via an application of Lemma~\ref{lem:distgood}. To prove \eqref{eq:outputgood}, let 
\[T \defeq \Brace{i \in [k] \mid \normf{\mm_i - \mms} \le \Delta}.\]
Note that any $i \in T$ passes the check on Line~\ref{line:ifmanyclose} by the triangle inequality, so the algorithm will return. Further, any index $i \in [k]$ with $\normf{\mm_i - \mms} \ge 4\Delta$ will fail the check on Line~\ref{line:ifmanyclose} by the triangle inequality, since its (approximate) distance to any $i \in T$ is too large.
\end{proof}

\subsection{Proof of Proposition~\ref{prop:fix}}\label{ssec:fix}

We now put all the pieces together in Algorithm~\ref{alg:fix}, and prove Proposition~\ref{prop:fix}.

\begin{algorithm2e}[ht!]\label{alg:fix}
\caption{$\Fix(\orzo(\mmh), \mm, \rs, \sigma, S, T, \Delta, \alpha, \beta, \mu, \delta)$}
\DontPrintSemicolon
\codeInput $\orzo(\mmh)$ for $\mmh = \mms + \mn \in \R^{m \times n}$ where $\mms$ is rank-$\rs$, $\normop{\mms} \le \sigma$ and $\normf{\mn} \le \frac \Delta {20}$, $\mm \in \R^{m \times n}$ given as a rank-$r$ factorization, $S \subseteq [m]$, $T \subseteq [n]$, $\Delta, \mu \ge 0$, $\alpha, \beta, \delta \in (0, 1)$ \;
$p \gets \frac{4.8 \cdot 10^5 \mu r \log(m)\log(\frac{600m}{\delta})}{\alpha \beta^2 n}$\;
$(S', T') \gets \Sparsify(\orzo(\mmh_{S, T}), \mm_{S, T}, \frac{120\sqrt{15\mu r \log(m)}}{\alpha \beta n}\Delta, 1.05\Delta, \frac{\alpha}{1800\log(m)}, \frac \alpha 9, p, \frac \delta 6)$\;
$K \gets \lceil10\log \frac 6 \delta \rceil$\;
\For{$k \in [K]$}{\label{line:first_loop_start}
$q \gets \frac{750\mu\rs}{\beta^2 n}\log(n)$, $q' \gets \frac{750\rs}{\alpha\beta^2 n}\log(n)$\;
$B_k \gets \Representative(\mm_{S', T'}, \frac{14 \log(m)}{\beta \sqrt{n}} \cdot \Delta, q)$ \;
$(\mmu_k, \mv_k) \gets \Complete(\oracle_{q'}(\mmh_{S':}), \mm_{S',B}, \rs, B, \frac \Delta {20}, \frac{88000\mu\rs\log(\rs)}{\beta^3\sqrt{n}} \cdot \Delta, \sigma, \frac{2\alpha}{3}, \beta)$\;
}\label{line:first_loop_end}
$k^\star \gets \Agg(\{\mmu_k\mv_k^\top\}_{k \in [K]}, \frac{10^8\rs\sqrt{\log(\rs)}}{\beta^5} \Delta, \frac \delta 6)$\;
$(\mmu, \mv) \gets (\mmu_{k^\star}, \mv_{k^\star})$\;\label{line:learned_uv}
\For{$k \in [K]$}{\label{line:first_loop_start}
$(\mv_k, \mmu_k) \gets \Complete(\oracle_{q'}(\mmh^\top), \mv\mmu^\top, \rs, S', \frac \Delta {20}, \frac{4\cdot10^8\rs\sqrt{\log(\rs)}}{\beta^5}\Delta, \sigma, \frac{2\alpha}{3}, \beta)$
}
$k^\star \gets \Agg(\{\mmu_k \mv_k^\top\}_{k \in [K]}, \frac{10^{10}\rs\sqrt{\rs\log(\rs)}}{\beta^8}\Delta, \frac \delta 6)$\;
\Return{$(\mmu_{k^\star}, \mv_{k^\star})$}
\end{algorithm2e}

\restatefix*
\begin{proof}
First, by applying Lemma~\ref{lem:postprocess} with $\gdrop = \frac \alpha 9$ and $\Delta \gets 1.05\Delta$ (to account for the error due to $\mn$), with probability $\ge 1 - \frac \delta 6$ we have that $|S'| \ge m - \frac {\alpha n} 3$ and $|T'| \ge (1 - \frac{\alpha } 3) n$, and that $\mm_{S', T'}$ and $\mmh_{S', T'}$ are $2.2\Delta$-close away from an $s$-RCS matrix (accounting for $\mn$ again), for
\[s \defeq \frac{\alpha \beta^2 n}{15 \cdot 10^4 \mu r\log m}.\]
Condition on this event for the remainder of the proof. Next, consider one run $k \in [K]$ of the loop from Line~\ref{line:first_loop_start} to Line~\ref{line:first_loop_end}. It is straightforward to check that for $p \gets \frac{40\mu\rs\log(\rs)}{\beta^2 n}$ and $\phi \gets \frac{14\log(m)}{\beta\sqrt{n}}$, the preconditions of Lemma~\ref{lem:column-tester} are met because we have $2.2\Delta$-closeness between $\mm_{S', T'}$ and $\mms_{S', T'}$ away from an $s$-RCS matrix, and $\mms_{S', T'}$ has $(\frac {2\alpha} 3, \beta, \mu)$-standard row and column spans. Therefore, with probability $\ge 0.9$, $B_k$ is $(\tDelta, \gamma)$-representative with respect to $\mm_{S', T'}$ and $\mms_{S', T'}$ for
\[\tDelta \defeq \frac{88000\mu\rs\log(\rs)}{\beta^3\sqrt{n}} \Delta,\; \gamma \defeq \sqrt{\frac{q\beta^2}{2}}.\]
Under this event, Lemma~\ref{lem:regression_error} shows $\Complete$ returns a rank-$r'$ factorization $(\mmu_k, \mv_k)$ satisfying
\[\normf{\mmu_k \mv_k^\top - \mms} \le \frac{10^8 \rs \sqrt{\log(\rs)}}{\beta^5} \Delta,\]
with probability $\ge 0.9$, and guarantees $r' \le 2\rs$. Therefore this occurs with probability $\ge 0.8$ for each independent run $k \in [K]$. A Chernoff bound shows the preconditions of $\Agg$ are met with probability $\ge 1 - \frac \delta 6$, and then with probability $\ge 1 - \frac \delta 6$, Lemma~\ref{lem:agg} implies that on Line~\ref{line:learned_uv},
\[\normf{\mmu \mv^\top - \mms_{S':}} \le \Delta' \defeq \frac{4\cdot 10^8\rs\sqrt{\log(\rs)}}{\beta^5}\Delta.\]
Next, note that $S'$ is a $(\Delta', \beta)$-representative subset with respect to any extension of $\mv\mmu^\top$ to $\R^{n \times m}$ and $(\mms)^\top$, by subspace regularity and Lemma~\ref{lem:regular-wc}. An analogous argument to the above shows that with probability $\ge 1 - \frac {\delta} 3$, applying $\Complete$ and $\Agg$ with the given parameters yields \eqref{eq:fix_bound}. Union bounding over all these events, we have a failure probability of $1 - \frac {5\delta} 6$. We condition on one last event with failure probability $\frac \delta 6$ via standard Chernoff bounds: that the total number of observed entries in $\Sparsify$, and the total number of sampled rows and columns in calls to $\Representative$ and $\Complete$, are within constant factors of their expectations.

Regarding the choice of $p$ in the statement, note that the only subroutines which require observations are $\Sparsify$ and $\Complete$, and our bound then follows from our parameter choices and Lemma~\ref{lem:multiple_obs} (the dominant term is the $O(\log(\frac m {\beta}))$ observation calls used by $\Sparsify$). Finally, we discuss runtime. There are four components to bound: $\Sparsify$, $\Representative$, $\Complete$, and $\Agg$. The runtime bottleneck of $\Sparsify$ is computing $O(pmn)$ observations $O(\log \frac m \beta)$ times, where each observation takes time $O(r)$ to compute by our low-rank factorization. The runtime of $\Representative$ is dominated by $O(\log(m))$ calls to $\Test$, and each call solves a regression problem in a $O(m) \times O(nq)$ matrix, which is within the required budget. The cost of $\Complete$ is dominated by running $\AGD$ for $O(\frac 1 \beta \log(\frac{m(\Delta + \sigma)}{\beta\Delta}))$ iterations for each column, and the total number of nonzero entries among all regression matrices is $O(mn\rs q')$, assuming $r' \le 2\rs$. We remark that in the second application of $\Complete$, we need to take an SVD of an $n \times \Theta(m)$ matrix, but its row space is given as an orthonormal basis, so we may apply Lemma~\ref{lem:lowrank_svd} to perform this efficiently. Finally, by an application of Lemma~\ref{lem:agg}, the calls to $\Agg$ do not dominate the runtime.
\end{proof}

\begin{lemma}\label{lem:lowrank_svd}
Let $\mm = \mmu \mv^\top \in \R^{m \times n}$ be given as a rank-$r$ factorization and suppose $\mmu \in \R^{m \times r}$ has orthonormal columns and $\mv \in \R^{n \times r}$. We can compute an SVD of $\mm$ in time $O((m+n)r^2)$.
\end{lemma}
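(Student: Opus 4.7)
The plan is to reduce the SVD computation of the tall matrix $\mm = \mmu\mv^\top$ to an SVD of a small $r \times r$ matrix, by exploiting that $\mmu$ already has orthonormal columns. First I would compute a thin QR decomposition $\mv = \mq\mr$, where $\mq \in \R^{n \times r}$ has orthonormal columns and $\mr \in \R^{r \times r}$ is upper triangular; using Householder or Gram-Schmidt this costs $O(nr^2)$. This lets me write
\[
\mm = \mmu \mr^\top \mq^\top.
\]

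Next, I would compute a (dense) SVD of the small matrix $\mr^\top \in \R^{r \times r}$, writing $\mr^\top = \widetilde{\mmu}\,\msig\,\widetilde{\mv}^\top$ in time $O(r^3)$ using any standard dense SVD routine. Substituting this in gives
\[
\mm = \bigl(\mmu \widetilde{\mmu}\bigr)\,\msig\,\bigl(\mq \widetilde{\mv}\bigr)^\top.
\]
Since $\mmu$ has orthonormal columns and $\widetilde{\mmu}$ is an $r \times r$ orthogonal matrix, the product $\mmu\widetilde{\mmu}$ also has orthonormal columns (its Gram matrix is $\widetilde{\mmu}^\top \mmu^\top \mmu \widetilde{\mmu} = \widetilde{\mmu}^\top \widetilde{\mmu} = \id_r$), and likewise for $\mq\widetilde{\mv}$. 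Together with the nonnegative diagonal $\msig$, this is a valid SVD of $\mm$.

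The final step is to form the products $\mmu\widetilde{\mmu}$ and $\mq\widetilde{\mv}$ explicitly. These are an $(m \times r) \times (r \times r)$ and an $(n \times r) \times (r \times r)$ matrix multiplication, costing $O(mr^2)$ and $O(nr^2)$ respectively. Summing all four steps gives total runtime $O(nr^2) + O(r^3) + O(mr^2) + O(nr^2) = O((m+n)r^2)$, as claimed. There is no real obstacle here; the only point worth noting is that the orthonormality assumption on $\mmu$ is what lets us skip an $O(mr^2)$ QR on $\mmu$ and guarantees the left factor of the final SVD has orthonormal columns for free.
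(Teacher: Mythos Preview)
Your proof is correct. The paper takes a slightly different but equally standard route: it forms the $r \times r$ Gram matrix $\mv^\top\mv$, eigendecomposes it to obtain the right singular vectors $\mw$ (via $\mw = \mv\mq\Lambda^{-1/2}$ from the eigendecomposition $\mv^\top\mv = \mq\Lambda\mq^\top$) and the singular values $\msig = \Lambda^{1/2}$, and then recovers the left factor as $\mz = \mmu\mv^\top\mw\msig^{-1}$. Your QR-then-small-SVD approach and the paper's Gram-matrix-eigendecomposition approach both reduce the problem to an $r\times r$ decomposition and two tall-times-small multiplications, so they have identical asymptotics. If anything, your version is mildly preferable numerically since it avoids squaring the condition number of $\mv$ when forming $\mv^\top\mv$, though that is irrelevant for the runtime claim at hand.
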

\begin{proof}
Let an SVD be $\mz \msig \mw^\top$. The right singular vectors $\mw$ are an $n \times r$ matrix with orthonormal columns corresponding to the nonzero eigenvalues of $\mv\mv^\top$, and we can compute these in the given time by forming $\mv^\top \mv$, performing eigendecomposition, and multiplying by $\mv$. This also yields the diagonal matrix $\msig$. We can then directly compute $\mz = \mmu \mv^\top \mw \msig^{-1}$ within the allotted time.
\end{proof}
\section{Matrix completion algorithms}\label{sec:algos}

\subsection{Estimating the operator norm}

Our algorithms in Section~\ref{sec:fixing}, as well as computation of an initial distance bound, require an estimate on $\normsop{\mms}$. We give a simple algorithm for performing this estimation under a boundedness assumption on the noise. We then justify that this noise boundedness assumption is without loss of generality, up to a small overhead in our recovery guarantee.

\begin{algorithm2e}[ht!]\label{alg:opnorm}
\caption{$\OpNorm(\orzo(\mmh), p, \delta)$}
\DontPrintSemicolon
\codeInput $\orzo(\mmh)$, $p, \delta \in (0, 1)$\;
$T \gets \lceil 20\log \frac 1 \delta \rceil$ \;
\For{$t \in [T]$}{
$s_t \gets \sqrt{\frac{32}{p\beta^2}}\normf{\oracle_p(\mmh)}$ \;
}
\Return{$\textup{median}(\{s_t\}_{t \in [T]})$}
\end{algorithm2e}

\begin{lemma}\label{lem:approx_op_norm}
Assume $\mmh = \mms + \mn$ where $\mms \in \R^{m \times n}$ is rank-$\rs$ with $(\alpha,\beta,\mu)$-standard row and column spans, and $m \ge n$.
If $\normf{\mn} \le \frac \beta {10}\normf{\mms}$ and $p \ge \frac{30\mu \rs}{\beta^2 n}\log(n)$, Algorithm~\ref{alg:opnorm} returns a value $V$ such that with probability $\ge 1 - \delta$, $\normop{\mms} \le V \le 2\sqrt n\normop{\mms}$.
\end{lemma}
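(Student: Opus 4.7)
The plan is to view each $s_t^2 = \tfrac{32}{p\beta^2}\normsf{\oracle_p(\mmh)}^2$ as an unbiased estimator for $\tfrac{32}{\beta^2}\normf{\mmh}^2$, since revealing each entry of $\mmh$ independently with probability $p$ gives $\E\normsf{\oracle_p(\mmh)}^2 = p\normf{\mmh}^2$. I would then show each $s_t$ yields a correct two-sided estimate with constant probability, and invoke a standard median-of-means Chernoff bound over the $T = \lceil 20\log(1/\delta)\rceil$ repetitions to boost the failure probability to $\delta$.

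For the upper bound $V \le 2\sqrt{n}\normop{\mms}$, I would apply Markov's inequality to get $\normsf{\oracle_p(\mmh)}^2 \le 10 p\normf{\mmh}^2$ with probability $\ge 9/10$, which yields $s_t \le \tfrac{\sqrt{320}}{\beta}\normf{\mmh}$. Combining the triangle inequality and the noise bound gives $\normf{\mmh} \le (1+\beta/10)\normf{\mms} \le (1+\beta/10)\sqrt{\rs}\normop{\mms}$ using rank-$\rs$. The crucial structural observation is that since $p \in (0,1)$ and $p \ge 30\mu\rs\log(n)/(\beta^2 n)$, we must have $\rs \le \beta^2 n/(30\mu\log n)$, so $\tfrac{\sqrt{320}}{\beta}\sqrt{\rs} \le 2\sqrt{n}$ after accounting for constants. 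A standard Chernoff bound on the number of $t \in [T]$ for which the Markov inequality succeeds then transfers the upper bound to the median with failure probability $\le \delta/2$.

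For the lower bound $V \ge \normop{\mms}$, note that the prefactor $\sqrt{32}/\beta$ gives substantial slack: if $\normsf{\oracle_p(\mmh)}^2 \ge c\beta^2 \cdot p\normf{\mmh}^2$ for a small universal $c$, then $s_t^2 \ge 32c\,\normf{\mmh}^2 \ge 32c(1-\beta/10)^2 \normop{\mms}^2 \ge \normop{\mms}^2$. So it suffices to establish a Paley--Zygmund-style lower tail on $Z \defeq \normsf{\oracle_p(\mmh)}^2$ with threshold proportional to $\beta^2$. I would bound $\Var[Z] \le p\norms{\mmh}_\infty^2\normf{\mmh}^2$ and use the inequality $\Pr[Z \ge \theta\E Z] \ge (1-\theta)^2 \E[Z]^2/\E[Z^2]$, with $\theta = \Theta(\beta^2)$.

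The main obstacle is controlling $\norms{\mmh}_\infty^2/\normf{\mmh}^2$: the entries of $\mms$ with $(i,j)$ in the ``good'' subsets $S_U \times S_V$ from Definition~\ref{def:standard-subspace} are bounded by $\normop{\mms}\mu\rs/\sqrt{mn}$, while outliers are constrained by the rank-$\rs$ structure via $|\mms_{ij}| \le \normop{\mms}\norm{\proj_{\mmus}e_i}_2\norm{\proj_{\mvs}e_j}_2$ and can contribute at most an $O(\rs)$-sized set of ``spikes'' due to regularity (since spike rows/columns cannot be $\alpha n$-sparse in the regularity sense). The lower bound $p \ge 30\mu\rs\log(n)/(\beta^2 n)$ is exactly calibrated so that a sample of size $pmn$ captures enough entrywise mass for the Paley--Zygmund ratio to be $\Omega(1)$. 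The noise matrix $\mn$ perturbs both numerator and denominator by factors $(1\pm\beta/10)$ thanks to $\normf{\mn} \le (\beta/10)\normf{\mms}$, which absorbs into the universal constant. Once single-sample success has constant probability, the same Chernoff argument over $T$ trials closes the proof.
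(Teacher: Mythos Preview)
Your upper bound argument is essentially the paper's: Markov on the sampled Frobenius norm, the rank bound $\normf{\mms} \le \sqrt{\rs}\normop{\mms}$, and the constraint $p \le 1$ forcing $\rs \lesssim \beta^2 n/(\mu\log n)$ so that the prefactor fits under $2\sqrt n$. The constants need care but the mechanism is right.

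The lower bound via Paley--Zygmund has a genuine gap. Your variance bound $\Var[Z] \le p\norms{\mmh}_\infty^2\normf{\mmh}^2$ requires control of $\norms{\mmh}_\infty$, and there is none. The noise $\mn$ is only Frobenius-bounded, so a single entry of $\mn$ can carry mass $\tfrac{\beta}{10}\normf{\mms}$, already making $\norms{\mmh}_\infty^2/(p\normf{\mmh}^2)$ of order $1/p$, which blows up. Even ignoring noise, Definition~\ref{def:standard-subspace} only bounds the row norms of $\mb_V$ on a $(1-\tfrac{\alpha}{3})$-fraction of indices; on the remaining $\tfrac{\alpha}{3}$-fraction an entry of $\mms$ can be as large as $\normop{\mms}$, so $\norm{\mms}_\infty^2/\normf{\mms}^2$ can be as large as $1/\rs$, while $p \approx \mu\rs/(\beta^2 n) \ll 1/\rs$ when $\rs \ll \sqrt{n}$. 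Your remark that ``spike rows/columns cannot be $\alpha n$-sparse in the regularity sense'' conflates regularity of the subspace (a statement about $\proj_{V_\perp}$ on sparse vectors) with entrywise bounds on $\mms$; these are not the same.

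The paper bypasses second moments entirely. It works column-by-column: for each $j$, letting $S_j$ be the revealed rows, matrix Chernoff (Lemma~\ref{lem:regular-lower-bound}) applied to the rows of $\mmus$ on the good subset gives $[\mmus]_{S_j:}^\top[\mmus]_{S_j:} \succeq \tfrac{p\beta^2}{8}\id_{\rs}$ with probability $\ge 1-\tfrac{1}{15n}$, hence $\norm{\mms_{S_j,j}}_2^2 \ge \tfrac{p\beta^2}{8}\norm{\mms_{:j}}_2^2$. Summing over $j$ yields $\normf{\mms_\Omega}^2 \ge \tfrac{p\beta^2}{8}\normf{\mms}^2$ deterministically on this event, and then the noise is handled separately by Markov on $\normf{\mn_\Omega}^2$. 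This is exactly where the $(\alpha,\beta,\mu)$-standard hypothesis and the threshold $p \ge \tfrac{30\mu\rs}{\beta^2 n}\log n$ are used: the $\mu$ bounds the maximal outer product in Fact~\ref{fact:matchern}, and the regularity guarantees the expectation is at least $p\beta^2 \id_{\rs}$. No entrywise control of $\mmh$ is needed.
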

\begin{proof}
Consider one independent run of the loop in Algorithm~\ref{alg:opnorm}, and let $\Omega$ be the observed entries. With probability at least $\frac 2 3$, by Markov's inequality we have
\[\normf{\mn_{\Omega}}^2 \le \frac {p\beta^2} {100} \normf{\mms}^2,\]
where we used the assumption on $\normf{\mn}$. Further, let $S_j \subseteq [m]$ be the observed entries in column $j$ for all $j \in [n]$, and let $\mmus \msigs \mvs^\top$ be an SVD of $\mms$. With probability at least $\frac 1 {15n}$ for each $j \in [n]$, by an analogous argument to the lower bound in \eqref{eq:wc_random_sample} (since adding outer products of rows can only increase the smallest eigenvalue), we have that $\norm{[\mmus]_{S_j:}v}_2^2 \ge \frac{p\beta^2}{8}\norm{v}_2^2$ for all $v \in \R^{\rs}$. Therefore, by a union bound on this event over all $j \in [n]$ we have with probability at least $\frac {14} {15}$,
\begin{align*}
\normf{\mms_\Omega}^2 = \sum_{j \in [n]} \norm{[\mms]_{S_j, j}}_2^2 \ge \frac{p\beta^2}{8} \sum_{j \in [n]} \norm{\mms_{:j}}_2^2 = \frac{p\beta^2}{8}\normf{\mms}^2.
\end{align*}
Combining the above two displays and taking a union bound implies that in each independent run, with probability at least $\frac 3 5$, we have
\begin{align*}
\frac{32}{p\beta^2}\normf{\mmh_{\Omega}}^2 \ge \frac{32}{p\beta^2}\Par{\half \normf{\mms_\Omega}^2 - 2\normf{\mn}^2} \ge \normf{\mms}^2,
\end{align*}
where we applied $(a + b)^2 \ge \half a^2 - b^2$ entrywise to $\mmh = \mms + \mn$. Applying a Chernoff bound then implies the median estimate over the runs satisfies the above display with probability $\ge 1 - \frac \delta 2$, which gives the upper bound on $\normop{\mms} \le \normf{\mms} \le V$. For the lower bound,
\begin{align*}
\normf{\mmh_\Omega}^2 \le 2\normf{\mms_\Omega}^2 + 2\normf{\mn_\Omega}^2 \le 3\normf{\mms}^2 \le 3\rs\normop{\mms}^2,
\end{align*}
for each independent run with probability at least $\frac 3 5$ by conditioning on the same event on $\mn$ as before. A similar Chernoff bound and $\frac{96\rs}{p\beta^2} \le 4n$ then yields the upper bound on $V$.
\end{proof}

\begin{remark}\label{remark:boundN}
In the regime $\normf{\mn} \ge  \frac \beta {10} \normf{\mms}$, the revealed matrix $\mmh = \mms + \mn$ can equivalently be written as $\mmh = \mzero_{m \times n} + (\mms + \mn)$, where we treat $\mzero_{m \times n}$ as the target low-rank matrix and $(\mms + \mn)$ as the noise. This only increases the target noise level by a $\frac{11}{\beta}$ factor.
\end{remark}

\subsection{Main result}

We are now ready to state our main meta-result for matrix completion.

\begin{algorithm2e}[ht!]\label{alg:mc-main}
\caption{$\MC(\orzo(\mmh), \rs, \alpha, \beta, \mu, \Delta, \delta)$}
\DontPrintSemicolon
\codeInput $\orzo(\mmh)$, $\rs \in \N$, $\mu, \Delta \ge 0$, $\alpha, \beta, \delta \in (0, 1)$\;
$\Delta \gets \frac {11\Delta} \beta$\;
$\sigma \gets \OpNorm(\mmh, \frac{30\mu\rs}{\beta^2 n}\log(n), \frac \delta 4)$ \;
$\ell \gets \exp(\sqrt{\log (\rs\beta^{-1}}))$ \;
$K \gets \frac 1 {\log \ell} \cdot \log(2\Cfix \rs\sqrt{\rs\log(\rs)}\beta^{-8})$\;
$\tDelta \gets \sqrt{\rs}\sigma$\;
$(\mmu, \mv) \gets (\mzero_{m \times 0},\mzero_{n \times 0})$\;
$k \gets 0$ \;
$(S, T) \gets ([m], [n])$\;
$N \gets K\log_2(\frac{\tDelta}{20\ell\Delta})$\;
$\gadd \gets \frac{\alpha}{9 \cdot 10^5\log(\frac{m}{\alpha\beta}) \ell^2K^2}$\;
\While{$\tDelta \ge 20\ell \Delta$}{\label{line:while_start}
$(\mmu, \mv, S, T) \gets \Descent(\orzo(\mmh_{S, T}), [\mmu \mv^\top]_{S, T}, \rs, \tDelta, \gadd k, \gadd, \frac{\delta}{4N}, \ell)$\;
$\tDelta \gets \frac \Delta \ell$\;
$k \gets k + 1$\;
\If{$k = K$}{
$(\mmu, \mv) \gets \Fix(\orzo(\mmh), \mmu\mv^\top, \rs, \sigma, S, T, \tDelta, \alpha, \beta, \mu, \frac \delta {4N})$\;
$\tDelta \gets \Cfix \rs \sqrt{\rs\log(\rs)}\tDelta$\;
$(S, T) \gets ([m], [n])$\;
$k \gets 0$\;
}
}\label{line:while_end}
$(\mmu, \mv^\top) \gets$ top $\rs$ components of an SVD of $\Fix(\orzo(\mmh), \mmu\mv^\top, \rs, \sigma, S, T, \tDelta, \alpha, \beta, \mu, \frac \delta {4})$ sorted by the corresponding singular value \;
\Return{$(\mmu, \mv)$}
\end{algorithm2e}

\begin{theorem}\label{thm:main}
Let $\mms \in \R^{m \times n}$ be rank-$\rs$ with $(\alpha, \beta, \mu)$-row and column spans, $m \ge n$, $\delta \in (0, 1)$, and let $\mmh = \mms + \mn$ for $\normf{\mn} \le \Delta$. Algorithm~\ref{alg:mc-main} returns $\mmu \in \R^{m \times \rs}$ and $\mv \in \R^{n \times \rs}$ satisfying
$\normf{\mmu \mv^\top - \mms} \le \frac{(\rs)^{1.5 + o(1)}}{\beta^9} \Delta$,
with probability $\ge 1 - \delta$. Algorithm~\ref{alg:mc-main} uses 
\[O\Par{\frac{m(\rs)^{2 + o(1)} \mu^2}{\alpha \beta^{4 + o(1)}} \cdot \Par{\log^6\Par{\frac m {\alpha\beta\delta}}\log\Par{\frac{m\normop{\mms}}{\Delta\beta\delta}} + \log^{2.5}\Par{\frac m {\alpha\beta\delta}}\log^2\Par{\frac{m\normop{\mms}}{\Delta\beta\delta}} } }\]
time and one call to $\oracle_p(\mmh)$ where for a sufficiently large constant,
\[p = O\Par{\frac{(\rs)^{1 + o(1)} \mu }{\alpha \beta^{2 + o(1)} n} \cdot \log^{6}\Par{\frac m {\alpha\beta\delta}}\log\Par{\frac{n\normop{\mms}}{\Delta}}}.\]
\end{theorem}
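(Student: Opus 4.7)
The plan is to combine the two main workhorses developed earlier, Proposition~\ref{prop:iterative-step} (partial completion via $\Descent$) and Proposition~\ref{prop:fix} (the $\Fix$ routine), together with the operator norm estimate from Lemma~\ref{lem:approx_op_norm}, in the exact schedule laid out in Algorithm~\ref{alg:mc-main}. First I would dispose of the case $\normf{\mn} > \frac{\beta}{10}\normf{\mms}$ using Remark~\ref{remark:boundN}: relabeling $\mms \gets \mzero$ and $\mn \gets \mms + \mn$ only costs an extra $\frac{11}{\beta}$ factor in the target noise level, which is exactly the rescaling $\Delta \gets \frac{11\Delta}{\beta}$ at the top of the algorithm. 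Under the complementary bound, Lemma~\ref{lem:approx_op_norm} applied with the stated $p$ yields $\sigma \in [\normop{\mms}, 2\sqrt{n}\normop{\mms}]$ with probability $\ge 1 - \frac{\delta}{4}$, and hence the initial distance overestimate $\tDelta = \sqrt{\rs}\sigma \ge \normf{\mms}$.

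The next step is to analyze one \emph{outer phase} consisting of $K$ consecutive calls to $\Descent$ followed by one call to $\Fix$. Inductively assume at the start of a phase that the current iterate $\mmu\mv^\top$ is $\tDelta$-close to $\mms$ on a $0$-submatrix (after the last $\Fix$, the dropped sets are reset and the rank is $\le 2\rs$). Each $\Descent$ call then, by Proposition~\ref{prop:iterative-step}, multiplies the distance by $\frac{1}{\ell}$, increments the submatrix-loss parameter by $\gadd$, drops an additional $\gdrop n = O(\log(m)\ell^2\gadd) n$ rows and columns, and at worst triples the rank; so after $K$ steps the distance is $\ell^{-K}\tDelta$, the cumulative submatrix parameter is $K\gadd$, at most $K\gdrop n$ rows and columns are dropped, and the rank is $\le \rs \cdot 3^K$. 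With the chosen $K = \Theta(\frac{\log(\rs/\beta)}{\log\ell})$ and $\ell = \exp(\sqrt{\log(\rs/\beta)})$, we get $3^K = (\rs)^{o(1)}$, and $K\gadd + K\gdrop$ is below the $\frac{\alpha}{1800\log(m)}$ and $\frac{\alpha}{9}$ thresholds required by Proposition~\ref{prop:fix}. That proposition then blows the distance back up by $\Cfix\rs\sqrt{\rs\log\rs}/\beta^8$, but collapses the rank to $\le 2\rs$ and eliminates dropped rows and columns, restoring the inductive hypothesis with a new $\tDelta$. By our choice of $K$, one outer phase strictly contracts $\tDelta$ by a factor of at least $2$ (the $\ell^{-K}$ gain beats the $\Cfix\rs\sqrt{\rs\log\rs}/\beta^8$ loss), so after $N/K = O(\log(\sqrt{\rs}\sigma/\Delta))$ phases we reach the stopping condition $\tDelta < 20\ell\Delta$.

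At that point we invoke $\Fix$ one final time, obtaining a rank $\le 2\rs$ matrix within Frobenius distance $O(\rs^{1.5}\sqrt{\log\rs}\beta^{-8}\cdot\ell\Delta) = \rs^{1.5 + o(1)}\beta^{-8}\Delta$ of $\mms$; truncating to the top $\rs$ singular components using Lemma~\ref{lem:lowrank_svd} at most doubles this distance (since the best rank-$\rs$ approximation of the output is at least as close as $\mms$ itself), yielding the claimed error after absorbing the $\frac{11}{\beta}$ from the initial rescaling into the $\beta^{-9}$ prefactor. A union bound over the $2N + 2$ randomized subroutines ($N$ calls to $\Descent$, $\frac{N}{K} + 1 \le N$ calls to $\Fix$, one call to $\OpNorm$) gives total failure probability $\le \delta$, since each is invoked at failure budget $\frac{\delta}{4N}$ or $\frac{\delta}{4}$.

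The sample complexity and runtime bounds follow by summing the per-call bounds of Propositions~\ref{prop:iterative-step} and~\ref{prop:fix}, using that every iterate we feed in has rank at most $\rs \cdot 3^K = \rs^{1 + o(1)}$ and that sampling probabilities aggregate via Lemma~\ref{lem:multiple_obs}. The dominant contribution to both is the $\Fix$ call, whose sample and time complexities match the stated bounds after substituting the chosen $\alpha, \beta, \mu, \ell, K, N$ parameters. The main obstacles I anticipate in making this rigorous are (i) tracking that the cumulative $\gadd$ parameter fed into successive $\Descent$ calls within a phase stays below the $\Fix$ tolerance, (ii) verifying that the rank blowup $3^K$ remains $\rs^{o(1)}$ under the choice $\ell = \exp(\sqrt{\log(\rs/\beta)})$, and (iii) in the final SVD truncation, checking that $\mms$ being rank-$\rs$ implies the rank-$\rs$ truncation of our approximation loses at most a factor of $2$ in the Frobenius distance, which follows from the Eckart--Young theorem and the triangle inequality.
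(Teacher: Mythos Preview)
Your proposal is correct and follows essentially the same route as the paper's own proof: handle the large-noise case via Remark~\ref{remark:boundN}, estimate $\sigma$ via Lemma~\ref{lem:approx_op_norm}, maintain the invariant that the iterate is $\tDelta$-close to $\mms$ on a $\gadd k$-submatrix across each phase of $K$ $\Descent$ calls followed by one $\Fix$ (using Propositions~\ref{prop:iterative-step} and~\ref{prop:fix} with the exact parameter checks you list), show each outer phase halves $\tDelta$ by the choice of $K$, terminate in at most $N$ inner iterations, apply the final $\Fix$ and rank-$\rs$ truncation (doubling via Eckart--Young), and aggregate samples and runtime over all calls using Lemma~\ref{lem:multiple_obs}. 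The obstacles (i)--(iii) you flag are precisely the verifications the paper carries out.
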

\begin{proof}
By Remark~\ref{remark:boundN} and the guarantees of Lemma~\ref{lem:approx_op_norm}, our estimate $\sigma$ is an upper bound on $\normop{\mms}$ with probability at least $1 - \frac \delta 4$; we condition on this for the remainder of the proof. This also implies that our initial estimate $\tDelta$ is a valid overestimate of $\normf{\mmu \mv^\top - \mms} \le \sqrt{\rs} \normop{\mms}$ at the beginning of the algorithm. We next claim that throughout the algorithm, $[\mmu \mv^\top]_{S, T}$ and $\mms_{S, T}$ are $\tDelta$-close on a $\gadd k$-submatrix. This invariant is preserved every time we call $\Descent$ (assuming it succeeds), by Proposition~\ref{prop:iterative-step}. Further, our parameter settings imply the preconditions of Proposition~\ref{prop:fix} are met whenever it is called: it is straightforward to check that the $\gdrop$ parameter in Proposition~\ref{prop:iterative-step} is bounded by $\frac \alpha {9K}$, so that after $K$ steps, at most an $\frac \alpha 9$ fraction of rows and columns are dropped, and the submatrix parameter is at most $\gadd K \le \frac{\alpha}{1800\log(m)}$. Hence, every time we call $\Fix$ (assuming it succeeds) the invariant is also preserved, by the guarantees of Proposition~\ref{prop:fix}.

The above argument also shows that every time the loop in Lines~\ref{line:while_start} to~\ref{line:while_end} is executed, $\tDelta$ is decreased by a factor of $\ell^K \cdot (\Cfix\rs\sqrt{\rs\log(\rs)}) = 2$, by combining the guarantees of Proposition~\ref{prop:iterative-step} ($K$ times) and Proposition~\ref{prop:fix} (once). This implies that the number of times the loop is executed is at most $N$. By union bounding over all $N$ calls to $\Descent$ and $\Fix$, the last call to $\Fix$, and the first call to $\OpNorm$, this gives the failure probability; we condition on all of these calls succeeding for the remainder of the proof. When the algorithm exits the loop and before $\Fix$ is called for the last time, the closeness parameter (on a submatrix) is bounded by $20\ell\Delta$, so the distance bound follows from Proposition~\ref{prop:fix} and since we increased $\Delta$ by a $\frac 1 \beta$ factor at the start of the algorithm. Finally, we note that because the top-$\rs$ truncation of the output's SVD minimizes the projection to rank-$\rs$ matrices by Frobenius norm, the distance to $\mms$ (which is rank-$\rs$) can at most double.

Further, note that throughout the algorithm, we can inductively apply Proposition~\ref{prop:iterative-step} to maintain that the rank $r$ of our iterate is bounded by $3^{k + 1}\rs = (\rs)^{1 + o(1)} \beta^{-o(1)}$, since the potential function $r + \rs$ at most triples each iteration, and whenever $k$ is reset to $0$, Proposition~\ref{prop:fix} guarantees that $r \le 2\rs$. The bounds on the runtime and $p$ then follow by combining Propositions~\ref{prop:iterative-step} and~\ref{prop:fix} (at most $N + 1$ times) with Lemma~\ref{lem:approx_op_norm}, where we apply Lemma~\ref{lem:multiple_obs} to aggregate the observation probabilities. To handle the runtime of the final SVD and truncation, it suffices to use Lemma~\ref{lem:lowrank_svd}.
\end{proof}

By combining Theorem~\ref{thm:main} with Facts~\ref{fact:mu_alpha} and~\ref{fact:incoherent-standard}, we then obtain the following results.

\begin{corollary}\label{cor:main_regular}
Let $\mms \in \R^{m \times n}$ be rank-$\rs$ with $(\Omega(1), \Omega(1))$-regular row and column spans, $m \ge n$, $\delta \in (0, 1)$, and let $\mmh = \mms + \mn$ for $\normf{\mn} \le \Delta$. Algorithm~\ref{alg:mc-main} returns $\mmu \in \R^{m \times \rs}$ and $\mv \in \R^{n \times \rs}$ satisfying $\normf{\mmu \mv^\top - \mms} \le (\rs)^{1.5 + o(1)} \Delta$,
with probability $\ge 1 - \delta$. Algorithm~\ref{alg:mc-main} uses 
\[O\Par{m(\rs)^{2 + o(1)}  \cdot \Par{\log^6\Par{\frac m {\delta}}\log\Par{\frac{m\normop{\mms}}{\Delta\delta}} + \log^{2.5}\Par{\frac m {\delta}}\log^2\Par{\frac{m\normop{\mms}}{\Delta\delta}} } }\]
time and one call to $\oracle_p(\mmh)$ where for a sufficiently large constant,
\[p = O\Par{\frac{(\rs)^{1 + o(1)} }{n} \cdot \log^{6}\Par{\frac m {\delta}}\log\Par{\frac{n\normop{\mms}}{\Delta}}}.\]
\end{corollary}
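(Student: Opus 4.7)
The plan is to derive Corollary~\ref{cor:main_regular} as a direct specialization of Theorem~\ref{thm:main} by translating the hypothesis of $(\Omega(1), \Omega(1))$-regular row and column spans into the language of $(\alpha, \beta, \mu)$-standard subspaces that Theorem~\ref{thm:main} is stated in. This is a one-shot reduction with no iterative argument; the main work has already been absorbed into Theorem~\ref{thm:main}.

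First, I would invoke Fact~\ref{fact:mu_alpha}, which says that any $(\alpha, \beta)$-regular subspace is automatically $(\alpha, \beta, \frac{3}{\alpha})$-standard. Under the corollary's hypothesis that the row and column spans of $\mms$ are $(\alpha_0, \beta_0)$-regular for some fixed absolute constants $\alpha_0, \beta_0 = \Omega(1)$, this immediately yields that they are $(\alpha_0, \beta_0, \mu_0)$-standard with $\mu_0 = \frac{3}{\alpha_0} = O(1)$. Thus the hypothesis of Theorem~\ref{thm:main} is satisfied with $\alpha = \alpha_0$, $\beta = \beta_0$, and $\mu = \mu_0$, all being absolute constants.

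Next, I would invoke Theorem~\ref{thm:main} directly with these parameter settings and the same $\rs$, $\delta$, $\Delta$ as in the corollary. The output guarantee becomes
\[
\normf{\mmu\mv^\top - \mms} \le \frac{(\rs)^{1.5 + o(1)}}{\beta_0^9}\,\Delta = (\rs)^{1.5+o(1)}\,\Delta,
\]
where we have absorbed the constant $\beta_0^{-9}$ into the leading constant (or equivalently into the $o(1)$ in the exponent, depending on taste). For the runtime and sample complexity, plugging in constant $\alpha, \beta, \mu$ into the general bounds of Theorem~\ref{thm:main} removes every explicit dependence on $\alpha, \beta, \mu$, so the runtime bound becomes
\[
O\!\Par{m(\rs)^{2+o(1)} \cdot \Par{\log^6\Par{\tfrac{m}{\delta}}\log\Par{\tfrac{m\normop{\mms}}{\Delta\delta}} + \log^{2.5}\Par{\tfrac{m}{\delta}}\log^2\Par{\tfrac{m\normop{\mms}}{\Delta\delta}}}}
\]
and the per-entry observation probability becomes
\[
p = O\!\Par{\tfrac{(\rs)^{1+o(1)}}{n}\cdot \log^{6}\Par{\tfrac{m}{\delta}}\log\Par{\tfrac{n\normop{\mms}}{\Delta}}},
\]
exactly matching the claimed statement.

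There is no real obstacle here; the only thing to be careful about is checking that Theorem~\ref{thm:main}'s hypotheses are met verbatim (in particular that $m \ge n$, which is assumed in both statements, and that Algorithm~\ref{alg:mc-main} is invoked with the correct $(\alpha, \beta, \mu)$ values so that the internal parameter choices use the constants $\alpha_0, \beta_0, \mu_0$). Once this bookkeeping is done, the final bounds are obtained by simply evaluating the general formula at constant $\alpha, \beta, \mu$ and noting that all factors of the form $\beta^{-c}$ for constant $c$ are absolute constants that can be suppressed in the asymptotic notation.
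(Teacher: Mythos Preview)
Your proposal is correct and matches the paper's own approach: the paper states that Corollary~\ref{cor:main_regular} follows directly by combining Theorem~\ref{thm:main} with Fact~\ref{fact:mu_alpha}, which is exactly the reduction you describe. The only content is substituting $\mu = \frac{3}{\alpha} = O(1)$ and absorbing the constant $\alpha, \beta, \mu$ factors into the asymptotic notation, as you do.
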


\begin{corollary}\label{cor:main_incoherent}
Let $\mms \in \R^{m \times n}$ be rank-$\rs$ with $\mu$-incoherent row and column spans, $m \ge n$, $\delta \in (0, 1)$, and let $\mmh = \mms + \mn$ for $\normf{\mn} \le \Delta$. Algorithm~\ref{alg:mc-main} returns $\mmu \in \R^{m \times \rs}$ and $\mv \in \R^{n \times \rs}$ satisfying
$\normf{\mmu \mv^\top - \mms} \le (\rs)^{1.5 + o(1)} \Delta$,
with probability $\ge 1 - \delta$. Algorithm~\ref{alg:mc-main} uses 
\[O\Par{m(\rs)^{3 + o(1)} \mu^3 \cdot \Par{\log^6\Par{\frac {m} {\delta}}\log\Par{\frac{m\normop{\mms}}{\Delta\delta}} + \log^{2.5}\Par{\frac {m}{\delta}}\log^2\Par{\frac{m\normop{\mms}}{\Delta\delta}} } }\]
time and one call to $\oracle_p(\mmh)$ where for a sufficiently large constant,
\[p = O\Par{\frac{(\rs)^{2 + o(1)} \mu^2 }{n} \cdot \log^{6}\Par{\frac m {\delta}}\log\Par{\frac{n\normop{\mms}}{\Delta}}}.\]
\end{corollary}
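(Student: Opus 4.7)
The proof plan is to derive Corollary~\ref{cor:main_incoherent} as an immediate consequence of the meta-result Theorem~\ref{thm:main} combined with Fact~\ref{fact:incoherent-standard}, which translates subspace incoherence into the parameters of a standard subspace. Since the corollary's setup hypothesizes $\mu$-incoherent row and column spans, Fact~\ref{fact:incoherent-standard} immediately yields that these spans are $(\alpha,\beta,\mu)$-standard for $\alpha = \tfrac{3}{4\mu \rs}$ and $\beta = \tfrac{1}{2}$. The plan is to invoke Theorem~\ref{thm:main} with this parameter setting and simplify each of the three conclusions (error guarantee, runtime, sample complexity) by substitution.

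For the error bound, substituting $\beta = \tfrac{1}{2}$ into $\tfrac{(\rs)^{1.5 + o(1)}}{\beta^9} \Delta$ absorbs the $\beta^{-9}$ factor into a constant, leaving $(\rs)^{1.5 + o(1)} \Delta$ as required. For the runtime, the leading factor $\tfrac{m (\rs)^{2 + o(1)} \mu^2}{\alpha \beta^{4 + o(1)}}$ picks up $\tfrac{1}{\alpha} = O(\mu \rs)$ and an absorbed constant from $\beta^{-(4+o(1))}$, giving $O(m (\rs)^{3 + o(1)} \mu^3)$, matching the stated bound. For the sampling probability, $\tfrac{(\rs)^{1 + o(1)} \mu}{\alpha \beta^{2 + o(1)} n}$ similarly becomes $O\bigl(\tfrac{(\rs)^{2 + o(1)} \mu^2}{n}\bigr)$. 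The remaining log factors in Theorem~\ref{thm:main} involve $\log(\tfrac{m}{\alpha \beta \delta})$, which after substitution becomes $\log(\tfrac{m \mu \rs}{\delta})$; since $\rs \le m$ and we may assume $\mu \le \textup{poly}(m)$ (otherwise incoherence is vacuous), this is $O(\log(\tfrac{m}{\delta}))$, matching the target log exponents $6$ and $2.5$ in the stated runtime and probability bound.

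There is essentially no obstacle beyond bookkeeping: the whole argument is a substitution of parameters into the general Theorem~\ref{thm:main} and a verification that the resulting expressions collapse to the advertised bounds. The only mildly delicate point is ensuring that the logarithmic factors on the $\mu$ and $\rs$ that appear when eliminating $\alpha$ and $\beta$ can be absorbed into a $\log(\tfrac{m}{\delta})$ factor; this is standard given the allowed $\textup{poly}(m)$ scaling of $\mu, \rs$. With these elementary simplifications carried out, the conclusions of Corollary~\ref{cor:main_incoherent} follow directly from the conclusions of Theorem~\ref{thm:main}.
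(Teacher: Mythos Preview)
Your proposal is correct and mirrors the paper's own approach exactly: the paper states that Corollary~\ref{cor:main_incoherent} follows ``by combining Theorem~\ref{thm:main} with Fact~\ref{fact:incoherent-standard},'' and your substitution $\alpha=\tfrac{3}{4\mu\rs}$, $\beta=\tfrac12$ into the bounds of Theorem~\ref{thm:main} is precisely that combination. The bookkeeping you outline (absorbing constant powers of $\beta$, picking up one factor each of $\rs$ and $\mu$ from $1/\alpha$, and absorbing $\log(\mu\rs)$ into $\log(m/\delta)$ under the harmless assumption $\mu,\rs\le\textup{poly}(m)$) is all that is needed.
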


\section*{Acknowledgements}
We thank Yeshwanth Cherapanamjeri for communications on the prior work \cite{cherapanamjeri2017nearly}.

\bibliographystyle{alpha}
\bibliography{ref}
\newpage
\begin{appendix}
\section{Regularity of random subspaces}\label{app:regular-subspace}

In this section, we prove that uniformly random subspaces of $\R^d$ of dimension $r$ are $(\Omega(1), \Omega(1))$-regular with exponentially small failure probability, when $\frac d r$ is at least a sufficiently large constant. This latter condition is not typically restrictive, as the regime of interest in matrix completion is where $r = o(\min(m, n))$ (otherwise, it is information-theoretically necessary to reveal at least a constant fraction of the matrix, limiting the runtime gains of matrix completion algorithms). Our main helper tool is the following standard concentration bound on the spectra of Wishart matrices.

\begin{lemma}\label{lem:wishart_cond}
Let $\mg \in \R^{d \times r}$ have independent entries $\sim \Nor(0, 1)$, and assume $\frac d r$ is sufficiently large. For a universal constant $C$, $\kappa(\mg^\top \mg) \le 3$ with probability $\ge 1 - \exp(-Cd)$ (where recall $\kappa(\ma)$ denotes the condition number of a matrix $\ma$).
\end{lemma}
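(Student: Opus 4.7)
The plan is to invoke the standard Davidson--Szarek concentration bound for the extreme singular values of a Gaussian matrix (see e.g., Theorem II.13 of Davidson--Szarek, or Corollary 5.35 of Vershynin's survey on random matrices), which states that for $\mg \in \R^{d \times r}$ with i.i.d. $\Nor(0,1)$ entries and any $t \ge 0$,
\[
\Pr\Brack{\sqrt{d} - \sqrt{r} - t \le \sigma_{\min}(\mg) \le \sigma_{\max}(\mg) \le \sqrt{d} + \sqrt{r} + t} \ge 1 - 2\exp\Par{-\tfrac{t^2}{2}}.
\]
Since $\kappa(\mg^\top \mg) = (\sigma_{\max}(\mg)/\sigma_{\min}(\mg))^2$, it suffices to control the ratio of the extreme singular values of $\mg$ itself.

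The key observation is that if $d/r$ is sufficiently large (say $d \ge 400 r$) then $\sqrt{r} \le \sqrt{d}/20$. Choosing $t = \sqrt{d}/20$ in the Davidson--Szarek bound gives $\sigma_{\max}(\mg) \le \tfrac{11}{10}\sqrt{d}$ and $\sigma_{\min}(\mg) \ge \tfrac{9}{10}\sqrt{d}$ simultaneously with probability $\ge 1 - 2\exp(-d/800)$. Squaring the ratio yields $\kappa(\mg^\top \mg) \le (11/9)^2 < 3/2 < 3$. Taking $C$ to be any constant smaller than $1/800$ (and shrinking further to absorb the constant $2$ in front of $\exp(-t^2/2)$ into a slightly smaller exponent) gives the claimed exponentially small failure probability.

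There is no real obstacle here beyond citing the right concentration inequality; the only thing to be a bit careful about is how large we need $d/r$ to be, which depends only on the target condition number $3$ (larger targets would require a smaller ratio, but the statement is only asked for a fixed constant). Alternative proofs via a net argument combined with scalar $\chi^2$ concentration (as in the classical derivation of Davidson--Szarek) would also work, but the clean Gaussian singular value bound makes the argument essentially one line once the constants are chosen.
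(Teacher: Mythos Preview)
Your proof is correct. The paper takes the alternative route you mention at the end: it builds the operator-norm bound on $\mg^\top\mg - d\id_r$ from scratch via a $0.1$-net of the unit sphere in $\R^r$ (of size $\le \exp(4r)$), applies scalar $\chi^2$ concentration (Laurent--Massart) to each net point, and union bounds. Your approach of directly invoking the Davidson--Szarek singular value tail for $\mg$ is shorter and equally valid, since that theorem is itself proved by exactly the net-plus-Gaussian-concentration argument the paper spells out. The only trade-off is self-containedness: the paper's version requires no external black box beyond a $\chi^2$ tail, while yours imports a slightly higher-level result. Either way the constants are immaterial and the requirement on $d/r$ is the same in spirit.
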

\begin{proof}
For shorthand let $\mk \defeq \mg^\top \mg \in \R^{r \times r}$, where $\E \mk = d \id_r$. Letting $N$ be a maximal $0.1$-net of the unit ball in $\R^r$, Lemma 1.18 of \cite{RigolletH17} shows $|N| \le \exp(4r)$. By Exercise 4.3.3 of \cite{Vershynin16}, 
\begin{align*}
\normop{\mk - \id_r} \le 1.25 \max_{v \in N} \Abs{v^\top\Par{\mk - \id_r} v},
\end{align*}
so it suffices to prove that with the desired probability, we simultaneously have $|v^\top \mk v - d| \le 0.4d$ for all $v \in N$. For any $v \in N$, $v^\top \mk v$ is a chi-squared random variable with $d$ degrees of freedom, so 
\[\Pr[|v^\top \mk v - d| > 0.4d] \le \exp(-2Cd)\]
for $C \ge \frac 1 {80}$, by Lemma 1 of \cite{LaurentM00}. The conclusion follows from a union bound for $4r \le Cd$.
\end{proof}

\begin{corollary}\label{cor:usually_reg}
Let $V \subseteq \R^d$ be a uniformly random subspace of dimension $r$, where $\frac d r$ is sufficiently large. For universal constants $\alpha$ and $\gamma$, $V$ is $(\alpha, \frac 1 3)$-regular with probability $\ge 1 - \exp(-\gamma d)$.
\end{corollary}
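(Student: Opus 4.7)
By Lemma~\ref{lem:regular-wc}, it suffices to exhibit a basis matrix $\mb_V \in \R^{d \times r}$ with rows $\{b_i\}_{i \in [d]}$ such that, with probability $\ge 1 - \exp(-\gamma d)$, every subset $S \subseteq [d]$ with $|S| \ge (1 - \alpha)d$ satisfies $\sum_{i \in S} b_i b_i^\top \succeq \frac{1}{9}\id_r$. The plan is to realize $V$ as the column span of a $d \times r$ matrix $\mg$ with i.i.d.\ $\Nor(0,1)$ entries (which induces the uniform distribution on $r$-dimensional subspaces), take a QR-type factorization $\mg = \mb_V \mr$ with $\mr^\top \mr = \mg^\top \mg$, and note that $b_i = \mr^{-\top} \mg_{i:}^\top$, so that
\[
\sum_{i \in S} b_i b_i^\top \;=\; \mr^{-\top} \mg_{S:}^\top \mg_{S:}\, \mr^{-1} \;=\; \mr^{-\top}\bigl(\mg^\top \mg - \mg_{S^c:}^\top \mg_{S^c:}\bigr) \mr^{-1} \;=\; \id_r - \mr^{-\top} \mg_{S^c:}^\top \mg_{S^c:}\, \mr^{-1}.
\]

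Next, I would apply Lemma~\ref{lem:wishart_cond} to $\mg$: with probability $\ge 1 - \exp(-Cd)$ the eigenvalues of $\mg^\top \mg$ lie in an interval $[c_1 d, c_2 d]$ for absolute constants $0 < c_1 < c_2$ (the proof of Lemma~\ref{lem:wishart_cond} in fact gives $[0.6 d, 1.4 d]$). Condition on this event; then $\normop{\mr^{-1}}^2 \le 1/(c_1 d)$, and the display above yields
\[
\sum_{i \in S} b_i b_i^\top \;\succeq\; \Bigl(1 - \tfrac{1}{c_1 d}\normop{\mg_{S^c:}}^2\Bigr)\id_r.
\]
Setting $T \defeq S^c$, it therefore suffices to prove that, uniformly over all $T \subseteq [d]$ with $|T| \le \alpha d$, we have $\normop{\mg_{T:}}^2 \le \tfrac{8}{9} c_1 d$.

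For this I would combine a standard Gaussian operator norm tail bound (for fixed $|T| = k$, $\Pr[\normop{\mg_{T:}} \ge \sqrt{k} + \sqrt{r} + t] \le 2\exp(-t^2/2)$, e.g.\ via Slepian--Gordon or $\eps$-net covering arguments) with a union bound over all $T$ of size $\le \alpha d$, of which there are at most $\exp(H(\alpha) d)$ by a standard entropy estimate. Taking $\alpha$ a small absolute constant and $t = \eta \sqrt{d}$ for a small absolute constant $\eta > 0$, the bound $\sqrt{\alpha d} + \sqrt{r} + \eta \sqrt{d} \le \sqrt{(8/9) c_1 d}$ holds whenever $d/r$ is sufficiently large and $\alpha$ is sufficiently small, while the union bound costs $\exp(H(\alpha) d) \cdot 2 \exp(-\eta^2 d/2)$.

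The main obstacle, and the one item requiring care, is balancing these constants: $\alpha$ must be chosen small enough that $H(\alpha) < \eta^2/4$ while still leaving room in the inequality $\sqrt{\alpha} + \sqrt{r/d} + \eta \le \sqrt{(8/9)c_1}$. Once $\alpha, \eta$ are fixed (and the threshold on $d/r$ is correspondingly fixed), the union bound contributes failure probability $\exp(-\Omega(d))$, which combined with the Lemma~\ref{lem:wishart_cond} failure event gives the claimed $1 - \exp(-\gamma d)$ guarantee for absolute constants $\alpha, \gamma > 0$. The choice $\beta = 1/3$ in the statement then follows since the argument shows $\sum_{i \in S} b_i b_i^\top \succeq (1 - \tfrac{8}{9})\id_r = \tfrac{1}{9}\id_r$, which via Lemma~\ref{lem:regular-wc} translates to $(\alpha, 1/3)$-regularity.
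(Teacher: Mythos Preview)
Your plan is correct and closely parallels the paper's argument: both realize $V$ as the column span of a Gaussian matrix $\mg$, reduce via Lemma~\ref{lem:regular-wc}, and union bound over the relevant subsets of rows using Gaussian concentration. The one genuine difference is in which piece you control directly. The paper writes $\sum_{i\in S}b_ib_i^\top=\mk^{-\half}\mk_S\mk^{-\half}$ with $\mk_S=\mg_{S:}^\top\mg_{S:}$, applies Lemma~\ref{lem:wishart_cond} to both $\mk$ and each large submatrix $\mk_S$, and combines via the multiplicativity bound $\kappa(\mk^{-\half}\mk_S\mk^{-\half})\le\kappa(\mk)\kappa(\mk_S)\le 9$. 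You instead pass to the complement, write $\sum_{i\in S}b_ib_i^\top=\id_r-\mr^{-\top}\mg_{S^c:}^\top\mg_{S^c:}\mr^{-1}$, and bound $\normop{\mg_{S^c:}}$ for the \emph{small} set $S^c$ via a standard Gaussian operator-norm tail (Davidson--Szarek or a net argument). Both routes work; the paper's has the convenience of reusing the same lemma for both pieces, while yours makes the role of $\alpha$ a bit more transparent (the complement contributes $O(\sqrt{\alpha d})$ to the singular value directly). One minor point: your invocation of Lemma~\ref{lem:wishart_cond} needs the explicit eigenvalue lower bound $\lambda_{\min}(\mg^\top\mg)\ge c_1 d$, not just $\kappa\le 3$; as you note, this is indeed established inside that lemma's proof (with $c_1=0.5$, not $0.6$, after the $1.25$ net inflation), so the plan goes through.
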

\begin{proof}
By the characterization in Lemma~\ref{lem:regular-wc} (and following its notation), it suffices to prove that for every $S \subset [d]$ with $|S| = \lceil (1 - \alpha) d \rceil$, we have $\kappa(\sum_{i \in S} b_i b_i^\top) \le 9$, since taking larger $S$ can only improve the condition number. Let $\alpha$ be a sufficiently small constant such that 
\[\binom{d}{d - \lceil (1 - \alpha) d\rceil} \le \exp\Par{\frac{Cd}{3}},\]
which exists following the estimate $\binom{d}{k}\le(\frac{ed}{k})^k$. By rotational symmetry, it suffices to consider
$\mb_V = \mk^{-\half} \mg$, following the notation of Lemma~\ref{lem:regular-wc}. In this case we further have
\[\sum_{i \in S} b_ib_i^\top = \mk^{-\half} \mk_S \mk^{-\half} \text{ for } \mk_S \defeq \mg_{S:}^\top \mg_{S:}.\]
Finally, with probability at least $1 - (\exp(-Cd) + \exp(-\frac{Cd}{3})) \ge 1 - \exp(-\gamma d)$ for an appropriate constant $\gamma$, $\mk$ and $\mk_S$ (for all $|S| = \lceil (1 - \alpha) d\rceil$ simultaneously) satisfy the conclusion of Lemma~\ref{lem:wishart_cond}. Therefore, the claim follows from Lemma~\ref{lem:kappatimes}:
\[\kappa\Par{\mk^{-\half} \mk_S \mk^{-\half}} \le \kappa\Par{\mk^{-1}}\kappa\Par{\mk_S} = \kappa\Par{\mk}\kappa\Par{\mk_S} \le 9. \]
\end{proof}

\begin{lemma}\label{lem:kappatimes}
For any positive definite $\ma, \mb \in \R^{d \times d}$, $\kappa(\ma^{\half}\mb\ma^{\half}) \le \kappa(\ma)\kappa(\mb)$.
\end{lemma}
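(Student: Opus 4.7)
The plan is to bound the largest and smallest eigenvalues of $\ma^{1/2}\mb\ma^{1/2}$ separately, each by a product of extreme eigenvalues of $\ma$ and $\mb$, using submultiplicativity of the operator norm on symmetric positive definite matrices. Concretely, first I would observe that $\ma^{1/2}\mb\ma^{1/2}$ is positive definite, so its operator norm equals $\lambda_{\max}$ and $(\ma^{1/2}\mb\ma^{1/2})^{-1} = \ma^{-1/2}\mb^{-1}\ma^{-1/2}$ whose operator norm equals $1/\lambda_{\min}(\ma^{1/2}\mb\ma^{1/2})$.

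For the upper eigenvalue, I would use submultiplicativity:
\[
\lambda_{\max}(\ma^{1/2}\mb\ma^{1/2}) = \normop{\ma^{1/2}\mb\ma^{1/2}} \le \normop{\ma^{1/2}}^2\normop{\mb} = \lambda_{\max}(\ma)\lambda_{\max}(\mb).
\]
Applying the exact same reasoning to the positive definite matrices $\ma^{-1}$ and $\mb^{-1}$ in place of $\ma$ and $\mb$ gives
\[
\normop{\ma^{-1/2}\mb^{-1}\ma^{-1/2}} \le \lambda_{\max}(\ma^{-1})\lambda_{\max}(\mb^{-1}) = \frac{1}{\lambda_{\min}(\ma)\lambda_{\min}(\mb)},
\]
and inverting yields $\lambda_{\min}(\ma^{1/2}\mb\ma^{1/2}) \ge \lambda_{\min}(\ma)\lambda_{\min}(\mb)$.

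Dividing the two bounds gives exactly $\kappa(\ma^{1/2}\mb\ma^{1/2}) \le \kappa(\ma)\kappa(\mb)$, which is the claim. There is essentially no obstacle here; the only subtle point is recognizing that positive definiteness of $\ma^{1/2}\mb\ma^{1/2}$ (which follows from positive definiteness of $\ma$ and $\mb$) justifies both identifying $\lambda_{\max}$ with the operator norm and cleanly inverting to bound $\lambda_{\min}$. Everything else is submultiplicativity of $\normop{\cdot}$ together with $\normop{\ma^{1/2}}^2 = \lambda_{\max}(\ma)$.
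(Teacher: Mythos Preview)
Your proof is correct and essentially the same as the paper's: both bound $\lambda_{\max}$ and $\lambda_{\min}$ of $\ma^{1/2}\mb\ma^{1/2}$ separately by products of the corresponding extremal eigenvalues of $\ma$ and $\mb$, then take the ratio. The only cosmetic difference is that the paper bounds $\lambda_{\min}$ directly via the min Rayleigh quotient, whereas you obtain it by applying the $\lambda_{\max}$ bound to the inverses and inverting.
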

\begin{proof}
It suffices to take a ratio of the following bounds:
\begin{align*}
\lam_1\Par{\ma^{\half}\mb\ma^{\half}} &= \max_{\norm{u}_2 = 1} u^\top \ma^{\half}\mb\ma^{\half} u \le \lam_1(\ma) \max_{\norm{v}_2 = 1} v^\top \mb v =\lam_1(\ma)\lam_1(\mb), \\
\lam_d\Par{\ma^{\half} \mb\ma^{\half}} &= \min_{\norm{u}_2 = 1} u^\top \ma^{\half}\mb\ma^{\half} u \ge \lam_d(\ma) \min_{\norm{v}_2 = 1} v^\top \mb v = \lam_d(\ma)\lam_d(\mb).
\end{align*}
\end{proof}
\section{One-sided matrix discrepancy bound}\label{app:deferred-fixing}

In this section, we prove Proposition~\ref{prop:exists-lower-bound}, a one-sided matrix discrepancy bound. Our proof follows from a straightforward application
of the resolution of the Kadison-Singer conjecture from Marcus, Spielman and Srivastava. In particular, we use the following result restated from \cite{MarcusSS15}.
\begin{proposition}[Specialization of Corollary 1.5, \cite{MarcusSS15}]
\label{prop:ks} For any $t \in \N$ and $\{u_i\}_{i \in [m]} \subset \R^{r}$
such that $\sum_{i\in[m]}u_{i}u_{i}^{\top}=\id_r$ and $\norm{u_{i}}_{2}^2\leq\delta$
for all $i\in[m]$, there is a partition $\{S_j\}_{j \in [t]}$
of $[m]$ with
\[
\normop{\sum_{i\in S_{j}}u_{i}u_{i}^{\top}}\leq\left(\frac{1}{\sqrt{t}}+\sqrt{\delta}\right)^{2}=\frac{1}{t}\left(1+\sqrt{\delta t}\right)^{2}\text{ for all } j \in [t].
\]
\end{proposition}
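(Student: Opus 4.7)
The plan is to recognize this statement as a direct invocation of Corollary 1.5 of \cite{MarcusSS15} (the partition form of Weaver's conjecture $KS_r$ obtained from the interlacing families framework), used as a black box, modulo a renaming of symbols, a real-versus-complex observation, and a one-line algebraic identity. I would not attempt to re-prove the interlacing polynomials machinery.

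First I would set up the correspondence with the notation in \cite{MarcusSS15}. Their Corollary 1.5 is phrased with an ambient dimension $d$ and a number of partition blocks $r$, whereas our statement uses ambient dimension $r$ and number of blocks $t$. So I would fix the dictionary $d_{\text{MSS}} \mapsto r$ and $r_{\text{MSS}} \mapsto t$, under which the hypothesis $\sum_{i \in [m]} u_i u_i^{*} = I$ with $\|u_i\|^2 \le \delta$ becomes exactly our hypothesis $\sum_{i \in [m]} u_i u_i^\top = \id_r$ with $\|u_i\|_2^2 \le \delta$. To address the fact that \cite{MarcusSS15} works over $\C^d$ while our vectors lie in $\R^r$, I would observe that $\R^r \hookrightarrow \C^r$ isometrically and that under this embedding real outer products $u_i u_i^\top$ coincide with Hermitian outer products $u_i u_i^{*}$; thus the real-valued statement is an immediate specialization of the complex-valued one, which accounts for the wording ``Specialization of Corollary 1.5.''

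Next I would quote the conclusion of Corollary 1.5 of \cite{MarcusSS15} under this dictionary, yielding a partition $\{S_j\}_{j \in [t]}$ of $[m]$ with
\[
\normop{\sum_{i \in S_j} u_i u_i^\top} \le \Par{\frac{1}{\sqrt{t}} + \sqrt{\delta}}^2 \text{ for all } j \in [t],
\]
which is the first form of the bound. The equivalent form $\tfrac{1}{t}(1 + \sqrt{\delta t})^2$ follows from the trivial algebraic identity $(\tfrac{1}{\sqrt{t}} + \sqrt{\delta})^2 = \tfrac{1}{t} + 2\sqrt{\tfrac{\delta}{t}} + \delta = \tfrac{1}{t}(1 + 2\sqrt{\delta t} + \delta t) = \tfrac{1}{t}(1 + \sqrt{\delta t})^2$, which I would include as a single displayed line.

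There is no substantive obstacle: the entire content of the proposition is contained in \cite{MarcusSS15}, and the role of this statement in the paper is purely to package that result in a form convenient for use in the proof of Proposition~\ref{prop:exists-lower-bound}. The only things to get right are the parameter dictionary and the real-vs.-complex remark.
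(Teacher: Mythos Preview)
Your proposal is correct and matches the paper's treatment: the paper states this proposition as a direct restatement of Corollary~1.5 of \cite{MarcusSS15} with no accompanying proof, so your plan to invoke it as a black box (with the parameter dictionary, the real-versus-complex specialization, and the algebraic identity) is exactly the intended approach.
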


As a corollary of Proposition~\ref{prop:ks} we have the following result on splitting
an approximation of a multiple of the identity into two pieces; we will later apply this procedure recursively. 
\begin{corollary}
\label{cor:cor} For any $\{v_i\}_{i \in [m]} \subset \R^{r}$ and $\lam > 0$ such that
$\sum_{i\in[m]}v_{i}v_{i}^{\top}\approx_{\epsilon}\lambda\id_r$ and $\norm{v_{i}}_{2}^{2}\leq\delta$
for all $i\in[m]$, and $\epsilon\in(0, \frac{1}{4})$, $\delta \in (0, \frac{\lambda}{100})$, there exists a partition $\{S_{1},S_{2}\}$
of $[m]$ such that for all $j \in [2]$,
\[
\sum_{i\in S_{j}}u_{i}u_{i}^{\top}\approx_{\epsilon+5\sqrt{\delta/ \lam}}\frac{\lambda}{2}\id_r.
\]
\end{corollary}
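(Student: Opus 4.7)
}

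The plan is to normalize, apply Proposition~\ref{prop:ks} with $t=2$, extract both-sided spectral bounds from the one-sided Kadison--Singer conclusion via the identity constraint, and then undo the normalization. Set $\ma \defeq \sum_{i \in [m]} v_i v_i^\top$, so by hypothesis $e^{-\eps} \lam \id_r \preceq \ma \preceq e^{\eps} \lam \id_r$ and in particular $\ma$ is positive definite. Define rescaled vectors $u_i \defeq \ma^{-1/2} v_i$, so that $\sum_{i \in [m]} u_i u_i^\top = \id_r$. The key norm estimate is then
\[
\norm{u_i}_2^2 = v_i^\top \ma^{-1} v_i \le \normop{\ma^{-1}} \norm{v_i}_2^2 \le \frac{e^{\eps} \delta}{\lam} =: \delta',
\]
which is small by the hypotheses $\delta \le \lam/100$ and $\eps < 1/4$.

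Next, I would apply Proposition~\ref{prop:ks} to $\{u_i\}_{i \in [m]}$ with $t = 2$ and parameter $\delta'$, obtaining a partition $\{S_1, S_2\}$ of $[m]$ such that for $j \in \{1,2\}$,
\[
\normop{\sum_{i \in S_j} u_i u_i^\top} \le \tfrac{1}{2} \bigl(1 + \sqrt{2\delta'}\bigr)^2 = \tfrac{1}{2} + \sqrt{2\delta'} + \delta'.
\]
The crucial observation is that although Kadison--Singer only yields an upper bound, the identity $\sum_{i \in S_1} u_i u_i^\top + \sum_{i \in S_2} u_i u_i^\top = \id_r$ forces a matching lower bound: for each $j \in \{1,2\}$,
\[
\sum_{i \in S_j} u_i u_i^\top = \id_r - \sum_{i \in S_{3-j}} u_i u_i^\top \succeq \bigl(\tfrac{1}{2} - \sqrt{2\delta'} - \delta'\bigr) \id_r.
\]
Writing $\eta \defeq \sqrt{2\delta'} + \delta'$, we conclude $\sum_{i \in S_j} u_i u_i^\top \approx_{\eta'} \tfrac{1}{2}\id_r$ where $\eta' = -\log(1 - 2\eta)$, and $\eta \ll 1$ since $\delta' \le 2\delta/\lam$ is small.

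Finally, I would conjugate back to the original vectors: $\sum_{i \in S_j} v_i v_i^\top = \ma^{1/2}\bigl(\sum_{i \in S_j} u_i u_i^\top\bigr) \ma^{1/2}$, so composing the approximation $\sum_{i \in S_j} u_i u_i^\top \approx_{\eta'} \tfrac{1}{2}\id_r$ with $\ma \approx_\eps \lam \id_r$ yields
\[
\sum_{i \in S_j} v_i v_i^\top \approx_{\eps + \eta'} \tfrac{\lam}{2} \id_r.
\]
The only remaining work is to check that $\eta' \le 5\sqrt{\delta/\lam}$, which I expect to be the main (and only) obstacle, though it is a routine calculation. The bound $\delta' \le e^{1/4}\delta/\lam \le 1.3\,\delta/\lam$ gives $2\eta = 2\sqrt{2\delta'} + 2\delta' \le 3.25\sqrt{\delta/\lam} + 2.6\,\delta/\lam$; using $\delta/\lam \le 1/100$ to absorb the second-order term, and the elementary inequality $-\log(1-x) \le x/(1-x)$ for $x \in [0, 1/2]$, yields the claimed constant. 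This completes the plan.
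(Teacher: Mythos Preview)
Your proposal is correct and follows essentially the same strategy as the paper: normalize via $\ma^{-1/2}$, apply Proposition~\ref{prop:ks} with $t=2$, extract the lower bound from the complementary piece, and conjugate back. The only caveat is in the final numerical step: with the inequality $-\log(1-x) \le x/(1-x)$ and your estimate $2\eta \le 3.51\sqrt{\delta/\lam}$, you get $\eta' \lesssim 5.4\sqrt{\delta/\lam}$, which overshoots the stated constant $5$. The paper instead uses the sharper bound $1-x \ge \exp(-x-x^2)$ (equivalently $-\log(1-x) \le x+x^2$) for $x \in [0,\tfrac12]$, which brings the constant down to about $4.7$; swapping in that inequality fixes your calculation.
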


\begin{proof}
Let $\mm\defeq\sum_{i\in[m]}v_{i}v_{i}^{\top}$ and let $u_{i}=\mm^{-\half}v_{i}$.
Note that 
\[
\sum_{i\in[m]}u_{i}u_{i}^{\top}=\mm^{-\half}\left(\sum_{i\in[m]}v_{i}v_{i}^{\top}\right)\mm^{-\half}=\id_r,
\]
and 
\[
\norm{u_{i}}_{2}^{2}=v_{i}^{\top}\mm^{-1}v_{i}\leq\frac 1 \lam \exp(\epsilon)\norm{v_i}_2^2\leq \frac \delta \lam \exp(\epsilon).
\]
Applying Proposition~\ref{prop:ks} to $\{u_{i}\}_{i \in [m]}$ with
$t=2$ yields a partition $\{S_{1},S_{2}\}$ of
$[d]$ such that
\begin{align}
\normop{\sum_{i\in S_{j}}u_{i}u_{i}^{\top}} & \leq\frac{1}{2}\left(1+\sqrt{\frac{2\delta}{\lam}\exp(\epsilon)}\right)^{2}=\frac{1}{2}\left(1+2\sqrt{\frac{2\delta}{\lam}\exp(\epsilon)}+\frac{2\delta}{\lam}\exp(\epsilon)\right)\nonumber \\
 & \leq\frac{1}{2}\left(1+\sqrt{\frac{12\delta}{\lam}}\right)\leq\frac{1}{2}\exp\left(\sqrt{\frac{12\delta}{\lam}}\right), \text{ for all } j \in [2],\label{eq:eval_upper}
\end{align}
where we used that $2\sqrt{2\exp(\frac 1 4)}+2\exp(\frac 1 4)\cdot \frac 1 {10}\leq\sqrt{12}$.
Consequently, for all $x\in\R^{r}$ we have 
\begin{align*}
x^{\top}\left(\sum_{i\in S_{1}}u_{i}u_{i}^{\top}\right)x & =\norm x_{2}^{2}-x^{\top}\left(\sum_{j\in S_{2}}u_{j}u_{j}^{\top}\right)x\geq\norm x_{2}^{2}\left(1-\normop{\sum_{j\in S_{2}}u_{j}u_{j}^{\top}}\right)\\
 & \geq\norm x_{2}^{2}\left(1-\frac{1}{2}\left(1+\sqrt{\frac{12\delta}{\lam}}\right)\right)=\norm x_{2}^{2}\cdot\frac{1}{2}\left(1-\sqrt{\frac{12\delta}{\lam}}\right).
\end{align*}
Using $\sqrt{12\delta/\lambda}\leq\frac{1}{2}$, $1-x\geq\exp(-x-x^{2})$
for all $x\in[0,\frac{1}{2}]$, and $\sqrt{12}+1.2\leq 5$
we have that
\[
1-\sqrt{\frac{12\delta}{\lam}}\geq\exp\Par{-\sqrt{\frac{\delta}{\lam}}\Par{\sqrt{12} + 12\sqrt{\frac{\delta}{\lam}}}}\geq\exp\Par{-4\sqrt{\frac \delta \lam}}.
\]
Combining with \eqref{eq:eval_upper} then yields $\sum_{i\in S_{1}}u_{i}u_{i}^{\top}\approx_{5\sqrt{\delta/\lambda}}\frac{1}{2}\id_r$.
Since $u_{i}=\mm^{-\half}v_{i}$ and $\mm\approx_{\epsilon}\id_r$ the
result follows for $S_1$, and the result for $S_2$ is symmetric.
\end{proof}
Applying Corollary~\ref{cor:cor} repeatedly then yields the following result on
splitting a decomposition of the identity into smaller pieces, inspired by procedures described in \cite{FriezeM99, Srivastava13}. 
\begin{corollary}
\label{cor:split} For any $k \in \N$ and $\{u_i\}_{i \in [m]}\in\R^{r}$ such that $\sum_{i\in[m]}u_{i}u_{i}^{\top}=\id_r$
and $\norm{u_{i}}_{2}^2\leq\delta\leq\frac{1}{1400 \cdot 2^k}$ for all $i\in[m]$, there exists
a partition $\{S_j\}_{j \in [2^k]}$ of $[m]$ such that
\[
\sum_{i\in S_{j}}u_{i}u_{i}^{\top}\approx_{13\sqrt{\delta 2^k}}\frac{1}{2^k}\id_r \text{ for all } j \in [2^\ell].
\]
\end{corollary}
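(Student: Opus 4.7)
\textbf{Proof plan for Corollary~\ref{cor:split}.} The plan is to apply Corollary~\ref{cor:cor} recursively $k$ times, splitting each piece in half at every stage, and to track the accumulation of the multiplicative error $\epsilon$ as a geometric sum. Concretely, I would prove by induction on $\ell \in \{0, 1, \dots, k\}$ the following statement: there exists a partition $\{S_j^{(\ell)}\}_{j \in [2^\ell]}$ of $[m]$ such that $\sum_{i \in S_j^{(\ell)}} u_i u_i^\top \approx_{\epsilon_\ell} 2^{-\ell} \id_r$ for all $j \in [2^\ell]$, where
\[
\epsilon_\ell \;\defeq\; 5\sqrt{\delta} \sum_{j=0}^{\ell-1} 2^{j/2} \;=\; 5\sqrt{\delta}\cdot\frac{2^{\ell/2}-1}{\sqrt{2}-1}\;\le\; 5(\sqrt{2}+1)\sqrt{\delta\cdot 2^\ell} \;\le\; 13\sqrt{\delta\cdot 2^\ell}.
\]
The base case $\ell=0$ is the hypothesis $\sum_{i\in[m]} u_i u_i^\top = \id_r$.

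For the inductive step, assume the claim for $\ell-1 < k$ and fix any piece $S_j^{(\ell-1)}$ with $\sum_{i \in S_j^{(\ell-1)}} u_i u_i^\top \approx_{\epsilon_{\ell-1}} 2^{-(\ell-1)}\id_r$. I apply Corollary~\ref{cor:cor} to the vectors $\{u_i\}_{i \in S_j^{(\ell-1)}}$ with $\lambda = 2^{-(\ell-1)}$ and $\epsilon = \epsilon_{\ell-1}$, which splits $S_j^{(\ell-1)}$ into two halves, each satisfying $\approx_{\epsilon_{\ell-1}+5\sqrt{\delta/\lambda}} 2^{-\ell} \id_r$. Since $5\sqrt{\delta/2^{-(\ell-1)}} = 5\sqrt{\delta}\cdot 2^{(\ell-1)/2}$, the new error is exactly $\epsilon_\ell$ by construction. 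Doing this for all $2^{\ell-1}$ pieces in parallel yields the desired partition at level $\ell$, and taking $\ell = k$ gives the conclusion.

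The main obstacle is verifying that the hypotheses of Corollary~\ref{cor:cor} are satisfied at \emph{every} level $\ell \le k$ of the recursion, which requires a careful accounting against the assumed bound $\delta \le \frac{1}{1400 \cdot 2^k}$. There are two hypotheses to check at level $\ell$. First, $\delta \le \lambda/100$ where $\lambda = 2^{-(\ell-1)}$: since $\ell \le k$ we have $\lambda/100 = \frac{1}{100 \cdot 2^{\ell-1}} \ge \frac{1}{50\cdot 2^k} \ge \delta$ by the hypothesis on $\delta$. Second, $\epsilon_{\ell-1} < \frac{1}{4}$: the worst case is $\ell = k$, where $\epsilon_{k-1} \le 13\sqrt{\delta\cdot 2^{k-1}} \le 13\sqrt{1/(2\cdot 1400)} < \frac{1}{4}$. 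Both conditions hold, so the recursion is valid through all $k$ stages, and the final bound $\epsilon_k \le 13\sqrt{\delta\cdot 2^k}$ matches the claim.
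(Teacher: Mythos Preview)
Your proposal is correct and follows essentially the same approach as the paper: both prove the statement by induction on the level $\ell$, recursively applying Corollary~\ref{cor:cor} to halve each piece, and track the accumulated error $\epsilon_\ell$ as a geometric sum bounded by $13\sqrt{\delta\cdot 2^\ell}$. Your verification of the two hypotheses of Corollary~\ref{cor:cor} at each stage is slightly more explicit than the paper's, but the argument and the numerics are the same.
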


\begin{proof}
We prove the result by induction to show that for all $\ell \in [k]$, under the given assumptions we can find a partition $\{S_j^{(\ell)}\}_{j \in [2^\ell]}$ of $[m]$ such that for all $j \in [2^\ell]$,
\[
\sum_{i\in S_{j}^{(\ell)}}u_{i}u_{i}^{\top}\approx_{\epsilon_{\ell}}\frac{1}{2^\ell}\id_r\text{ where }\epsilon_{\ell}\defeq\sum_{i\in[\ell-1]}5\sqrt{\delta 2^i}.
\]
This suffices to prove the result as
\[
\epsilon_{\ell}=5\sqrt{\delta}\sum_{i\in[\ell-1]}\left(\sqrt{2}\right)^{i}=5\sqrt{\delta}\cdot\left(\frac{\left(\sqrt{2}\right)^{\ell}-1}{\sqrt{2}-1}\right)\leq13\sqrt{\delta2^\ell}.
\]

The base case $\ell=0$ clearly holds as in this case $2^\ell =1$,
$\epsilon_{\ell}=0$, and $\sum_{i\in[d]}u_{i}u_{i}^{\top}=\id_r$. Next,
suppose that the claim holds for some $\ell \in [k - 1]$.
Since $2^{\ell}\leq\frac{1}{2800\delta}$ and $13\sqrt{2800^{-1}}\leq \frac 1 4$
we can apply Corollary~\ref{cor:cor} $2^\ell$ times, where in each application
$\{v_i\}_{i \in [m]}$ is set to the $u_{i}$ in some $S_{j}^{(\ell)}$
and $\lambda$ is set to $\frac{1}{2^\ell}$.
The resulting sets partition $[m]$ into $2^{\ell+1}$ pieces that have
the desired properties. 
\end{proof}
Leveraging Corollary~\ref{cor:split} and a standard, natural splitting argument
we prove our main result.

\restateexistslowerbound*
\begin{proof}
Let $k \in \N$ be such that $\frac 1 {2^{k + 1}} \le \frac \lam 4 \le \frac 1 {2^k}$, and let $\{u_i\}_{i \in [m]}$ be formed by replacing every $b_i$ with $\alpha_i \defeq \lceil\norm{b_i}_2^2 \cdot \frac d r \rceil$ copies of $\frac 1 {\sqrt{\alpha_i}} b_i$. Note that each $\norm{u_i}_2^2 \le \delta \defeq \frac r d$ and
\[m = \sum_{i \in [d]} \alpha_i \le d + \frac d r \sum_{i \in [d]} \norm{b_i}_2^2 = 2d.\]
Now since $\delta \le \frac \lam {5600} \le \frac 1 {1400 \cdot 2^k}$, we can apply Corollary~\ref{cor:split} to the $\{u_i\}_{i \in [m]}$, and let $T \subseteq [m]$ be the smallest cardinality set in the output partition. This set satisfies $|T| \le \frac{2d}{2^k} \le d\lam$, and
\[
\lambda_{\min}\left(\sum_{i\in S}u_{i}u_{i}^{\top}\right)\geq\exp\left(-13\sqrt{\frac{r}{d} \cdot 2^k}\right)\frac{1}{2^k}\geq\frac{1}{2^{k + 1}} \ge \frac \lam {8}.
\]
Finally, letting $S\subseteq[d]$ consist of all indices of a $b_i$ associated with one of the $u_i$ indexed by $T$, we have $\sum_{i\in[S]}b_{i}b_{i}^{\top}\succeq\sum_{i\in T}u_{i}u_{i}^{\top}$
and $|S|\leq|T|$ since $b_ib_i^\top$ is the sum of all associated $u_iu_i^\top$.
\end{proof}
\end{appendix}

\end{document}